%% file: main.tex
\documentclass[twoside]{article}
\usepackage[preprint]{aistats2027}

\usepackage[T1]{fontenc}
\usepackage{amsmath}
\usepackage{amssymb}
\usepackage{amsthm}
\usepackage{dsfont}
\usepackage{microtype}
\usepackage{xspace}
\usepackage[dvipsnames]{xcolor}
\usepackage{graphicx}
\usepackage{adjustbox}
\usepackage{booktabs}
\usepackage{array}
\usepackage{longtable}
\usepackage[colorlinks,linktoc=all]{hyperref}
\hypersetup{
  citecolor=MidnightBlue,
  linkcolor=MidnightBlue,
  urlcolor=MidnightBlue
}
\usepackage[noabbrev,capitalize]{cleveref}
\creflabelformat{equation}{#2\textup{#1}#3}
\usepackage{mathtools}
\usepackage{siunitx}
\usepackage[round]{natbib}

\usepackage{algorithm}
\usepackage{algpseudocode}

\crefname{assumption}{Assumption}{Assumptions}
\crefname{remark}{Remark}{Remarks}

\DeclareRobustCommand{\prstep}[1]{\text{\color{white!40!black}#1}}
\DeclareMathOperator{\KL}{KL}
\DeclareMathOperator{\Law}{Law}
\newcommand{\KLdiv}[2]{\KL\left(#1\,\|\,#2\right)}
\newcommand{\E}{\mathds{E}}
\newcommand{\Ep}[2]{\E_{#1}\left[#2\right]}
\newcommand{\Ex}[1]{\E\left[#1\right]}
\DeclareMathOperator{\Var}{var}
\newcommand{\Vrp}[2]{\Var_{#1}\left[#2\right]}
\DeclareMathOperator{\Cov}{Cov}
\newcommand{\sd}[1]{{\color{black!80}\scriptstyle#1}}
\newcommand{\Norm}{\mathcal{N}}
\DeclareMathOperator{\supp}{supp}
\newcommand{\veps}{\varepsilon}
\newcommand{\dd}{\mathop{}\!\mathrm{d}}
\newcommand{\T}{^\top}
\newcommand{\inv}{^{-1}}
\newcommand{\ind}{\mathds{1}}
\newcommand{\Id}{\mathbf{I}}
\newcommand{\defeq}{\triangleq}
\newcommand{\eg}{e.g.\@\xspace}
\newcommand{\calA}{\mathcal{A}}
\newcommand{\calE}{\mathcal{E}}
\newcommand{\calF}{\mathcal{F}}
\newcommand{\calG}{\mathcal{G}}
\newcommand{\calH}{\mathcal{H}}
\newcommand{\calI}{\mathcal{I}}
\newcommand{\calO}{\mathcal{O}}
\newcommand{\calP}{\mathcal{P}}
\newcommand{\calR}{\mathcal{R}}
\newcommand{\calS}{\mathcal{S}}
\newcommand{\calT}{\mathcal{T}}
\newcommand{\calX}{\mathcal{X}}
\newcommand{\dsE}{\mathds{E}}
\newcommand{\dsF}{\mathds{F}}
\newcommand{\dsP}{\mathds{P}}
\newcommand{\dsR}{\mathds{R}}

\newcommand{\W}[1]{w_{#1}}
\newcommand{\Wbar}[1]{\bar{w}_{#1}}
\newcommand{\dbo}{\calT^{\pi}}
\newcommand{\heat}[1]{\mathsf{H}_{#1}}
\newcommand{\sm}[2]{s_{#1,#2}}
\newcommand{\cvar}[1]{\mathrm{CVaR}_{#1}}
\newcommand{\tv}{\mathrm{TV}}
\newcommand{\stopgrad}[1]{\mathrm{sg}\!\left[#1\right]}

\theoremstyle{plain}
\newtheorem{theorem}{Theorem}
\newtheorem{proposition}{Proposition}
\newtheorem{lemma}{Lemma}
\newtheorem{corollary}{Corollary}
\theoremstyle{definition}
\newtheorem{definition}{Definition}
\newtheorem{assumption}{Assumption}
\theoremstyle{remark}
\newtheorem{remark}{Remark}

\title{Stein Variational Distributional Reinforcement Learning: Score-Guided Particles for Multivariate Return Distributions}

\begin{document}
\runningtitle{Stein Variational Distributional Reinforcement Learning}
\runningauthor{Author 1, Author 2, Author 3}

\twocolumn[
\aistatstitle{Stein Variational Distributional Reinforcement Learning: Score-Guided Particles for Multivariate Return Distributions}
\aistatsauthor{Manuel Haußmann \And Melih Kandemir}
\aistatsaddress{University of Southern Denmark}
]

\begin{abstract}
Decisions that weigh several reward signals against each other require the joint distribution of the return vector. Categorical critics scale exponentially in the reward dimension, and quantiles have no canonical multivariate analogue. Multivariate distributional critics therefore represent each return distribution by $M$ equal-weight particles trained with a discrepancy loss.
Beyond their atoms, the particles provide no sampler and cannot resolve events rarer than $1/M$. We propose Stein variational distributional RL (SVDRL). Its critic outputs Stein particles, which Stein variational gradient descent transports along a learned score field of the distributional Bellman target. We learn this field by bootstrapped denoising score matching across noise levels and tie it across the Bellman recursion by a consistency loss, which we prove is a non-expansive temporal-difference scheme in score space. The same field certifies the particles. Its reverse-diffusion sampler satisfies a supremal Wasserstein policy-evaluation bound that holds for bounded, possibly multimodal returns. The Stein particles inherit this bound up to a gap that we measure without ground truth, and the sampler answers tail-risk queries below the particle resolution at any sample size. Empirically, SVDRL is more accurate on rare events than particle critics with the same number of particles, and on par on MuJoCo continuous control.
\end{abstract}

\input{paper}

\bibliography{references}
\bibliographystyle{apalike}

\clearpage
\appendix
\thispagestyle{empty}
\onecolumn
\input{appendix}

\end{document}

%% file: paper.tex
\section{Introduction}\label{sec:intro}

Distributional reinforcement learning models the distribution of the return rather than only its expectation \citep{bellemare2017distributional}. With multivariate rewards, risk- and correlation-sensitive decisions depend on the joint distribution of the return components \citep{zhang2021distributional,hayes2022practical}. The univariate representations do not carry over: categorical grids \citep{bellemare2017distributional} need $\calO(M^d)$ atoms, and quantiles \citep{dabney2018implicit,dabney2018distributional} have no canonical analogue for $d>1$. Existing multivariate methods therefore represent the return distribution by $M$ particles trained with a discrepancy loss, \eg, MMD \citep{zhang2021distributional}, Sinkhorn divergences \citep{sun2024distributional}, or sliced divergences \citep{debes2026multivariate}.

The theory of these methods covers the Bellman operator, which contracts in the respective supremal divergence, but not the learned particles. We show that this gap is structural.
Equal-weight particles carry no density and no sampler, and $M$ of them cannot represent an event of probability below $1/M$, whatever loss trains them (\cref{prop:resolution,fig:teaser}).

We instead learn the \emph{return score field}, the smoothed score of the return distribution at all noise levels. Smoothing commutes with the Bellman update up to a rescaling of the noise level (\cref{lem:commute}), so the field satisfies its own Bellman equation, and we learn it along a noise schedule \citep{song2019generative} by bootstrapped denoising score matching (\cref{lem:dsmU}) and by temporal differences in score space (\cref{thm:scoreTDinf}). A single noise level does not suffice, as it cannot certify mode masses (\cref{rem:laddernecessity}). We sample from the field by reverse diffusion. For fast action selection we additionally keep $M$ particles per state--action pair, the Stein particles, which we transport along the field by Stein variational gradient descent (SVGD) \citep{liu2016stein} and amortize into a critic.

The training risk controls the sampled return distribution in total variation and $\W1$ for bounded multimodal returns, where log-concavity and related assumptions fail (\cref{thm:gen}). This yields a policy-evaluation bound in $\Wbar1$ (\cref{thm:e2e}), certified mode masses (\cref{cor:modemass,cor:modemasse2e}), and a CVaR estimate whose error does not depend on $M$ (\cref{thm:cvarrev}).

No single critic is accurate on every return law, and we show which error limits each family.
Equal-weight particles suffer from a \emph{resolution floor}: they round event probabilities and mixture weights to multiples of $1/M$ (\cref{prop:resolution}).
A score field suffers from \emph{smoothing bias} and is hard to learn where the return law is atomic or its modes lie far apart relative to their widths (\cref{prop:smoothbias,lem:scorelip}).
SVDRL keeps both objects, and a \emph{hybrid rule} reports, per state--action pair and without ground truth, the Stein particles on separated or atomic clusters and the sampler draws elsewhere (\cref{def:hybrid,tab:capabilities}).

SVDRL is, to our knowledge, the first certified multivariate score-based critic for possibly multimodal returns, and its Stein particles inherit the certificate up to a gap measured without ground truth (\cref{thm:e2e}).
At equal particle count, SVDRL is more accurate than the best particle critic in every rare-event comparison (\cref{sec:experiments}).
On all eight synthetic MDPs the hybrid wins within-mode shape against the competitors as specified, although at matched compute they win the unimodal ones.
The competitors' own benchmarks place the boundary at atomic laws, and on MuJoCo SAC-SVDRL is on par in return and has the most accurate critic distribution on three of four.
Its stratified Bellman target set and running-mean moment matching improve every particle critic we tested.

\begin{figure}[t]
  \centering
  \adjustbox{width=0.88\linewidth}{\includegraphics{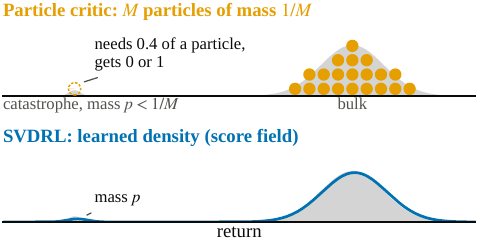}}
  \caption{\emph{A rare event below the particle resolution.} Schematic. A catastrophe of mass $p<1/M$ needs a fraction of a particle, so particle critics round its probability to a multiple of $1/M$ (\cref{prop:resolution}). SVDRL learns a density, which carries any mass; on the rare-event chain it estimates $p=0.005$ far better than particle critics with $M=128$ (\cref{sec:experiments,fig:rarelaw}).}
  \label{fig:teaser}
\end{figure}

\section{Background and Notation}\label{sec:background}

\paragraph{Markov Decision Processes with Multivariate Rewards.}
We consider a Markov Decision Process (MDP) defined by $(\calX,\calA,P,\calR,\gamma)$, where $\calX$ and $\calA$ denote state and action spaces, $\gamma \in (0,1)$ is a discount factor, and ${P:\calX\times\calA\to\calP(\calX)}$ is a transition kernel.
Unlike standard RL, where the reward is a scalar, we define the reward as a potentially random vector ${R(x,a) \sim \calR(\cdot|x,a)\in \calP(\dsR^d)}$. We define the number of  reward components by $d\geq 1$ and recover the classical univariate distributional RL setting for $d=1$.

Given a policy $\pi:\calX\to \calP(\calA)$, the \emph{random return} is
\begin{equation*}
  Z^\pi(x,a)\defeq\sum_{t\geq 0}\gamma^tR_t\in \dsR^d, \quad R_t\sim \calR(\cdot|X_t,A_t)
\end{equation*}
with $(X_0,A_0)=(x,a)$, $X_{t+1}\sim P(\cdot|X_t,A_t)$, and ${A_t\sim \pi(\cdot|X_t)}$ for $t\geq 1$. ${V^\pi(x,a)\defeq \Ex{Z^\pi(x,a)}\in\dsR^d}$ is its expected \emph{value function}.
We aim to infer the \emph{return distribution function}
${\eta^\pi(x,a) \defeq \Law(Z^\pi(x,a))\in\calP(\dsR^d)}$.

\begin{assumption}[Bounded returns]\label{asm:mdp}
The rewards are bounded per dimension, $|R_i|\leq r_{\max,i}$ almost surely for $i\leq d$, and $\gamma\in[0,1)$. Write ${z_{\max}\defeq r_{\max}/(1-\gamma)\in\dsR^d_+}$, let $B_R\defeq\prod_{i\leq d}[-z_{\max,i},z_{\max,i}]$ be the return box, $R_Z\defeq\|z_{\max}\|_2$ its circumradius, and $D\defeq2R_Z$. Then $|r_i+\gamma z_i|\leq z_{\max,i}$ whenever $z\in B_R$, so $\dbo$ maps laws supported in $B_R$ to laws supported in $B_R$, and $\supp\eta^\pi(x,a)\subseteq B_R\subseteq B(0,R_Z)$ for all $(x,a)$.
\end{assumption}

\paragraph{Distributional Bellman Operator.}
For ${r \in \dsR^d}$ and $\gamma \in (0,1)$, the
\emph{affine bootstrap map} ${b_{r,\gamma}: \dsR^d\to \dsR^d}$ is defined as ${b_{r,\gamma}(z) \defeq r+\gamma z}$.
The \emph{push-forward} of a distribution $\nu\in \calP(\dsR^d)$ under $b_{r,\gamma}$ is ${(b_{r,\gamma})_\#\nu \defeq \Law(r + \gamma Z)}$ for ${Z\sim \nu}$.
The \emph{distributional Bellman operator (DBO)} ${\calT^\pi: \calP(\dsR^d)^{\calX\times\calA}\to \calP(\dsR^d)^{\calX\times\calA}}$ is defined as
\begin{equation}\label{eq:dbo}
  (\calT^\pi\eta)(x,a) \defeq \Ep{\pi}{(b_{R,\gamma})_\#\eta(X',A')|X=x,A=a},
\end{equation}
where $R\sim \calR(\cdot|x,a)$, $X'\sim P(\cdot|x,a)$, and $A'\sim\pi(\cdot|X')$. The return distribution $\eta^\pi$ is the unique fixed point of this operator, i.e., $\calT^\pi\eta^\pi = \eta^\pi$.

DBO's convergence is naturally analyzed under the \emph{Wasserstein metric} $w_p$ and its supremal version ${\bar w_p(\eta,\eta')\defeq\sup_{(x,a)}w_p(\eta(x,a),\eta'(x,a))}$.

\begin{theorem}[Contraction of DBO \citep{bellemare2017distributional,zhang2021distributional}]\label{thm:dbo_contraction}\label{thm:contraction}
  For any $p\geq 1$ and ${\eta,\eta' \in \calP_p(\dsR^d)^{\calX\times \calA}}$,
    $\bar w_p(\calT^\pi\eta,\calT^\pi\eta')\leq \gamma \bar w_p(\eta,\eta').$
\end{theorem}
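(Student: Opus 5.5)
The plan is a coupling argument. It works the same for every $p\geq1$ and every $d\geq1$, because the only geometric fact used is that $b_{r,\gamma}$ is $\gamma$-Lipschitz for every norm on $\dsR^d$.

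Fix $(x,a)$ and write $\mu_{x',a'}\defeq\eta(x',a')$ and $\mu'_{x',a'}\defeq\eta'(x',a')$. For each next pair $(x',a')$, choose an optimal coupling $\kappa_{x',a'}\in\Pi(\mu_{x',a'},\mu'_{x',a'})$ for $w_p$. Optimal couplings exist because the laws lie in $\calP_p(\dsR^d)$.

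Build a joint construction on one probability space:
\begin{enumerate}
\item draw $R\sim\calR(\cdot|x,a)$, $X'\sim P(\cdot|x,a)$ and $A'\sim\pi(\cdot|X')$;
\item conditionally on $(R,X',A')$, draw $(Z,Z')\sim\kappa_{X',A'}$, independently of $R$ given $(X',A')$.
\end{enumerate}
By \eqref{eq:dbo}, $R+\gamma Z$ has law $(\calT^\pi\eta)(x,a)$ and $R+\gamma Z'$ has law $(\calT^\pi\eta')(x,a)$. The pair $(R+\gamma Z,\,R+\gamma Z')$ is therefore a coupling of the two Bellman targets, and its cost upper-bounds the Wasserstein distance:
\begin{equation*}
w_p^p\bigl((\calT^\pi\eta)(x,a),(\calT^\pi\eta')(x,a)\bigr)\leq \Ex{\|\gamma(Z-Z')\|^p}=\gamma^p\,\Ex{\Ep{\kappa_{X',A'}}{\|Z-Z'\|^p}}.
\end{equation*}
The shared reward $R$ cancels in the difference, which is the key point. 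The inner expectation equals $w_p^p(\mu_{X',A'},\mu'_{X',A'})\leq\bar w_p(\eta,\eta')^p$. Taking $p$-th roots and then the supremum over $(x,a)$ gives the claim.

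The main obstacle is measurability. We need a measurable selection $(x',a')\mapsto\kappa_{x',a'}$ of optimal couplings so that the mixture in step 2 defines a genuine probability measure.

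For finite or countable $\calX\times\calA$ this is trivial. In general I would invoke the measurable-selection result for optimal transport plans (Villani, \emph{Optimal Transport}, Cor.~5.22), which applies once $\eta$ and $\eta'$ are measurable kernels.

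If selection is unavailable, I would fall back on $\veps$-optimal couplings chosen on a countable measurable partition and let $\veps\to0$. A cruder alternative also works: use joint convexity of $w_p^p$ under mixtures, which says that the $w_p^p$ distance between two mixtures is at most the mixture average of the componentwise $w_p^p$ distances, together with the scaling identity $w_p\bigl((b_{r,\gamma})_\#\mu,(b_{r,\gamma})_\#\mu'\bigr)=\gamma\, w_p(\mu,\mu')$. This reduces the statement to the same bound without constructing the coupling explicitly.
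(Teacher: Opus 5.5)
Your coupling proof is correct. The measurable-selection result you cite from Villani is the right tool for uncountable state--action spaces.

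The paper does not prove \cref{thm:contraction} itself; it quotes it from \citet{bellemare2017distributional} and \citet{zhang2021distributional}. Its only reasoning about the result is the one-line proof of \cref{cor:e2eGamma}: each affine map $z\mapsto r+\Gamma z$ contracts $\W1$ by $\|\Gamma\|_{\mathrm{op}}$, and mixing over the Bellman branches preserves the contraction. That is your ``cruder alternative''. It applies the per-branch identity $w_p\big((b_{r,\gamma})_\#\mu,(b_{r,\gamma})_\#\mu'\big)=\gamma\,w_p(\mu,\mu')$, then joint convexity of $w_p^p$ under the common mixing law $q(\cdot\mid x,a)$ of the branches $j=(r,x',a')$.

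The two routes are the same mathematics, since the convexity lemma is proved by the glued coupling you build in step~2. Your version makes the cancellation of the shared reward explicit, also when $R$ and $X'$ are dependent, and it puts all the measure theory into the selection $(x',a')\mapsto\kappa_{x',a'}$. The paper's version is modular, so the matrix case only replaces the identity by $w_p(\Gamma_\#\mu,\Gamma_\#\mu')\leq\|\Gamma\|_{\mathrm{op}}\,w_p(\mu,\mu')$. Your coupling extends just as easily, via $\|\Gamma(Z-Z')\|\leq\|\Gamma\|_{\mathrm{op}}\|Z-Z'\|$.

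One caution about your first fallback: choosing $\veps$-optimal couplings on the cells of a countable partition does not work as stated. A coupling chosen for one pair of marginals is not a coupling of the other pairs in its cell. If you want to avoid measurable selection entirely, use Kantorovich duality instead. Let $\mu_j\defeq(b_{r_j,\gamma})_\#\eta(x'_j,a'_j)$ and $\mu'_j\defeq(b_{r_j,\gamma})_\#\eta'(x'_j,a'_j)$. Every pair of bounded continuous $\varphi,\psi$ with $\varphi(u)+\psi(v)\leq\|u-v\|^p$ satisfies $\int\varphi\,\dd\mu_j+\int\psi\,\dd\mu'_j\leq w_p^p(\mu_j,\mu'_j)\leq\gamma^p\,\bar w_p(\eta,\eta')^p$ for every $j$. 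Integrate over $q(\dd j\mid x,a)$ and take the supremum over $(\varphi,\psi)$. This gives $w_p^p\big((\dbo\eta)(x,a),(\dbo\eta')(x,a)\big)\leq\gamma^p\,\bar w_p(\eta,\eta')^p$, with no selection needed and not even measurability of $j\mapsto w_p(\mu_j,\mu'_j)$.
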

So $\eta_{k+1}=\dbo\eta_k$ converges to $\eta^\pi$ in $\bar w_p$ at rate $\gamma^k$ \citep{bellemare2023distributional}.
$\dbo$ also contracts in supremal $\ell_p$, in supremal MMD \citep{gretton2012kernel} for suitable kernels, and in sup-sliced divergences \citep{nguyentang2021distributional,wiltzer2024foundations,debes2026multivariate}, but its convergence in $\bar w_p$ motivates measuring approximation errors in $\bar w_p$.

\paragraph{Smoothed Scores, SVGD, and DSM.}
We define the smoothing operator $\heat{\sigma}$ as the convolution with a normal distribution, i.e., ${\heat{\sigma}\mu\defeq \mu*\Norm(0,\sigma^2\Id)}$.  The score and smoothed score of $\mu$ are given as ${s_\mu \defeq \nabla \log \mu}$, and $\sm{\mu}{\sigma} \defeq \nabla \log (\heat{\sigma}\mu)$.
Score matching is a minimum Stein discrepancy estimator \citep{barp2019minimum}, and denoising score matching (DSM), the regression of $s_\psi(y+\sigma\varepsilon,\sigma)$ on $-\varepsilon/\sigma$ for $y\sim\mu$, $\varepsilon\sim\Norm(0,\Id)$, is score matching on the smoothed law $\heat{\sigma}\mu$ \citep{vincent2011connection}.
SVGD moves particles along ${\hat\phi^*(z)=\frac1M\sum_j[k(z_j,z)\,s(z_j)+\nabla_{z_j}k(z_j,z)]}$, the steepest descent direction of $\KLdiv{\cdot}{p}$ in the unit ball of the RKHS of $k$ for score $s=\nabla\log p$ \citep{liu2016stein}.

\paragraph{Representations.}
We focus on particle representations, i.e., equal-weight Dirac mixtures of the form
${\calF_M\defeq\{\frac1M\sum_{i=1}^M\delta_{z_i}: z_i\in\dsR^d\}}$.
\Cref{tab:capabilities} in \cref{sec:anatomy} compares the critic families by their main error.

\section{Stein Variational Distributional RL}\label{sec:method}
We model the return distribution function through its smoothed score over all noise levels, the \emph{return score field}, and use the Stein particles only to represent it.
The smoothed score exists for every return law, and smoothing commutes with the affine bootstrap map up to a rescaling of the noise level (\cref{lem:commute}), which yields a Bellman equation in score space (\cref{cor:scorebellman}).

\paragraph{The Return Score Field.}
The return score field of $\eta$ is the map ${(z,x,a,\sigma)\mapsto\sm{\eta(x,a)}{\sigma}(z)}$, which we learn with a noise-conditioned network $s_\psi(z,x,a,\sigma)$.

\begin{lemma}[Bellman consistency]\label{lem:bellman-consistency}\label{lem:commute}
  For any ${\mu \in \calP(\dsR^d)}$, $r \in \dsR^d$, $\gamma \in (0,1)$, $\sigma > 0$, we have
  \begin{equation}\label{eq:commute}
    \begin{split}
    &\text{(i)}\hspace{0.5em}{(b_{r,\gamma})}_{\#}(\heat{\sigma/\gamma}\mu) = \heat{\sigma}\left({(b_{r,\gamma})}_{\#}\mu\right),
  \hspace{1em}\text{and}\\
    &\text{(ii)}\hspace{0.5em}
  \sm{{(b_{r,\gamma})}_{\#}\mu}{\sigma}(y) = \frac1\gamma \sm{\mu}{\sigma/\gamma}\left(\frac{y-r}{\gamma}\right).
    \end{split}
\end{equation}
\end{lemma}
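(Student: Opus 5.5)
Part (i) is an identity of laws, and I will prove it by a coupling. Take $Z\sim\mu$ and $\varepsilon\sim\Norm(0,\Id)$ independent. Then $Z+(\sigma/\gamma)\varepsilon\sim\heat{\sigma/\gamma}\mu$, so the left-hand side of (i) is the law of
\begin{equation*}
r+\gamma\bigl(Z+(\sigma/\gamma)\varepsilon\bigr)=(r+\gamma Z)+\sigma\varepsilon .
\end{equation*}
Here $r+\gamma Z\sim(b_{r,\gamma})_\#\mu$, and $\sigma\varepsilon\sim\Norm(0,\sigma^2\Id)$ is independent of it. The sum therefore has law $(b_{r,\gamma})_\#\mu*\Norm(0,\sigma^2\Id)=\heat{\sigma}((b_{r,\gamma})_\#\mu)$. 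The argument uses only the independence of $Z$ and $\varepsilon$ and the linearity of $b_{r,\gamma}$ in its argument, so it holds for every $\mu\in\calP(\dsR^d)$ with no moment assumptions. One could instead check equality of characteristic functions: both sides equal $e^{i\langle t,r\rangle}\hat\mu(\gamma t)e^{-\sigma^2\|t\|^2/2}$.

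For (ii), I first note that for any $\nu\in\calP(\dsR^d)$ and $\tau>0$, the measure $\heat{\tau}\nu$ has a strictly positive, $C^\infty$ density
\begin{equation*}
p_\tau(y)=\int(2\pi\tau^2)^{-d/2}e^{-\|y-z\|^2/(2\tau^2)}\,\nu(\dd z).
\end{equation*}
Differentiation under the integral is justified because the Gaussian derivatives are bounded uniformly in $z$. Hence the smoothed score $\sm{\nu}{\tau}=\nabla\log p_\tau$ is well defined everywhere. This is the only point needing care, and it is why (ii) holds for arbitrary (atomic or multimodal) $\mu$.

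Next, write $q$ for the density of $\heat{\sigma/\gamma}\mu$. By (i), $\heat{\sigma}((b_{r,\gamma})_\#\mu)$ is the push-forward of $q$ under the affine bijection $b_{r,\gamma}$, whose Jacobian determinant is $\gamma^d$. The change-of-variables formula then gives the density
\begin{equation*}
\gamma^{-d}\,q\bigl((y-r)/\gamma\bigr).
\end{equation*}
Taking $\log$, the constant $-d\log\gamma$ vanishes under $\nabla_y$. The chain rule applied to $y\mapsto(y-r)/\gamma$ contributes the factor $1/\gamma$. This yields
\begin{equation*}
\sm{(b_{r,\gamma})_\#\mu}{\sigma}(y)=\tfrac1\gamma\,\sm{\mu}{\sigma/\gamma}\bigl((y-r)/\gamma\bigr).
\end{equation*}

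I do not expect a real obstacle. The only subtle step is justifying existence and positivity of the smoothed densities, so that the logarithm and gradient make sense for general $\mu$; the positive Gaussian-mixture representation above handles this directly.
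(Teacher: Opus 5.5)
Your proposal is correct and follows the paper's proof essentially step for step: you prove (i) by the coupling $r+\gamma\bigl(Z+(\sigma/\gamma)\varepsilon\bigr)=(r+\gamma Z)+\sigma\varepsilon$. For (ii) you note that every Gaussian smoothing has a smooth, strictly positive density, then apply the change of variables $y=r+\gamma u$ and observe that $\gamma^{-d}$ drops out under $\nabla_y\log$ while the chain rule supplies the factor $1/\gamma$.
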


\begin{corollary}[Score Bellman equation; general branch mixture]
\label{cor:scorebellman}
Assume \cref{asm:mdp}. Let ${J=(R,X',A')\sim q(\cdot\mid x,a)}$ index Bellman branches and
$p_{j,\sigma}$ be the density of
${(b_{r_j,\gamma})}_{\#}\heat{\sigma/\gamma}\,\eta(x'_j,a'_j)$.
Then, for all $y$ and $\sigma>0$,
\begin{align}\label{eq:scorebellman}
  \sm{\dbo\eta(x,a)}{\sigma}(y)
  &= \Ep{j\sim q_\sigma(\cdot\mid y)}{\tfrac{1}{\gamma}\,
    \sm{\eta(x'_j,a'_j)}{\sigma/\gamma}\!\Big(\tfrac{y-r_j}{\gamma}\Big)}\notag\\
    q_\sigma(j\mid y)&\propto q(j|x,a)\,p_{j,\sigma}(y).
\end{align}
With a single deterministic branch, $q_\sigma(\cdot\mid y)$ is a point mass and \eqref{eq:scorebellman} reduces to \eqref{eq:commute}(ii), our consistency target \eqref{eq:cons}. Stochastic rewards are the case $J=R$.
\end{corollary}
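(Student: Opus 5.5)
The plan is to write the smoothed Bellman target as a finite-or-general mixture over branches and then differentiate its log-density. By \eqref{eq:dbo}, $\dbo\eta(x,a)=\int (b_{r_j,\gamma})_\#\eta(x'_j,a'_j)\,q(\dd j\mid x,a)$. Since $\heat{\sigma}$ is convolution with a fixed Gaussian, it is linear and commutes with mixtures: Fubini applies because all measures are probability measures. Hence
\begin{equation*}
\heat{\sigma}\dbo\eta(x,a)=\int \heat{\sigma}\big((b_{r_j,\gamma})_\#\eta(x'_j,a'_j)\big)\,q(\dd j\mid x,a)=\int (b_{r_j,\gamma})_\#\heat{\sigma/\gamma}\eta(x'_j,a'_j)\,q(\dd j\mid x,a),
\end{equation*}
where the second equality is \cref{lem:commute}(i) applied branchwise. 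Each component has the density $p_{j,\sigma}$ of the statement. Explicitly, $p_{j,\sigma}(y)=\gamma^{-d}h_j((y-r_j)/\gamma)$, where $h_j$ is the density of $\heat{\sigma/\gamma}\eta(x'_j,a'_j)$. The mixture therefore has density $p_\sigma(y)=\int p_{j,\sigma}(y)\,q(\dd j\mid x,a)$.

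Next I would show that $p_\sigma$ is strictly positive and smooth, and that differentiation passes under the integral. This is where \cref{asm:mdp} enters. All $\eta(x',a')$ that matter are supported in $B_R$, and $r_j$ is bounded, so the component densities are Gaussian convolutions of laws supported in a fixed bounded set. Concretely, $p_{j,\sigma}$ is the Gaussian density $\Norm(0,\sigma^2\Id)$ convolved with a law supported in $b_{r_j,\gamma}(B_R)\subseteq B_R$. Its gradient is therefore bounded by $C(1+|y|)\,\sup_j p_{j,\sigma}$-type envelopes that are uniform in $j$ locally in $y$. Dominated convergence then gives $\nabla p_\sigma(y)=\int\nabla p_{j,\sigma}(y)\,q(\dd j)$. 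I expect this to be the main technical obstacle, though it is routine: one must produce a uniform local envelope $\sup_{|y-y_0|\le1}\sup_j|\nabla p_{j,\sigma}(y)|<\infty$. That follows from $|\nabla \varphi_\sigma(y-u)|\le \sigma^{-2}(|y|+R_Z)\varphi_\sigma$ together with $\varphi_\sigma\le(2\pi\sigma^2)^{-d/2}$, with $u$ ranging over the bounded support. Positivity is immediate, since Gaussian convolution of any probability measure is everywhere positive.

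Finally I would divide by $p_\sigma(y)$:
\begin{equation*}
\nabla\log p_\sigma(y)=\int \frac{q(\dd j)\,p_{j,\sigma}(y)}{p_\sigma(y)}\,\nabla\log p_{j,\sigma}(y).
\end{equation*}
The weight is exactly the posterior $q_\sigma(j\mid y)$ defined in \eqref{eq:scorebellman}. By \cref{lem:commute}(ii), or directly by the chain rule applied to $p_{j,\sigma}(y)=\gamma^{-d}h_j((y-r_j)/\gamma)$, each component score is $\frac1\gamma\sm{\eta(x'_j,a'_j)}{\sigma/\gamma}((y-r_j)/\gamma)$, which gives \eqref{eq:scorebellman}. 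For the special cases: with a single deterministic branch, $q$ is a point mass, so $q_\sigma(\cdot\mid y)$ is the same point mass and the formula collapses to \cref{lem:commute}(ii). Stochastic rewards with deterministic transitions are simply the instance $J=R$, so no separate argument is needed.
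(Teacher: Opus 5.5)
Your proposal is correct and follows the paper's proof essentially step for step. Both arguments build the mixture density of $\heat{\sigma}\dbo\eta(x,a)$ branchwise via \cref{lem:commute}(i), differentiate under the integral to obtain the posterior branch weights, and then apply \cref{lem:commute}(ii). The one difference is the dominated-convergence envelope: the paper uses that $\varphi_\sigma$ and $\nabla\varphi_\sigma$ are globally bounded, which gives a bound uniform in $j$ and $y$ for every $\eta$. Your appeal to the bounded support $B_R$ is therefore unnecessary, and it tacitly assumes $\eta$ is $\calP(B_R)$-valued, which the statement does not require.
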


We learn $s_\psi$ by DSM on the bootstrap targets ${y=r+\gamma z'}$, with $a'\sim\pi(\cdot|x')$ and $z'$ a particle at $(x',a')$, and tie the noise levels together by the score Bellman equation \eqref{eq:scorebellman}, with one sampled branch and one shared noise draw:
\begin{equation}\label{eq:dsm}
  L_{\mathrm{DSM}}(\psi)=\Ep{y,\sigma,\varepsilon}{\lambda(\sigma)\big\|s_\psi(y+\sigma\varepsilon,x,a,\sigma)+\varepsilon/\sigma\big\|^2},
\end{equation}
\begin{equation}\label{eq:cons}
  \begin{split}
    L_{\mathrm{cons}}(\psi)=\dsE\Big[&\lambda(\sigma)\big\|s_\psi(r+\gamma z+\sigma\varepsilon,x,a,\sigma)\\
  &-\tfrac1\gamma\,\stopgrad{s_{\bar\psi}(z+\tfrac\sigma\gamma\varepsilon,x',a',\tfrac\sigma\gamma)}\big\|^2\Big],
  \end{split}
\end{equation}
with $\bar\psi$ an exponential moving average (EMA) of $\psi$.
The pairing of $\sigma$ with $\sigma/\gamma$ is \eqref{eq:commute}, and it makes score TD non-expansive (\cref{thm:scoreTDinf}).
Both losses stay unbiased under one-sample bootstrapping, the property (U) of \citet{debes2026multivariate} (\cref{lem:dsmU,lem:consU}).
Tabular runs use the EDM preconditioning \citep{NEURIPS2022_a98846e9} in coordinates that standardize the targets at $(x,a)$, with $\lambda(\sigma)=\sigma^2$ (\cref{app:method}).

\paragraph{The Resample-Move Update.}
In the tabular experiments SVDRL keeps $M$ particles per state--action pair and updates them by a resample-move step of sequential Monte Carlo \citep{gilks2001following} in four steps (\cref{alg:update}).
The \emph{(i) stratified Bellman target set} draws the $M$ successors by systematic resampling, so a branch of probability $p$ receives $\lfloor Mp\rfloor$ or $\lceil Mp\rceil$ targets, and the reward noise is stratified within each successor block (\cref{app:method}).
\emph{(ii) Resampling} replaces a fraction $\rho$ of the particles by targets; $\rho=1$ carries the branch masses exactly up to $1/M$.
The \emph{(iii) Stein move} runs SVGD along $s_\psi$ down the levels $\sigma_\ell\leq\sigma_{\mathrm{start}}$ with relaxation $\alpha$, and \emph{(iv) moment matching} maps the cloud affinely onto the target set's moments.
Resampling carries the mass, the Stein move the shape.

\begin{algorithm}[t]
\caption{One tabular SVDRL update at $(x,a)$}\label{alg:update}
\begin{algorithmic}[1]
\Require particles $z_{1:M}$; score network $s_\psi$; hyperparameters $\rho$, $\sigma_{\mathrm{start}}$, $\alpha$
\State $y_{1:M}\leftarrow$ stratified Bellman targets $r_j+\gamma z'_j$
\State DSM and consistency steps \eqref{eq:dsm}--\eqref{eq:cons}; replace $\mathrm{round}(\rho M)$ particles by targets
\State $\tilde z\leftarrow$ SVGD along $s_\psi(\cdot,\sigma_\ell)$ from $z$, for $\sigma_\ell\leq\sigma_{\mathrm{start}}$
\State $z\leftarrow z+\alpha(\tilde z-z)$, then map $z_{1:M}$ onto the moments of $y_{1:M}$
\end{algorithmic}
\end{algorithm}

\paragraph{The Sampler and the Hybrid Rule.}
The sampler draws from the field by reverse diffusion of the EMA networks (\cref{def:ladderrisk}); one set of $N\gg M$ draws answers CVaR of any scalarization, event probabilities and conditional queries (\cref{cor:frontcert,cor:condquery}).
\Cref{lem:scorelip} says the field is learnable when the return law has a density whose components are not far apart relative to their widths; we turn it into a rule that the agent evaluates on its own particles, without ground truth, with a threshold fixed in advance (\cref{app:method}).
\begin{definition}[Hybrid rule]\label{def:hybrid}
Let $r$ be the ratio of the smallest gap between single-linkage clusters of the particle cloud at $(x,a)$ to the largest within-cluster standard deviation, after merging particles closer than the field's smallest noise level.
The hybrid rule reports the particles if $r>6$, or if a cluster of at least two particles is narrower than the field's smallest noise level; otherwise it reports the sampler draws.
\end{definition}

\paragraph{SAC-SVDRL.} In continuous control SVDRL is the critic of SAC \citep{haarnoja2018soft}: an amortized critic outputs $M$ particles regressed onto the transported cloud, i.e., amortized SVGD \citep{feng2017learning}, and a moment head sets the score network's standardization and moment-matching target (\cref{app:method}).

\section{Convergence Analysis}\label{sec:theory}
All results concern policy evaluation for a fixed policy $\pi$ under \cref{asm:mdp}. We certify the reverse-diffusion sampler, for bounded and possibly multimodal returns, with no isoperimetric, dissipativity or mode-separation assumption; the Stein particles enter through one measurable term. Proofs are in \cref{app:proofs}, and \cref{sec:experiments} checks \cref{thm:scoreTDinf,thm:gen} on trained networks.

\begin{definition}[Noise grid, multi-level risk, reverse-diffusion sampler]\label{def:ladderrisk}
  Let ${\sigma_0>\sigma_1>\dots>\sigma_{N_{\mathrm g}}>0}$ be the noise grid on which the score network is trained, with variance steps $h_n\defeq\sigma_n^2-\sigma_{n+1}^2$, and let ${\calI(\nu)\defeq\Ep{\nu}{\|\nabla\log\nu\|^2}}$ denote the Fisher information of a law with a smooth positive density. For probability measure $\mu$ with bounded support and score model $s_\psi(\cdot,\sigma)$ that is bounded on bounded sets, define the DSM excess risk at level $\sigma$ \citep{vincent2011connection} and the \emph{multi-level risk} by
\begin{align*}
  \calE_\sigma(\psi;\mu)&\defeq\Ep{\heat{\sigma}\mu}{\|s_\psi(\cdot,\sigma)-\sm{\mu}{\sigma}\|^2},\\
  \calE_{\mathrm{ml}}(\psi;\mu)&\defeq\sum_{n=0}^{N_{\mathrm g}-1}h_n\,\calE_{\sigma_n}(\psi;\mu).
\end{align*}
The reverse-diffusion sampler of $s_\psi$ starts from ${\hat X_0\sim\Norm(z_c,\sigma_0^2\Id)}$, for a center $z_c\in B_R$ fixed before sampling (the standardizing mean at $(x,a)$ in \cref{alg:revdiff}), and iterates, for $n=0,\dots,N_{\mathrm g}-1$ and i.i.d.\ ${\xi_n\sim\Norm(0,\Id)}$,
${\hat X_{n+1}=\hat X_n+h_n\,s_\psi(\hat X_n,\sigma_n)+\sqrt{h_n}\,\xi_n.}$
We write $\hat Q_\psi$ for the law of $\hat X_{N_{\mathrm g}}$ and ${Q_\psi\defeq(\mathrm{proj}_{B_R})_\#\hat Q_\psi}$ for the law of its projection onto the return box (\cref{alg:revdiff}). At a state--action pair we take $\mu=\mu(x,a)$ and $s_\psi(\cdot,\sigma)=s_\psi(\cdot,x,a,\sigma)$.
\end{definition}

\subsection{Score-Space Temporal Differences}\label{sec:scoreTDmain}
\begin{theorem}[Score-space TD, informal]\label{thm:scoreTDinf}
Assume a geometric grid with $\kappa_{\mathrm g}=\gamma^{1/m}$ for an integer $m$ and the Gaussian prior score $-(z-c)/\sigma^2$ above $\sigma_0$, with $c$ the center of $B_R$.
Write $\tilde e_n$ and $\tilde r_n$ for the score error and the consistency residual at level $n$, each normalized by $\sigma_n$ and uniform over $(x,a)$.
\emph{(i)} The score Bellman operator is non-expansive in the multi-level risk: the risk of its image is at most that of its input on the noise levels shifted up by $m$.
\emph{(ii)} At a fixed point of training, $\tilde e_n\leq\sum_{i=0}^{\lfloor n/m\rfloor}\tilde r_{n-im}+D/(2\sigma_0)$ for every level $n$.
\end{theorem}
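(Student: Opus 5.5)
The plan is to work in the normalized errors $\tilde e_n\defeq\sup_{(x,a)}\sigma_n\,\calE_{\sigma_n}(\psi;\eta(x,a))^{1/2}$ and similarly for $\tilde r_n$. The weighting by $\sigma_n$ matches $\lambda(\sigma)=\sigma^2$. Both parts rest on the exact score Bellman equation of \cref{cor:scorebellman}, read as a conditional expectation. For fixed $(x,a)$ and level $\sigma_n$, the variable $y=r_J+\gamma z'_J+\sigma_n\varepsilon$ with $J\sim q$, $z'\sim\eta(x'_J,a'_J)$ has law $\heat{\sigma_n}\dbo\eta(x,a)$. Given $y$, the posterior over branches is $q_{\sigma_n}(\cdot\mid y)$, and the preimage $(y-r_J)/\gamma$ is distributed as $\heat{\sigma_n/\gamma}\eta(x'_J,a'_J)$ by \cref{lem:commute}(i). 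Under the geometric grid $\kappa_{\mathrm g}=\gamma^{1/m}$, we have $\sigma_n/\gamma=\sigma_{n-m}$ exactly. The level-$\sigma_{n-m}$ object is the grid value when $n\ge m$, and the Gaussian prior score above $\sigma_0$ when $n<m$.

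For (i), I define the score Bellman operator as $(\calS s)(y,x,a,\sigma_n)\defeq\Ep{J\sim q_{\sigma_n}(\cdot\mid y)}{\gamma\inv s((y-r_J)/\gamma,x'_J,a'_J,\sigma_{n-m})}$. Subtracting \eqref{eq:scorebellman} from this definition writes $(\calS s-s^\star)(y)$ as the conditional expectation of $\gamma\inv(s-s^\star)$ evaluated at the preimage. Applying Jensen to $\|\cdot\|^2$ and integrating $y$ against $\heat{\sigma_n}\dbo\eta(x,a)$ gives
\[
\calE_{\sigma_n}(\calS s;\dbo\eta(x,a))\le \gamma^{-2}\,\Ep{J}{\calE_{\sigma_{n-m}}(s;\eta(x'_J,a'_J))}.
\]
The change of variables is legitimate because the preimage has exactly the smoothed law at $(x'_J,a'_J)$. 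Multiplying by $\sigma_n^2=\gamma^2\sigma_{n-m}^2$ cancels the $\gamma^{-2}$. In the normalized sup form this yields $\tilde e_n(\calS s)\le\tilde e_{n-m}(s)$, which is non-expansiveness with the level shift by $m$. For the $h_n$-weighted multi-level risk the same cancellation applies, since $h_n=\gamma^2 h_{n-m}$ on a geometric grid. The shifted-up levels $n<m$ are the ones fed by the prior.

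For (ii), at a fixed point of training the network satisfies $s_\psi=\calS s_{\bar\psi}+(\text{residual})$ with $\bar\psi=\psi$. I read the consistency loss \eqref{eq:cons} at its minimizer, so its population minimizer over the sampled branch is exactly the conditional expectation $\calS s_{\bar\psi}$. The residual $\tilde r_n$ is the normalized $L^2$ distance between $s_\psi(\cdot,\sigma_n)$ and $\calS s_\psi(\cdot,\sigma_n)$. Write $\eta=\eta^\pi$, so $\dbo\eta=\eta$. The triangle inequality in $L^2(\heat{\sigma_n}\eta)$ together with part (i) gives $\tilde e_n\le\tilde r_n+\tilde e_{n-m}$. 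Unrolling $\lfloor n/m\rfloor$ times ends at a level $n-im<m$. There the comparison is against the Gaussian prior score $-(z-c)/\sigma^2$ at a level above $\sigma_0$, so the last error is the prior's error. Its residual term is included in the sum at $i=\lfloor n/m\rfloor$.

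The main obstacle is bounding this prior error by $D/(2\sigma_0)$. By Tweedie, $\sm{\mu}{\sigma}(y)=(\Ex{Z\mid y}-y)/\sigma^2$, so the prior's error is $\|\Ex{Z\mid y}-c\|/\sigma^2\le R_Z/\sigma^2$, since $\supp\mu\subseteq B_R$ and $c$ is the center. Normalizing by $\sigma\ge\sigma_0$ gives at most $R_Z/\sigma_0=D/(2\sigma_0)$. Taking the supremum over $(x,a)$ keeps every bound uniform. I expect the care to lie in matching the normalization convention in the informal statement: plain $\sigma_n$ versus $\sigma_n^2$, and sup versus average over $(x,a)$. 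The constant suggests that the intended normalized error is $\sigma_n\sqrt{\calE}$, and I would fix that convention first.
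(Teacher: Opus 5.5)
Your proposal is correct and follows the paper's argument essentially step for step. Part (i) uses Jensen's inequality for the posterior branch kernel together with $\sigma_n/\gamma=\sigma_{n-m}$ and $h_{n-m}=h_n/\gamma^2$, exactly as in \cref{thm:scoreTD}(i)--(ii). Part (ii) uses the triangle inequality in $L^2(\heat{\sigma_n}\eta^\pi)$, unrolled $\lfloor n/m\rfloor+1$ times down to a level in $(\sigma_0,\sigma_0/\gamma]$. There the posterior-mean form bounds the prior score's normalized error by $D/(2\sigma)\leq D/(2\sigma_0)$, which is the proof of \cref{cor:consTD} with \cref{thm:scoreTD}(iii).
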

By \emph{(ii)} the consistency residuals, which need no ground truth, bound the error per level; exact sweeps converge in finitely many steps, with the noise schedule as discount factor (\cref{thm:scoreTD,cor:consTD,cor:scoreTDconv}).

\subsection{A Bound on the Sampled Return Law}\label{sec:gen}
\begin{theorem}[The multi-level risk controls the sampled return law]\label{thm:gen}
Let $\mu$ be supported in $B_R$ and $s_\psi(\cdot,\sigma)$ be bounded on $\dsR^d$ for every grid level. Then
\begin{align}\label{eq:gen-kl}
  &\KLdiv{\heat{\sigma_{N_{\mathrm g}}}\mu}{\hat Q_\psi}\leq\frac{m_2(\mu;z_c)}{2\sigma_0^2}+\calE_{\mathrm{ml}}(\psi;\mu)+\veps_{\mathrm{disc}},\\
  &\text{where }\veps_{\mathrm{disc}}\defeq\sum_{n=0}^{N_{\mathrm g}-1}h_n\big(\calI(\heat{\sigma_{n+1}}\mu)-\calI(\heat{\sigma_n}\mu)\big)\geq0. \notag
\end{align}
Here $m_2(\mu;z_c)\defeq\Ep{Y\sim\mu}{\|Y-z_c\|^2}$ is the second moment of $\mu$ about the sampler's center, which is at most $\sup_{y\in B_R}\|y-z_c\|^2$, i.e., at most $R_Z^2$ for $z_c=0$. It is measurable from the bootstrap targets, which are draws of $\mu$.
  For a geometric grid, $\sigma_{n+1}=\kappa_{\mathrm g}\sigma_n$ with $\kappa_{\mathrm g}\in(0,1)$, $\veps_{\mathrm{disc}}\leq d\,\kappa_{\mathrm g}^{-2}(1-\kappa_{\mathrm g}^2)\big(1+2\log(\sigma_0/\sigma_{N_{\mathrm g}})\big)$. Consequently, with $\veps_{\mathrm{TV}}$ the square root of half the right side of \eqref{eq:gen-kl},
\begin{equation}\label{eq:gen-w1}
  \begin{split}
    \tv\big(\hat Q_\psi,\heat{\sigma_{N_{\mathrm g}}}\mu\big)&\leq\veps_{\mathrm{TV}}
  \qquad\text{and}\\
    \W1(Q_\psi,\mu)&\leq D\,\veps_{\mathrm{TV}}+\sigma_{N_{\mathrm g}}\sqrt d .
  \end{split}
\end{equation}
\end{theorem}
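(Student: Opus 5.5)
We compare the discrete sampler path with the exact reverse-time path of the variance-exploding heat flow and apply the chain rule for KL divergence. The reference process is the exact reverse chain $X_0\sim\heat{\sigma_0}\mu$, $X_{n+1}\mid X_n$ distributed as the true backward conditional, so that $X_n\sim\heat{\sigma_n}\mu$ and $X_{N_{\mathrm g}}\sim\heat{\sigma_{N_{\mathrm g}}}\mu$.

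By data processing on the final marginal and the chain rule on path laws,
\[
\KLdiv{\heat{\sigma_{N_{\mathrm g}}}\mu}{\hat Q_\psi}\le\KLdiv{\heat{\sigma_0}\mu}{\Norm(z_c,\sigma_0^2\Id)}+\sum_{n}\Ep{X_n\sim\heat{\sigma_n}\mu}{\KLdiv{p(\cdot\mid X_n)}{\Norm(X_n+h_n s_\psi(X_n,\sigma_n),h_n\Id)}}.
\]

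The initial term is handled by joint convexity of KL. Write both laws as mixtures over $y\sim\mu$ of $\Norm(y,\sigma_0^2\Id)$ and $\Norm(z_c,\sigma_0^2\Id)$. This gives $\Ep{y}{\|y-z_c\|^2}/(2\sigma_0^2)=m_2(\mu;z_c)/(2\sigma_0^2)$.

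The per-step terms are where the work lies, and this is the main obstacle. The true backward kernel is not Gaussian, so the step term must be split into a score-error part and a discretization part.

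The score-error part comes from replacing $s_\psi$ by $\sm{\mu}{\sigma_n}$ inside the Gaussian mean. Comparing two Gaussians with the same covariance gives a term of the form $\tfrac{h_n}{2}\|s_\psi-\sm{\mu}{\sigma_n}\|^2$. Combined with a weighted Cauchy--Schwarz against the remainder, this yields $h_n\calE_{\sigma_n}$; the constant is loosened from $1/2$ to $1$ to absorb the cross term.

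The discretization part is $\Ep{}{\KLdiv{p(\cdot\mid X_n)}{\Norm(X_n+h_n\sm{\mu}{\sigma_n}(X_n),h_n\Id)}}$. I would bound it with a Girsanov or interpolation argument on the continuous reverse SDE over the variance interval $[\sigma_{n+1}^2,\sigma_n^2]$. That argument gives $\int\Ep{}{\|\sm{\mu}{\sqrt t}(X_t)-\sm{\mu}{\sigma_n}(X_{\sigma_n^2})\|^2}\dd t$.

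To evaluate this integral, use the martingale property of the score along the reverse heat flow, namely $\nabla\log\heat{\sqrt t}\mu(X_t)=\Ex{\nabla\log\heat{\sqrt s}\mu(X_s)\mid X_t}$ for $s<t$. Orthogonality then collapses the integrand to $\calI(\heat{\sqrt t}\mu)-\calI(\heat{\sigma_n}\mu)$. Monotonicity of Fisher information along the heat flow bounds this by $\calI(\heat{\sigma_{n+1}}\mu)-\calI(\heat{\sigma_n}\mu)$, and multiplying by the interval length $h_n$ gives $\veps_{\mathrm{disc}}$. Nonnegativity follows from that same monotonicity.

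For the geometric-grid bound, note that $h_n=(1-\kappa_{\mathrm g}^2)\sigma_n^2$ and $\sigma_n^2=\kappa_{\mathrm g}^{-2}\sigma_{n+1}^2$. Then
\[
\veps_{\mathrm{disc}}\le(1-\kappa_{\mathrm g}^2)\kappa_{\mathrm g}^{-2}\sum_n\big(\sigma_{n+1}^2\calI_{n+1}-\sigma_n^2\calI_n\big)+(\text{telescoping correction}).
\]
Two facts about $f(t)=t\,\calI(\heat{\sqrt t}\mu)$ control this sum. First, $f\le d$, because the Fisher information of a Gaussian convolution satisfies $\calI\le d/\sigma^2$. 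Second, $f$ is nonnegative and nondecreasing in a way that is controlled logarithmically. I expect a sum-over-$\log$-scales argument to give the factor $1+2\log(\sigma_0/\sigma_{N_{\mathrm g}})$, but this constant-chasing step is the delicate one.

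The TV bound follows from the KL bound by Pinsker. For $\W1$:
\begin{itemize}
\item Projection onto the convex set $B_R$ is $1$-Lipschitz and fixes $\mu$.
\item Hence $\W1(Q_\psi,\mu)\le\W1\big((\mathrm{proj})_\#\hat Q_\psi,(\mathrm{proj})_\#\heat{\sigma_{N_{\mathrm g}}}\mu\big)+\W1\big((\mathrm{proj})_\#\heat{\sigma_{N_{\mathrm g}}}\mu,\mu\big)$.
\item The first term involves two laws on a set of diameter $D$, so it is at most $D\,\tv\le D\veps_{\mathrm{TV}}$.
\item The second term is at most $\Ex{\sigma_{N_{\mathrm g}}\|\veps\|}\le\sigma_{N_{\mathrm g}}\sqrt d$.
\end{itemize}
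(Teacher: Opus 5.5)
Your KL argument is organized differently from the paper's. The paper makes a single path-space Girsanov comparison between the reverse SDE and the Euler scheme, with the learned score frozen on each step, and then splits the drift error with $\|a+b\|^2\leq2\|a\|^2+2\|b\|^2$. You instead apply the chain rule to the two Markov chains, separate the learned score through a Gaussian mean shift, and need Girsanov only for the exact-score step.

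The inequality you state for that split is true, but not for the reason you give. Let $m_p$ be the mean of the true backward kernel $p(\cdot\mid x)$, and set $m_*\defeq x+h_n\sm{\mu}{\sigma_n}(x)$ and $\delta\defeq s_\psi(x,\sigma_n)-\sm{\mu}{\sigma_n}(x)$. Then
\[
\KLdiv{p}{\Norm(m_*+h_n\delta,h_n\Id)}=\KLdiv{p}{\Norm(m_*,h_n\Id)}+\tfrac{h_n}{2}\|\delta\|^2-\langle m_p-m_*,\delta\rangle .
\]
A weighted Cauchy--Schwarz on the cross term would also charge $\frac{1}{2h_n}\|m_p-m_*\|^2$ to the remainder, and your sketch never pays for that. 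What you actually need is that the cross term vanishes. With $X^{(\sigma)}=X_0+W_{\sigma^2}$, the Brownian bridge gives $\Ex{W_{\sigma_{n+1}^2}\mid X_0,W_{\sigma_n^2}}=(\sigma_{n+1}^2/\sigma_n^2)\,W_{\sigma_n^2}$. Tweedie's formula $\Ex{X_0\mid X^{(\sigma_n)}=x}=x+\sigma_n^2\sm{\mu}{\sigma_n}(x)$ then yields $m_p=x+h_n\sm{\mu}{\sigma_n}(x)=m_*$. The same identity follows from the martingale property you invoke later, integrated along the reverse SDE.

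With this, the split is an exact identity with constant $\frac12$. If you also keep the factor $\frac12$ of the Girsanov bound, which you drop, your route gives $\frac{m_2(\mu;z_c)}{2\sigma_0^2}+\frac12\calE_{\mathrm{ml}}(\psi;\mu)+\frac12\veps_{\mathrm{disc}}$, which implies \eqref{eq:gen-kl}. Because $s_\psi$ enters only through a Gaussian mean, your route needs no boundedness of $s_\psi$, so the clipping of \cref{rem:unbounded} becomes unnecessary. (The paper's route could reach the same factor $\frac12$, since its cross term also has mean zero by \cref{lem:scoremart}; it simply uses the cruder inequality.) You should prove the martingale property rather than assume it; the paper does so in \cref{lem:scoremart} via the Brownian bridge.

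The genuine gap is the geometric-grid bound on $\veps_{\mathrm{disc}}$, which you leave unproved and base on a false heuristic: $f(t)=t\,\calI(\heat{\sqrt t}\mu)$ is not monotone in general. Indeed $f(t)=d-t^{-1}\Ex{\|X_0-\Ex{X_0\mid X_0+W_t}\|^2}$, so for a two-point law $f<d$ for every $t>0$, while $f\to d$ both as $t\to0$ and as $t\to\infty$.

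Nothing beyond $0\leq f\leq d$ is needed, and your own rewriting finishes the job. With $f_n\defeq\sigma_n^2\calI(\heat{\sigma_n}\mu)$ and $\sigma_n^2=\kappa_{\mathrm g}^{-2}\sigma_{n+1}^2$,
\[
\sum_{n=0}^{N_{\mathrm g}-1}\sigma_n^2\big(\calI(\heat{\sigma_{n+1}}\mu)-\calI(\heat{\sigma_n}\mu)\big)=\kappa_{\mathrm g}^{-2}\big(f_{N_{\mathrm g}}-f_0\big)+\big(\kappa_{\mathrm g}^{-2}-1\big)\sum_{n=0}^{N_{\mathrm g}-1}f_n\leq\frac{d}{\kappa_{\mathrm g}^2}\big(1+N_{\mathrm g}(1-\kappa_{\mathrm g}^2)\big).
\]
Next, $1-\kappa_{\mathrm g}^2\leq2\log(1/\kappa_{\mathrm g})$ and $N_{\mathrm g}\log(1/\kappa_{\mathrm g})=\log(\sigma_0/\sigma_{N_{\mathrm g}})$ together bound $N_{\mathrm g}(1-\kappa_{\mathrm g}^2)$ by $2\log(\sigma_0/\sigma_{N_{\mathrm g}})$. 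Multiplying by the factor $1-\kappa_{\mathrm g}^2$ from $h_n=(1-\kappa_{\mathrm g}^2)\sigma_n^2$ gives the stated bound. This is the paper's summation by parts, using the same two facts. Your initialization, Pinsker and $\W1$ steps match the paper's.
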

A clipping argument extends it to score networks of linear growth such as ours (\cref{rem:unbounded}).

\subsection{Policy Evaluation}\label{sec:e2e}
\begin{theorem}[Policy evaluation from the sampler and from Stein particles]\label{thm:e2e}\label{rem:e2emodel}
For $k=0,\dots,K-1$ let $\nu_k\in\calP(B_R)^{\calX\times\calA}$ be the bootstrap source of iteration $k$, let $\psi_{k+1}$ be a score model toward the Bellman targets ${\mu_k(x,a)\defeq\dbo\nu_k(x,a)}$ with $s_{\psi_{k+1}}(\cdot,\sigma)$ bounded on $\dsR^d$ at every grid level, and let $Q_{k+1}(x,a)$ be its sampler law. Let $\veps_{\mathrm{TV},k}(x,a)$ be the bound of \cref{thm:gen} for $\mu_k(x,a)$ and $\psi_{k+1}$, and ${\veps_{\mathrm{gen}}\defeq D\sup_{k,(x,a)}\veps_{\mathrm{TV},k}(x,a)+\sigma_{N_{\mathrm g}}\sqrt d}$.
\emph{(i) Sampler bootstrapping.} If $\nu_k=Q_k$ for every $k$, with $Q_0\in\calP(B_R)^{\calX\times\calA}$ arbitrary, then ${\Wbar1(Q_K,\eta^\pi)\leq\gamma^K\,\Wbar1(Q_0,\eta^\pi)+\veps_{\mathrm{gen}}/(1-\gamma)}$.
\emph{(ii) Stein particles.} If $\nu_k=\hat\eta_k$ are the Stein particles of iteration $k$, then, with the Stein-particle gap $\veps_{\mathrm{St}}\defeq\sup_{k,(x,a)}\W1\big(\hat\eta_{k+1}(x,a),Q_{k+1}(x,a)\big)$,
\begin{align*}
  \Wbar1(\hat\eta_K,\eta^\pi)&\leq\gamma^K\,\Wbar1(\hat\eta_0,\eta^\pi)+\frac{\veps_{\mathrm{gen}}+\veps_{\mathrm{St}}}{1-\gamma},\\
  \Wbar1(Q_K,\eta^\pi)&\leq\veps_{\mathrm{gen}}+\gamma\,\Wbar1(\hat\eta_{K-1},\eta^\pi).
\end{align*}
\end{theorem}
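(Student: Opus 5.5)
The plan is to reduce both parts to a one-step perturbation inequality and then unroll it. \Cref{thm:gen} already does the per-iteration work. For every $k$ and $(x,a)$, applied to $\mu=\mu_k(x,a)=\dbo\nu_k(x,a)$, which is supported in $B_R$ by \cref{asm:mdp} since $\nu_k(x,a)\in\calP(B_R)$, it gives $\W1(Q_{k+1}(x,a),\dbo\nu_k(x,a))\leq D\,\veps_{\mathrm{TV},k}(x,a)+\sigma_{N_{\mathrm g}}\sqrt d$. Taking the supremum over $(x,a)$ and $k$ yields the uniform one-step estimate
\begin{equation*}
  \Wbar1(Q_{k+1},\dbo\nu_k)\leq\veps_{\mathrm{gen}}\qquad\text{for all }k.
\end{equation*}
The sampler laws lie in $\calP(B_R)$ because of the projection $\mathrm{proj}_{B_R}$ in \cref{def:ladderrisk}. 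This keeps the recursion inside bounded-support laws, where all $\W1$ quantities are finite (at most $D$) and \cref{thm:contraction} applies with $p=1$.

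\emph{Part (i).} With $\nu_k=Q_k$, I would use the triangle inequality for $\Wbar1$ (pointwise, then take the supremum), the fixed point $\eta^\pi=\dbo\eta^\pi$, and \cref{thm:contraction}:
\[
\Wbar1(Q_{k+1},\eta^\pi)\leq\Wbar1(Q_{k+1},\dbo Q_k)+\Wbar1(\dbo Q_k,\dbo\eta^\pi)\leq\veps_{\mathrm{gen}}+\gamma\,\Wbar1(Q_k,\eta^\pi).
\]
Unrolling over $K$ steps gives $\gamma^K\Wbar1(Q_0,\eta^\pi)+\veps_{\mathrm{gen}}\sum_{i<K}\gamma^i$. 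Bounding the geometric sum by $1/(1-\gamma)$ finishes part (i).

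\emph{Part (ii).} With $\nu_k=\hat\eta_k$, I would insert $Q_{k+1}$ and $\dbo\hat\eta_k$:
\[
\Wbar1(\hat\eta_{k+1},\eta^\pi)\leq\Wbar1(\hat\eta_{k+1},Q_{k+1})+\Wbar1(Q_{k+1},\dbo\hat\eta_k)+\gamma\,\Wbar1(\hat\eta_k,\eta^\pi)\leq\veps_{\mathrm{St}}+\veps_{\mathrm{gen}}+\gamma\,\Wbar1(\hat\eta_k,\eta^\pi).
\]
Unrolling this as in part (i) gives the first inequality. For the second, I would stop one step earlier and drop the $\veps_{\mathrm{St}}$ term: $\Wbar1(Q_K,\eta^\pi)\leq\Wbar1(Q_K,\dbo\hat\eta_{K-1})+\gamma\Wbar1(\hat\eta_{K-1},\eta^\pi)\leq\veps_{\mathrm{gen}}+\gamma\Wbar1(\hat\eta_{K-1},\eta^\pi)$.

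The main point needing care is applying \cref{thm:gen} legitimately at each iteration. Its hypotheses are that $\mu_k(x,a)$ is supported in $B_R$ and that $s_{\psi_{k+1}}$ is bounded. The first needs the bootstrap sources $\nu_k$ in $\calP(B_R)$, so for part (ii) I would also need the Stein particles in $B_R$ (or projected onto it), which I would state explicitly. The second is assumed in the statement. A secondary check is that $\W1$ on $\calP(B_R)$ is a genuine metric, so the triangle inequality holds pointwise in $(x,a)$ before the supremum is taken; this holds since all moments are finite on the bounded box.
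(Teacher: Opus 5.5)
Your proof is correct and follows the paper's argument step for step. It uses the one-step bound $\W1(Q_{k+1}(x,a),\dbo\nu_k(x,a))\leq\veps_{\mathrm{gen}}$ from \cref{thm:gen}, the contraction of $\dbo$ at its fixed point $\eta^\pi$, one triangle inequality in part (i) and two in part (ii) (through $Q_{k+1}$), and unrolls the recursion, reading the bound on $Q_K$ off at $k=K-1$. The condition you flag, that the Stein particles lie in $B_R$, is already contained in the hypothesis $\nu_k\in\calP(B_R)^{\calX\times\calA}$ of the statement, so you need not assume anything extra.
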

Every term is proved or measurable without ground truth, the multi-level risk through the consistency residuals at a fixed point of training (\cref{cor:residualcert}), and the analogues in $\Wbar2$ hold (\cref{cor:e2eW2}).
Sampler bootstrapping needs moment matching, since draws without it inflate the variance of the targets through the recursion, by $\gamma^2\sigma_{N_{\mathrm g}}^2/(1-\gamma^2)$ per coordinate on a deterministic self-loop (\cref{rem:e2ereadings}).
The Stein particles enter \emph{(ii)} only through the measured $\veps_{\mathrm{St}}$: the exact-score SVGD drift is not globally dissipative for any bounded even $C^1$ kernel with bounded gradient (\cref{prop:nodissip}).

\subsection{Three Sources of Error}\label{sec:anatomy}\label{sec:risk}
A critic's error has three sources: the $1/M$ resolution floor of a finite particle set, the sampler's smoothing bias, and the regularity of the field it must learn.
\begin{proposition}[What particles cannot answer, informal]\label{prop:resolution}
Let $p\in(0,\frac1{2M})$ and let $\lambda\defeq z_{\max}/\|z_{\max}\|_2$.
\emph{(i)} For every equal-weight $M$-atom law $Q$ and every event $A$ with target mass $p$, either $Q(A)=0$ or $Q(A)>2p$.
\emph{(ii)} There is a two-point return law $\eta$ in $B_R$ with a catastrophe of mass $p$ at depth $D$ along $\lambda$ such that \emph{every} equal-weight $M$-atom law $Q$, read by plug-in, errs by at least $\frac D2(1-2pM)$ in $\cvar\alpha(\lambda\T Z)$ at $\alpha=p$ or at $\alpha=\frac1M$, and takes the wrong chance-constrained decision for every threshold $\delta$ in an interval of length at least $p$.
\emph{(iii)} With probability at least $1-\beta$ over $N$ draws, the reverse-diffusion sampler estimates the probability of every half-space event, simultaneously for all scalarizations, with an error that does not depend on $M$: training risk $\veps_{\mathrm{TV}}$, plus a smoothing term that is at most $\frac12e^{-w^2/(2\sigma_{N_{\mathrm g}}^2)}$ when the slab of half-width $w$ around the event boundary carries no target mass, plus $\calO(\sqrt{(d\log N+\log(1/\beta))/N})$.
\end{proposition}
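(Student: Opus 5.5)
The proof splits along the three parts, which are largely independent.

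For \emph{(i)}, an equal-weight $M$-atom law gives every event a mass in $\{0,\frac1M,\dots,1\}$. So $Q(A)$ is either $0$ or at least $\frac1M$. Since $p<\frac1{2M}$, we have $\frac1M>2p$, and the dichotomy follows at once.

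For \emph{(ii)}, I would build the two-point law explicitly. Put mass $1-p$ at the corner $z_{\max}$ and mass $p$ at $-z_{\max}$. Both points lie in $B_R$, their separation is $\|2z_{\max}\|_2=D$, and they are aligned with $\lambda$. Under the scalarization, $\lambda\T Z$ equals $R_Z$ with probability $1-p$ and $-R_Z$ with probability $p$. Using the lower-tail convention, the true values are $\cvar p=-R_Z$ and $\cvar{1/M}=-R_Z pM+R_Z(1-pM)=R_Z(1-2pM)$.

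For a plug-in $Q$, the empirical CVaR at level $\frac1M$ is the value of the smallest atom $u_{(1)}$. At level $p<\frac1M$ it is the same value $u_{(1)}$, because the lowest atom alone carries more mass than $p$. So one number $u$ must approximate both $-R_Z$ and $R_Z(1-2pM)$. These targets differ by $R_Z(2-2pM)\ge D(1-pM)$, so
\[
\max\{|u+R_Z|,\ |u-R_Z(1-2pM)|\}\ge \tfrac D2(1-pM)\ge\tfrac D2(1-2pM).
\]
For the chance-constrained decision, take the event $\{\lambda\T Z\le t\}$ and the rule ``accept iff the probability is at most $\delta$''. The truth accepts iff $\delta\ge p$, while by (i) $Q$ reports either $0$ or at least $\frac1M>2p$. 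In the first case $Q$ errs for $\delta\in[0,p)$; in the second it errs for $\delta\in[p,\frac1M)$. Either interval has length at least $p$.

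For \emph{(iii)}, I would decompose the error for a half-space $H=\{z:\lambda\T z\le t\}$ into three terms:
\[
|\hat Q_N(H)-\mu(H)|\le|\hat Q_N(H)-\hat Q_\psi(H)|+|\hat Q_\psi(H)-\heat{\sigma_{N_{\mathrm g}}}\mu(H)|+|\heat{\sigma_{N_{\mathrm g}}}\mu(H)-\mu(H)|.
\]
The middle term is at most $\veps_{\mathrm{TV}}$ by \cref{thm:gen}, and none of the three terms involves $M$.

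The first term is uniform concentration over half-spaces. Their VC dimension is $d+1$, so a VC/DKW bound gives $\calO(\sqrt{(d\log N+\log(1/\beta))/N})$ with probability $1-\beta$, simultaneously for all $(\lambda,t)$. For the projected law $Q_\psi$, the half-space intersected with the box is still handled by a finite-VC class.

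The third term is the smoothing term. Smoothing moves mass across the boundary only from target points $y$, and such a point crosses with probability $\Phi(-\mathrm{dist}(y,\partial H)/\sigma_{N_{\mathrm g}})$. If the slab of half-width $w$ is empty, every target point is at distance at least $w$, so each crossing probability is at most $\bar\Phi(w/\sigma)\le\frac12e^{-w^2/(2\sigma^2)}$. Averaging over $\mu$ then bounds the third term by this quantity.

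The main obstacle is the CVaR convention bookkeeping in (ii): the constant $\frac D2(1-2pM)$ must match the chosen construction and the sign convention for the tail. I would fix the lower-tail convention and place the catastrophe at $-z_{\max}$ to make the arithmetic land.
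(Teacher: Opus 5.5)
Your proof is correct and follows the paper's route. Part (i) is the same observation that $Q(A)$ is a multiple of $\frac1M$. Your law in (ii) is the paper's $\eta_t$ with $t=D$, since $z_{\max}-D\lambda=-z_{\max}$. Part (iii) is the same three-term split through $\hat Q_\psi$ and $\heat{\sigma_{N_{\mathrm g}}}\mu$: a VC bound over half-spaces, \cref{thm:gen} for the middle term, and the one-dimensional crossing probability $\bar G(w/\sigma_{N_{\mathrm g}})\leq\frac12e^{-w^2/(2\sigma_{N_{\mathrm g}}^2)}$ for the smoothing term.

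The one genuine difference is the CVaR step of (ii).
\begin{itemize}
\item The paper splits on whether the smallest atom lies below the midpoint $\lambda\T z_{\max}-\frac D2$, i.e.\ on whether $Q(A)=0$. In the second case it gets $\frac D2(1-2pM)$ at $\alpha=1/M$.
\item You note that the plug-in CVaR is the same number $u_{(1)}$ at both levels and charge half the gap between the true values $-R_Z$ and $R_Z(1-2pM)$. This is shorter and gives the sharper bound $\frac D2(1-pM)$.
\item The paper's split buys, in return, the link to the event dichotomy of (i). It also gives the extra fact that a $Q$ with $Q(A)=0$ errs by more than $\frac D2$ at every $\alpha\leq p$.
\end{itemize}

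Your aside on the projected law $Q_\psi$ is not needed, because the statement concerns draws from $\hat Q_\psi$. It also locates the difficulty in the wrong place. Uniform concentration over half-spaces holds for projected draws just as well. The real issue is that projection onto $B_R$ can carry a point across a half-space boundary that its unprojected version did not cross. So it is the smoothing term, not the VC class, that would need a new argument.
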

\emph{(i)} and \emph{(ii)} hold for every particle critic, the Stein particles included, and \emph{(i)} also rounds mixture weights to multiples of $1/M$.
At rare-event scale the certificate is qualitative, since $\veps_{\mathrm{TV}}$ can exceed $p$ (\cref{sec:experiments}); the claim is what controls the error: $1/M$ for atoms, the training risk and $N$ for the sampler.

The sampler instead has a smoothing bias, first order on point masses and second order on laws with densities.
\begin{proposition}[Smoothing bias]\label{prop:smoothbias}
Let $\eta\in\calP_2(\dsR^d)$ and $\sigma>0$.
\emph{(i)} $\W2(\heat{\sigma}\eta,\eta)\leq\sigma\sqrt d$.
\emph{(ii)} If ${\eta=\sum_kw_k\delta_{y_k}}$ with $\|y_j-y_k\|\geq\Delta$ for all ${j\neq k}$, then ${\W2^2(\heat{\sigma}\eta,\eta)\geq\sigma^2d\,\big(1-\Pr(\chi^2_{d+2}\geq\Delta^2/(4\sigma^2))\big)}$, where $\chi^2_{d+2}$ is a chi-squared variable with $d+2$ degrees of freedom; for a single atom, $\W2(\heat{\sigma}\delta_y,\delta_y)=\sigma\sqrt d$.
\emph{(iii)} If $\eta=\sum_kw_k\Norm(\mu_k,S_k)$ with $S_k\succeq s^2\Id$ for every $k$, then $\W2(\heat{\sigma}\eta,\eta)\leq\sqrt d\,\sigma^2/(2s)$.
\end{proposition}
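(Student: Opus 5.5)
We prove the three parts by explicit couplings. Upper bounds come from the synchronous coupling $(Y,Y+\sigma\veps)$. The lower bound comes from a nearest-atom argument. Throughout, let $Y\sim\eta$ and $\veps\sim\Norm(0,\Id)$ be independent.

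\emph{(i)} The pair $(Y+\sigma\veps,Y)$ is a coupling of $\heat{\sigma}\eta$ and $\eta$. Hence
\[
\W2^2(\heat{\sigma}\eta,\eta)\leq\Ex{\|\sigma\veps\|^2}=\sigma^2d .
\]

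\emph{(ii)} Let $\gamma$ be any coupling of $X\sim\heat{\sigma}\eta$ with $Y\sim\eta$. Since $\eta$ is atomic, $\|X-Y\|\geq\operatorname{dist}(X,\{y_k\})$ almost surely. Therefore
\[
\W2^2(\heat{\sigma}\eta,\eta)\geq\Ex{\operatorname{dist}(X,\{y_k\})^2}.
\]
Write $X=y_K+\sigma\veps$, where $K\sim w$ is independent of $\veps$.

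On the event $\|\sigma\veps\|<\Delta/2$, the nearest atom is $y_K$ itself. This holds because $\|X-y_j\|\geq\Delta-\|\sigma\veps\|>\Delta/2>\|\sigma\veps\|$ for $j\neq K$. So
\[
\operatorname{dist}^2\geq\sigma^2\|\veps\|^2\ind\{\|\veps\|^2<\Delta^2/(4\sigma^2)\}.
\]
It remains to compute $\Ex{\|\veps\|^2\ind\{\|\veps\|^2<t\}}$. We use the size-biasing identity for chi-squared laws: $u f_{\chi^2_d}(u)=d\,f_{\chi^2_{d+2}}(u)$, which follows directly from the densities. It gives
\[
\Ex{\|\veps\|^2\ind\{\|\veps\|^2<t\}}=d\Pr(\chi^2_{d+2}<t).
\]
This yields the stated bound.

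For a single atom, the coupling is forced, so $\W2^2=\Ex{\|\sigma\veps\|^2}=\sigma^2d$ exactly.

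\emph{(iii)} This is the main obstacle, since the synchronous coupling only gives order $\sigma$, and we need order $\sigma^2/s$. The plan is to couple componentwise and then use joint convexity of $\W2^2$ over mixtures with common weights:
\[
\W2^2(\heat\sigma\eta,\eta)\leq\sum_kw_k\W2^2\big(\Norm(\mu_k,S_k+\sigma^2\Id),\Norm(\mu_k,S_k)\big).
\]
Each Gaussian pair has equal means, and the covariances $S_k+\sigma^2\Id$ and $S_k$ commute. The Gaussian $\W2$ formula then gives the squared distance
\[
\operatorname{tr}\big((S_k+\sigma^2\Id)^{1/2}-S_k^{1/2}\big)^2=\sum_i\big(\sqrt{\lambda_i+\sigma^2}-\sqrt{\lambda_i}\big)^2,
\]
where $\lambda_i$ are the eigenvalues of $S_k$.

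By concavity of the square root,
\[
\sqrt{\lambda+\sigma^2}-\sqrt\lambda\leq\frac{\sigma^2}{2\sqrt\lambda}\leq\frac{\sigma^2}{2s}
\]
whenever $\lambda\geq s^2$. Summing over the $d$ eigenvalues gives $d\sigma^4/(4s^2)$ for each $k$. The same bound survives the mixture average, and taking the square root gives $\sqrt d\,\sigma^2/(2s)$.
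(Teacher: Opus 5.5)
Your proof is correct and follows the paper's own route. Both use the synchronous coupling for (i). Both use the nearest-atom lower bound with the size-biasing identity $u f_{\chi^2_d}(u)=d\,f_{\chi^2_{d+2}}(u)$ for (ii). Both couple the mixtures component by component for (iii) and bound each eigenvalue term with $\sqrt{\lambda+\sigma^2}-\sqrt\lambda\leq\sigma^2/(2\sqrt\lambda)$.

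The differences are cosmetic. In (iii) you invoke the closed-form $\W2$ between Gaussians with commuting covariances. The paper instead exhibits the coupling $(\mu_k+(S_k+\sigma^2\Id)^{1/2}\xi,\ \mu_k+S_k^{1/2}\xi)$, which is enough for the upper bound. For a single atom you use that any coupling with a Dirac mass is forced, whereas the paper combines the distance lower bound with (i).
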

Part \emph{(iii)} decides where the field can be learned.
\begin{lemma}[Regularity of the smoothed score]\label{lem:scorelip}
  Let $\eta\in\calP(\dsR^d)$, $\sigma>0$, $s_\sigma\defeq\sm{\eta}{\sigma}$ and ${\mathrm{Lip}(s_\sigma)\defeq\sup_z\|\nabla s_\sigma(z)\|_{\mathrm{op}}}$.
\emph{(i)} If $\supp\eta$ has diameter at most $D$, then $-\sigma^{-2}\Id\preceq\nabla s_\sigma(z)\preceq\sigma^{-4}(D^2/4-\sigma^2)\Id$ for every $z$.
\emph{(ii)} If $\eta=\sum_{k=1}^Kw_k\delta_{y_k}$ with $K\geq2$ atoms at pairwise distances at least $\delta$, attained by some pair, then $\liminf_{\sigma\to0}\sigma^4\,\mathrm{Lip}(s_\sigma)\geq\delta^2/4$.
\emph{(iii)} If $\eta=\sum_kw_k\Norm(\mu_k,S)$ with $S\succeq s^2\Id$ and the means in a set of diameter $\Delta$, then $\mathrm{Lip}(s_\sigma)\leq\max\{s^{-2},\Delta^2/(4s^4)\}$ for every $\sigma>0$.
\end{lemma}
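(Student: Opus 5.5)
The plan rests on the posterior (Tweedie) representation of the smoothed score and its Jacobian. Write $p_\sigma=\heat{\sigma}\eta$ and, for fixed $z$, let $\pi_z(\dd y)\propto e^{-\|z-y\|^2/(2\sigma^2)}\eta(\dd y)$ be the posterior of the clean point given the noisy observation $z$. Differentiating $\log\int e^{-\|z-y\|^2/(2\sigma^2)}\eta(\dd y)$ once and twice gives
\begin{equation*}
s_\sigma(z)=\frac{\Ep{\pi_z}{Y}-z}{\sigma^2},\qquad
\nabla s_\sigma(z)=-\frac{1}{\sigma^2}\Id+\frac{1}{\sigma^4}\Cov_{\pi_z}(Y).
\end{equation*}
Every part then reduces to bounding or evaluating the posterior covariance.

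\emph{(i)} Since $\Cov_{\pi_z}(Y)\succeq0$, the lower bound $-\sigma^{-2}\Id$ is immediate. For the upper bound, $\pi_z$ is supported in $\supp\eta$, which has diameter at most $D$. For any unit vector $u$, the scalar $u\T Y$ then ranges over an interval of length at most $D$. Popoviciu's inequality gives $\Var(u\T Y)\leq D^2/4$, so $\nabla s_\sigma\preceq(-\sigma^{-2}+\sigma^{-4}D^2/4)\Id=\sigma^{-4}(D^2/4-\sigma^2)\Id$.

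\emph{(ii)} Take the pair $y_j,y_k$ at distance exactly $\delta$ and evaluate at the weighted midpoint $z_\sigma$ on the segment between them. We choose $z_\sigma$ so that the posterior puts equal mass on $y_j$ and $y_k$, which shifts the midpoint by $O(\sigma^2\log(w_j/w_k)/\delta)$. Every other atom $y_l$ lies at distance at least $\delta$ from both $y_j$ and $y_k$. At a point near the midpoint it is therefore at distance $\geq\delta\sqrt3/2$, whereas $y_j$ and $y_k$ are at distance $\approx\delta/2$, so the posterior weight of $y_l$ is $O(e^{-c\delta^2/\sigma^2})\to0$. Hence $\pi_{z_\sigma}$ tends to the uniform two-point law on $\{y_j,y_k\}$. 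Its covariance in the direction $u=(y_j-y_k)/\delta$ tends to $\delta^2/4$. Then $\sigma^4\|\nabla s_\sigma(z_\sigma)\|_{\mathrm{op}}\geq\sigma^4u\T\nabla s_\sigma u=\Var_{\pi_{z_\sigma}}(u\T Y)-\sigma^2\to\delta^2/4$, which gives the liminf claim.

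\emph{(iii)} Here $\heat{\sigma}\eta=\sum_kw_k\Norm(\mu_k,S+\sigma^2\Id)$. We apply the same identity with the noise covariance $\Sigma\defeq S+\sigma^2\Id\succeq(s^2+\sigma^2)\Id$ playing the role of $\sigma^2\Id$, with the mixture index as the latent variable:
\begin{equation*}
\nabla s_\sigma=-\Sigma\inv+\Sigma\inv\Cov_{k\mid z}(\mu_k)\Sigma\inv .
\end{equation*}
Write $\tau^2\defeq s^2+\sigma^2$. Since $\Cov_{k\mid z}(\mu_k)\preceq(\Delta^2/4)\Id$ by Popoviciu, we get $-\tau^{-2}\Id\preceq\nabla s_\sigma\preceq\tau^{-4}\Delta^2/4\,\Id$. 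This bound is $\leq\max\{s^{-2},\Delta^2/(4s^4)\}$, uniformly in $\sigma$, because $\tau\geq s$.

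\textbf{Main obstacle.} The delicate step is (iii) when $S$ is not isotropic. The sandwich bound needs $\Sigma\inv\preceq\tau^{-2}\Id$, which follows from $S\succeq s^2\Id$, and it needs the operator norm of $\Sigma\inv C\Sigma\inv\preceq\|\Sigma\inv\|^2C$. Both hold. In (ii), the obstacle is the careful control of the posterior weights of the other atoms uniformly as $\sigma\to0$, including the midpoint shift caused by unequal weights.
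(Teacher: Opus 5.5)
Your proof is correct and follows the paper's argument essentially step by step: the posterior-covariance form of $\nabla s_\sigma$, Popoviciu's inequality for (i) and for the means in (iii), and in (ii) the point on the segment where the posterior odds of the closest pair equal one, with the other atoms suppressed because they lie at distance at least $\sqrt3\,\delta/2$ from the midpoint. One small slip in your closing remark: read as a Loewner inequality, $\Sigma\inv C\Sigma\inv\preceq\|\Sigma\inv\|^2C$ is false for non-commuting matrices, but the bound you actually use in (iii), $\Sigma\inv C\Sigma\inv\preceq\frac{\Delta^2}{4}\Sigma^{-2}\preceq\frac{\Delta^2}{4}\tau^{-4}\Id$, holds by congruence.
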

So the field of a law with densities is equally regular on every level, while the Lipschitz constant of an atomic law's field grows like $\sigma^{-4}$ at the lowest noise levels, where a network of bounded regularity cannot fit it uniformly and Euler steps of the sampler become unstable (\cref{app:proofs}).
Particles need no regularity of the field. This is the split we observe in \cref{sec:experiments}, and the hybrid rule of \cref{def:hybrid} is its agent-side form.

\begin{table}[t]
  \caption{\emph{Main errors of the critic families.} The error that limits each family (\cref{prop:resolution,prop:smoothbias,lem:scorelip}). $d{>}1$: joint return law; ${<}1/M$: events rarer than one particle's mass. Particle critics with fresh or time-averaged targets differ only in sampling variance. Only SVDRL has both without a grid.}
  \label{tab:capabilities}
  \centering\small\setlength{\tabcolsep}{4pt}
  \begin{tabular}{lccp{1.95cm}}
    \toprule
    Critic & $d{>}1$ & ${<}1/M$ & Main error \\
    \midrule
    Categorical grid      & $m^d$ bins & on grid            & bin width \\
    Quantile, IQN         & --         & IQN, one $\lambda$ & $d{=}1$ only \\
    Particle critics      & \checkmark & --                 & $1/M$ floor \\
    Scalar generative     & --         & \checkmark         & $d{=}1$ only \\
    \textbf{SVDRL}        & \checkmark & \checkmark         & smoothing, regularity \\
    \bottomrule
  \end{tabular}
\end{table}

\section{Experiments}\label{sec:experiments}
\paragraph{Protocol.} In the tabular experiments a seed's error is its mean over the last $20\,\%$ of training, which averages out the fluctuation of single evaluations.
Configurations are selected on tuning seeds and reported on different seeds that the selection did not use, in the rare-event study additionally on a second set of fresh seeds.
We report the interquartile mean (IQM) or median over seeds with $95\,\%$ bootstrap intervals; on the synthetic MDPs and the chain a comparison is a win or a loss when the Welch $t$ statistic is at most $-2$ or at least $2$, and in the rare-event study it compares IQM errors (\cref{app:protocols}).
The particle competitors are MMD critics \citep{nguyentang2021distributional,zhang2021distributional} with RBF, MD3QN and energy kernels, sliced Cram\'er and sliced $\W2$ critics, the persistent-particle objectives of \citet{debes2026multivariate} and Sinkhorn critics \citep{sun2024distributional}, each with a tuned learning rate and each also run with our stratified target set (\cref{app:hparams}).
The MMD critic with the energy kernel, energy MMD, is our own variant; its loss is the energy distance, in which the multivariate distributional Bellman operator contracts \citep{wiltzer2024foundations}.

\begin{figure*}[t]
  \centering
  \adjustbox{width=0.9\textwidth}{\includegraphics{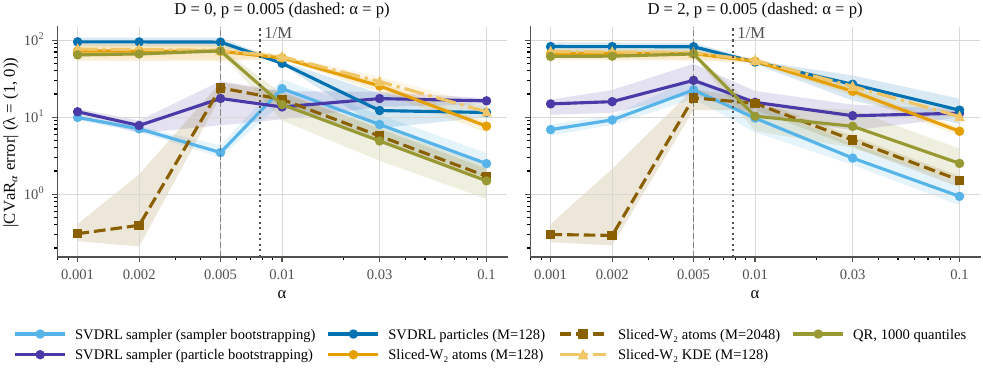}}
  \caption{\emph{Rare events.} Rare-event chain, catastrophe of mass $p=0.005$ at depth $D=0$ (left) and $D=2$ (right): CVaR$_\alpha$ error of $Z_1$. Below $1/M$, particle critics with $M=128$ miss the catastrophe and SVDRL's sampler errs an order of magnitude less; sliced $\W2$ needs $M=2048$ to locate it. IQM over ten seeds, $95\,\%$ bootstrap intervals.}
  \label{fig:rare}
\end{figure*}

\paragraph{Rare Events.}\label{sec:rare}\label{sec:capabilities}\label{sec:tail}
In the rare-event chain (\cref{app:protocols}) the risky root action leads, $D\in\{0,2\}$ steps later, to a catastrophe of mass $p\in\{0.002,0.005,0.01\}$ in a two-dimensional return, and the queries on the joint law are CVaR down to $\alpha=0.001$, event probabilities, chance constraints, conditional tails and entropic risk.
Six particle critics answer from their particles or from a kernel density estimate on them, a quantile critic is trained on the oracle scalarization, and SVDRL bootstraps from moment-matched sampler draws (\cref{thm:e2e}(ii), \cref{rem:e2ereadings}) and answers with $N=10^5$ draws.

At equal particle count, $M=128$, SVDRL has a lower IQM error than the best particle critic in 60 of 60 rare-event and 12 of 12 bulk comparisons, and on fresh seeds its error is lower by a median factor of 5.0 on rare events and 3.9 on the bulk (\cref{fig:rare}).
At comparable compute, against particle critics with $M=512$, it is still more accurate in 40 of 60 rare-event and 10 of 12 bulk comparisons.
At $p=0.005$ and $D=2$ it is more accurate than every particle critic on every query of \cref{tab:re1main}, from their atoms or from a kernel density estimate, and the quantile critic on the oracle scalarization matches it only at CVaR$_{0.01}$ of that scalarization. Its $|\log\hat P/P|$ there is 0.16, against 4.8 to 8.5 for the particle critics.
Particle critics need about $10/p$ particles to locate the catastrophe: with $M=2048$, sliced $\W2$ reaches a CVaR$_{0.001}$ error of 0.30 against SVDRL's 6.9.
Even then SVDRL is more accurate on the mass ($|\log\hat P/P|$ 0.16 against 0.26), on the chance-constrained decisions (accuracy 1.00 against 0.96) and on the bulk (CVaR$_{0.1}$ error 0.94 against 1.5).
With sampler bootstrapping the catastrophe survives the recursion even at $M=32$ for $p\geq0.005$, where every particle critic loses it (\cref{app:res-rare}).
This requires the rare mode to have a density.

Resource gathering from MO-Gymnasium \citep{felten_toolkit_2023} has atomic returns, with death as the rare event.
At the standard attack probability $p=0.1$ and at ${p=0.02}$, SVDRL's particles ($M=128$) are the best critic: their $|\log\hat P/P|$ for death is 0.028 and 0.15 against at best 0.19 and 0.45, their chance-constrained choice is wrong in 0 and 13\,\% of the cases against 14 and 41\,\%, and their CVaR error at $p=0.1$ is 0.0035 against 0.030 (medians over ten seeds, \cref{app:res-rg}).
The stratified target set holds each atom at its mass up to $1/M$, and at $p=0.005<1/M$ full replacement rounds the mass without bias: the time-averaged estimate is 0.0053 for $P=0.005$.
The sampler fails on these atomic laws, as \cref{lem:scorelip} predicts, and the hybrid rule matches the particles on the probability of death and on the chance-constrained choice.

\paragraph{Eight Synthetic MDPs.}\label{sec:tabular}\label{sec:against}
\begin{table}[t]
  \centering
  \caption{\emph{Eight synthetic MDPs.} Within-mode $\W2^2$ in units of $10^{-3}$, averaged over the last $20\,\%$ of training. SVDRL reports the hybrid of \cref{def:hybrid}. Ratios to the best competitor (\cref{tab:tab1comp}); own: as specified; all: also on our target set; $M=4096$: compute-matched, unimodal MDPs only. Bold: SVDRL is better by Welch's test. Mode masses in \cref{tab:mainmass}.}
  \label{tab:main}
  \small
  \input{tables/tab1_compact.tex}
\end{table}
We split the error into the total variation of the mode masses (\cref{tab:mainmass}) and the within-mode $\W2^2$ (\cref{tab:main}), because the full $\W2^2$ charges a mass error $\epsilon$ across a gap $\Delta$ as $\epsilon\Delta^2$ and is dominated by rounding when modes are far apart (\cref{app:res-mdp}).
On within-mode shape the hybrid scores 8/0/0 wins/ties/losses against the competitors as specified and 7/0/1 against every row, competitors given our target set included; a single configuration for all eight MDPs, without per-MDP selection, gives the same counts, and the separation test alone ($r>6$) makes the same choice of object on every MDP (\cref{tab:sel1objects}).
Its particles alone score 6/1/1 and 4/1/3 in the same two framings.
On mass every method sits at the $1/M$ limit (0/2/2), as \cref{prop:resolution}(i) predicts: SVDRL allocates within one particle of the weights and rounds stochastically, so its mass is unbiased over time (bias 0.0013 on MDP2, against a floor of 0.0031 at which persistent clouds get stuck).

On the four unimodal MDPs the hybrid reports $N=4096$ sampler draws against $M=64d$ particles, and at matched compute these wins vanish: competitors with $M=4096$ beat the sampler on all four (0/0/4).

The ablations and the checks of the theory support every component (\cref{app:res-mdp,app:res-theory}). On MDP1, MDP5 and MDP8 the Stein move beats Langevin dynamics along the same score, a Gaussian jitter and no move by factors of 1.19 to 1.39, and without repulsion the cloud collapses on MDP5 and MDP8, with 28.6 and 204 times the error (\cref{tab:rj1}). Removing the consistency loss raises the error significantly on four MDPs and never lowers it, and a single-level network misallocates the mode masses (0.32 against 0.056 on MDP2; \cref{rem:laddernecessity}). On trained networks \cref{thm:scoreTDinf}(ii) holds in 87 of 87 seeds and the bound of \cref{thm:gen} in 80 of 80 cases, but loosely: $\veps_{\mathrm{TV}}$ exceeds the measured total variation 4.4 to 34 times. The ground-truth-free gap $\veps_{\mathrm{St}}$ of \cref{thm:e2e} equals the particles' true error within 1 to 12\,\% on the unimodal MDPs.

\begin{table*}[t]
  \centering
  \caption{\emph{Continuous control.} MuJoCo after $10^6$ steps, IQM over five to ten seeds with its $95\,\%$ bootstrap interval; bold: inside the interval of the best IQM. Return: each run's mean evaluation return over the last $20\,\%$ of training. Critic shape: centered $\W2$ between the critic and Monte Carlo returns of the frozen policy, over their standard deviation. SVDRL's configurations in \cref{app:res-mujoco}.}
  \label{tab:mujoco}\label{tab:mc}
  \footnotesize
  \setlength{\tabcolsep}{4pt}
  \adjustbox{max width=0.98\textwidth}{\input{tables/tab_mujoco_main.tex}}
\end{table*}

\begin{figure*}[t]
  \centering
  \includegraphics{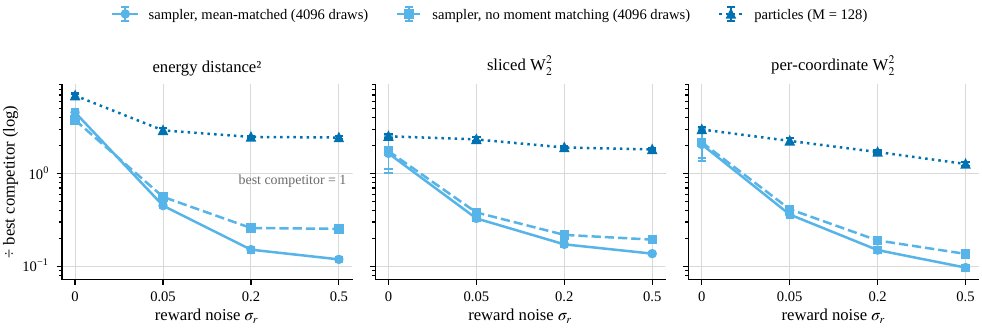}
  \caption{\emph{Chain with reward noise.} SVDRL's error over the best competitor's ($M=128$) per metric against the reward noise $\sigma_r$, for its sampler and particles; below one, SVDRL is better. The sampler loses on the atomic chain and wins for $\sigma_r\geq0.05$, until competitors get $M=4096$ particles. Means over ten seeds, $95\,\%$ intervals.}
  \label{fig:chain}
\end{figure*}

\paragraph{Chain with Reward Noise.}\label{sec:chain}
The chain of \citet{debes2026multivariate} has ten reward dimensions, to which we add Gaussian reward noise of scale $\sigma_r$.
On the atomic chain, $\sigma_r=0$, their persistent particles win, as \cref{lem:scorelip} predicts.
With reward noise, $\sigma_r\geq0.05$, and against their critics at their published settings with $M=128$ instead of $500$ particles, SVDRL's moment-matched sampler ($4096$ draws) wins every shape metric and the mean (\cref{fig:chain}), \eg, with 0.15 times the best competitor's energy distance and 0.56 times its mean error at $\sigma_r=0.2$.
With more particles they win shape at matched compute: with $M=4096$ their shape errors are 1.6 to 3.0 times lower than the sampler's.
Our stratified target set lowers the energy distance and mean error of every one of their variants, and running-mean moment matching, a shift of their particles to a running mean of the bootstrap targets, lets them win the mean too, so both of our components transfer to their critics (\cref{app:res-chain}).

\paragraph{Continuous Control.}\label{sec:mujoco}\label{sec:univariate}
On four MuJoCo environments \citep{towers2024gymnasium}, SAC with the SVDRL critic runs against SAC and against IQN, MD3QN and energy MMD critics on the same trunk (\cref{tab:mujoco}).
Its return is on par with SAC's, with overlapping $95\,\%$ intervals on all four environments, although its IQM on HalfCheetah is lower.
Its critic distribution is the most accurate on three of four: its centered $\W2$, which removes the twin-critic bias of the means, is the lowest on HalfCheetah, Walker2d and Hopper, with the interval entirely below IQN's, and second on Ant, behind energy MMD.
With the default configuration of HalfCheetah it is 0.70 on Walker2d and 0.57 on Ant (\cref{tab:mc_full}), the same ranks.

\section{Related Work}\label{sec:related}
\paragraph{Multivariate Distributional RL.} MD3QN \citep{zhang2021distributional} trains particles with an MMD loss, SinkhornDRL \citep{sun2024distributional} with entropic optimal transport, and sliced DRL \citep{debes2026multivariate} with sliced divergences, whose max-sliced version also contracts under matrix discounting.
Their analyses establish the contraction of the Bellman operator in the respective divergence, and \citet{wiltzer2024foundations} prove convergence of categorical dynamic programming and TD learning on a fixed support; our guarantees concern a learned score-based representation of possibly multimodal returns (\cref{thm:gen,thm:e2e}), also under matrix discounting (\cref{cor:e2eGamma}).
\citet{yoo2026knotherosenblatt} extend quantile regression to vector returns through the Knothe--Rosenblatt map.

\paragraph{Generative Critics.} Diffusion and flow critics such as VDRL \citep{hu2025value}, DBC \citep{ding2026distributional}, Value Flows \citep{dong2026value} and FlowIQN \citep{groom2026quantile} model scalar returns, and DBC and FlowIQN rely on one-dimensional structure, inverse distribution functions or sorted couplings.
Bellman Diffusion \citep{li2024bellman} learns unnormalized density fields of scalar returns on which the Bellman operator acts linearly, and \cref{cor:scorebellman} is the normalized counterpart in score space, which is nonlinear through the posterior branch weights.
The Bellman GAN \citep{freirich2019distributional} samples vector-valued returns and distributional Sobolev training \citep{debes2026distributional} returns with their action gradients, without a density or a score.
None of these works bounds the error of a learned multivariate representation.

\paragraph{Particle Transport.} SVGD \citep{liu2016stein} is a kernelized Wasserstein gradient flow of the KL divergence \citep{liu2017svgd,chewi2020svgd}, and its convergence guarantees rest on functional inequalities of the target, with constants that grow exponentially with the separation of modes (\cref{prop:dsmblind}(i)).
In RL, SVGD has so far trained policies \citep{liu2017stein,messaoud2024s2ac}, not return distributions.

\section{Limitations and Conclusion}\label{sec:conclusion}
The main errors of \cref{tab:capabilities} also mark where SVDRL is not the best critic.
The sampler fails on atomic and widely separated laws, such as the atomic chain, resource gathering and the far modes of MDP4, as \cref{lem:scorelip} predicts, and the hybrid rule reports the particles there.
A light rare mode far from the bulk comes out too wide (\cref{fig:rarelaw}), and a static fit of the score network to draws of the true law reproduces this, with 13 to 15 times the true width (\cref{app:limits}).
At equal $N$ the sampler loses to fitted particle clouds (\cref{sec:experiments}).
In SAC-SVDRL a linear-variance floor made one Ant seed diverge, which a log-variance head removes on HalfCheetah and a standardized moment head limits on multi-objective MuJoCo.

SVDRL learns a Bellman-consistent score field that yields a certified sampler without a resolution floor and Stein particles.
It answers rare-event queries that particles cannot represent, and two of its components improve every particle critic we tested.
At equal particle count it is more accurate than the best particle critic in 60 of 60 rare-event comparisons, and in continuous control SAC-SVDRL learns the most accurate critic distribution on three of four MuJoCo environments.

\clearpage
\subsection*{AI Use Statement}

In this work, we used generative AI tools for the following tasks.
It was used to optimize the original implementation of the methods by the authors to improve performance and experimental throughput. All generated code was reviewed by the authors, compared numerically against the original implementation and an extensive set of unit tests.
We further used these tools during the writing process as follows.
Drafting of sections, typesetting the proofs by the authors into their final form, and grammar checks.
The final form of the proofs was checked manually by the authors to ensure they remained consistent.
Whenever a tool suggested the addition of a reference in any of the sections, it was flagged and manually checked for correctness by the authors.
To ensure that results could not inadvertently be changed by the tool during typesetting, all are generated automatically from the logged experimental data.
Any error in the proofs as well as incorrect attribution of theoretical or empirical results is therefore due to the authors and we take full responsibility for it and everything written in the paper.

%% file: tables/tab1_compact.tex
\begin{tabular}{lcrrrr}
\toprule
 & & \multicolumn{4}{c}{Within-mode shape, $\W2^2$ ($\downarrow$)} \\
\cmidrule(lr){3-6}
 & & & \multicolumn{3}{c}{SVDRL / best competitor} \\
\cmidrule(lr){4-6}
MDP & $d$ & SVDRL & own & all & $M{=}4096$ \\
\midrule
MDP1 & 1 & 5.7 & \textbf{0.23} & \textbf{0.27} & 10 \\
MDP2 & 1 & 81 & \textbf{0.83} & 1.6 & -- \\
MDP3 & 2 & 389 & \textbf{0.20} & \textbf{0.80} & -- \\
MDP4 & 5 & 62 & \textbf{0.020} & \textbf{0.097} & -- \\
MDP5 & 2 & 6.1 & \textbf{0.17} & \textbf{0.20} & 1.2 \\
MDP6 & 2 & 50 & \textbf{0.054} & \textbf{0.12} & -- \\
MDP7 & 3 & 1.5 & \textbf{0.19} & \textbf{0.27} & 4.1 \\
MDP8 & 2 & 38 & \textbf{0.19} & \textbf{0.23} & 2.3 \\
\midrule
\rlap{Wins/ties/losses} & & & 8/0/0 & 7/0/1 & 0/0/4 \\
\rlap{\quad its particles alone} & & & 6/1/1 & 4/1/3 & 0/0/4 \\
\bottomrule
\end{tabular}

%% file: tables/tab_mujoco_main.tex
\begin{tabular}{lrrrrrcrrrr}
\toprule
 & \multicolumn{5}{c}{Return ($\uparrow$)} && \multicolumn{4}{c}{Critic shape ($\downarrow$)} \\
\cmidrule(lr){2-6}\cmidrule(lr){8-11}
 & SVDRL & SAC & IQN & MD3QN & E-MMD && SVDRL & IQN & MD3QN & E-MMD \\
\midrule
HalfCheetah & $8720\sd{[7565,\,9889]}$ & $\mathbf{10\,014}\sd{[9446,\,10\,512]}$ & $\mathbf{9219}\sd{[8338,\,9879]}$ & $\mathbf{8986}\sd{[8449,\,9877]}$ & $\mathbf{10\,098}\sd{[8885,\,11\,186]}$ && $\mathbf{0.43}\sd{[0.28,\,0.56]}$ & $1.05\sd{[0.82,\,1.41]}$ & $0.69\sd{[0.50,\,0.92]}$ & $\mathbf{0.52}\sd{[0.39,\,0.71]}$ \\
Walker2d & $\mathbf{5514}\sd{[5025,\,5780]}$ & $\mathbf{5372}\sd{[4939,\,5744]}$ & $5083\sd{[4466,\,5496]}$ & $\mathbf{5661}\sd{[5129,\,5907]}$ & $\mathbf{5609}\sd{[5361,\,5805]}$ && $\mathbf{0.92}\sd{[0.61,\,1.31]}$ & $3.99\sd{[2.32,\,6.13]}$ & $\mathbf{1.19}\sd{[0.90,\,2.75]}$ & $2.08\sd{[0.82,\,7.51]}$ \\
Hopper & $\mathbf{2187}\sd{[2047,\,2294]}$ & $\mathbf{2187}\sd{[2083,\,2319]}$ & $2019\sd{[1841,\,2221]}$ & $1822\sd{[1743,\,1889]}$ & $\mathbf{2180}\sd{[2047,\,2255]}$ && $\mathbf{2.25}\sd{[0.87,\,4.56]}$ & $10.36\sd{[6.69,\,20.64]}$ & $6.10\sd{[1.69,\,14.74]}$ & $21.26\sd{[10.69,\,44.55]}$ \\
Ant & $2429\sd{[2059,\,2654]}$ & $2426\sd{[2214,\,2759]}$ & $2411\sd{[2095,\,3029]}$ & $1233\sd{[895,\,2343]}$ & $\mathbf{2928}\sd{[2628,\,3598]}$ && $0.59\sd{[0.57,\,0.63]}$ & $0.73\sd{[0.42,\,1.19]}$ & $0.72\sd{[0.56,\,2.95]}$ & $\mathbf{0.49}\sd{[0.45,\,0.56]}$ \\
\bottomrule
\end{tabular}

%% file: appendix.tex
\thispagestyle{empty}
\aistatstitle{Supplementary Material}

\section{Proofs}\label{app:proofs}

This section proves the results of \cref{sec:method,sec:theory} in the order of the main text, together with the corollaries that \cref{sec:experiments} uses.
We assume \cref{asm:mdp} throughout.

\subsection{Notation and Tools}\label{app:tools}

\begin{definition}[Wasserstein-$p$ distance]\label{def:wp}
For $p\geq1$ and laws $\nu,\nu'$ on $\dsR^d$ with finite $p$-th moments, $w_p(\nu,\nu')\defeq\big(\inf_{\iota\in\Pi(\nu,\nu')}\int\|x-y\|_2^p\,\dd\iota\big)^{1/p}$, where $\Pi(\nu,\nu')$ is the set of couplings of $\nu$ and $\nu'$, and $\bar w_p(\eta,\eta')\defeq\sup_{(x,a)}w_p(\eta(x,a),\eta'(x,a))$ for return distribution functions $\eta,\eta'$.
\end{definition}

\paragraph{Posterior Form and Hessian.}
With $\varphi_\sigma$ the density of $\Norm(0,\sigma^2\Id)$, where $\sigma > 0$, write $p_\sigma(z)=\int\varphi_\sigma(z-y)\mu(\dd y)$
for the density of $\heat{\sigma}\mu$, and $Z=Y+\sigma\veps$ with $Y\sim\mu$,
$\veps\sim\Norm(0,\Id)$ independent.
Differentiating under the integral with
${\nabla_z\varphi_\sigma(z-y) = -\sigma^{-2}(z-y)\varphi_\sigma(z-y)}$ and
recognizing the posterior weights ${\varphi_\sigma(z-y)\mu(\dd y)/p_\sigma(z)}$ gives
\begin{equation}\label{eq:posterior-score}
  \sm{\mu}{\sigma}(z) = \frac{\nabla p_\sigma(z)}{p_\sigma(z)}
  = -\frac{z-m_\sigma(z)}{\sigma^2},
\end{equation}
where $m_\sigma(z) = \Ex{Y\mid Z=z}$, and $\mu(\dd y\mid z) \propto \varphi_\sigma(z-y)\mu(\dd y)$.
Since that posterior is an exponential family in $z$ with natural parameter
$z/\sigma^2$, we have ${\nabla_z m_\sigma(z)=\sigma^{-2}\Cov[Y\mid Z=z]}$, and
differentiating \eqref{eq:posterior-score} once more gives
\begin{equation}\label{eq:score-hessian}
  \nabla^2\log p_\sigma(z) = -\sigma^{-2}\Id + \sigma^{-4}\Cov[Y\mid Z=z].
\end{equation}

\paragraph{Denoising Score Matching.}
The DSM loss $L_{\mathrm{DSM}}(\psi;\sigma)=\Ep{Y,\veps}{\|s_\psi(Y+\sigma\veps,\sigma)+\veps/\sigma\|^2}$ satisfies Vincent's identity \citep{vincent2011connection}
\begin{equation}\label{eq:vincent}
  L_{\mathrm{DSM}}(\psi;\sigma)=\Ep{Z\sim\heat{\sigma}\mu}{\big\|s_\psi(Z,\sigma)-\sm{\mu}{\sigma}(Z)\big\|^2}+C_{\mathrm{DSM}}(\sigma),
\end{equation}
with $C_{\mathrm{DSM}}(\sigma)$ independent of $\psi$, because $-\veps/\sigma=-(Z-Y)/\sigma^2$ has the conditional mean $\sm{\mu}{\sigma}(z)$ given $Z=z$ by \eqref{eq:posterior-score}, and $C_{\mathrm{DSM}}(\sigma)$ is its conditional variance.
So the excess risk of DSM is $\calE_\sigma$ of \cref{def:ladderrisk}.

\paragraph{Gaussian Norms and CVaR.}
For $\veps\sim\Norm(0,\Id)$ the map $\veps\mapsto\|\veps\|$ is $1$-Lipschitz with $\Ex{\|\veps\|}\leq\sqrt d$, so Gaussian concentration \citep{boucheron2013concentration} gives
\begin{equation}\label{eq:gauss-tail}
  \Pr(\|\veps\|\geq\sqrt d+u)\leq e^{-u^2/2},\qquad u\geq0.
\end{equation}

\begin{lemma}[CVaR utility facts]\label{lem:cvarfacts}
Let $\alpha\in(0,1)$ and write the lower-tail CVaR of a law $P$ on $\dsR$ with finite first moment as $\cvar\alpha(P)=\sup_cg_P(c)$, $g_P(c)\defeq c-\alpha\inv\Ep{P}{(c-X)_+}$ \citep{rockafellar2002conditional}.
\emph{(i)} $|\cvar\alpha(P_1)-\cvar\alpha(P_2)|\leq\alpha\inv\W1(P_1,P_2)$.
\emph{(ii)} If $P_1,P_2$ are supported in a closed interval of length at most $D$, then $|\cvar\alpha(P_1)-\cvar\alpha(P_2)|\leq(D/\alpha)\|F_{P_1}-F_{P_2}\|_\infty$.
\emph{(iii)} For laws $\nu_1,\nu_2$ on $\dsR^d$ and $\|\lambda\|_2\leq1$, $\W1(\lambda_\#\nu_1,\lambda_\#\nu_2)\leq\W1(\nu_1,\nu_2)$.
\emph{(iv)} If $P_1,P_2$ are supported in a set of diameter $D$, then $\W1(P_1,P_2)\leq D\,\tv(P_1,P_2)$ and $\W2(P_1,P_2)\leq D\,\tv(P_1,P_2)^{1/2}$.
\end{lemma}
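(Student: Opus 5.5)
We prove the four parts of \cref{lem:cvarfacts} separately. Each follows from an elementary property of the Rockafellar--Uryasev representation $\cvar\alpha(P)=\sup_c g_P(c)$ or of couplings.

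For \emph{(i)}, fix $c$. The map $x\mapsto(c-x)_+$ is $1$-Lipschitz. By Kantorovich--Rubinstein duality, or directly by integrating against an optimal coupling, $|g_{P_1}(c)-g_{P_2}(c)|\leq\alpha\inv\W1(P_1,P_2)$ holds uniformly in $c$. Taking suprema over $c$ then gives the claim, since $|\sup f-\sup g|\leq\sup|f-g|$.

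For \emph{(ii)}, let $[a,a+D]$ be the common interval. The supremum of $g_P$ is attained at a lower $\alpha$-quantile of $P$, so it lies in $[a,a+D]$. For $c<a$ we have $g_P(c)=c$, which is smaller than $g_P(a)=a$, so we may restrict the supremum to $c\in[a,a+D]$ for both laws. On that range,
\[
  \Ep{P}{(c-X)_+}=\int_a^cF_P(t)\,\dd t,
\]
so $|g_{P_1}(c)-g_{P_2}(c)|\leq\alpha\inv(c-a)\|F_{P_1}-F_{P_2}\|_\infty\leq(D/\alpha)\|F_{P_1}-F_{P_2}\|_\infty$. The sup-difference argument from \emph{(i)} finishes the part. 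The one point to check is that restricting to the same compact range for both laws is legitimate, and this holds because both suprema are attained inside it.

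For \emph{(iii)}, take any coupling $(X,Y)$ of $\nu_1,\nu_2$. Then $(\lambda\T X,\lambda\T Y)$ couples the projected laws, and Cauchy--Schwarz gives $|\lambda\T(X-Y)|\leq\|\lambda\|_2\|X-Y\|\leq\|X-Y\|$. Taking the infimum over couplings yields the inequality.

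For \emph{(iv)}, use the maximal coupling, under which $\Pr(X\neq Y)=\tv(P_1,P_2)$. On the event $\{X\neq Y\}$ we have $\|X-Y\|\leq D$ because both laws live in a set of diameter $D$. Hence $\E\|X-Y\|\leq D\,\tv$, and $\E\|X-Y\|^2\leq D^2\,\tv$, which gives $\W2\leq D\,\tv^{1/2}$. The main obstacle, such as it is, is the localization step in \emph{(ii)}. Everything else is a direct coupling or Lipschitz estimate.
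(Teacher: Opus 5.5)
Your proof is correct and matches the paper's argument step by step. Part (i) uses the same uniform-in-$c$ Lipschitz/coupling bound. Part (ii) localizes the maximizer to the support interval and uses the same Fubini identity $\Ep{P}{(c-X)_+}=\int_a^cF_P(t)\,\dd t$. Part (iii) uses the projected coupling, and part (iv) uses the maximal coupling, which the paper writes out as the diagonal part on $P_1\wedge P_2$ plus a normalized product of the residuals. Your version is slightly more careful: you justify restricting both suprema to the common interval, and your maximal-coupling phrasing also covers the case $\tv(P_1,P_2)=0$, where the paper's explicit formula divides by zero.
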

\begin{proof}
\emph{(i)} As $x\mapsto(c-x)_+$ is $1$-Lipschitz, $|g_{P_1}(c)-g_{P_2}(c)|\leq\alpha\inv\Ex{|X_1-X_2|}$ for every coupling and every $c$; take the infimum over couplings.
\emph{(ii)} For $P$ supported in $I=[a,b]$ the supremum over $c$ is attained in $I$, and $\Ep{P}{(c-X)_+}=\int_a^cF_P(x)\,\dd x$ for $c\in I$ by Fubini's theorem, which differs between $P_1$ and $P_2$ by at most $D\|F_{P_1}-F_{P_2}\|_\infty$.
\emph{(iii)} A coupling $(Z_1,Z_2)$ gives the coupling $(\lambda\T Z_1,\lambda\T Z_2)$ with $\Ex{|\lambda\T(Z_1-Z_2)|}\leq\Ex{\|Z_1-Z_2\|}$.
\emph{(iv)} With the common part $\nu\defeq P_1\wedge P_2$, of mass $1-\tv(P_1,P_2)$, the coupling $(\mathrm{id},\mathrm{id})_\#\nu+(P_1-\nu)\otimes(P_2-\nu)/\tv(P_1,P_2)$ moves only the mass $\tv(P_1,P_2)$, by at most $D$.
\end{proof}

\subsection{The Score Bellman Equation}\label{proof-commute}

\begin{proof}[Proof of \cref{lem:commute}]
\emph{(i)} For independent $Y\sim\mu$ and $\veps\sim\Norm(0,\Id)$ we have $Y+\frac\sigma\gamma\veps\sim\heat{\sigma/\gamma}\mu$ and $b_{r,\gamma}(Y+\frac\sigma\gamma\veps)=(r+\gamma Y)+\sigma\veps\sim\heat{\sigma}((b_{r,\gamma})_\#\mu)$.
\emph{(ii)} For every law $\nu$ and $\tau>0$, $\heat{\tau}\nu$ has the smooth, strictly positive density $\int\varphi_\tau(\cdot-w)\,\nu(\dd w)$, so its score exists everywhere.
By (i) and the change of variables $y=r+\gamma u$, the density of $\heat{\sigma}((b_{r,\gamma})_\#\mu)$ is $y\mapsto\gamma^{-d}p_{\heat{\sigma/\gamma}\mu}((y-r)/\gamma)$; under $\nabla_y\log$ the constant $\gamma^{-d}$ drops out, and the chain rule gives the factor $1/\gamma$.
\end{proof}

\begin{proof}[Proof of \cref{cor:scorebellman}]
By \eqref{eq:dbo}, $\dbo\eta(x,a)$ is the $q(\cdot\mid x,a)$-mixture of the branch laws $(b_{r_j,\gamma})_\#\eta(x'_j,a'_j)$, and $\heat{\sigma}$ is linear, so by \cref{lem:commute}(i) the density of $\heat{\sigma}\dbo\eta(x,a)$ is $p_\sigma=\int p_{j,\sigma}\,q(\dd j\mid x,a)$.
Differentiating under the integral,
\begin{equation*}
  \nabla\log p_\sigma(y)=\int\frac{p_{j,\sigma}(y)\,q(\dd j\mid x,a)}{p_\sigma(y)}\nabla\log p_{j,\sigma}(y)=\Ep{j\sim q_\sigma(\cdot\mid y)}{\nabla\log p_{j,\sigma}(y)},
\end{equation*}
and \cref{lem:commute}(ii) gives $\nabla\log p_{j,\sigma}(y)=\gamma\inv\sm{\eta(x'_j,a'_j)}{\sigma/\gamma}((y-r_j)/\gamma)$.
For discrete branches the exchange is a finite sum; in general it follows by dominated convergence for every $\eta$, since by \cref{lem:commute}(i) $p_{j,\sigma}$ is the convolution of $\varphi_\sigma$ with the probability measure $(b_{r_j,\gamma})_\#\eta(x'_j,a'_j)$, so that $p_{j,\sigma}\leq\sup\varphi_\sigma$ and $\|\nabla p_{j,\sigma}\|\leq\sup\|\nabla\varphi_\sigma\|<\infty$, uniformly in $j$ and $y$, and $p_\sigma>0$.
\end{proof}

The score Bellman equation defines the operator that the consistency loss applies to the target network.
\begin{definition}[Score Bellman image]\label{def:scoreimage}
Let $\nu\in\calP(B_R)^{\calX\times\calA}$ be a bootstrap source, $\mu\defeq\dbo\nu$, and $q_\sigma(\dd j\mid y)$ the posterior branch weights of \cref{cor:scorebellman} for the source $\nu$ at $(x,a)$. The \emph{score Bellman image} of a measurable field $f(\cdot;x',a',\sigma')$ is
\begin{equation*}
  (\calS_\nu f)(y;x,a,\sigma)\defeq\int q_\sigma(\dd j\mid y)\,\gamma\inv f\Big(\frac{y-r_j}{\gamma};x'_j,a'_j,\frac\sigma\gamma\Big),
\end{equation*}
so that $\calS_\nu\sm{\nu}{\cdot}=\sm{\mu}{\cdot}$ by \cref{cor:scorebellman}.
\end{definition}

\begin{corollary}[Commutation under invertible matrix discounting]
\label{cor:commuteGamma}
Let $\Gamma\in\dsR^{d\times d}$ be invertible and
${b_{r,\Gamma}(z)\defeq r+\Gamma z}$. Then for every
$\mu\in\calP(\dsR^d)$ and $\sigma>0$,
\begin{align*}
  &\text{(i) }{(b_{r,\Gamma})}_{\#}\Big(\mu * \Norm\big(0,\sigma^2
(\Gamma\T\Gamma)\inv\big)\Big)
= \heat{\sigma}\big({(b_{r,\Gamma})}_{\#}\mu\big), \text{ and }\\
  &\text{(ii) }
\sm{{(b_{r,\Gamma})}_{\#}\mu}{\sigma}(y)
= \Gamma^{-\mathsf T}
\nabla\log\big(\mu*\Norm(0,\sigma^2(\Gamma\T\Gamma)\inv)\big)
\big(\Gamma\inv(y-r)\big).
\end{align*}
\end{corollary}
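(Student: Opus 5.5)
The plan mirrors the proof of \cref{lem:commute}, with the scalar $\gamma$ replaced by the matrix $\Gamma$. The only new ingredient is the covariance identity $\Gamma\,\sigma^2(\Gamma\T\Gamma)\inv\Gamma\T$. It equals $\sigma^2\Id$ only if $\Gamma\Gamma\T$ and $\Gamma\T\Gamma$ are compatible, for instance when $\Gamma$ is normal or orthogonal up to scale. In general, $\Gamma(\Gamma\T\Gamma)\inv\Gamma\T=\Gamma\Gamma\inv\Gamma^{-\mathsf T}\Gamma\T=\Id$. So the first step is this computation: for $\veps'\sim\Norm(0,\sigma^2(\Gamma\T\Gamma)\inv)$ we have $\Gamma\veps'\sim\Norm(0,\sigma^2\Gamma(\Gamma\T\Gamma)\inv\Gamma\T)=\Norm(0,\sigma^2\Id)$. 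This holds for every invertible $\Gamma$.

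For (i), take independent $Y\sim\mu$ and $\veps'\sim\Norm(0,\sigma^2(\Gamma\T\Gamma)\inv)$. Then $Y+\veps'$ has law $\mu*\Norm(0,\sigma^2(\Gamma\T\Gamma)\inv)$, and
\begin{equation*}
b_{r,\Gamma}(Y+\veps')=(r+\Gamma Y)+\Gamma\veps'.
\end{equation*}
Here $\Gamma\veps'$ is $\Norm(0,\sigma^2\Id)$ by the first step, and it is independent of $r+\Gamma Y\sim(b_{r,\Gamma})_\#\mu$. The right side therefore has law $\heat{\sigma}((b_{r,\Gamma})_\#\mu)$, exactly as in part (i) of \cref{lem:commute}.

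For (ii), write $\tilde p$ for the density of $\mu*\Norm(0,\sigma^2(\Gamma\T\Gamma)\inv)$. This density exists, is smooth and is strictly positive, because it is a convolution of a probability measure with a nondegenerate Gaussian density; hence its log-gradient is defined everywhere. By (i) and the affine change of variables $y=r+\Gamma u$, the density of $\heat{\sigma}((b_{r,\Gamma})_\#\mu)$ is
\begin{equation*}
y\mapsto|\det\Gamma|\inv\,\tilde p\big(\Gamma\inv(y-r)\big).
\end{equation*}
Taking $\nabla_y\log$, the constant $|\det\Gamma|\inv$ drops out. The chain rule applied to $u(y)=\Gamma\inv(y-r)$ has Jacobian $\Gamma\inv$, so the gradient transforms by $(\Gamma\inv)\T=\Gamma^{-\mathsf T}$. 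This gives the claimed formula, and it reduces to $\gamma\inv\sm{\mu}{\sigma/\gamma}$ when $\Gamma=\gamma\Id$.

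I do not expect a real obstacle here. The only point needing care is the transpose in the chain rule (the gradient is a row vector pulled back through $\Gamma\inv$), together with the covariance identity above. I would double-check both against the scalar case and a diagonal $\Gamma$.
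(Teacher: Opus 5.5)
Your proposal is correct and follows the paper's proof. The paper writes the anisotropic noise as $\sigma\Gamma\inv\veps$ with $\veps\sim\Norm(0,\Id)$, so that $r+\Gamma(Z+\sigma\Gamma\inv\veps)=r+\Gamma Z+\sigma\veps$; this is your identity $\Gamma(\Gamma\T\Gamma)\inv\Gamma\T=\Id$ in another form, and the paper then obtains (ii) by the same change of variables and chain rule as in \cref{lem:commute}. Delete your opening hedge that the identity needs $\Gamma$ to be normal or orthogonal up to scale: it holds for every invertible $\Gamma$, as your own next line shows.
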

\begin{proof}
  For independent $Z\sim\mu$ and $\veps\sim\Norm(0,\Id)$, $r+\Gamma(Z+\sigma\Gamma\inv\veps)=r+\Gamma Z+\sigma\veps$ and $\sigma\Gamma\inv\veps\sim\Norm(0,\sigma^2(\Gamma\T\Gamma)\inv)$, which gives (i); (ii) follows from (i) as in the proof of \cref{lem:commute}.
\end{proof}
Under matrix discounting the predecessor is smoothed anisotropically, so the multi-level analysis of \cref{proof-scoreTD}, which pairs the levels $\sigma$ and $\sigma/\gamma$, is for a scalar $\gamma$, whereas the policy-evaluation bound needs no such pairing (\cref{cor:e2eGamma}).

\subsection{One-Sample Bootstrap Targets}\label{proof-consbias}

\begin{lemma}[One-sample DSM is (U)]\label{lem:dsmU}
Fix $(x,a)$ and $\sigma>0$, let $J=(R,X',A')\sim q(\cdot\mid x,a)$ index the Bellman branches as in \cref{cor:scorebellman}, with branch targets ${\mu_j\defeq(b_{r_j,\gamma})_\#\eta(x_j',a_j')}$ and mixture target ${\mu\defeq\dbo\eta(x,a)}$, and let $L^{(j)}_{\mathrm{DSM}}(\psi;\sigma)$ be the DSM loss \eqref{eq:dsm} with $Y\sim\mu_j$. Then
\begin{equation}\label{eq:dsmU}
  \Ep{j\sim q(\cdot|x,a)}{L^{(j)}_{\mathrm{DSM}}(\psi;\sigma)} = \Ep{Z\sim\heat{\sigma}\mu}{\big\|s_\psi(Z,\sigma)-\sm{\mu}{\sigma}(Z)\big\|^2}+C_{\mathrm{DSM}}(\sigma),
\end{equation}
with $C_{\mathrm{DSM}}(\sigma)$ independent of $\psi$. So \emph{(i)} the population minimizer of the one-sample bootstrap DSM objective over measurable fields is $\sm{\dbo\eta(x,a)}{\sigma}$, the posterior-weighted branch mixture \eqref{eq:scorebellman}, and \emph{(ii)} whenever $\nabla_\psi$ commutes with the expectations in \eqref{eq:dsmU}, the sampled-branch gradient $\nabla_\psi L^{(J)}_{\mathrm{DSM}}(\psi;\sigma)$ is an unbiased estimate of $\nabla_\psi\Ep{\heat{\sigma}\mu}{\|s_\psi-\sm{\mu}{\sigma}\|^2}$.
\end{lemma}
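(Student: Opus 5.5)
The plan is to reduce the statement to Vincent's identity \eqref{eq:vincent} applied to the mixture target $\mu$. The key observation is that averaging the per-branch DSM losses over $j\sim q(\cdot\mid x,a)$ is the same as the DSM loss for one draw $Y\sim\mu$. Indeed, by \eqref{eq:dbo}, $\mu=\dbo\eta(x,a)=\int\mu_j\,q(\dd j\mid x,a)$. So drawing $J\sim q$ and then $Y\sim\mu_J$ yields $Y\sim\mu$.

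Writing $L^{(j)}_{\mathrm{DSM}}(\psi;\sigma)=\Ep{Y\sim\mu_j,\veps}{\|s_\psi(Y+\sigma\veps,\sigma)+\veps/\sigma\|^2}$, the tower property (Tonelli, since the integrand is nonnegative) gives
\[
\Ep{j\sim q}{L^{(j)}_{\mathrm{DSM}}(\psi;\sigma)}=\Ep{Y\sim\mu,\veps}{\|s_\psi(Y+\sigma\veps,\sigma)+\veps/\sigma\|^2}=L_{\mathrm{DSM}}(\psi;\sigma).
\]
Then \eqref{eq:vincent} with $\mu$ in place of the generic law gives \eqref{eq:dsmU}. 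The constant $C_{\mathrm{DSM}}(\sigma)$ is the conditional variance of $-\veps/\sigma$ given $Z=Y+\sigma\veps$. This depends only on $\mu$ and $\sigma$, not on $\psi$.

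To make the cross term in \eqref{eq:vincent} well defined, I need finiteness. Under \cref{asm:mdp}, $\mu$ is supported in $B_R$, so $\|\sm{\mu}{\sigma}(z)\|\leq(\|z\|+R_Z)/\sigma^2$ by \eqref{eq:posterior-score}. This is square integrable under $\heat{\sigma}\mu$. If $\Ep{\heat{\sigma}\mu}{\|s_\psi\|^2}=\infty$, both sides are $+\infty$ and the identity holds trivially. Otherwise the expansion is justified.

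For (i), the first term of \eqref{eq:dsmU} is nonnegative. It vanishes exactly when $s_\psi(\cdot,\sigma)=\sm{\mu}{\sigma}$ holds $\heat{\sigma}\mu$-a.e. Because $\heat{\sigma}\mu$ has a strictly positive density, this is a.e.\ equality with respect to Lebesgue measure. So over measurable fields the minimizer is $\sm{\dbo\eta(x,a)}{\sigma}$, and by \cref{cor:scorebellman} this equals the posterior-weighted branch mixture \eqref{eq:scorebellman}.

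For (ii), the sampled-branch gradient satisfies $\Ex{\nabla_\psi L^{(J)}_{\mathrm{DSM}}}=\nabla_\psi\Ex{L^{(J)}_{\mathrm{DSM}}}$ by the assumed interchange. Differentiating \eqref{eq:dsmU} in $\psi$ removes the $\psi$-independent constant, which gives unbiasedness.

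The main obstacle is the measure-theoretic bookkeeping: the mixture and conditioning step for a general (possibly continuous) branch index, and the integrability needed for Vincent's expansion. I would handle the first with Tonelli and the second with the bounded-support bound on the smoothed score above.
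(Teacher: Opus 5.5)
Your proposal is correct and follows the paper's proof. You disintegrate $\mu=\int\mu_j\,q(\dd j\mid x,a)$ and use Tonelli to identify the averaged loss with the DSM loss for $\mu$, apply Vincent's identity \eqref{eq:vincent}, identify the minimizer via \cref{cor:scorebellman}, and read off (ii) from the assumed exchange of $\nabla_\psi$ with the expectation over $J$ (the paper also states a polynomial-growth sufficient condition for that exchange); note that your square-integrability check does not need bounded support, since Jensen gives $\calI(\heat{\sigma}\mu)\leq d/\sigma^2$ for every $\mu$ (\cref{lem:scoremart}).
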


\begin{proof}[Proof of \cref{lem:dsmU}]
Drawing $J\sim q(\cdot\mid x,a)$ and then $Y\sim\mu_J$ gives $Y\sim\mu$, so by Tonelli's theorem the left side of \eqref{eq:dsmU} is the DSM loss for $\mu$, and Vincent's identity \eqref{eq:vincent} gives \eqref{eq:dsmU}.
Its right side is minimized exactly by $\sm{\mu}{\sigma}$, which \cref{cor:scorebellman} identifies, and (ii) is the exchange of $\nabla_\psi$ with the expectation over $J$; the exchange holds if $\nabla_\psi s_\psi(z,\sigma)$ grows at most polynomially in $\|z\|$, locally uniformly in $\psi$, since $\heat{\sigma}\mu$ has Gaussian tails.
\end{proof}

\begin{lemma}[One-sample consistency is unbiased]\label{lem:consU}
Fix $(x,a)$, $\sigma>0$, a bootstrap source $\nu\in\calP(B_R)^{\calX\times\calA}$, $\mu\defeq\dbo\nu$, and a measurable field $f$. Draw $j=(r,x',a')\sim q(\cdot\mid x,a)$, then, given $j$, draw $z\sim\nu(x',a')$ and $\xi\sim\Norm(0,\Id)$ independently, and set
\begin{equation*}
  u\defeq z+\tfrac\sigma\gamma\,\xi,\qquad y\defeq r+\gamma u,\qquad T\defeq\gamma\inv f\big(u;x',a',\tfrac\sigma\gamma\big).
\end{equation*}
Then $y\sim\heat{\sigma}\mu(x,a)$ and $\Ex{T\mid y}=(\calS_\nu f)(y;x,a,\sigma)$ almost surely. If moreover $\Ex{\|T\|^2}<\infty$, then for every measurable $s$ with $\Ep{\heat{\sigma}\mu(x,a)}{\|s\|^2}<\infty$,
\begin{equation}\label{eq:consU}
  \Ex{\|s(y)-T\|^2}=\Ep{\heat{\sigma}\mu(x,a)}{\big\|s-(\calS_\nu f)(\cdot;x,a,\sigma)\big\|^2}+\Ex{\|T-\Ex{T\mid y}\|^2},
\end{equation}
where the last term does not depend on $s$ and vanishes when $q(\cdot\mid x,a)$ is a point mass.
\end{lemma}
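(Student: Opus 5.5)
The proof has three parts: identify the law of $y$, compute the conditional mean of $T$ given $y$ by Bayes' rule over branches, and obtain \eqref{eq:consU} as an orthogonal (bias--variance) decomposition.

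\emph{Law of $y$.} Given $j$, the variable $u=z+\frac\sigma\gamma\xi$ has law $\heat{\sigma/\gamma}\nu(x',a')$. By \cref{lem:commute}(i), $y=b_{r,\gamma}(u)$ therefore has law $\heat{\sigma}((b_{r,\gamma})_\#\nu(x',a'))$, with density $p_{j,\sigma}$ as in \cref{cor:scorebellman}. Mixing over $j\sim q(\cdot\mid x,a)$ and using the linearity of $\heat{\sigma}$ gives $y\sim\heat{\sigma}\mu(x,a)$, with density $p_\sigma=\int p_{j,\sigma}\,q(\dd j\mid x,a)>0$.

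\emph{Conditional mean.} The key observation is that, given $j$, the map $u\mapsto y$ is a bijection. So conditioning on $(j,y)$ determines $u=(y-r_j)/\gamma$, and $T$ is a measurable function of $(j,y)$. Bayes' rule gives the conditional law of $j$ given $y$ as $q(\dd j\mid x,a)\,p_{j,\sigma}(y)/p_\sigma(y)$, which is exactly $q_\sigma(\dd j\mid y)$. Hence
\[
\Ex{T\mid y}=\int q_\sigma(\dd j\mid y)\,\gamma\inv f\big(\tfrac{y-r_j}{\gamma};x'_j,a'_j,\tfrac\sigma\gamma\big)=(\calS_\nu f)(y;x,a,\sigma),
\]
which matches \cref{def:scoreimage}.

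To make this rigorous for general, non-discrete $j$, I would write the joint law of $(j,y)$ as $q(\dd j\mid x,a)\,p_{j,\sigma}(y)\,\dd y$ and verify the defining property of conditional expectation by Fubini. That is, for bounded measurable $g$ I would check
\[
\Ex{T\,g(y)}=\int\!\!\int \gamma\inv f\big(\tfrac{y-r_j}\gamma;\cdot\big)\,g(y)\,p_{j,\sigma}(y)\,\dd y\,q(\dd j\mid x,a).
\]
Integrability comes from $\Ex{\|T\|^2}<\infty$, so this disintegration step is the main technical point. It is routine because $p_{j,\sigma}$ is a jointly measurable Gaussian convolution density.

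\emph{Decomposition.} Write $\bar T\defeq\Ex{T\mid y}$ and expand
\[
\|s(y)-T\|^2=\|s(y)-\bar T\|^2+2\langle s(y)-\bar T,\bar T-T\rangle+\|T-\bar T\|^2 .
\]
The cross term has zero expectation by the tower property. This is justified because $s(y)-\bar T\in L^2$: $s$ is in $L^2$ by assumption, and $\bar T$ is in $L^2$ by Jensen's inequality. Since $y\sim\heat{\sigma}\mu(x,a)$, the first term integrates to the stated risk. The last term is $\Ex{\|T-\bar T\|^2}$, which does not involve $s$.

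\emph{Vanishing of the variance term.} When $q(\cdot\mid x,a)$ is a point mass, $j$ is deterministic. Then $u=(y-r)/\gamma$ is a function of $y$, so $T$ is $y$-measurable and $T=\bar T$ almost surely, and the last term vanishes.
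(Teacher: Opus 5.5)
Your proposal is correct and follows the paper's proof essentially step for step. It uses the same three ingredients: the law of $y$ from \cref{lem:commute}(i) and mixing over branches; the joint law $q(\dd j\mid x,a)\,p_{j,\sigma}(y)\,\dd y$, whose conditional law of $j$ given $y$ is the posterior branch law $q_\sigma(\dd j\mid y)$, with $T$ a function of $(j,y)$; and \eqref{eq:consU} as the orthogonality of $T-\Ex{T\mid y}$ to square-integrable functions of $y$. One small refinement is that your Fubini check of the conditional mean needs only $\Ex{\|T\|}<\infty$, which the first claim tacitly presumes so that $\Ex{T\mid y}$ and $\calS_\nu f$ are defined, so you need not borrow the second-moment hypothesis for that part.
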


\begin{proof}[Proof of \cref{lem:consU}]
Write $\nu_j\defeq\nu(x'_j,a'_j)$. Given $j$, $u\sim\heat{\sigma/\gamma}\nu_j$, so $y=r_j+\gamma u$ has the density $p_j$ of $(b_{r_j,\gamma})_\#\heat{\sigma/\gamma}\nu_j$, and $(j,y)$ has the joint law $p_j(y)\,q(\dd j\mid x,a)\,\dd y$.
By \cref{lem:commute} the $y$-marginal is $\heat{\sigma}\mu(x,a)$, and the law of $j$ given $y$ is the posterior branch law $q_\sigma(\dd j\mid y)$ of \cref{cor:scorebellman}.
Since $u=(y-r_j)/\gamma$, $T$ is a function of $(j,y)$, so $\Ex{T\mid y}=\int q_\sigma(\dd j\mid y)\,\gamma\inv f((y-r_j)/\gamma;x'_j,a'_j,\sigma/\gamma)=(\calS_\nu f)(y;x,a,\sigma)$, and \eqref{eq:consU} is the orthogonality of $T-\Ex{T\mid y}$ to the square-integrable functions of $y$.
If $q(\cdot\mid x,a)$ is a point mass, $T$ is a function of $y$ alone.
\end{proof}

Both lemmas obtain the posterior branch weights without evaluating a density, because $y$ is drawn through the sampled branch.
A target drawn independently of the branch would estimate the prior-weighted average of the branch scores, which is not the score of $\heat{\sigma}\mu(x,a)$ unless the posterior weights are constant.

\begin{remark}[Target variance of the two losses]\label{rem:consvar}
The one-sample DSM target $-\xi/\sigma$ at $y=y_0+\sigma\xi$, $y_0\sim\mu(x,a)$, has by Tweedie's formula $\Ex{\xi\mid y}=-\sigma\sm{\mu}{\sigma}(y)$ the normalized conditional variance $\sigma^2\,\Ex{\|\xi/\sigma-\Ex{\xi/\sigma\mid y}\|^2}=d-\sigma^2\calI(\heat{\sigma}\mu)$, with the Fisher information $\calI$ of \cref{def:ladderrisk}, which tends to $d$ as $\sigma\to0$ whenever $\calI(\mu)<\infty$.
The consistency target of \cref{lem:consU} with the exact field $f=\sm{\nu}{\cdot}$ has
\begin{equation*}
  \sigma^2\,\Ex{\|T-\Ex{T\mid y}\|^2}\leq\sigma^2\,\Ex{\|T\|^2}=\frac{\sigma^2}{\gamma^2}\int\calI\big(\heat{\sigma/\gamma}\nu(x'_j,a'_j)\big)\,q(\dd j\mid x,a)\leq\frac{\sigma^2}{\gamma^2}\sup_j\calI(\nu(x'_j,a'_j)),
\end{equation*}
which is $\calO(\sigma^2)$ when the successor laws have finite Fisher information; both bounds use $\calI(\heat{\tau}\rho)\leq\calI(\rho)$, i.e., Jensen's inequality for the smoothed score as the posterior mean of the unsmoothed one.
So the two targets stand to each other as a temporal-difference and a Monte Carlo target: at the lowest noise levels the consistency target has a far smaller variance, and its price is the accumulation of residuals in \cref{cor:consTD}.
\end{remark}

\subsection{Score-Space Temporal Differences}\label{proof-scoreTD}

This subsection states \cref{thm:scoreTDinf} in full and proves it.
For a measurable field $f$ and a law $\rho$ we write $\calE_\sigma(f;\rho)\defeq\Ep{\heat{\sigma}\rho}{\|f(\cdot,\sigma)-\sm{\rho}{\sigma}\|^2}$, as in \cref{def:ladderrisk}, and $\bar\calE_\sigma(f;\nu)\defeq\sup_{x',a'}\calE_\sigma(f(\cdot;x',a',\sigma);\nu(x',a'))$.

\begin{theorem}[Score-space TD is non-expansive in the multi-level risk]\label{thm:scoreTD}
Let $\nu\in\calP(B_R)^{\calX\times\calA}$ be a bootstrap source, $\mu\defeq\dbo\nu$, and $\calS_\nu$ the score Bellman image of \cref{def:scoreimage}.
\emph{(i) One level.} For every $(x,a)$ and $\sigma>0$,
\begin{equation}\label{eq:scoreTD-level}
  \calE_\sigma\big(\calS_\nu f;\mu(x,a)\big)\leq\gamma^{-2}\int\calE_{\sigma/\gamma}\big(f;\nu(x'_j,a'_j)\big)\,q(\dd j\mid x,a)\leq\gamma^{-2}\,\bar\calE_{\sigma/\gamma}(f;\nu),
\end{equation}
with equality in the first inequality whenever $q(\cdot\mid x,a)$ is a point mass.
\emph{(ii) All levels.} On the geometric grid $\sigma_{n+1}=\kappa_{\mathrm g}\sigma_n$ with $\kappa_{\mathrm g}=\gamma^{1/m}$, extended upward to the levels $\sigma_{-m}>\dots>\sigma_{-1}>\sigma_0$, one has $h_{n-m}=h_n/\gamma^2$ and
\begin{equation}\label{eq:scoreTD-ladder}
  \sum_{n=0}^{N_{\mathrm g}-1}h_n\,\calE_{\sigma_n}\big(\calS_\nu f;\mu(x,a)\big)\leq\sum_{n=-m}^{N_{\mathrm g}-1-m}h_n\,\bar\calE_{\sigma_n}(f;\nu).
\end{equation}
\emph{(iii) Boundary.} If $f(z;x',a',\sigma)=-(z-c)/\sigma^2$ with $c$ the center of $B_R$, then $\sigma^2\calE_\sigma(f;\rho)\leq D^2/(4\sigma^2)$ for every $\rho$ supported in $B_R$ and every $\sigma>0$.
\end{theorem}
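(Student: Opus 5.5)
The plan is to obtain part (i) from the conditional-expectation structure of \cref{lem:consU} together with Jensen's inequality. Then part (ii) follows by reindexing the grid, and part (iii) by a direct bound on the posterior mean.

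\emph{(i).} Fix $(x,a)$ and $\sigma$, and draw $(j,y)$ as in \cref{lem:consU}, with $u=(y-r_j)/\gamma\sim\heat{\sigma/\gamma}\nu_j$ given $j$. By \cref{lem:consU}, $(\calS_\nu f)(y)=\Ex{T\mid y}$ with $T=\gamma\inv f(u;x'_j,a'_j,\sigma/\gamma)$. Applying the same lemma to the exact field $f=\sm{\nu}{\cdot}$ gives, via \cref{cor:scorebellman}, $\sm{\mu}{\sigma}(y)=\Ex{T^\star\mid y}$ with $T^\star=\gamma\inv\sm{\nu_j}{\sigma/\gamma}(u)$. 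Hence $\calS_\nu f-\sm{\mu}{\sigma}=\Ex{T-T^\star\mid y}$. Conditional Jensen then gives
\[
\Ep{\heat{\sigma}\mu}{\|\calS_\nu f-\sm{\mu}{\sigma}\|^2}\leq\Ex{\|T-T^\star\|^2}.
\]
Conditioning on $j$ first, the right side equals $\gamma^{-2}\int\calE_{\sigma/\gamma}(f;\nu_j)\,q(\dd j\mid x,a)$, because $u\sim\heat{\sigma/\gamma}\nu_j$. The supremum over successors then gives the second inequality. When $q$ is a point mass, $T-T^\star$ is already a function of $y$, so Jensen holds with equality. Integrability needs $\calE_{\sigma/\gamma}(f;\cdot)<\infty$; if it is infinite the bound is trivial.

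\emph{(ii).} Since $\kappa_{\mathrm g}^m=\gamma$, we have $\sigma_n/\gamma=\sigma_{n-m}$. Also $h_{n-m}=\sigma_{n-m}^2-\sigma_{n-m+1}^2=\gamma^{-2}(\sigma_n^2-\sigma_{n+1}^2)=h_n/\gamma^2$. Multiplying \eqref{eq:scoreTD-level} at $\sigma=\sigma_n$ by $h_n$ gives $h_n\calE_{\sigma_n}(\calS_\nu f;\mu)\leq h_{n-m}\bar\calE_{\sigma_{n-m}}(f;\nu)$. Summing over $n=0,\dots,N_{\mathrm g}-1$ and shifting the index gives \eqref{eq:scoreTD-ladder}.

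\emph{(iii).} By \eqref{eq:posterior-score}, $\sm{\rho}{\sigma}(z)=-(z-m_\sigma(z))/\sigma^2$. Therefore
\[
f-\sm{\rho}{\sigma}=(c-m_\sigma(z))/\sigma^2.
\]
Since $m_\sigma(z)$ is a posterior mean of a law supported in $B_R$, it lies in the convex set $B_R$. So $\|c-m_\sigma\|\leq R_Z=D/2$, and squaring and multiplying by $\sigma^2$ gives $\sigma^2\calE_\sigma\leq D^2/(4\sigma^2)$.

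The main obstacle is in (i): identifying $\sm{\mu}{\sigma}$ as the conditional mean of $T^\star$, with $T$ and $T^\star$ built on the same draw $(j,u)$. This is exactly what makes conditional Jensen applicable. It rests on the posterior branch law from \cref{lem:consU} coinciding with $q_\sigma$ of \cref{cor:scorebellman}, and I would verify that step carefully, including measurability of $j\mapsto\calE_{\sigma/\gamma}(f;\nu_j)$.
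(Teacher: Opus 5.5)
Your proposal is correct and is essentially the paper's proof. In (i) the paper writes $\calS_\nu f-\sm{\mu}{\sigma}=\int q_\sigma(\dd j\mid y)\,e_j(y)$ and applies Jensen's inequality for the posterior kernel $q_\sigma(\cdot\mid y)$; this is exactly your conditional Jensen given $y$ under the joint law of $(j,y)$ from \cref{lem:consU}, followed by the same disintegration over $j$, while (ii) and (iii) match the paper's reindexing $\sigma_n/\gamma=\sigma_{n-m}$, $h_{n-m}=h_n/\gamma^2$ and its bound on the posterior mean lying in $B_R$ within $D/2$ of $c$.
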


\begin{proof}[Proof of \cref{thm:scoreTD}]
Part (i). Fix $(x,a)$ and $\sigma$, and write $\nu_j\defeq\nu(x'_j,a'_j)$ and $p_j$ for the density of $(b_{r_j,\gamma})_\#\heat{\sigma/\gamma}\nu_j$, so that $\heat{\sigma}\mu(x,a)$ has the density $p=\int p_j\,q(\dd j\mid x,a)$ (\cref{lem:commute}), $q_\sigma(\dd j\mid y)=p_j(y)\,q(\dd j\mid x,a)/p(y)$, and $\sm{\mu}{\sigma}=\int q_\sigma(\dd j\mid\cdot)\nabla\log p_j$ (\cref{cor:scorebellman}).
With $e_j(y)\defeq\gamma\inv\big(f-\sm{\nu_j}{\sigma/\gamma}\big)\big((y-r_j)/\gamma;x'_j,a'_j,\sigma/\gamma\big)$ we have $(\calS_\nu f)(y)-\sm{\mu}{\sigma}(y)=\int q_\sigma(\dd j\mid y)\,e_j(y)$, and Jensen's inequality for the kernel $q_\sigma(\cdot\mid y)$ gives
\begin{equation*}
  \calE_\sigma(\calS_\nu f;\mu(x,a))\leq\int\Big(\int\|e_j(y)\|^2p_j(y)\,\dd y\Big)q(\dd j\mid x,a)=\gamma^{-2}\int\calE_{\sigma/\gamma}(f;\nu_j)\,q(\dd j\mid x,a),
\end{equation*}
since $p_j(y)\,\dd y$ is the law of $r_j+\gamma u$ for $u\sim\heat{\sigma/\gamma}\nu_j$.
The second inequality of \eqref{eq:scoreTD-level} is the supremum over $(x',a')$, and for a point mass $q(\cdot\mid x,a)$ Jensen's inequality holds with equality.

Part (ii). With $\kappa_{\mathrm g}=\gamma^{1/m}$, $\sigma_{n-m}=\sigma_n/\gamma$ and $h_{n-m}=(1-\kappa_{\mathrm g}^2)\sigma_{n-m}^2=h_n/\gamma^2$, so \eqref{eq:scoreTD-level} at $\sigma=\sigma_n$ times $h_n$ gives $h_n\calE_{\sigma_n}(\calS_\nu f;\mu(x,a))\leq h_{n-m}\bar\calE_{\sigma_{n-m}}(f;\nu)$; sum over $n$.

Part (iii). By \eqref{eq:posterior-score}, $f(z)-\sm{\rho}{\sigma}(z)=(m_\sigma(z)-c)/\sigma^2$ with the posterior mean $m_\sigma(z)$ in the convex hull of $\supp\rho\subseteq B_R$, a box of circumradius $D/2$ centered at $c$, so $\|f(z)-\sm{\rho}{\sigma}(z)\|\leq D/(2\sigma^2)$ for every $z$.
\end{proof}
In the normalized error $\sigma\calE_\sigma^{1/2}$, the unit in which scores scale, the operator is an isometry with one branch and a contraction with several, and the level weights $h_n\propto\sigma_n^2$ are the squares of this normalization.
This is the sense of \cref{thm:scoreTDinf}(i): the multi-level risk of the image is at most that of the input on the levels shifted up by $m$.

\begin{corollary}[The consistency residual bounds the score error]\label{cor:consTD}
In the setting of \cref{thm:scoreTD}, let $s_\psi$ be the online and $s_{\bar\psi}$ the target score model, and let
\begin{equation*}
  \calR_n(x,a)\defeq\Ep{\heat{\sigma_n}\mu(x,a)}{\big\|s_\psi(\cdot;x,a,\sigma_n)-(\calS_\nu s_{\bar\psi})(\cdot;x,a,\sigma_n)\big\|^2}
\end{equation*}
be the consistency residual at level $n$. With the normalized quantities $\tilde e_n(\psi;\rho)\defeq\sigma_n\bar\calE_{\sigma_n}(s_\psi;\rho)^{1/2}$ and $\tilde r_n\defeq\sigma_n\sup_{x,a}\calR_n(x,a)^{1/2}$,
\begin{equation}\label{eq:consTD}
  \tilde e_n(\psi;\mu)\leq\tilde r_n+\tilde e_{n-m}(\bar\psi;\nu)\qquad\text{for all }n\geq0 .
\end{equation}
At a fixed point of training, $\bar\psi=\psi$ and $\nu=\mu=\eta^\pi$, and with the prior score of \cref{thm:scoreTD}(iii) above $\sigma_0$, iterating \eqref{eq:consTD} gives, with $L_n\defeq\lfloor n/m\rfloor$,
\begin{equation*}
  \tilde e_n(\psi;\eta^\pi)\leq\sum_{i=0}^{L_n}\tilde r_{n-im}+\frac{D}{2\sigma_0}.
\end{equation*}
So exact consistency at every level and every state--action pair determines the score field of $\eta^\pi$ up to the normalized error $D/(2\sigma_0)$ at every noise level.
\end{corollary}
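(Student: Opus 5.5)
The first claim \eqref{eq:consTD} follows from the triangle inequality in $L^2(\heat{\sigma_n}\mu(x,a))$ and \cref{thm:scoreTD}(i). At fixed $(x,a)$ and level $n$, split
\begin{equation*}
s_\psi(\cdot;\sigma_n)-\sm{\mu(x,a)}{\sigma_n}=\big(s_\psi-\calS_\nu s_{\bar\psi}\big)+\big(\calS_\nu s_{\bar\psi}-\sm{\mu(x,a)}{\sigma_n}\big).
\end{equation*}
The $L^2(\heat{\sigma_n}\mu(x,a))$ norm of the first term is $\calR_n(x,a)^{1/2}$. For the second term, \cref{def:scoreimage} gives $\calS_\nu\sm{\nu}{\cdot}=\sm{\mu}{\cdot}$, so \cref{thm:scoreTD}(i) applies with $f=s_{\bar\psi}$. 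Since $\sigma_n/\gamma=\sigma_{n-m}$ on the geometric grid, its squared norm is at most $\gamma^{-2}\bar\calE_{\sigma_{n-m}}(s_{\bar\psi};\nu)$.

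Next, take square roots, multiply by $\sigma_n$, and use $\sigma_n\gamma^{-1}=\sigma_{n-m}$. This gives
\begin{equation*}
\sigma_n\,\calE_{\sigma_n}(s_\psi;\mu(x,a))^{1/2}\leq\sigma_n\calR_n(x,a)^{1/2}+\tilde e_{n-m}(\bar\psi;\nu).
\end{equation*}
Taking the supremum over $(x,a)$ on both sides yields \eqref{eq:consTD}. For $n-m<0$, the target field at levels above $\sigma_0$ is the prior score. \cref{thm:scoreTD}(iii) then bounds $\tilde e_{n-m}$ by $\sigma_{n-m}\cdot D/(2\sigma_{n-m}^2)=D/(2\sigma_{n-m})\leq D/(2\sigma_0)$, because $\sigma_{n-m}>\sigma_0$ there.

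For the fixed-point bound, set $\bar\psi=\psi$ and $\nu=\mu=\eta^\pi$; this is legitimate because $\dbo\eta^\pi=\eta^\pi$, so the same law serves as source and target. Then \eqref{eq:consTD} reads $\tilde e_n\leq\tilde r_n+\tilde e_{n-m}$ for a single sequence $(\tilde e_n)$. Unrolling it $L_n+1$ times, with $L_n=\lfloor n/m\rfloor$, reaches an index $n-(L_n+1)m\in[-m,-1]$, which lies above $\sigma_0$, where the boundary bound $D/(2\sigma_0)$ applies. Summing the residuals collected along the way gives the stated bound.

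The main point to check is the bookkeeping at negative indices. The residual $\tilde r$ is defined only for levels $n\geq0$ on the training grid, so the recursion must stop exactly when the index leaves the grid. There the prior score is the field used, both in the target $\calS_\nu s_{\bar\psi}$ for levels $0,\dots,m-1$ and in the quantity $\tilde e_{n-m}$ being bounded. This match is what makes the last step a boundary term rather than a residual, so the stated sum contains exactly $L_n+1$ residual terms.

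A second, milder point is the finiteness needed for the $L^2$ triangle inequality and for Jensen's inequality in \cref{thm:scoreTD}(i). Both hold because the smoothed laws have Gaussian tails and the score models are bounded on bounded sets with at most linear growth, so every norm involved is finite.
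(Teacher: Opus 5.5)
Your proposal is correct and follows the paper's proof. You use the triangle inequality in $L^2(\heat{\sigma_n}\mu(x,a))$ through $\calS_\nu s_{\bar\psi}$, \cref{thm:scoreTD}(i) with $f=s_{\bar\psi}$ and $\sigma_n/\gamma=\sigma_{n-m}$, and the supremum over $(x,a)$; at the fixed point you unroll $L_n+1$ times to an index in $[-m,-1]$, where \cref{thm:scoreTD}(iii) gives $D/(2\sigma)\leq D/(2\sigma_0)$, exactly as the paper does.
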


\begin{proof}[Proof of \cref{cor:consTD}]
The triangle inequality in $L^2(\heat{\sigma_n}\mu(x,a))$ and \cref{thm:scoreTD}(i) with $f=s_{\bar\psi}$ give $\calE_{\sigma_n}(s_\psi;\mu(x,a))^{1/2}\leq\calR_n(x,a)^{1/2}+\gamma\inv\bar\calE_{\sigma_{n-m}}(s_{\bar\psi};\nu)^{1/2}$, as $\sigma_n/\gamma=\sigma_{n-m}$; the supremum over $(x,a)$ times $\sigma_n$ is \eqref{eq:consTD}.
At the fixed point, iterating \eqref{eq:consTD} $L_n+1$ times reaches the index $n-(L_n+1)m<0$, a level $\sigma\in(\sigma_0,\sigma_0/\gamma]$ where \cref{thm:scoreTD}(iii) bounds the normalized error of the prior score by $D/(2\sigma)\leq D/(2\sigma_0)$.
\end{proof}
By \cref{lem:consU} with $f=s_{\bar\psi}$, $\calR_n(x,a)$ is the population consistency loss \eqref{eq:cons} at level $n$ minus a conditional variance that does not depend on $\psi$, and equals it for point-mass branch laws.
So the gradient of the one-sample consistency loss is an unbiased estimate of the gradient of $\calR_n$, the residuals need no ground truth and are bounded by the training objective, and \cref{thm:scoreTDinf}(ii) is the iterated bound of \cref{cor:consTD}.

\begin{corollary}[Score-space TD converges in finitely many sweeps]\label{cor:scoreTDconv}
Let the branch laws be point masses, so that $\calS\defeq\calS_\nu$ does not depend on $\nu$. Let $f_0$ be any field on the grid that equals the prior score of \cref{thm:scoreTD}(iii) above $\sigma_0$, and let $f_{k+1}\defeq\calS f_k$ on the grid, the field above $\sigma_0$ being held fixed, with the normalized per-sweep residuals $\tilde r^{(k)}_n\defeq\sigma_n\sup_{x,a}\|f_k(\cdot;x,a,\sigma_n)-(\calS f_{k-1})(\cdot;x,a,\sigma_n)\|_{L^2(\heat{\sigma_n}\eta^\pi(x,a))}$ of an inexact sweep ($\tilde r\equiv0$ for the exact iteration). Then for every level $n$ and every $k\geq L_n+1$, with $L_n=\lfloor n/m\rfloor$,
\begin{equation*}
  \tilde e_n(f_k;\eta^\pi)\leq\sum_{i=0}^{L_n}\tilde r^{(k-i)}_{n-im}+\frac D{2\sigma_0},
\end{equation*}
whatever $f_0$ is on the grid. So the exact iteration reaches all levels within $\lceil N_{\mathrm g}/m\rceil=\lceil\log(\sigma_0/\sigma_{N_{\mathrm g}})/\log(1/\gamma)\rceil$ sweeps, with a normalized error of at most $D/(2\sigma_0)$, and the residual of an inexact sweep is charged at most $L_n+1$ times, never accumulated over sweeps.
\end{corollary}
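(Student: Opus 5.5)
The proof will be an induction on the sweep index $k$ that repeats the argument of \cref{cor:consTD}, with the sweep index in place of the training fixed point.

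\emph{Step 1: the one-sweep inequality.} With point-mass branches, $\calS$ maps a field to a field independently of $\nu$. At the exact fixed point, $\calS\sm{\eta^\pi}{\cdot}=\sm{\eta^\pi}{\cdot}$ by \cref{def:scoreimage}, since $\dbo\eta^\pi=\eta^\pi$. Fix a grid level $n\geq0$ and a pair $(x,a)$. The triangle inequality in $L^2(\heat{\sigma_n}\eta^\pi(x,a))$ splits
\[
f_k-\sm{\eta^\pi}{\sigma_n}=(f_k-\calS f_{k-1})+(\calS f_{k-1}-\calS\sm{\eta^\pi}{\cdot}).
\]
\cref{thm:scoreTD}(i) holds with equality for point masses; it applies to the difference of fields because $\calS$ is linear in $f$. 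It bounds the second term by $\gamma\inv\bar\calE_{\sigma_n/\gamma}(f_{k-1};\eta^\pi)^{1/2}$. Because $\sigma_n/\gamma=\sigma_{n-m}$, multiplying by $\sigma_n$ and taking the supremum over $(x,a)$ gives
\[
\tilde e_n(f_k)\leq\tilde r^{(k)}_n+\tilde e_{n-m}(f_{k-1}).
\]
Here the normalization is $\sigma_n\cdot\gamma\inv=\sigma_{n-m}$, exactly as in \cref{cor:consTD}.

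\emph{Step 2: unrolling.} I apply Step 1 repeatedly, dropping the level by $m$ and the sweep index by $1$ each time. After $L_n+1$ applications the level index is $n-(L_n+1)m<0$, and the sweep index is $k-L_n-1\geq0$, which is where the hypothesis $k\geq L_n+1$ is needed. The levels reached are $n-im$ for $i=0,\dots,L_n$, all $\geq0$, so every intermediate step uses a grid level where Step 1 applies. The final term is at a level $\sigma\in(\sigma_0,\sigma_0/\gamma]$ above the grid. There the field is held fixed at the prior score for every sweep, so \cref{thm:scoreTD}(iii) bounds its normalized error by $D/(2\sigma)\leq D/(2\sigma_0)$. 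This bound does not depend on $f_0$ on the grid, which gives the "whatever $f_0$ is" claim. The residuals collected are $\tilde r^{(k-i)}_{n-im}$ for $i=0,\dots,L_n$, so each sweep's residual appears at most $L_n+1$ times.

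\emph{Step 3: sweep count.} For exact sweeps $\tilde r\equiv0$. The largest $L_n$ over levels $n\leq N_{\mathrm g}$ is $\lfloor N_{\mathrm g}/m\rfloor$, so $\lceil N_{\mathrm g}/m\rceil$ sweeps (up to the boundary convention) cover every level. Since $\kappa_{\mathrm g}=\gamma^{1/m}$, we have $\sigma_0/\sigma_{N_{\mathrm g}}=\gamma^{-N_{\mathrm g}/m}$, so $N_{\mathrm g}/m=\log(\sigma_0/\sigma_{N_{\mathrm g}})/\log(1/\gamma)$.

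\emph{Main obstacle.} The delicate point is Step 1: the comparison $\calS f_{k-1}-\calS\sm{\eta^\pi}{\cdot}$ must be taken in $L^2$ of the true smoothed law $\heat{\sigma_n}\eta^\pi$. This needs the point-mass assumption, so that the posterior branch weights do not depend on the field and $\calS$ is the same affine pullback for both fields; it is also why the residuals are defined under $\eta^\pi$. A second point to check is that the index bookkeeping at the boundary $n-im<0$ relies on the above-grid field being the same prior for all $k$.
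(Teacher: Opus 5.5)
Your proposal is correct and follows the paper's proof. Both derive the one-step inequality $\tilde e_n(f_k)\leq\tilde r^{(k)}_n+\tilde e_{n-m}(f_{k-1})$ from the triangle inequality and \cref{thm:scoreTD}(i) with $\nu=\mu=\eta^\pi$, then unroll it $L_n+1$ times to a level in $(\sigma_0,\sigma_0/\gamma]$, where \cref{thm:scoreTD}(iii) bounds the prior score's normalized error by $D/(2\sigma_0)$. Two small tightenings. First, you do not need linearity of $\calS$: since $\calS\sm{\eta^\pi}{\cdot}=\sm{\eta^\pi}{\cdot}$, your second term is exactly what \cref{thm:scoreTD}(i) bounds for $f=f_{k-1}$. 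Second, the sweep count is exact rather than ``up to the boundary convention'' once you restrict to the levels the sampler and the multi-level risk use, $n\leq N_{\mathrm g}-1$, for which $\lfloor(N_{\mathrm g}-1)/m\rfloor+1=\lceil N_{\mathrm g}/m\rceil$.
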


\begin{proof}[Proof of \cref{cor:scoreTDconv}]
With point-mass branch laws the proof of \eqref{eq:consTD} uses only \cref{thm:scoreTD}(i), so it applies to the pair $(f_k,f_{k-1})$ with $\nu=\mu=\eta^\pi$ and gives $\tilde e_n(f_k)\leq\tilde r^{(k)}_n+\tilde e_{n-m}(f_{k-1})$.
Unrolling it $L_n+1$ times reaches a level above $\sigma_0$, where every $f_j$ is the prior score, whose normalized error is at most $D/(2\sigma_0)$ by \cref{thm:scoreTD}(iii).
\end{proof}
For stochastic branch laws $\calS_\nu$ moves with the source $\nu$, and \cref{cor:consTD} at a fixed point replaces the finite-sweep statement.
The statement needs a lagged target: with the online network as its own target, every update moves the target through the parameters shared across noise levels, which breaks the triangular order of the sweeps, and consistency-only training then diverges on a self-looping state--action pair (\cref{app:res-theory}).

\subsection{The Sampler}\label{proof-gen}
\begin{remark}[Why one noise level does not suffice]\label{rem:laddernecessity}
No isoperimetric constant enters because the sampler is compared with the reference process whose marginals $\heat{\sigma_n}\mu$ are exactly the measures under which DSM certifies $s_\psi$. At a single level the DSM risk is blind to mode masses: for two modes at separation $\Delta\geq2\sigma$ whose weights are perturbed by $\epsilon$, the risk at level $\sigma$ is $\calO(\epsilon^2e^{-\Delta^2/(8\sigma^2)})$ (\cref{prop:dsmblind}(ii)). By \eqref{eq:gen-kl} and Pinsker's inequality, a weight error in the sampler law must instead show up at the levels $\sigma\gtrsim\Delta$ (\cref{app:res-theory} checks this on trained networks). The grid must start above the spread of the target, $\sigma_0^2\gg m_2(\mu;z_c)$, and $\veps_{\mathrm{disc}}$ vanishes as $\kappa_{\mathrm g}\to1$.
\end{remark}

In this subsection $\mu$ is supported in $B_R\subseteq B(0,R_Z)$, $X_0\sim\mu$, and $(W_u)_{u\geq0}$ is a standard Brownian motion on $\dsR^d$ independent of $X_0$.
For $\sigma>0$ we write $X^{(\sigma)}\defeq X_0+W_{\sigma^2}$, so that $X^{(\sigma)}\sim\heat{\sigma}\mu$.

\begin{lemma}[The score along the heat flow is a reverse martingale]\label{lem:scoremart}
For $\sigma>0$ let $\calG_\sigma$ be the $\sigma$-algebra generated by $\{X^{(\sigma')}\colon\sigma'\geq\sigma\}$. Then, for all $0<\sigma\leq\sigma'$,
\begin{equation*}
  \Ex{\sm{\mu}{\sigma}\big(X^{(\sigma)}\big)\,\Big|\,\calG_{\sigma'}}=\sm{\mu}{\sigma'}\big(X^{(\sigma')}\big)
  \qquad\text{and}\qquad
  \Ex{\big\|\sm{\mu}{\sigma}\big(X^{(\sigma)}\big)-\sm{\mu}{\sigma'}\big(X^{(\sigma')}\big)\big\|^2}=\calI(\heat{\sigma}\mu)-\calI(\heat{\sigma'}\mu).
\end{equation*}
The map $\sigma\mapsto\calI(\heat{\sigma}\mu)$ is nonincreasing, and $\calI(\heat{\sigma}\mu)\leq d/\sigma^2$.
\end{lemma}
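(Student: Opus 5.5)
The plan rests on the posterior-mean (Tweedie) representation \eqref{eq:posterior-score} together with the Markov structure of the heat flow in reverse time. Write $s_\sigma\defeq\sm{\mu}{\sigma}$. The first step is to show that $s_\sigma(X^{(\sigma)})$ equals a conditional expectation of a quantity that is measurable in the finer past. For $\sigma\leq\sigma'$ we have $X^{(\sigma')}=X^{(\sigma)}+(W_{\sigma'^2}-W_{\sigma^2})$, with the increment independent of $X^{(\sigma)}$. Applying \eqref{eq:posterior-score} to the smoothed law $\heat{\sigma}\mu$ smoothed further by variance $\sigma'^2-\sigma^2$ gives
\[
s_{\sigma'}(x')=\Ex{\nabla\log p_\sigma(X^{(\sigma)})\,\big|\,X^{(\sigma')}=x'}.
\]
This is the identity ``the score of a convolution is the posterior mean of the score''. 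It follows by differentiating $p_{\sigma'}=p_\sigma*\varphi_{\sqrt{\sigma'^2-\sigma^2}}$ under the integral and moving the gradient onto $p_\sigma$. That exchange is legitimate because $p_\sigma$ is smooth, positive, and has bounded gradient for compactly supported $\mu$.

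The second step upgrades conditioning on $X^{(\sigma')}$ to conditioning on all of $\calG_{\sigma'}$. Given $X^{(\sigma')}$, the variables $X^{(\sigma'')}$ for $\sigma''\geq\sigma'$ are obtained by adding Brownian increments after time $\sigma'^2$. These increments are independent of $(X_0,W_{u},u\leq\sigma'^2)$, and in particular of the pair $(X^{(\sigma)},X^{(\sigma')})$. Hence $X^{(\sigma)}$ is conditionally independent of $\calG_{\sigma'}$ given $X^{(\sigma')}$, and the martingale identity follows. Integrability holds because $\Ex{\|s_\sigma(X^{(\sigma)})\|^2}=\calI(\heat{\sigma}\mu)<\infty$, and I will establish the bound $d/\sigma^2$ next.

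The third step is the variance identity, which is the Pythagorean identity for a reverse martingale in $L^2$. By the first part, $s_{\sigma'}(X^{(\sigma')})$ is the orthogonal projection of $s_\sigma(X^{(\sigma)})$ onto $L^2(\calG_{\sigma'})$, so
\[
\Ex{\|s_\sigma(X^{(\sigma)})-s_{\sigma'}(X^{(\sigma')})\|^2}=\Ex{\|s_\sigma(X^{(\sigma)})\|^2}-\Ex{\|s_{\sigma'}(X^{(\sigma')})\|^2},
\]
which equals $\calI(\heat{\sigma}\mu)-\calI(\heat{\sigma'}\mu)$. Monotonicity of $\sigma\mapsto\calI(\heat{\sigma}\mu)$ is immediate, since the left side is nonnegative.

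For the bound $\calI(\heat{\sigma}\mu)\leq d/\sigma^2$, apply Jensen to $s_\sigma(z)=\Ex{-\veps/\sigma\mid Z=z}$, which is the content of Vincent's identity \eqref{eq:vincent}. This gives $\calI\leq\Ex{\|\veps\|^2}/\sigma^2=d/\sigma^2$. The main obstacle I expect is the rigorous version of the first step: justifying differentiation under the integral, and verifying that the posterior-of-score representation holds without any density assumption on $\mu$ at level $\sigma$. Both are handled by working with $\heat{\sigma}\mu$, which always has a smooth positive density with bounded derivatives, and by dominated convergence, as in the proof of \cref{cor:scorebellman}.
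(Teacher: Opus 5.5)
Your proof is correct, but its core step takes a different route from the paper's.

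\textbf{The paper's route.} The paper never relates the two scores directly. It writes each level in Tweedie form, $\sm{\mu}{\sigma}(X^{(\sigma)})=-\Ex{W_{\sigma^2}\mid\calG_\sigma}/\sigma^2$. It then gets the martingale identity from the Brownian-bridge relation $\Ex{W_{\sigma^2}\mid W_{\sigma'^2}}=(\sigma^2/\sigma'^2)W_{\sigma'^2}$ (i.e., $W_u/u$ is a backward martingale) and the tower property through the larger $\sigma$-algebra generated by $X_0$, $W_{\sigma'^2}$ and the later increments. The same representation gives the $d/\sigma^2$ bound by Jensen.

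\textbf{Your route.} You use the convolution identity $\nabla\log(p_\sigma*\varphi_\tau)=\Ex{\nabla\log p_\sigma(X^{(\sigma)})\mid X^{(\sigma')}}$ with $\tau^2=\sigma'^2-\sigma^2$. This is the classical identity behind Stam's proof of the Fisher information inequality. You then pass from $\sigma(X^{(\sigma')})$ to $\calG_{\sigma'}$ by independence of the later Brownian increments, which the paper also uses.

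\textbf{What each buys.} Your argument is analytic and is not tied to Gaussian noise. It works for any independent additive smoothing once the lower level has a positive density with bounded, integrable gradient; Gaussianity enters only in the $d/\sigma^2$ bound. Its cost is a differentiation under the integral, which you justify correctly. In fact $\nabla p_\sigma$ is bounded by $\sup\|\nabla\varphi_\sigma\|$ for every $\mu$, not only compactly supported ones. The paper's argument avoids that extra analysis by reusing \eqref{eq:posterior-score} and the Gaussian bridge. This also ties the lemma to the stochastic-localization argument it cites.

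\textbf{A small correction.} The identity in your first step is not an instance of \eqref{eq:posterior-score}. Applied to the pair $(\sigma,\sigma')$, that formula gives the posterior mean of the noise term $-(X^{(\sigma')}-X^{(\sigma)})/\tau^2$, not of the lower-level score. Your actual justification, moving the gradient onto $p_\sigma$ in $p_\sigma*\varphi_\tau$, is the right one.
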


\begin{proof}[Proof of \cref{lem:scoremart}]
By the posterior form \eqref{eq:posterior-score},
\begin{equation*}
  \sm{\mu}{\sigma}\big(X^{(\sigma)}\big)=-\frac{X^{(\sigma)}-\Ex{X_0\mid X^{(\sigma)}}}{\sigma^2}=-\frac{\Ex{W_{\sigma^2}\mid X^{(\sigma)}}}{\sigma^2}.
\end{equation*}
Jensen's inequality gives $\Ex{\|\sm{\mu}{\sigma}(X^{(\sigma)})\|^2}\leq\Ex{\|W_{\sigma^2}\|^2}/\sigma^4=d/\sigma^2$, so $\calI(\heat{\sigma}\mu)\leq d/\sigma^2$ and all second moments below are finite.
The $\sigma$-algebra $\calG_\sigma$ is generated by $X^{(\sigma)}$ and the increments $(W_u-W_{\sigma^2})_{u\geq\sigma^2}$, which are independent of $(X_0,W_{\sigma^2})$, so $\Ex{W_{\sigma^2}\mid X^{(\sigma)}}=\Ex{W_{\sigma^2}\mid\calG_\sigma}$.

Now fix $\sigma<\sigma'$ and let $\calH$ be the $\sigma$-algebra generated by $X_0$, $W_{\sigma'^2}$ and $(W_u-W_{\sigma'^2})_{u\geq\sigma'^2}$, which contains $\calG_{\sigma'}$.
The pair $(W_{\sigma^2},W_{\sigma'^2})$ is independent of $X_0$ and of the later increments, so the Brownian bridge gives $\Ex{W_{\sigma^2}\mid\calH}=\Ex{W_{\sigma^2}\mid W_{\sigma'^2}}=(\sigma^2/\sigma'^2)\,W_{\sigma'^2}$.
We have
\begin{align*}
  \Ex{\sm{\mu}{\sigma}\big(X^{(\sigma)}\big)\,\Big|\,\calG_{\sigma'}}
  &=-\frac{\Ex{W_{\sigma^2}\mid\calG_{\sigma'}}}{\sigma^2} &\prstep{(tower property, $\calG_{\sigma'}\subseteq\calG_\sigma$)}\\
  &=-\frac{\Ex{\Ex{W_{\sigma^2}\mid\calH}\mid\calG_{\sigma'}}}{\sigma^2} &\prstep{(tower property, $\calG_{\sigma'}\subseteq\calH$)}\\
  &=-\frac{\Ex{W_{\sigma'^2}\mid\calG_{\sigma'}}}{\sigma'^2}=\sm{\mu}{\sigma'}\big(X^{(\sigma')}\big),
\end{align*}
proving the first claim.
So the increment $\sm{\mu}{\sigma}(X^{(\sigma)})-\sm{\mu}{\sigma'}(X^{(\sigma')})$ is orthogonal in $L^2(\dsP)$ to the $\calG_{\sigma'}$-measurable $\sm{\mu}{\sigma'}(X^{(\sigma')})$, and Pythagoras' identity with $\Ex{\|\sm{\mu}{\sigma}(X^{(\sigma)})\|^2}=\calI(\heat{\sigma}\mu)$ gives the second claim, which implies the monotonicity.
\end{proof}
The lemma is the heat-flow form of the stochastic-localization argument that \citet[Lemmas~1 to~4]{benton2024nearly} use for the Ornstein--Uhlenbeck process, where the same control of the discretization error is obtained by It\^o calculus.\footnote{They remark that their analysis extends to the variance-exploding process.}

\begin{proof}[Proof of \cref{thm:gen}]
\emph{Reverse process.} Let $T\defeq\sigma_0^2-\sigma_{N_{\mathrm g}}^2$, $\sigma(t)\defeq(\sigma_0^2-t)^{1/2}$ for $t\in[0,T]$, and $t_n\defeq\sigma_0^2-\sigma_n^2$, so that $\sigma(t_n)=\sigma_n$ and $t_{n+1}-t_n=h_n$.
The process $Y_t\defeq X^{(\sigma(t))}$ is the heat flow on the variance interval $[\sigma_{N_{\mathrm g}}^2,\sigma_0^2]$, a Brownian motion started from $\heat{\sigma_{N_{\mathrm g}}}\mu$, read backwards in time.
By the time-reversal theorem \citep{anderson1982reverse,haussmann1986time} it solves $\dd Y_t=\sm{\mu}{\sigma(t)}(Y_t)\,\dd t+\dd B_t$ with a Brownian motion $B$, $Y_0\sim\heat{\sigma_0}\mu$, and $Y_T\sim\heat{\sigma_{N_{\mathrm g}}}\mu$.
The sampler of \cref{def:ladderrisk} is the Euler--Maruyama scheme of this equation with the learned score frozen on the grid: the process $\dd\hat X_t=s_\psi(\hat X_{t_n},\sigma_n)\,\dd t+\dd B_t$ on $[t_n,t_{n+1})$, with $\hat X_0\sim\Norm(z_c,\sigma_0^2\Id)$, satisfies $\hat X_{t_{n+1}}=\hat X_{t_n}+h_ns_\psi(\hat X_{t_n},\sigma_n)+(B_{t_{n+1}}-B_{t_n})$, which is the recursion of \cref{def:ladderrisk}, so $\hat X_T\sim\hat Q_\psi$.

\emph{Girsanov and data processing.} Let $P^Y$ and $P^{\hat X}$ be the path laws of $Y$ and $\hat X$ on $[0,T]$, and let $P^{\hat X}_{\mathrm{ref}}$ be the path law of the scheme started from $\heat{\sigma_0}\mu$ instead of $\Norm(z_c,\sigma_0^2\Id)$.
The two scheme laws share their transition kernels, so the chain rule for the relative entropy gives
\begin{equation*}
  \KLdiv{P^Y}{P^{\hat X}}=\KLdiv{\heat{\sigma_0}\mu}{\Norm(z_c,\sigma_0^2\Id)}+\KLdiv{P^Y}{P^{\hat X}_{\mathrm{ref}}}.
\end{equation*}
For the second term, Girsanov's theorem for two diffusions with a common initial law, in the form obtained by the approximation argument of \citet[inequality~(5.5) and Section~5.2]{chen2023sampling},\footnote{The argument avoids Novikov's condition. For a diffusion coefficient $g$ instead of their $\sqrt2$ the bound is $\frac12\Ex{\int\|(b^Y_t-b^X_t)/g\|^2\dd t}$ for the drifts $b^Y,b^X$; here $g=1$.} gives
\begin{equation*}
  \KLdiv{P^Y}{P^{\hat X}_{\mathrm{ref}}}\leq\frac12\sum_{n=0}^{N_{\mathrm g}-1}\int_{t_n}^{t_{n+1}}\Ex{\big\|\sm{\mu}{\sigma(t)}(Y_t)-s_\psi(Y_{t_n},\sigma_n)\big\|^2}\dd t,
\end{equation*}
whenever the right side is finite, which the bounds below show.
By data processing for the time-$T$ marginals, $\KLdiv{\heat{\sigma_{N_{\mathrm g}}}\mu}{\hat Q_\psi}\leq\KLdiv{P^Y}{P^{\hat X}}$.

\emph{Initialization.} By joint convexity of the Kullback--Leibler divergence, $\KLdiv{\heat{\sigma_0}\mu}{\Norm(z_c,\sigma_0^2\Id)}\leq\int\KLdiv{\Norm(y,\sigma_0^2\Id)}{\Norm(z_c,\sigma_0^2\Id)}\,\mu(\dd y)=m_2(\mu;z_c)/(2\sigma_0^2)$.

\emph{Score and discretization errors.} For $t\in[t_n,t_{n+1})$ we have
\begin{align*}
  \frac12\Ex{\big\|\sm{\mu}{\sigma(t)}(Y_t)-s_\psi(Y_{t_n},\sigma_n)\big\|^2}
  &\leq\Ex{\big\|\sm{\mu}{\sigma_n}(Y_{t_n})-s_\psi(Y_{t_n},\sigma_n)\big\|^2}\\
  &\quad+\Ex{\big\|\sm{\mu}{\sigma(t)}(Y_t)-\sm{\mu}{\sigma_n}(Y_{t_n})\big\|^2} &\prstep{($\|a+b\|^2\leq2\|a\|^2+2\|b\|^2$)}\\
  &=\calE_{\sigma_n}(\psi;\mu)+\calI(\heat{\sigma(t)}\mu)-\calI(\heat{\sigma_n}\mu) &\prstep{(\cref{lem:scoremart})}\\
  &\leq\calE_{\sigma_n}(\psi;\mu)+\calI(\heat{\sigma_{n+1}}\mu)-\calI(\heat{\sigma_n}\mu).
\end{align*}
The equality uses $Y_{t_n}\sim\heat{\sigma_n}\mu$ for the first term and $(Y_{t_n},Y_t)=(X^{(\sigma_n)},X^{(\sigma(t))})$ with $\sigma(t)\leq\sigma_n$ for the second, and the last step is the monotonicity in \cref{lem:scoremart}, as $\sigma(t)\geq\sigma_{n+1}$.
Integrating over $[t_n,t_{n+1})$, an interval of length $h_n$, and summing over $n$ proves \eqref{eq:gen-kl}.

\emph{Geometric grid.} With $\sigma_{n+1}=\kappa_{\mathrm g}\sigma_n$ we have $h_n=(1-\kappa_{\mathrm g}^2)\sigma_n^2$, and summation by parts gives
\begin{equation*}
  \sum_{n=0}^{N_{\mathrm g}-1}\sigma_n^2\big(\calI(\heat{\sigma_{n+1}}\mu)-\calI(\heat{\sigma_n}\mu)\big)\leq\sigma_{N_{\mathrm g}-1}^2\,\calI(\heat{\sigma_{N_{\mathrm g}}}\mu)+\sum_{n=1}^{N_{\mathrm g}-1}(\sigma_{n-1}^2-\sigma_n^2)\,\calI(\heat{\sigma_n}\mu)\leq\frac{d}{\kappa_{\mathrm g}^2}\big(1+(N_{\mathrm g}-1)(1-\kappa_{\mathrm g}^2)\big),
\end{equation*}
by $\calI(\heat{\sigma}\mu)\leq d/\sigma^2$ and $\sigma_{n-1}^2=\sigma_n^2/\kappa_{\mathrm g}^2$.
As $1-e^{-2x}\leq2x$, $(N_{\mathrm g}-1)(1-\kappa_{\mathrm g}^2)\leq2N_{\mathrm g}\log(1/\kappa_{\mathrm g})=2\log(\sigma_0/\sigma_{N_{\mathrm g}})$, and multiplying by $1-\kappa_{\mathrm g}^2$ gives the bound on $\veps_{\mathrm{disc}}$.

\emph{Total variation and $\W1$.} Pinsker's inequality, $\tv\leq(\KL/2)^{1/2}$, gives the first part of \eqref{eq:gen-w1}.
The projection $\mathrm{proj}_{B_R}$ is $1$-Lipschitz, as $B_R$ is closed and convex, and it fixes $\mu$, so $\W1(Q_\psi,\mu)$ is at most $\W1\big((\mathrm{proj}_{B_R})_\#\hat Q_\psi,(\mathrm{proj}_{B_R})_\#\heat{\sigma_{N_{\mathrm g}}}\mu\big)+\W1\big(\heat{\sigma_{N_{\mathrm g}}}\mu,\mu\big)$.
Both laws in the first term are supported in $B_R$, of diameter $D$, so \cref{lem:cvarfacts}(iv) and data processing bound it by $D\,\tv(\hat Q_\psi,\heat{\sigma_{N_{\mathrm g}}}\mu)\leq D\veps_{\mathrm{TV}}$, and the coupling $(Y+\sigma_{N_{\mathrm g}}\veps,Y)$ bounds the second by $\sigma_{N_{\mathrm g}}\Ex{\|\veps\|}\leq\sigma_{N_{\mathrm g}}\sqrt d$.
\end{proof}

\begin{remark}[Score models of linear growth]\label{rem:unbounded}
The score network of \cref{app:method} is not bounded on $\dsR^d$: the prior term $-\|z\|^2/(2(\sigma^2+\sigma_{\mathrm{d}}^2))$ of its energy makes its score grow linearly in $z$.
\Cref{thm:gen} covers it through a clipped model.
For a radius $R_{\mathrm c}>0$ let $\tilde s_\psi(z,\sigma)\defeq s_\psi(\mathrm{proj}_{\bar B(z_c,R_{\mathrm c})}(z),\sigma)$, which is bounded on $\dsR^d$ at every grid level, as $s_\psi$ is continuous, and equals $s_\psi$ on the closed ball $\bar B(z_c,R_{\mathrm c})$.
Started from the same draw and driven by the same noise, the two samplers coincide on the event that $\hat X_0,\dots,\hat X_{N_{\mathrm g}-1}$ all lie in this ball, so the coupling inequality bounds the total variation between their output laws by the probability $\veps_{\mathrm{clip}}$ that one of these iterates leaves it, which the sampler's own paths estimate.
So \eqref{eq:gen-w1} holds for $s_\psi$ with $\veps_{\mathrm{TV}}$ computed for $\tilde s_\psi$ and $\veps_{\mathrm{clip}}$ added to it, and the multi-level risk of $\tilde s_\psi$ differs from that of $s_\psi$ only through the mass of $\heat{\sigma_n}\mu$ outside the ball.
\end{remark}
\begin{remark}[Estimating the multi-level risk]\label{rem:ladderestimator}
If the level index $n<N_{\mathrm g}$ is drawn with probability $p_n>0$ and $\ell_n(\psi)$ is the one-sample bootstrap DSM loss \eqref{eq:dsm} at level $\sigma_n$, then $\Ex{(h_n/p_n)\,\ell_n(\psi)}=\calE_{\mathrm{ml}}(\psi;\dbo\nu(x,a))+C$ with $C$ independent of $\psi$, by \cref{lem:dsmU} at every level.
On a geometric grid $h_n\propto\sigma_n^2$, so uniform draws call for the weight $\sigma_n^2$, the likelihood weighting of score-based diffusion models.
Our implementation does not use this estimator: it draws one noise level per sample and weights every sample equally (\cref{app:method}), so its training loss is not an unbiased estimate of $\calE_{\mathrm{ml}}$, and \cref{app:res-theory} measures the multi-level risk and the bound of \cref{thm:gen} on the trained networks instead.
\end{remark}

The consistency residuals of \cref{cor:consTD} turn \cref{thm:gen} into a certificate that needs no ground truth at a fixed point of training.
\begin{corollary}[Score Bellman residual certificate]\label{cor:residualcert}
In the setting of \cref{cor:consTD}, at a fixed point of training with the prior score above $\sigma_0$, let $\tilde e^*_n\defeq\sum_{i=0}^{L_n}\tilde r_{n-im}+D/(2\sigma_0)$. Then for every $(x,a)$,
\begin{equation*}
  \calE_{\mathrm{ml}}\big(\psi;\eta^\pi(x,a)\big)\leq(1-\kappa_{\mathrm g}^2)\sum_{n=0}^{N_{\mathrm g}-1}(\tilde e^*_n)^2,
\end{equation*}
and \cref{thm:gen} with $\mu=\eta^\pi(x,a)$ and this bound on the multi-level risk in $\veps_{\mathrm{TV}}$ gives $\W1(Q_\psi(x,a),\eta^\pi(x,a))\leq D\veps_{\mathrm{TV}}+\sigma_{N_{\mathrm g}}\sqrt d$. A field with vanishing consistency residuals satisfies $\calE_{\mathrm{ml}}(\psi;\eta^\pi(x,a))\leq\log(\sigma_0/\sigma_{N_{\mathrm g}})\,D^2/(2\sigma_0^2)$.
\end{corollary}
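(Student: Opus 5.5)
The first claim follows from \cref{cor:consTD}, and the second from specializing to vanishing residuals. On a geometric grid the variance steps are $h_n=(1-\kappa_{\mathrm g}^2)\sigma_n^2$. The multi-level risk of \cref{def:ladderrisk} at $\mu=\eta^\pi(x,a)$ is therefore
\begin{equation*}
  \calE_{\mathrm{ml}}\big(\psi;\eta^\pi(x,a)\big)=(1-\kappa_{\mathrm g}^2)\sum_{n=0}^{N_{\mathrm g}-1}\sigma_n^2\,\calE_{\sigma_n}\big(\psi;\eta^\pi(x,a)\big).
\end{equation*}
Each summand is bounded by its supremum over state--action pairs, so $\sigma_n^2\calE_{\sigma_n}(\psi;\eta^\pi(x,a))\leq\sigma_n^2\bar\calE_{\sigma_n}(s_\psi;\eta^\pi)=\tilde e_n(\psi;\eta^\pi)^2$. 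At a fixed point of training, $\bar\psi=\psi$ and $\nu=\mu=\eta^\pi$, with the prior score above $\sigma_0$. In that setting the iterated bound of \cref{cor:consTD} gives $\tilde e_n(\psi;\eta^\pi)\leq\tilde e^*_n$ for every $n$. Squaring this nonnegative inequality and summing yields the displayed bound.

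For the $\W1$ statement I apply \cref{thm:gen} with $\mu=\eta^\pi(x,a)$. This is legitimate because $\eta^\pi(x,a)$ is supported in $B_R$ by \cref{asm:mdp}, and the score model is bounded at every grid level; if it is not, I invoke \cref{rem:unbounded}. The right side of \eqref{eq:gen-kl} is monotone in $\calE_{\mathrm{ml}}$, so substituting the upper bound just obtained gives a valid, possibly larger, $\veps_{\mathrm{TV}}$. Then \eqref{eq:gen-w1} gives $\W1(Q_\psi(x,a),\eta^\pi(x,a))\leq D\veps_{\mathrm{TV}}+\sigma_{N_{\mathrm g}}\sqrt d$ directly.

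For vanishing residuals, every $\tilde r_k=0$, so $\tilde e^*_n=D/(2\sigma_0)$ for all $n$. The bound becomes $(1-\kappa_{\mathrm g}^2)N_{\mathrm g}D^2/(4\sigma_0^2)$. It remains to compare $N_{\mathrm g}(1-\kappa_{\mathrm g}^2)$ with the logarithm. Since $1-e^{-2x}\leq2x$ with $x=\log(1/\kappa_{\mathrm g})$, we have $N_{\mathrm g}(1-\kappa_{\mathrm g}^2)\leq2N_{\mathrm g}\log(1/\kappa_{\mathrm g})=2\log(\sigma_0/\sigma_{N_{\mathrm g}})$, the same step used for $\veps_{\mathrm{disc}}$ in the proof of \cref{thm:gen}. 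This gives $\log(\sigma_0/\sigma_{N_{\mathrm g}})\,D^2/(2\sigma_0^2)$.

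The main point to check is the fixed-point application of \cref{cor:consTD}. Its normalized errors are suprema over $(x,a)$ measured under $\heat{\sigma_n}\eta^\pi$, which matches the per-$(x,a)$ risk needed here only after taking the supremum, and that direction is harmless. The level indexing also needs care: the recursion must terminate at levels in $(\sigma_0,\sigma_0/\gamma]$, where the prior score of \cref{thm:scoreTD}(iii) applies. The corollary already handles this.
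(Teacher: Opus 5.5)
Your proposal is correct and follows the paper's proof essentially step for step. You bound each $h_n\calE_{\sigma_n}$ by $(1-\kappa_{\mathrm g}^2)\tilde e_n(\psi;\eta^\pi)^2\leq(1-\kappa_{\mathrm g}^2)(\tilde e^*_n)^2$ via \cref{cor:consTD}, apply \cref{thm:gen} with $\mu=\eta^\pi(x,a)$, and use $1-e^{-2x}\leq2x$ for the vanishing-residual case; your remarks on the monotonicity of \eqref{eq:gen-kl} in $\calE_{\mathrm{ml}}$ and on \cref{rem:unbounded} only spell out what the paper leaves implicit.
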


\begin{proof}[Proof of \cref{cor:residualcert}]
For each $n$, $h_n\calE_{\sigma_n}(\psi;\eta^\pi(x,a))\leq(1-\kappa_{\mathrm g}^2)\sigma_n^2\bar\calE_{\sigma_n}(s_\psi;\eta^\pi)=(1-\kappa_{\mathrm g}^2)\tilde e_n(\psi;\eta^\pi)^2\leq(1-\kappa_{\mathrm g}^2)(\tilde e^*_n)^2$ by \cref{cor:consTD}, and summing over $n$ gives the first claim.
The sampler bound is \cref{thm:gen} for $\mu=\eta^\pi(x,a)$, which is supported in $B_R$ by \cref{asm:mdp}.
With vanishing residuals every $\tilde e^*_n$ equals $D/(2\sigma_0)$, and $(1-\kappa_{\mathrm g}^2)N_{\mathrm g}\leq2N_{\mathrm g}\log(1/\kappa_{\mathrm g})=2\log(\sigma_0/\sigma_{N_{\mathrm g}})$ gives the last claim.
\end{proof}
The residuals $\tilde r_n$ are bounded by the population consistency loss (\cref{lem:consU}), so every term of the certificate is computed from the training objective and the bootstrap targets.

The next proposition is the witness behind \cref{rem:laddernecessity}.
It uses the following functional inequalities.
\begin{definition}[Log-Sobolev and Poincar\'e inequalities]\label{def:funcineq}
Let $\rho\in\calP(\dsR^d)$ and $C>0$. The law $\rho$ satisfies the \emph{logarithmic Sobolev inequality} $\mathrm{LSI}(C)$ if ${\KLdiv{\nu}{\rho}\leq\frac C2\Ep{\nu}{\|\nabla\log(\dd\nu/\dd\rho)\|^2}}$ for every $\nu\ll\rho$ with a strictly positive, locally Lipschitz density $\dd\nu/\dd\rho$, and the \emph{Poincar\'e inequality} $\mathrm{P}(C)$ if ${\Vrp{\rho}{f}\leq C\Ep{\rho}{\|\nabla f\|^2}}$ for every locally Lipschitz $f\in L^2(\rho)$. We write $C_{\mathrm{LSI}}$ and $C_P$ for the smallest admissible constants; $\Norm(0,\Id)$ satisfies both with $C=1$.
\end{definition}

\begin{proposition}[Single-level DSM is blind to mode masses]\label{prop:dsmblind}
Let $\sigma>0$ and $\Delta\geq2\sigma$, let $e_1$ be the first unit vector of $\dsR^d$, and let $\mu\defeq\frac12(\delta_{-\Delta e_1/2}+\delta_{\Delta e_1/2})$.
\emph{(i)} The log-Sobolev and Poincar\'e constants of $\heat{\sigma}\mu$ satisfy
\begin{equation}\label{eq:lsi-lower}
  C_{\mathrm{LSI}}(\sigma)\geq C_P(\sigma)\geq\frac{\sqrt{2\pi}}{4}\,\sigma^2\,e^{(\Delta-2\sigma)^2/(8\sigma^2)}.
\end{equation}
\emph{(ii)} For $0<\epsilon<\frac12$ let $\mu_\epsilon\defeq(\frac12+\epsilon)\delta_{-\Delta e_1/2}+(\frac12-\epsilon)\delta_{\Delta e_1/2}$ be the same law with perturbed weights, and let the score model be its smoothed score, $s_\psi(\cdot,\sigma)\defeq\sm{\mu_\epsilon}{\sigma}$. Then $\W1(\mu_\epsilon,\mu)=\W1(\heat{\sigma}\mu_\epsilon,\heat{\sigma}\mu)=\epsilon\Delta$, whereas
\begin{equation*}
  \calE_\sigma(\psi;\mu)\leq\frac{32\,\Delta\,\epsilon^2}{\sqrt{2\pi}\,\sigma^3(1-2\epsilon)^2}\,e^{-\Delta^2/(8\sigma^2)}.
\end{equation*}
\end{proposition}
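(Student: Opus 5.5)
For \emph{(i)} I would exhibit one explicit test function for the Poincar\'e inequality of $\heat{\sigma}\mu$, and then use the standard fact that $\mathrm{LSI}(C)$ implies $\mathrm{P}(C)$ (apply the LSI to $\dd\nu/\dd\rho=1+t f$ and let $t\to0$) to get $C_{\mathrm{LSI}}\geq C_P$.

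\emph{The test function.} Take $f(z)\defeq g(z_1)$ with $g$ the clipped ramp: $g=-1$ on $(-\infty,-\sigma]$, $g(t)=t/\sigma$ on $[-\sigma,\sigma]$, and $g=1$ on $[\sigma,\infty)$. This $f$ is Lipschitz, and $\|\nabla f\|^2=\sigma^{-2}\ind\{|z_1|<\sigma\}$.
\begin{itemize}
\item \emph{Variance.} Since $\heat{\sigma}\mu$ is symmetric under $z_1\mapsto-z_1$ and $f$ is odd, $\Ep{}{f}=0$. Hence $\Vrp{}{f}\geq\Pr(|Z_1|\geq\sigma)$. Each mixture component has first coordinate $\Norm(\pm\Delta/2,\sigma^2)$, and $\Delta/2\geq\sigma$, so the component centered at $+\Delta/2$ puts mass at least $\tfrac12$ on $\{z_1\geq\sigma\}$, and symmetrically for the other. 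This gives $\Vrp{}{f}\geq\tfrac12$.
\item \emph{Energy.} The first-coordinate density on $(-\sigma,\sigma)$ is at most $(\sigma\sqrt{2\pi})\inv e^{-(\Delta/2-\sigma)^2/(2\sigma^2)}$, because both components are farthest from their centers there. The interval has length $2\sigma$, so
\[
\Ep{}{\|\nabla f\|^2}\leq\frac{2}{\sigma^2\sqrt{2\pi}}\,e^{-(\Delta-2\sigma)^2/(8\sigma^2)}.
\]
\end{itemize}
Dividing the variance bound by the energy bound gives exactly the constant $\frac{\sqrt{2\pi}}{4}\sigma^2e^{(\Delta-2\sigma)^2/(8\sigma^2)}$ claimed in \eqref{eq:lsi-lower}.

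\emph{The $\W1$ identities in (ii).} For the upper bound, couple $\mu_\epsilon$ and $\mu$ by moving mass $\epsilon$ from $-\Delta e_1/2$ to $+\Delta e_1/2$, at cost $\epsilon\Delta$. Adding the same Gaussian noise to both coordinates of this coupling gives the same cost for the smoothed pair. For the lower bound, apply Kantorovich duality with the $1$-Lipschitz test function $z\mapsto z_1$. Its mean under $\mu$ is $0$ and under $\mu_\epsilon$ is $-\epsilon\Delta$, and convolution with a centered Gaussian leaves both means unchanged. So all three quantities equal $\epsilon\Delta$.

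\emph{The risk bound in (ii).} Use the posterior form \eqref{eq:posterior-score}. Both laws are supported on the same two atoms, so both smoothed scores are $-(z-m(z))/\sigma^2$ with $m$ pointing along $e_1$. Explicitly,
\[
m(z)=\tfrac\Delta2\tanh(u)\,e_1 \text{ for }\mu,\qquad m_\epsilon(z)=\tfrac\Delta2\tanh(u-c)\,e_1 \text{ for }\mu_\epsilon,
\]
where $u\defeq\Delta z_1/(2\sigma^2)$ and $c\defeq\tfrac12\log\frac{1+2\epsilon}{1-2\epsilon}$. The score error is therefore
\[
\frac{\Delta}{2\sigma^2}\big|\tanh(u-c)-\tanh(u)\big|\leq\frac{\Delta}{2\sigma^2}\,c\,\sup_{v\in[u-c,u]}\operatorname{sech}^2(v).
\]
Two ingredients then finish the bound:
\begin{itemize}
\item $c\leq\frac{2\epsilon}{1-2\epsilon}$, which supplies the factor $\epsilon^2/(1-2\epsilon)^2$ after squaring;
\item $\operatorname{sech}^2(v)\leq4e^{-2|v|}$.
\end{itemize}
It remains to integrate the squared error against the density of $\heat{\sigma}\mu$. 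That density is the average of two Gaussians in $z_1$ centered at $\pm\Delta/2$. Writing $\varphi_\sigma(z_1\mp\Delta/2)=\varphi_\sigma(z_1)\,e^{\pm\Delta z_1/(2\sigma^2)-\Delta^2/(8\sigma^2)}$ isolates the factor $e^{-\Delta^2/(8\sigma^2)}$. The remaining exponentials $e^{\pm2u}$ are then more than absorbed by the decay $e^{-4|u|}$ of $\operatorname{sech}^4$.

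\emph{Main obstacle.} This last integration is where I expect the difficulty. The shift $c$ moves the peak of $\operatorname{sech}^2$, and tracking it introduces extra factors of $(1-2\epsilon)$. The prefactor $\Delta/\sigma^3$ must also come out correctly: the squared gradient scale $(\Delta/2\sigma^2)^2$ times the Jacobian $\dd z_1=(2\sigma^2/\Delta)\,\dd u$ leaves $\Delta/(2\sigma^2)$, and the Gaussian normalization $1/(\sigma\sqrt{2\pi})$ contributes the remaining $1/\sigma$. I would first try bounding $\varphi_\sigma(z_1)$ by its maximum and integrating $e^{-2|v|}$ in closed form. If the resulting constant exceeds $32$, I would instead split the integral at $u=c/2$.
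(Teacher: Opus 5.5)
Your part (i) and the two $\W1$ identities in (ii) are essentially the paper's argument: the same clipped test function, the same variance and energy bounds, the same coupling, and the same Kantorovich--Rubinstein witness $z\mapsto z_1$.

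\textbf{The gap.} It is in the risk bound, whose decisive step you leave open, and the route you sketch does not close as stated. The mean-value bound $|\tanh(u-c)-\tanh u|\leq c\sup_{v\in[u-c,u]}\mathrm{sech}^2v$ replaces the profile $\mathrm{sech}^2u$ by a different one. The original is centred at the midpoint $z_1=0$, where the smoothed density is smallest. The replacement is a plateau of height $1$ on $[0,c]$ followed by the shifted tail $\mathrm{sech}^2(u-c)$.

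Against $p(z_1)=\varphi_\sigma(z_1)\cosh(u)\,e^{-\Delta^2/(8\sigma^2)}$, this shift costs a factor growing like $e^{c}=\big(\tfrac{1+2\epsilon}{1-2\epsilon}\big)^{1/2}$. Once you have already used $c\leq\frac{2\epsilon}{1-2\epsilon}$, nothing is left to absorb it.

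Concretely, take your ingredients $\mathrm{sech}^2v\leq4e^{-2|v|}$ and $\varphi_\sigma\leq(\sqrt{2\pi}\sigma)^{-1}$. Your chain then gives
\[
\calE_\sigma(\psi;\mu)\leq\frac{8\Delta}{\sqrt{2\pi}\sigma^3}\Big(\frac{2\epsilon}{1-2\epsilon}\Big)^2J(c)\,e^{-\Delta^2/(8\sigma^2)},\qquad J(c)=\frac4{15}(1+\cosh c)+\frac{16}{15}\sinh c .
\]
This proves the claim only when $J(c)\leq1$, which fails once $\epsilon$ exceeds about $0.19$. Even with the exact $\mathrm{sech}$, the analogous factor is unbounded as $\epsilon\to\frac12$. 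For $\Delta\gg\sigma$ the bound on $\varphi_\sigma$ is asymptotically exact on the relevant window, so the loss sits in the pointwise step. Your fallback of splitting at $u=c/2$ is not specified enough to remove it.

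\textbf{The fix.} The missing idea is to compute the difference exactly rather than by the mean value theorem. Since $\tanh c=2\epsilon$, the subtraction formula gives
\[
\tanh u-\tanh(u-c)=\frac{2\epsilon\,\mathrm{sech}^2u}{1-2\epsilon\tanh u},\qquad\text{hence}\qquad |\tanh u-\tanh(u-c)|\leq\frac{2\epsilon}{1-2\epsilon}\,\mathrm{sech}^2u .
\]
The shift now sits in a denominator bounded below by $1-2\epsilon$, and the profile stays centred at the midpoint. This is exactly the paper's identity
\[
\sm{\mu_\epsilon}{\sigma}-\sm{\mu}{\sigma}=\nabla\log\big(1+2\epsilon(\pi_1-\pi_2)\big)=-\frac{4\epsilon\Delta}{\sigma^2}\,\frac{\pi_1\pi_2}{1+2\epsilon(\pi_1-\pi_2)}\,e_1,
\]
with $\pi_1\pi_2=\frac14\mathrm{sech}^2u\leq e^{-2|u|}$.

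From there your own integration closes. Using $\mathrm{sech}^4u\leq16e^{-4|u|}$, $\varphi_\sigma\leq(\sqrt{2\pi}\sigma)^{-1}$ and $\int e^{-4|u|}\cosh u\,\dd u=\frac8{15}$, you get the constant $\frac{256}{15}<32$. This is the same value the paper reaches by completing the square in four half-line Gaussian integrals and using the Mills ratio. Keeping $\int\mathrm{sech}^3u\,\dd u=\frac\pi2$ instead gives the constant $\pi$.

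Two small points.
\begin{itemize}
\item After factoring out $e^{-\Delta^2/(8\sigma^2)}$, the leftover exponentials are $e^{\pm u}$, not $e^{\pm2u}$; this is harmless either way.
\item Keeping $c=\operatorname{artanh}(2\epsilon)$ exact would also rescue your mean-value route. This follows from $c\leq\sinh c$, $\frac{2\epsilon}{1-2\epsilon}=e^{c}\sinh c$, and $J(c)\leq e^{2c}$.
\end{itemize}
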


\begin{proof}[Proof of \cref{prop:dsmblind}]
Under $\heat{\sigma}\mu$ the first coordinate has the law $\frac12\Norm(-\Delta/2,\sigma^2)+\frac12\Norm(\Delta/2,\sigma^2)$, with density $p^{(1)}_\sigma$, and the other coordinates are independent $\Norm(0,\sigma^2)$.

Part (i). Let $f(z)\defeq\min\{1,\max\{-1,z_1/\sigma\}\}$, which is bounded and Lipschitz.
We have $\Ep{\heat{\sigma}\mu}{f}=0$ by symmetry, $f^2=1$ on $\{|z_1|\geq\sigma\}$, and $\|\nabla f\|^2=\sigma^{-2}\ind\{|z_1|<\sigma\}$ almost everywhere.
On either component $z_1=\pm\Delta/2+\sigma G$ with $G\sim\Norm(0,1)$, so the event $\{|z_1|\geq\sigma\}$ has probability at least $\frac12$, as $\Delta\geq2\sigma$.
For $|u|\leq\sigma\leq\Delta/2$ both Gaussian terms of $p^{(1)}_\sigma(u)$ are at most $\frac{1}{2\sqrt{2\pi}\sigma}e^{-(\Delta/2-\sigma)^2/(2\sigma^2)}$.
So
\begin{align*}
  \Vrp{\heat{\sigma}\mu}{f}&\geq(\heat{\sigma}\mu)(\{|z_1|\geq\sigma\})\geq\frac12,\\
  \Ep{\heat{\sigma}\mu}{\|\nabla f\|^2}&=\sigma^{-2}\int_{-\sigma}^{\sigma}p^{(1)}_\sigma(u)\,\dd u\leq\frac{2}{\sqrt{2\pi}\,\sigma^2}\,e^{-(\Delta-2\sigma)^2/(8\sigma^2)},
\end{align*}
and $\mathrm{P}(C_P(\sigma))$ applied to $f$ gives the second inequality of \eqref{eq:lsi-lower}.
For the first, $\mathrm{LSI}(C)$ implies $\mathrm{P}(C)$: for bounded locally Lipschitz $f$ with $\Ep{\rho}{f}=0$, take $\dd\nu/\dd\rho\propto(1+\veps f)^2$; then $\KLdiv{\nu}{\rho}=2\veps^2\Vrp{\rho}{f}+o(\veps^2)$ and $\Ep{\nu}{\|\nabla\log(\dd\nu/\dd\rho)\|^2}=4\veps^2\Ep{\rho}{\|\nabla f\|^2}+o(\veps^2)$, and letting $\veps\to0$ in $\mathrm{LSI}(C)$ and truncating gives $\mathrm{P}(C)$.

Part (ii). Write $\pi_1,\pi_2$ for the posterior responsibilities of the two components under $\heat{\sigma}\mu$, i.e., $\pi_1(z)=(1+e^{\Delta z_1/\sigma^2})\inv=1-\pi_2(z)$.
The density of $\heat{\sigma}\mu_\epsilon$ is that of $\heat{\sigma}\mu$ times $1+2\epsilon(\pi_1-\pi_2)$, and $\nabla(\pi_1-\pi_2)=-(2\Delta/\sigma^2)\pi_1\pi_2\,e_1$, so
\begin{equation*}
  \sm{\mu_\epsilon}{\sigma}-\sm{\mu}{\sigma}=\nabla\log\big(1+2\epsilon(\pi_1-\pi_2)\big)=-\frac{4\epsilon\Delta}{\sigma^2}\,\frac{\pi_1\pi_2}{1+2\epsilon(\pi_1-\pi_2)}\,e_1.
\end{equation*}
With $1+2\epsilon(\pi_1-\pi_2)\geq1-2\epsilon$ and $\pi_1\pi_2\leq e^{-\Delta|z_1|/\sigma^2}$ we get
\begin{equation*}
  \calE_\sigma(\psi;\mu)\leq\frac{16\epsilon^2\Delta^2}{\sigma^4(1-2\epsilon)^2}\int_{\dsR}e^{-2\Delta|u|/\sigma^2}\,p^{(1)}_\sigma(u)\,\dd u.
\end{equation*}
Completing the square in each of the four half-line Gaussian integrals and using $\int_x^\infty e^{-t^2/2}\,\dd t\leq e^{-x^2/2}/x$, the four terms carry the factors $\frac13$, $\frac15$, $\frac13$ and $\frac15$, and the integral is at most $\frac{16}{15}\frac{\sigma}{\sqrt{2\pi}\Delta}e^{-\Delta^2/(8\sigma^2)}\leq\frac{2\sigma}{\sqrt{2\pi}\Delta}e^{-\Delta^2/(8\sigma^2)}$, which gives the bound on $\calE_\sigma(\psi;\mu)$.
For the distances, the coupling that moves the mass $\epsilon$ from $\Delta e_1/2$ to $-\Delta e_1/2$ gives $\W1(\mu_\epsilon,\mu)\leq\epsilon\Delta$, and adding the same Gaussian noise to both of its coordinates gives $\W1(\heat{\sigma}\mu_\epsilon,\heat{\sigma}\mu)\leq\epsilon\Delta$.
Conversely, the $1$-Lipschitz function $z\mapsto z_1$ has the mean $-\epsilon\Delta$ under $\mu_\epsilon$ and under $\heat{\sigma}\mu_\epsilon$, and the mean $0$ under $\mu$ and under $\heat{\sigma}\mu$, so both distances are at least $\epsilon\Delta$ by the Kantorovich--Rubinstein duality.
\end{proof}
So on this family $\W1(\heat{\sigma}\mu_\epsilon,\heat{\sigma}\mu)/\calE_\sigma(\psi;\mu)^{1/2}$ grows at least like $e^{\Delta^2/(16\sigma^2)}$, and a bound of $\W1$ by a function of the risk at the single level $\sigma$ that holds for all such models carries a factor of this order.
By (i), the Poincar\'e constant, which is the relaxation time of the Langevin diffusion toward $\heat{\sigma}\mu$, grows exponentially in $(\Delta/\sigma)^2$ as well.

\begin{corollary}[Mode masses of the sampler]\label{cor:modemass}
  In the setting of \cref{thm:gen}, let $\mu=\sum_jw_j\mu_j$ and let $\{A_j\}$ be a Borel partition of $\dsR^d$ with ${\mathrm{dist}(\supp\mu_j,A_j^c)\geq m_{\mathrm{cell}}>0}$ for every $j$. If $\sigma_{N_{\mathrm g}}\leq m_{\mathrm{cell}}/\sqrt d$, then
\begin{equation*}
  \sum_j\big|\hat Q_\psi(A_j)-w_j\big|\leq2\,\veps_{\mathrm{TV}}+2\,e^{-(m_{\mathrm{cell}}/\sigma_{N_{\mathrm g}}-\sqrt d)^2/2}.
\end{equation*}
\end{corollary}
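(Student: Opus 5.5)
We compare $\hat Q_\psi$ with the intermediate law $\heat{\sigma_{N_{\mathrm g}}}\mu$ and split the error by the triangle inequality:
\begin{equation*}
  \sum_j|\hat Q_\psi(A_j)-w_j|\leq\sum_j|\hat Q_\psi(A_j)-\heat{\sigma_{N_{\mathrm g}}}\mu(A_j)|+\sum_j|\heat{\sigma_{N_{\mathrm g}}}\mu(A_j)-w_j|.
\end{equation*}
The first sum is the $\ell_1$ distance between the two laws restricted to the finite (or countable) partition $\{A_j\}$. It is at most $2\,\tv(\hat Q_\psi,\heat{\sigma_{N_{\mathrm g}}}\mu)$, since $\tv$ is half the total variation norm and partitions only coarsen it. By \cref{thm:gen}, \eqref{eq:gen-w1}, this is at most $2\veps_{\mathrm{TV}}$, which gives the first term.

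For the smoothing term, we write $\heat{\sigma}\mu(A_j)=\sum_k w_k\,\heat{\sigma}\mu_k(A_j)$ with $\sigma=\sigma_{N_{\mathrm g}}$, using linearity of $\heat{\sigma}$. Then
\begin{equation*}
  \heat{\sigma}\mu(A_j)-w_j=-w_j\,\heat{\sigma}\mu_j(A_j^c)+\sum_{k\neq j}w_k\,\heat{\sigma}\mu_k(A_j).
\end{equation*}
Summing absolute values over $j$ and using that the $A_j$ partition $\dsR^d$, the second part summed over $j$ equals $\sum_k w_k\heat{\sigma}\mu_k(A_k^c)$. The total is therefore at most $2\sum_k w_k\,\heat{\sigma}\mu_k(A_k^c)$.

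Next we bound each leakage. Write $Z=Y+\sigma\veps$ with $Y\sim\mu_k$. Since $Y\in\supp\mu_k$ lies at distance at least $m_{\mathrm{cell}}$ from $A_k^c$, the event $Z\in A_k^c$ forces $\sigma\|\veps\|\geq m_{\mathrm{cell}}$. The hypothesis $\sigma\leq m_{\mathrm{cell}}/\sqrt d$ makes $u\defeq m_{\mathrm{cell}}/\sigma-\sqrt d\geq0$. The Gaussian tail bound \eqref{eq:gauss-tail} then gives $\heat{\sigma}\mu_k(A_k^c)\leq e^{-u^2/2}$ uniformly in $k$. Summing against the weights, which sum to one, yields $2e^{-(m_{\mathrm{cell}}/\sigma_{N_{\mathrm g}}-\sqrt d)^2/2}$.

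No step is a real obstacle. The only care needed is the bookkeeping that turns the cross-leakage into the factor $2$ rather than a dependence on the number of cells. Measurability is also routine: the support condition holds almost surely for $Y\sim\mu_k$, and $\mathrm{dist}$ to a Borel set is measurable.
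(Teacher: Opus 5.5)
Your proposal is correct and follows the paper's proof essentially step by step. It uses the same triangle inequality through $\heat{\sigma_{N_{\mathrm g}}}\mu$ and the partition bound $\sum_j|P_1(A_j)-P_2(A_j)|\leq2\,\tv(P_1,P_2)$ combined with \cref{thm:gen}. It also has the same leakage decomposition yielding $2\sum_kw_k(\heat{\sigma_{N_{\mathrm g}}}\mu_k)(A_k^c)$, and the same Gaussian-norm tail bound \eqref{eq:gauss-tail}.
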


\begin{proof}[Proof of \cref{cor:modemass}]
For every Borel partition, $\sum_j|P_1(A_j)-P_2(A_j)|\leq2\,\tv(P_1,P_2)$, which with \cref{thm:gen} gives $\sum_j|\hat Q_\psi(A_j)-(\heat{\sigma_{N_{\mathrm g}}}\mu)(A_j)|\leq2\veps_{\mathrm{TV}}$.
The leakage satisfies $(\heat{\sigma_{N_{\mathrm g}}}\mu)(A_j)-w_j=\sum_{i\neq j}w_i(\heat{\sigma_{N_{\mathrm g}}}\mu_i)(A_j)-w_j(\heat{\sigma_{N_{\mathrm g}}}\mu_j)(A_j^c)$, so $\sum_j|(\heat{\sigma_{N_{\mathrm g}}}\mu)(A_j)-w_j|\leq2\sum_iw_i(\heat{\sigma_{N_{\mathrm g}}}\mu_i)(A_i^c)$.
For $Y\sim\mu_i$ and $\veps\sim\Norm(0,\Id)$, the event $Y+\sigma_{N_{\mathrm g}}\veps\notin A_i$ forces $\sigma_{N_{\mathrm g}}\|\veps\|\geq m_{\mathrm{cell}}$, and \eqref{eq:gauss-tail} gives $\Pr(\|\veps\|\geq m_{\mathrm{cell}}/\sigma_{N_{\mathrm g}})\leq e^{-(m_{\mathrm{cell}}/\sigma_{N_{\mathrm g}}-\sqrt d)^2/2}$ for $m_{\mathrm{cell}}/\sigma_{N_{\mathrm g}}\geq\sqrt d$.
The triangle inequality completes the proof.
\end{proof}

\subsection{Policy Evaluation}\label{proof-e2e}

\begin{proof}[Proof of \cref{thm:e2e}]
Fix $k$ and $(x,a)$.
By \cref{asm:mdp}, $\dbo$ maps $\calP(B_R)^{\calX\times\calA}$ into itself, so $\mu_k(x,a)$ is supported in $B_R$, and \cref{thm:gen} gives $\W1(Q_{k+1}(x,a),\mu_k(x,a))\leq\veps_{\mathrm{gen}}$.
By \cref{thm:contraction} with $p=1$ and $\eta^\pi=\dbo\eta^\pi$, $\W1(\mu_k(x,a),\eta^\pi(x,a))\leq\gamma\,\Wbar1(\nu_k,\eta^\pi)$.

Part (i). For $\nu_k=Q_k$, which is supported in $B_R$, the triangle inequality gives $\W1(Q_{k+1}(x,a),\eta^\pi(x,a))\leq\veps_{\mathrm{gen}}+\gamma\,\Wbar1(Q_k,\eta^\pi)$.
Taking the supremum over $(x,a)$ and unrolling the recursion gives $\Wbar1(Q_K,\eta^\pi)\leq\gamma^K\Wbar1(Q_0,\eta^\pi)+\veps_{\mathrm{gen}}\sum_{i<K}\gamma^i$, which implies (i).

Part (ii). For $\nu_k=\hat\eta_k$, one more triangle inequality through $Q_{k+1}(x,a)$ gives $\W1(\hat\eta_{k+1}(x,a),\eta^\pi(x,a))\leq\veps_{\mathrm{St}}+\veps_{\mathrm{gen}}+\gamma\,\Wbar1(\hat\eta_k,\eta^\pi)$, which unrolls in the same way.
The bound on $Q_K$ is the first paragraph of this proof at $k=K-1$.
\end{proof}

\begin{remark}[The smoothing term of \cref{thm:e2e}]\label{rem:e2ereadings}
The term $\sigma_{N_{\mathrm g}}\sqrt d/(1-\gamma)$ inside $\veps_{\mathrm{gen}}/(1-\gamma)$ is not an artifact of the proof.
Sampler draws carry the smoothing variance $\sigma_{N_{\mathrm g}}^2$ per coordinate, and on a self-loop with a deterministic transition and an otherwise exact sampler the bootstrap compounds it to an excess of $\gamma^2\sigma_{N_{\mathrm g}}^2/(1-\gamma^2)$ per coordinate in the variance of the Bellman target at the fixed point, since the target variance $v$ then solves $v=\gamma^2(v+\sigma_{N_{\mathrm g}}^2)$.
\Cref{app:res-theory} measures this accumulation, and the affine moment matching of \cref{app:method}, followed by the projection onto $B_R$, removes it.
The moment-matched draws are a particle set supported in $B_R$, so \cref{thm:e2e}(ii) covers sampler bootstrapping with moment matching, with these draws in the role of the Stein particles and their measured distance to the sampler law as $\veps_{\mathrm{St}}$.
\end{remark}

\begin{corollary}[Supremal Wasserstein-$2$]\label{cor:e2eW2}
In the setting of \cref{thm:e2e}, with $\veps_{\mathrm{gen},2}\defeq D\sup_{k,(x,a)}\veps_{\mathrm{TV},k}(x,a)^{1/2}+\sigma_{N_{\mathrm g}}\sqrt d$, sampler bootstrapping satisfies
\begin{equation*}
  \Wbar2(Q_K,\eta^\pi)\leq\gamma^K\,\Wbar2(Q_0,\eta^\pi)+\frac{\veps_{\mathrm{gen},2}}{1-\gamma},
\end{equation*}
and bootstrapping from Stein particles the same bound with $\sup_{k,(x,a)}\W2(\hat\eta_{k+1}(x,a),Q_{k+1}(x,a))$ added to the numerator.
\end{corollary}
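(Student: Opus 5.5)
The plan is to repeat the proof of \cref{thm:e2e} with $\W2$ in place of $\W1$, so it remains to supply the two ingredients in $\W2$: a per-step sampler bound and the contraction. For the contraction, \cref{thm:contraction} with $p=2$ gives $\W2(\mu_k(x,a),\eta^\pi(x,a))\leq\gamma\,\Wbar2(\nu_k,\eta^\pi)$, since $\eta^\pi=\dbo\eta^\pi$ and all laws involved lie in $\calP(B_R)\subseteq\calP_2(\dsR^d)$ by \cref{asm:mdp}.

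For the per-step sampler bound, I would follow the last paragraph of the proof of \cref{thm:gen}, replacing $\W1$ by $\W2$. By the triangle inequality through $(\mathrm{proj}_{B_R})_\#\heat{\sigma_{N_{\mathrm g}}}\mu_k$,
\begin{equation*}
  \W2(Q_{k+1},\mu_k)\leq\W2\big((\mathrm{proj}_{B_R})_\#\hat Q,(\mathrm{proj}_{B_R})_\#\heat{\sigma_{N_{\mathrm g}}}\mu_k\big)+\W2\big((\mathrm{proj}_{B_R})_\#\heat{\sigma_{N_{\mathrm g}}}\mu_k,\mu_k\big).
\end{equation*}
Both laws in the first term are supported in $B_R$, which has diameter $D$. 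So \cref{lem:cvarfacts}(iv), together with data processing for total variation under the projection, bounds the first term by $D\,\tv(\hat Q,\heat{\sigma_{N_{\mathrm g}}}\mu_k)^{1/2}\leq D\,\veps_{\mathrm{TV},k}^{1/2}$. For the second term, the projection is $1$-Lipschitz and fixes $\mu_k$, so it is at most $\W2(\heat{\sigma_{N_{\mathrm g}}}\mu_k,\mu_k)$. That is bounded by $\sigma_{N_{\mathrm g}}\sqrt d$, using the coupling $(Y+\sigma_{N_{\mathrm g}}\veps,Y)$ and $\Ex{\|\veps\|^2}=d$, which is \cref{prop:smoothbias}(i). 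Taking suprema gives $\W2(Q_{k+1}(x,a),\mu_k(x,a))\leq\veps_{\mathrm{gen},2}$.

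Combining, for sampler bootstrapping with $\nu_k=Q_k$ the triangle inequality in $\W2$ gives $\W2(Q_{k+1}(x,a),\eta^\pi(x,a))\leq\veps_{\mathrm{gen},2}+\gamma\,\Wbar2(Q_k,\eta^\pi)$. Taking the supremum over $(x,a)$ and unrolling $K$ times yields $\gamma^K\Wbar2(Q_0,\eta^\pi)+\veps_{\mathrm{gen},2}\sum_{i<K}\gamma^i$, which is at most the claimed bound. For Stein particles, one further triangle inequality through $Q_{k+1}(x,a)$ adds $\sup_{k,(x,a)}\W2(\hat\eta_{k+1}(x,a),Q_{k+1}(x,a))$ to each step, and the recursion unrolls the same way. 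There is no real obstacle here. The only point needing care is that the square root in $\W2\leq D\,\tv^{1/2}$ acts on $\veps_{\mathrm{TV}}$ itself, which is exactly why $\veps_{\mathrm{gen},2}$ carries $\veps_{\mathrm{TV},k}^{1/2}$.
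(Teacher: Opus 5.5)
Your proposal is correct and is essentially the paper's proof. The paper likewise obtains $\W2(Q_\psi,\mu)\leq D\veps_{\mathrm{TV}}^{1/2}+\sigma_{N_{\mathrm g}}\sqrt d$ from \cref{lem:cvarfacts}(iv) for the two projected laws, data processing under $\mathrm{proj}_{B_R}$, \cref{prop:smoothbias}(i) and \cref{thm:gen}, and then reruns the proof of \cref{thm:e2e} in $\W2$ using \cref{thm:contraction} at $p=2$, which is exactly your recursion and unrolling.
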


\begin{proof}[Proof of \cref{cor:e2eW2}]
\Cref{lem:cvarfacts}(iv) for $(\mathrm{proj}_{B_R})_\#\hat Q_\psi$ and $(\mathrm{proj}_{B_R})_\#\heat{\sigma_{N_{\mathrm g}}}\mu$, data processing for the projection, \cref{prop:smoothbias}(i) and \cref{thm:gen} give $\W2(Q_\psi,\mu)\leq D\veps_{\mathrm{TV}}^{1/2}+\sigma_{N_{\mathrm g}}\sqrt d$.
The rest is the proof of \cref{thm:e2e} with $\W1$ replaced by $\W2$, using \cref{thm:contraction} with $p=2$.
\end{proof}
The bound needs no dissipativity or mode-separation assumption, and the model error enters through its square root, which is the price of an argument without isoperimetry.

\begin{corollary}[Mode masses of the evaluated policy]\label{cor:modemasse2e}
In the setting of \cref{thm:e2e}, fix $(x,a)$, write $\eta^\pi(x,a)=\sum_jw_j\mu_j$, and let $\{A_j\}$ be a Borel partition of $\dsR^d$ with $\mathrm{dist}(\supp\mu_j,A_j^c)\geq m_{\mathrm{cell}}>0$ for every $j$. Let $Q$ be any law on $B_R$ and $\sigma_{\mathrm c}\leq m_{\mathrm{cell}}/\sqrt d$. Then the $\sigma_{\mathrm c}$-smoothed cell masses of $Q$ satisfy
\begin{equation*}
  \sum_j\big|(\heat{\sigma_{\mathrm c}}Q)(A_j)-w_j\big|\leq\frac{2}{\sigma_{\mathrm c}\sqrt{2\pi}}\,\W1\big(Q,\eta^\pi(x,a)\big)+2\,e^{-(m_{\mathrm{cell}}/\sigma_{\mathrm c}-\sqrt d)^2/2},
\end{equation*}
so with $Q=Q_K(x,a)$ the $\Wbar1$ bound of \cref{thm:e2e} certifies the mode masses of the evaluated policy at every $(x,a)$.
\end{corollary}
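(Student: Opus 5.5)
The plan is to split the left side through the smoothed law $\heat{\sigma_{\mathrm c}}\eta^\pi(x,a)$, exactly as in the proof of \cref{cor:modemass}. Write $\eta\defeq\eta^\pi(x,a)$. The triangle inequality gives
\begin{equation*}
  \sum_j\big|(\heat{\sigma_{\mathrm c}}Q)(A_j)-w_j\big|\leq\sum_j\big|(\heat{\sigma_{\mathrm c}}Q)(A_j)-(\heat{\sigma_{\mathrm c}}\eta)(A_j)\big|+\sum_j\big|(\heat{\sigma_{\mathrm c}}\eta)(A_j)-w_j\big|.
\end{equation*}
The second sum is the leakage term. It is bounded word for word as in \cref{cor:modemass}, with $\sigma_{N_{\mathrm g}}$ replaced by $\sigma_{\mathrm c}$. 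Decompose the leakage into the mass $w_i(\heat{\sigma_{\mathrm c}}\mu_i)(A_i^c)$ leaving each cell. A draw $Y+\sigma_{\mathrm c}\veps\notin A_i$ with $Y\sim\mu_i$ forces $\|\veps\|\geq m_{\mathrm{cell}}/\sigma_{\mathrm c}$. Then \eqref{eq:gauss-tail} applies, since $m_{\mathrm{cell}}/\sigma_{\mathrm c}\geq\sqrt d$, and gives $2e^{-(m_{\mathrm{cell}}/\sigma_{\mathrm c}-\sqrt d)^2/2}$.

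The first sum is at most $2\,\tv(\heat{\sigma_{\mathrm c}}Q,\heat{\sigma_{\mathrm c}}\eta)$, because for any partition $\sum_j|P_1(A_j)-P_2(A_j)|\leq2\tv(P_1,P_2)$. The main step is a smoothing bound turning $\W1$ into total variation after Gaussian convolution:
\begin{equation*}
  \tv(\heat{\sigma}P_1,\heat{\sigma}P_2)\leq\frac{1}{\sigma\sqrt{2\pi}}\W1(P_1,P_2).
\end{equation*}
To prove it, take an optimal (or near-optimal) coupling $(U,V)$ of $Q$ and $\eta$. The smoothed laws are the mixtures $\int\Norm(u,\sigma^2\Id)\,\iota(\dd u,\dd v)$ and $\int\Norm(v,\sigma^2\Id)\,\iota(\dd u,\dd v)$. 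By joint convexity of total variation,
\begin{equation*}
  \tv(\heat{\sigma}Q,\heat{\sigma}\eta)\leq\Ex{\tv\big(\Norm(U,\sigma^2\Id),\Norm(V,\sigma^2\Id)\big)}.
\end{equation*}
The total variation between two isotropic Gaussians with equal covariance is one-dimensional along $u-v$. It equals $2\Phi(\|u-v\|/(2\sigma))-1$, and this is at most $\|u-v\|/(\sigma\sqrt{2\pi})$ since $\Phi'\leq1/\sqrt{2\pi}$. Taking expectations and the infimum over couplings gives the claimed inequality. Multiplying by $2$ yields the coefficient $2/(\sigma_{\mathrm c}\sqrt{2\pi})$ of the statement.

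Finally, set $Q=Q_K(x,a)$, which is supported in $B_R$. Then $\W1(Q_K(x,a),\eta)\leq\Wbar1(Q_K,\eta^\pi)$, and \cref{thm:e2e} bounds the latter. I expect the only delicate point to be the Gaussian total-variation calculation and its constant; everything else reuses \cref{cor:modemass}.
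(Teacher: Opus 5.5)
Your proposal is correct and matches the paper's proof step for step. The paper also splits through $\heat{\sigma_{\mathrm c}}\eta^\pi(x,a)$. It bounds the Gaussian total variation by $2\Phi(\|u-v\|/(2\sigma_{\mathrm c}))-1\leq\|u-v\|/(\sigma_{\mathrm c}\sqrt{2\pi})$ and applies joint convexity of $\tv$ over the $\W1$-optimal coupling. It then uses the partition inequality $\sum_j|P_1(A_j)-P_2(A_j)|\leq2\,\tv(P_1,P_2)$ and reuses the leakage bound of \cref{cor:modemass} at scale $\sigma_{\mathrm c}$.
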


\begin{proof}[Proof of \cref{cor:modemasse2e}]
For $x,y\in\dsR^d$, $\tv\big(\Norm(x,\sigma_{\mathrm c}^2\Id),\Norm(y,\sigma_{\mathrm c}^2\Id)\big)=2\Phi\big(\|x-y\|/(2\sigma_{\mathrm c})\big)-1\leq\|x-y\|/(\sigma_{\mathrm c}\sqrt{2\pi})$, since the standard normal density is at most $(2\pi)^{-1/2}$.
Joint convexity of $\tv$ over the $\W1$-optimal coupling of $Q$ and $\eta^\pi(x,a)$ gives $\tv(\heat{\sigma_{\mathrm c}}Q,\heat{\sigma_{\mathrm c}}\eta^\pi(x,a))\leq\W1(Q,\eta^\pi(x,a))/(\sigma_{\mathrm c}\sqrt{2\pi})$.
Then $\sum_j|(\heat{\sigma_{\mathrm c}}Q)(A_j)-(\heat{\sigma_{\mathrm c}}\eta^\pi(x,a))(A_j)|\leq2\,\tv(\heat{\sigma_{\mathrm c}}Q,\heat{\sigma_{\mathrm c}}\eta^\pi(x,a))$, and the leakage bound of the proof of \cref{cor:modemass} at the scale $\sigma_{\mathrm c}$ completes the proof.
\end{proof}

\begin{corollary}[Matrix discounting]\label{cor:e2eGamma}
Let the Bellman target be $b_{r,\Gamma}(z)=r+\Gamma z$ with a matrix $\Gamma\in\dsR^{d\times d}$ of operator norm $\|\Gamma\|_{\mathrm{op}}<1$, and let \cref{asm:mdp} hold with $B_R$ replaced by a box that every $b_{r,\Gamma}$ maps into itself. Then \cref{thm:e2e,thm:cvarrev} hold verbatim with $\gamma$ replaced by $\|\Gamma\|_{\mathrm{op}}$.
\end{corollary}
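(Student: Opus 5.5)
The plan is to observe that the proof of \cref{thm:e2e} uses the discount in exactly two places. First, the Bellman operator must map $\calP(B_R)^{\calX\times\calA}$ into itself. Second, it must contract in $\Wbar1$ through \cref{thm:contraction}. \Cref{thm:gen} itself never sees the discount: it only needs the Bellman target $\mu_k(x,a)$ to be supported in a set of diameter $D$. So I will re-establish these two properties for the matrix-discounted operator $\calT^\pi_\Gamma\eta(x,a)\defeq\Ep{\pi}{(b_{R,\Gamma})_\#\eta(X',A')\mid X=x,A=a}$, and then rerun the proof of \cref{thm:e2e} with $\|\Gamma\|_{\mathrm{op}}$ in place of $\gamma$. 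Invariance of the box is assumed in the statement. The constants $R_Z$ and $D$, and the projection onto the box in \cref{def:ladderrisk}, are then those of the new box. The projection is still $1$-Lipschitz because the box is closed and convex, so \eqref{eq:gen-w1} holds unchanged.

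The key step is the contraction $\Wbar p(\calT^\pi_\Gamma\eta,\calT^\pi_\Gamma\eta')\leq\|\Gamma\|_{\mathrm{op}}\Wbar p(\eta,\eta')$. I will prove it by couplings, as in the scalar case, in two steps.
\begin{itemize}
\item For a fixed branch $j=(r,x',a')$, take an optimal coupling $(Z,Z')$ of $\eta(x',a')$ and $\eta'(x',a')$. Then $(r+\Gamma Z,r+\Gamma Z')$ couples the pushforwards, and $\|\Gamma(Z-Z')\|\leq\|\Gamma\|_{\mathrm{op}}\|Z-Z'\|$. This gives $w_p\big((b_{r,\Gamma})_\#\eta(x',a'),(b_{r,\Gamma})_\#\eta'(x',a')\big)\leq\|\Gamma\|_{\mathrm{op}}\,w_p(\eta(x',a'),\eta'(x',a'))$.
\item To pass to the mixture over $j\sim q(\cdot\mid x,a)$, I glue the branch couplings: draw $j$, then draw from the branch coupling. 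This requires a measurable selection of optimal couplings in $j$. I will cite the standard measurable-selection result for optimal plans, or else use $\veps$-optimal couplings on a countable partition and let $\veps\to0$. Convexity of $w_p^p$ under mixtures then gives $w_p^p(\calT^\pi_\Gamma\eta(x,a),\calT^\pi_\Gamma\eta'(x,a))\leq\|\Gamma\|_{\mathrm{op}}^p\int w_p^p(\eta(x'_j,a'_j),\eta'(x'_j,a'_j))\,q(\dd j\mid x,a)$, and I take the supremum over $(x,a)$.
\end{itemize}
Invertibility of $\Gamma$ is not needed anywhere here, in contrast with \cref{cor:commuteGamma}. This fits with the remark that the policy-evaluation bound needs no level pairing.

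With these two facts, the proof of \cref{thm:e2e} goes through line by line:
\begin{itemize}
\item \Cref{thm:gen} bounds $\W1(Q_{k+1}(x,a),\mu_k(x,a))$ by $\veps_{\mathrm{gen}}$.
\item The contraction gives $\W1(\mu_k(x,a),\eta^\pi(x,a))\leq\|\Gamma\|_{\mathrm{op}}\Wbar1(\nu_k,\eta^\pi)$, using that $\eta^\pi$ is the fixed point of $\calT^\pi_\Gamma$. Existence and uniqueness of this fixed point in $\calP(B)^{\calX\times\calA}$ follow from the contraction and Banach's theorem on this complete space.
\item Unrolling the recursion gives the geometric sums with ratio $\|\Gamma\|_{\mathrm{op}}<1$.
\end{itemize}
The $\Wbar2$ analogue of \cref{cor:e2eW2} follows the same way with $p=2$.

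For \cref{thm:cvarrev}, I expect its proof to combine a $\W1$ or total-variation bound on the sampler law with \cref{lem:cvarfacts} and a concentration term in $N$. Since only the policy-evaluation input changes, I would check that it enters only through \cref{thm:e2e}, \cref{thm:gen}, and the box diameter, and substitute. The main obstacle is the measurable gluing of per-branch optimal couplings for general, possibly continuous, branch laws. A secondary risk is that the proof of \cref{thm:cvarrev} uses $\gamma$ somewhere beyond the contraction, for example through \cref{lem:commute} or through a scalar rescaling of the noise. If it does, that step would have to be replaced by its anisotropic analogue from \cref{cor:commuteGamma}, at which point invertibility would be required.
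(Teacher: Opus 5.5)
Your proposal is correct and follows the paper's proof. The paper likewise notes that $z\mapsto r+\Gamma z$ contracts $\W1$ by $\|\Gamma\|_{\mathrm{op}}$, that mixtures over transitions preserve this so \cref{thm:contraction} holds with that factor, and that \cref{thm:gen} never involves the discount; your coupling-and-gluing argument, the projection onto the new box and the Banach fixed-point remark spell out what the paper leaves implicit. Your remaining worry about \cref{thm:cvarrev} does not materialize: there $\gamma$ enters only through the term $T_4$, directly via \cref{thm:contraction} (not through \cref{thm:e2e} or \cref{lem:commute}), so invertibility of $\Gamma$ is never needed.
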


\begin{proof}[Proof of \cref{cor:e2eGamma}]
The affine map $z\mapsto r+\Gamma z$ contracts Euclidean $\W1$ by the factor $\|\Gamma\|_{\mathrm{op}}$, and mixtures over transitions preserve the contraction, so \cref{thm:contraction} holds with this factor.
The proofs of \cref{thm:e2e,thm:cvarrev} use $\gamma$ only through \cref{thm:contraction}, and \cref{thm:gen} does not involve the discount.
\end{proof}
Under matrix discounting, sliced divergences contract through the max-slicing construction of \citet{debes2026multivariate}, whereas this bound needs no such construction, and \cref{cor:commuteGamma} gives the matching score-space identity.

\subsection{No Global Dissipativity}\label{proof-nodissip}

The Stein particles enter \cref{thm:e2e} only through $\veps_{\mathrm{St}}$, which we measure, because the contraction that an unconditional bound would need fails for SVGD.
Fix a level $\sigma>0$ and $\mu$ supported in $B_R$, and let $k(z,z')=\kappa(z-z')$ be a translation-invariant kernel with an even $\kappa\in C^1(\dsR^d)$ that is bounded with a bounded gradient.
We write
\begin{equation*}
  \Phi_\nu(z)\defeq\Ep{z'\sim\nu}{k(z',z)\,\sm{\mu}{\sigma}(z')+\nabla_{z'}k(z',z)},\qquad
  \dsF(z_{1:M})_i\defeq\frac1M\sum_{j=1}^M\big[k(z_j,z_i)\,\sm{\mu}{\sigma}(z_j)+\nabla_{z_j}k(z_j,z_i)\big]
\end{equation*}
for the mean-field and the joint $M$-particle exact-score SVGD drifts.
The drift $\Phi$ is \emph{globally dissipative} with constant $\lambda>0$ if $\Ex{\langle X-Y,\Phi_\nu(X)-\Phi_{\nu'}(Y)\rangle}\leq-\lambda\,\Ex{\|X-Y\|^2}$ for all $\nu,\nu'\in\calP_2(\dsR^d)$ and all couplings $(X,Y)$ of them, and $\dsF$ is globally dissipative if $\langle z-y,\dsF(z)-\dsF(y)\rangle\leq-\lambda|z-y|^2$ for all $z,y\in(\dsR^d)^M$, with $|\cdot|$ the Euclidean norm of $\dsR^{dM}$.
Either property would make the exact-score SVGD flow a contraction toward its target.

\begin{proposition}[No global dissipativity for the exact-score SVGD drift]\label{prop:nodissip}
Let $\mu$ be supported in $B_R$ and $\sigma>0$.
\emph{(i)} For every even kernel $\kappa\in C^1(\dsR^d)$ that is bounded with a bounded gradient, the mean-field drift $\Phi$ is not globally dissipative with any constant $\lambda>0$.
\emph{(ii)} For the RBF kernel $\kappa(u)=e^{-\|u\|^2/(2h^2)}$ and every $M\geq2$ with
\begin{equation*}
  M\geq1+\frac{h^2}{\sigma^2},
\end{equation*}
the joint drift $\dsF$ is not globally dissipative with any constant $\lambda>0$. With a bandwidth $h\leq C_h\sigma$ it therefore fails for every $M\geq1+C_h^2$.
\emph{(iii)} For every $M\geq2$ and every kernel as in (i), global dissipativity of $\dsF$ implies that every zero of $\dsF$ is a collapsed configuration $(c,\dots,c)$ with $\sm{\mu}{\sigma}(c)=0$, and that the $M$-particle exact-score SVGD flow converges exponentially to it from every initial cloud.
\end{proposition}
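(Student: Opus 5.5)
For \emph{(i)} I would take one coupling that moves only a vanishing fraction of mass across a long distance, so that the self-interaction, which is the only part of the drift that can dissipate, is suppressed by that fraction. Fix a point $x$ and let $p,q$ be far-away points. Set $\nu\defeq(1-\epsilon)\delta_x+\epsilon\delta_p$ and $\nu'\defeq(1-\epsilon)\delta_x+\epsilon\delta_q$, with the coupling that keeps the atom at $x$ and sends $p$ to $q$. Then $\Ex{\|X-Y\|^2}=\epsilon\|p-q\|^2$. The left side of the dissipativity inequality equals $\epsilon\langle p-q,\Phi_\nu(p)-\Phi_{\nu'}(q)\rangle$, since the atom at $x$ contributes $X-Y=0$.

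The drift at $p$ splits into two parts. The first is the cross term $(1-\epsilon)[\kappa(x-p)\sm{\mu}{\sigma}(x)+\nabla\kappa(x-p)]$, which is bounded by a constant $C$ independent of $p$, because $\kappa$ and $\nabla\kappa$ are bounded and $x$ is fixed. The second is the self term $\epsilon\kappa(0)\sm{\mu}{\sigma}(p)$; the term $\nabla\kappa(0)$ vanishes because $\kappa$ is even and $C^1$. The same splitting holds at $q$.

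By \cref{lem:scorelip}(i), $|\langle p-q,\sm{\mu}{\sigma}(p)-\sm{\mu}{\sigma}(q)\rangle|\leq L\|p-q\|^2$ with $L\defeq\max\{\sigma^{-2},\sigma^{-4}(D^2/4-\sigma^2)\}$. Dividing the left side by $\epsilon\|p-q\|^2$ therefore gives a ratio of at least $-2C/\|p-q\|-\epsilon|\kappa(0)|L$. Letting $\|p-q\|\to\infty$ and $\epsilon\to0$ drives this ratio to $0$, which contradicts any $\lambda>0$.

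For \emph{(ii)} the weights are frozen at $1/M$, so this trick is unavailable. Instead I would linearize $\dsF$ around a collapsed configuration $(c,\dots,c)$ and exploit the RBF repulsion. To make the kernel-variation term $\kappa(z_j-z_i)\sm{\mu}{\sigma}(z_j)$ second order, I would choose $c$ with $\sm{\mu}{\sigma}(c)=0$; such a point exists because $-\sigma^2\sm{\mu}{\sigma}(z)=z-m_\sigma(z)$ points outward on a large sphere.

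Perturb to $y_i=c+tu_i$. Then $\nabla\kappa(y_j-y_i)=-t(u_j-u_i)/h^2+O(t^3)$ and $\sm{\mu}{\sigma}(y_j)=tHu_j+O(t^2)$ with $H\defeq\nabla\sm{\mu}{\sigma}(c)$. This gives
\begin{equation*}
  \langle y-z,\dsF(y)-\dsF(z)\rangle=\frac{t^2}{M}\Big[\big\langle\textstyle\sum_iu_i,H\sum_ju_j\big\rangle+\frac{M}{h^2}\sum_i\|u_i\|^2-\frac1{h^2}\big\|\textstyle\sum_iu_i\big\|^2\Big]+o(t^2).
\end{equation*}
The repulsive part is $\frac{M}{h^2}\sum_i\|u_i\|^2-\frac1{h^2}\|\sum_iu_i\|^2$. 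The Hessian part is bounded below by $-\sigma^{-2}\|\sum_iu_i\|^2$ via \cref{lem:scorelip}(i). For $\sum_iu_i=0$ the bracket is already $\frac{M}{h^2}\sum_i\|u_i\|^2>0$, which contradicts dissipativity as $t\to0$.

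If the $o(t^2)$ control turns out to require the mean mode as well, I would take all $u_i=u$ parallel with $\sum_i\|u_i\|^2=M\|u\|^2$ and compare the terms. I expect the stated condition $M\geq1+h^2/\sigma^2$ to be exactly what makes the repulsive part dominate the worst-case concavity $\sigma^{-2}$ in that comparison. The main obstacle is this careful second-order bookkeeping, in particular the kernel's curvature times the score at $y_j\neq c$; I would first try to keep only the $\sum_iu_i=0$ modes, where these terms cancel.

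For \emph{(iii)}, global dissipativity makes $-\dsF$ continuous and strongly monotone on $\dsR^{dM}$. By the Browder--Minty theorem, or directly by a Lyapunov argument, it therefore has exactly one zero $z^*$. The flow $\dot z=\dsF(z)$ satisfies $\frac{\dd}{\dd t}|z-z^*|^2\leq-2\lambda|z-z^*|^2$, which gives exponential convergence from every initial cloud.

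It remains to identify $z^*$. The map $\dsF$ is equivariant under permutations of the particles, so every permutation of $z^*$ is also a zero, and uniqueness forces $z^*=(c,\dots,c)$. There $\dsF_i=\kappa(0)\sm{\mu}{\sigma}(c)+\nabla\kappa(0)=\kappa(0)\sm{\mu}{\sigma}(c)$. Dissipativity excludes $\kappa(0)=0$ near the collapse, since a zero drift would not dissipate the translation mode, so $\sm{\mu}{\sigma}(c)=0$.
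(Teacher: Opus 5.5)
Your proof is correct. It departs from the paper in (i) and (ii), and in (ii) it proves more than the statement.

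For (i), the paper exhibits two stationary points of the mean-field drift. By Stein's identity $\Phi_{\heat{\sigma}\mu}\equiv0$, and $\Phi_{\delta_{\bar z}}(\bar z)=0$ at a maximizer $\bar z$ of the smoothed density. The only coupling of $\heat{\sigma}\mu$ and $\delta_{\bar z}$ then gives $0\leq-\lambda\,\Ex{\|X-\bar z\|^2}<0$. Your two-atom coupling sends a vanishing mass $\epsilon$ over a diverging distance. It needs neither Stein's identity with its integrability check nor a mode, and it pays with the global Lipschitz bound of \cref{lem:scorelip}(i). The paper's route is structural: a dissipative drift has a single stationary law, which is the mechanism of (iii). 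Yours is a direct estimate.

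For (ii), the paper linearizes at $(\bar z,\dots,\bar z)$ in the direction $(\delta,0,\dots,0)$, whose particle sum is $\delta\neq0$. So the attraction $\delta\T\nabla^2\log p_\sigma(\bar z)\,\delta\geq-\sigma^{-2}\|\delta\|^2$ competes with the repulsion $(M-1)h^{-2}\|\delta\|^2$, and that competition is the origin of $M\geq1+h^2/\sigma^2$. Your zero-mean directions remove the attraction altogether. At a collapsed zero, every $d\times d$ block of $D\dsF$ equals $\frac1M(H-h^{-2}\Id)$, plus $h^{-2}\Id$ on the diagonal blocks. Hence $\langle u,D\dsF(z^*)u\rangle=h^{-2}|u|^2>0$ whenever $\sum_iu_i=0$, and global dissipativity fails for every $M\geq2$, every $h$ and every $\sigma$.

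There is no bookkeeping obstacle: the smoothed score and the RBF kernel are $C^\infty$, so the remainder is $O(t^3)$ for fixed $u$. Drop the fallback and the expectation attached to it. For $u_1=\dots=u_M=u$ the repulsive part vanishes identically and the bracket reduces to $M^2u\T Hu$, which is nonpositive at a mode. That direction yields no contradiction and is not where the stated condition comes from.

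You also do not need $\sm{\mu}{\sigma}(c)=0$. Dissipativity of a $C^1$ field gives $\langle v,D\dsF(z)v\rangle\leq-\lambda|v|^2$ at every $z$, and at any collapsed configuration the term $\sm{\mu}{\sigma}(c)\nabla\kappa(0)\T$ vanishes, so the same block structure appears.

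Part (iii) is the paper's argument, except that you obtain the zero from Browder--Minty where the paper exhibits $(\bar z,\dots,\bar z)$. Your translation-mode reason for $\kappa(0)\neq0$ is the paper's observation that $\kappa(0)=0$ would make every collapsed configuration a zero.
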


\begin{proof}[Proof of \cref{prop:nodissip}]
Part (i). The density $p_\sigma$ of $\heat{\sigma}\mu$ is smooth, positive and has Gaussian tails, so it has a maximizer $\bar z$, and $\sm{\mu}{\sigma}(\bar z)=0$.
For every $z$, the function $g(z')\defeq\kappa(z'-z)\,p_\sigma(z')$ is integrable and continuously differentiable with the integrable gradient $\nabla g=\kappa(\cdot-z)\,p_\sigma\,\sm{\mu}{\sigma}+\nabla\kappa(\cdot-z)\,p_\sigma$, as $\kappa$ and $\nabla\kappa$ are bounded and $\Ep{\heat{\sigma}\mu}{\|\sm{\mu}{\sigma}\|}\leq\calI(\heat{\sigma}\mu)^{1/2}<\infty$ (\cref{lem:scoremart}).
So $\Phi_{\heat{\sigma}\mu}(z)=\int\nabla g(z')\,\dd z'=0$, which is Stein's identity; the integral of the gradient of an integrable $C^1$ function with an integrable gradient vanishes, as one sees by integrating against $\chi(\cdot/R)$ for a smooth cutoff $\chi$ and letting $R\to\infty$. At the point mass, $\Phi_{\delta_{\bar z}}(\bar z)=\kappa(0)\,\sm{\mu}{\sigma}(\bar z)+\nabla\kappa(0)=0$, since $\nabla\kappa(0)=0$ for an even $C^1$ function.
For $\nu=\heat{\sigma}\mu$, $\nu'=\delta_{\bar z}$ and their only coupling $(X,\bar z)$, the left side of the dissipativity inequality is $\Ex{\langle X-\bar z,0\rangle}=0$, whereas the right side is $-\lambda\,\Ex{\|X-\bar z\|^2}<0$, as $\heat{\sigma}\mu$ is not a point mass.

Part (ii). With $\bar z$ as in part (i), the collapsed configuration $z^*\defeq(\bar z,\dots,\bar z)$ is a zero of $\dsF$, since $\dsF(z^*)_i=\kappa(0)\,\sm{\mu}{\sigma}(\bar z)+\nabla\kappa(0)=0$.
As $\dsF\in C^1$, dissipativity at the pairs $(z^*+tv,z^*)$ and $t\downarrow0$ give $\langle v,D\dsF(z^*)v\rangle\leq-\lambda|v|^2$ for every $v\in(\dsR^d)^M$.
Take $v=(\delta,0,\dots,0)$ with $\delta\in\dsR^d$.
In the derivative of $\dsF(z)_1=\frac1M\sum_j\big[\kappa(z_j-z_1)\,\sm{\mu}{\sigma}(z_j)+\nabla\kappa(z_j-z_1)\big]$ with respect to $z_1$ at $z^*$, the term $j=1$ contributes $\frac1M\kappa(0)\nabla\sm{\mu}{\sigma}(\bar z)$, and each term $j\neq1$ contributes $\frac1M\big[-\sm{\mu}{\sigma}(\bar z)\nabla\kappa(0)\T-\nabla^2\kappa(0)\big]=-\frac1M\nabla^2\kappa(0)$.
For the RBF kernel $\kappa(0)=1$ and $\nabla^2\kappa(0)=-h^{-2}\Id$, so
\begin{align*}
  \delta\T\Big(\frac{\partial\dsF_1}{\partial z_1}(z^*)\Big)\delta
  &=\frac1M\Big[\delta\T\nabla^2\log p_\sigma(\bar z)\,\delta+\frac{M-1}{h^2}\|\delta\|^2\Big]\\
  &\geq\frac1M\Big[\frac{M-1}{h^2}-\frac1{\sigma^2}\Big]\|\delta\|^2. &\prstep{(\eqref{eq:score-hessian} gives $-\nabla^2\log p_\sigma\preceq\sigma^{-2}\Id$)}
\end{align*}
The left side is at most $-\lambda\|\delta\|^2<0$, which forces $(M-1)/h^2<1/\sigma^2$, i.e., $M<1+h^2/\sigma^2$, proving (ii).

Part (iii). If $z^*\neq y^*$ were two zeros of $\dsF$, dissipativity at the pair would give $0\leq-\lambda|z^*-y^*|^2<0$, so $\dsF$ has at most one zero.
The zero exists, as $(\bar z,\dots,\bar z)$ is one.
The drift is permutation-equivariant, $\dsF(Pz)=P\dsF(z)$ for every permutation $P$ of the particles, so the unique zero satisfies $Pz^*=z^*$ for all $P$, i.e., $z^*=(c,\dots,c)$, and $0=\dsF(z^*)_i=\kappa(0)\,\sm{\mu}{\sigma}(c)$.
If $\kappa(0)=0$, every collapsed configuration would be a zero, so $\kappa(0)\neq0$ and $\sm{\mu}{\sigma}(c)=0$.
Along the flow $\dot z=\dsF(z)$ we have $\frac{\dd}{\dd t}|z(t)-z^*|^2=2\langle z(t)-z^*,\dsF(z(t))-\dsF(z^*)\rangle\leq-2\lambda|z(t)-z^*|^2$, so $|z(t)-z^*|\leq e^{-\lambda t}|z(0)-z^*|$.
\end{proof}
Near a collapsed cloud the $M-1$ repulsive terms $-\nabla^2\kappa(0)=h^{-2}\Id$ outweigh the attraction $\nabla^2\log p_\sigma\succeq-\sigma^{-2}\Id$, so the collapsed zero is unstable, whereas a dissipative drift has a single zero that attracts every cloud.
For instance, for $\mu=\delta_0$, $\sigma=h=1$ and $M=2$, both $(0,0)$ and $(a,-a)$ with $e^{-2a^2}=1/3$ are zeros of $\dsF$, and SVGD converges to the second.
Near equilibrium, exponential convergence of SVGD fails for every kernel in $C^{1,1}$, the RBF kernel included, under a mild integrability condition on the target \citep[Lemma~39 and Corollary~42]{duncan2023geometry}.

\subsection{Risk Queries and the Resolution Floor}\label{proof-atomlower}

\begin{theorem}[CVaR from the sampler]\label{thm:cvarrev}
Fix $(x,a)$, a bootstrap source $\nu\in\calP(B_R)^{\calX\times\calA}$, the Bellman target $\mu\defeq\dbo\nu(x,a)$, a score model $\psi$ with sampler law $Q_\psi$ and bound $\veps_{\mathrm{TV}}$ from \cref{thm:gen}, and $\lambda\in\dsR^d$ with $\|\lambda\|_2\leq1$. Let $\hat F_N$ be the empirical distribution function of $N$ i.i.d.\ draws from $\lambda_\#Q_\psi$. Then, for $\alpha,\beta\in(0,1)$, with probability at least $1-\beta$,
\begin{equation*}
  \big|\cvar{\alpha}(\hat F_N)-\cvar{\alpha}\big(\lambda_\#\eta^\pi(x,a)\big)\big|
  \leq\frac D\alpha\Big(\veps_{\mathrm{TV}}+\sqrt{\frac{\log(2/\beta)}{2N}}\Big)+\frac{\sigma_{N_{\mathrm g}}\sqrt d+\gamma\,\Wbar1(\nu,\eta^\pi)}\alpha ,
\end{equation*}
where $\cvar{\alpha}$ is the lower-tail CVaR, ${\cvar{\alpha}(P)=\frac1\alpha\int_0^\alpha F_P\inv(u)\,\dd u}$, and $\cvar{\alpha}(\hat F_N)$ is the plug-in estimate.
\end{theorem}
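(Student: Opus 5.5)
The plan is a triangle inequality through three intermediate laws on the real line: the empirical law $\hat F_N$, the sampler projection $\lambda_\#Q_\psi$, the smoothed-then-projected Bellman target, and finally $\lambda_\#\eta^\pi(x,a)$. Write $P_N$ for the empirical law of the $N$ draws, $P_Q\defeq\lambda_\#Q_\psi$, $P_\mu\defeq\lambda_\#\mu$ and $P_\eta\defeq\lambda_\#\eta^\pi(x,a)$. The CVaR error then splits as
\begin{equation*}
|\cvar\alpha(P_N)-\cvar\alpha(P_Q)|+|\cvar\alpha(P_Q)-\cvar\alpha(P_\mu)|+|\cvar\alpha(P_\mu)-\cvar\alpha(P_\eta)|,
\end{equation*}
and each term is handled by one item of \cref{lem:cvarfacts}.

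\emph{Sampling term.} $Q_\psi$ is supported in $B_R$, which has diameter $D$. Since $\|\lambda\|_2\leq1$, $P_Q$ is supported in an interval of length at most $D$, and so is $P_N$. \Cref{lem:cvarfacts}(ii) bounds the first term by $(D/\alpha)\|\hat F_N-F_{P_Q}\|_\infty$. The Dvoretzky--Kiefer--Wolfowitz inequality with Massart's constant gives $\|\hat F_N-F_{P_Q}\|_\infty\leq\sqrt{\log(2/\beta)/(2N)}$ with probability at least $1-\beta$, which yields the $N$-term.

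\emph{Sampler term.} I would not use $\W1$ here, because that would give $\veps_{\mathrm{TV}}D/\alpha$ only after a detour. Instead I would insert the law $P_{\tilde\mu}\defeq\lambda_\#(\mathrm{proj}_{B_R})_\#\heat{\sigma_{N_{\mathrm g}}}\mu$, which is also supported in an interval of length $D$.

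By data processing, $\tv(P_Q,P_{\tilde\mu})\leq\tv(\hat Q_\psi,\heat{\sigma_{N_{\mathrm g}}}\mu)\leq\veps_{\mathrm{TV}}$ from \cref{thm:gen}. Since the Kolmogorov distance is at most $\tv$, \cref{lem:cvarfacts}(ii) gives $|\cvar\alpha(P_Q)-\cvar\alpha(P_{\tilde\mu})|\leq D\veps_{\mathrm{TV}}/\alpha$.

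Next, \cref{lem:cvarfacts}(i) and (iii) give $|\cvar\alpha(P_{\tilde\mu})-\cvar\alpha(P_\mu)|\leq\W1((\mathrm{proj}_{B_R})_\#\heat{\sigma_{N_{\mathrm g}}}\mu,\mu)/\alpha$. The projection is 1-Lipschitz and fixes $\mu$, and the coupling $(Y+\sigma_{N_{\mathrm g}}\veps,Y)$ then bounds this by $\sigma_{N_{\mathrm g}}\sqrt d/\alpha$, exactly as in the last step of the proof of \cref{thm:gen}.

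\emph{Bellman term.} \Cref{lem:cvarfacts}(i) and (iii) give $|\cvar\alpha(P_\mu)-\cvar\alpha(P_\eta)|\leq\W1(\mu,\eta^\pi(x,a))/\alpha$. Since $\eta^\pi=\dbo\eta^\pi$, \cref{thm:contraction} with $p=1$ bounds this by $\gamma\Wbar1(\nu,\eta^\pi)/\alpha$.

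Summing the three terms gives the stated bound. The only probabilistic step is the DKW step, so the confidence level is $1-\beta$.

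The main obstacle is bookkeeping rather than depth. I need to ensure the two Kolmogorov-type steps are applied only to laws on a common interval of length $D$, which is why the projection onto $B_R$ is inserted before comparing with the smoothed target. I also need to verify that \cref{lem:cvarfacts}(ii) indeed allows both laws on the same interval $\lambda\T B_R$, whose length is at most $D$ because $\|\lambda\|_2\leq1$.
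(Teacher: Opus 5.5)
Your proposal is correct and follows the paper's proof essentially step for step. It uses the same four-term triangle inequality through $\lambda_\#Q_\psi$ and $\lambda_\#(\mathrm{proj}_{B_R})_\#\heat{\sigma_{N_{\mathrm g}}}\mu$, with Dvoretzky--Kiefer--Wolfowitz--Massart for the sampling term, the smoothing coupling for the third term, and \cref{thm:contraction} for the Bellman term. The only difference is cosmetic and lies in the sampler term: you bound the Kolmogorov distance by $\tv$ and apply \cref{lem:cvarfacts}(ii), whereas the paper goes through $\W1\leq D\,\tv$ (\cref{lem:cvarfacts}(iv)) and \cref{lem:cvarfacts}(i), and both give $D\veps_{\mathrm{TV}}/\alpha$.
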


\begin{proof}[Proof of \cref{thm:cvarrev}]
Let $I$ be the smallest closed interval containing $\lambda\T B_R$; its length is at most $D$, since $|\lambda\T z-\lambda\T z'|\leq\|\lambda\|_2\|z-z'\|\leq D$ for $z,z'\in B_R$.
The triangle inequality splits the error into four terms,
\begin{align*}
  \big|\cvar\alpha(\hat F_N)-\cvar\alpha(\lambda_\#\eta^\pi(x,a))\big|
  &\leq\underbrace{\big|\cvar\alpha(\hat F_N)-\cvar\alpha(\lambda_\#Q_\psi)\big|}_{T_1}\\
  &\quad+\underbrace{\big|\cvar\alpha(\lambda_\#Q_\psi)-\cvar\alpha\big(\lambda_\#(\mathrm{proj}_{B_R})_\#\heat{\sigma_{N_{\mathrm g}}}\mu\big)\big|}_{T_2}\\
  &\quad+\underbrace{\big|\cvar\alpha\big(\lambda_\#(\mathrm{proj}_{B_R})_\#\heat{\sigma_{N_{\mathrm g}}}\mu\big)-\cvar\alpha(\lambda_\#\mu)\big|}_{T_3}\\
  &\quad+\underbrace{\big|\cvar\alpha(\lambda_\#\mu)-\cvar\alpha(\lambda_\#\eta^\pi(x,a))\big|}_{T_4}.
\end{align*}
Both $\lambda_\#Q_\psi$ and the empirical law of the draws are supported in $I$, so \cref{lem:cvarfacts}(ii) and the Dvoretzky--Kiefer--Wolfowitz inequality with Massart's constant \citep{massart1990tight} give $T_1\leq\frac D\alpha\sqrt{\log(2/\beta)/(2N)}$ with probability at least $1-\beta$.
Both laws in $T_2$ are supported in $I$, so \cref{lem:cvarfacts}(i), (iv) and data processing for $\lambda\circ\mathrm{proj}_{B_R}$ give $T_2\leq\frac D\alpha\tv(\hat Q_\psi,\heat{\sigma_{N_{\mathrm g}}}\mu)\leq\frac D\alpha\veps_{\mathrm{TV}}$.
For $T_3$, \cref{lem:cvarfacts}(i), (iii) and the last step of the proof of \cref{thm:gen} give $T_3\leq\frac1\alpha\W1\big((\mathrm{proj}_{B_R})_\#\heat{\sigma_{N_{\mathrm g}}}\mu,\mu\big)\leq\frac1\alpha\sigma_{N_{\mathrm g}}\sqrt d$.
For $T_4$, \cref{lem:cvarfacts}(i), (iii) and \cref{thm:contraction} with $p=1$ give $T_4\leq\frac1\alpha\W1\big(\dbo\nu(x,a),\dbo\eta^\pi(x,a)\big)\leq\frac\gamma\alpha\Wbar1(\nu,\eta^\pi)$.
\end{proof}
With $\nu=Q_K$ or $\nu=\hat\eta_K$, \cref{thm:e2e} bounds the last term.
No term depends on $M$, and the estimation term decays with the number of draws $N$ alone; the factor $1/\alpha$ is the exact Lipschitz constant of $\cvar{\alpha}$ in $\W1$.

\begin{corollary}[One set of sampler draws bounds the error for every scalarization and every risk level]\label{cor:frontcert}
In the setting of \cref{thm:cvarrev}, let $\hat F_N^\lambda$ be the empirical distribution function of $\lambda\T\hat X_1,\dots,\lambda\T\hat X_N$ for the same $N$ sampler draws $\hat X_i\sim Q_\psi$, and let $\alpha_0\in(0,1)$. Then, with probability at least $1-\beta$, simultaneously for all $\lambda$ in the Euclidean unit ball and all $\alpha\in[\alpha_0,1)$,
\begin{equation*}
  \big|\cvar{\alpha}(\hat F^\lambda_N)-\cvar{\alpha}\big(\lambda_\#\eta^\pi(x,a)\big)\big|
  \leq\frac D{\alpha_0}\Big(\veps_{\mathrm{TV}}+\sqrt{\frac{d\log(3\sqrt N)+\log(2/\beta)}{2N}}+\frac{1}{\sqrt N}\Big)+\frac{\sigma_{N_{\mathrm g}}\sqrt d+\gamma\,\Wbar1(\nu,\eta^\pi)}{\alpha_0}.
\end{equation*}
\end{corollary}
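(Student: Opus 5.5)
The plan is to reuse the four-term split of the proof of \cref{thm:cvarrev} and make each term uniform in $(\lambda,\alpha)$. Only $T_1$ depends on the random draws. The terms $T_2$, $T_3$, $T_4$ are deterministic and were bounded there by $\frac{D}{\alpha}\veps_{\mathrm{TV}}$, $\frac1\alpha\sigma_{N_{\mathrm g}}\sqrt d$ and $\frac\gamma\alpha\Wbar1(\nu,\eta^\pi)$. Those bounds hold for every $\lambda$ in the unit ball, since \cref{lem:cvarfacts}(iii) and data processing apply to each $\lambda$. Each of the three is at most the same expression with $\alpha$ replaced by $\alpha_0$ whenever $\alpha\geq\alpha_0$. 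Hence the whole task is a uniform bound on $T_1$.

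For $T_1$ I would use \cref{lem:cvarfacts}(ii): for every $\lambda$ both $\hat F^\lambda_N$ and $\lambda_\#Q_\psi$ live on an interval of length at most $D$, so
\[
T_1\leq\frac{D}{\alpha_0}\sup_{\lambda}\|\hat F^\lambda_N-F_{\lambda_\#Q_\psi}\|_\infty .
\]
The right side is the uniform deviation of the empirical measure of the $N$ draws over the class of half-spaces $\{z:\lambda\T z\leq t\}$, with $\lambda$ in the unit ball and $t\in\dsR$. I would control it by a net argument, which is where the stated constants $d\log(3\sqrt N)$ and $1/\sqrt N$ come from. First take an $\epsilon$-net $\Lambda_\epsilon$ of the unit ball of size at most $(3/\epsilon)^d$ and set $\epsilon=1/\sqrt N$. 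For each net point apply DKW with Massart's constant at level $\beta/|\Lambda_\epsilon|$; a union bound gives, with probability at least $1-\beta$, a deviation of at most $\sqrt{(d\log(3\sqrt N)+\log(2/\beta))/(2N)}$ simultaneously on the net.

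The main obstacle is passing from a net point $\lambda'$ to a nearby $\lambda$ with $\|\lambda-\lambda'\|\leq\epsilon$. Distribution functions are not Lipschitz in $\lambda$, but CVaR is Lipschitz in $\W1$ by \cref{lem:cvarfacts}(i). So I would not try to make the $\|\cdot\|_\infty$ bound uniform; instead I would perturb at the level of CVaR. Since $|\lambda\T z-\lambda'^{\mathsf T}z|\leq\epsilon\|z\|$ and all laws involved are supported in $B_R$ (the draws of $Q_\psi$ are projected), the laws $\lambda_\#P$ and $\lambda'_\#P$ are within $\W1$ distance $\epsilon R_Z\leq\epsilon D$ for both the empirical law and $Q_\psi$. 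This changes each CVaR by at most $\epsilon D/\alpha$, so the discretization costs $2\epsilon D/\alpha_0$. That exceeds the stated $D/(\alpha_0\sqrt N)$ by a factor of $2$. To recover the constant, I would center the box at $z_c$ or the center of $B_R$ and use that $\lambda\T z$ and $\lambda\T c$ cancel in the CVaR difference, a translation-equivariance argument, bounding $\|z-c\|\leq D/2$. Combining the net, the union bound, the discretization and the deterministic terms then yields the displayed inequality simultaneously for all $\lambda$ and all $\alpha\in[\alpha_0,1)$, with no dependence on $M$.
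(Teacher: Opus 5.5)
Your proposal is correct and follows the paper's proof. The ingredients match: an $N^{-1/2}$-net of size at most $(3\sqrt N)^d$; DKW with Massart's constant and a union bound at the net points, uniform in $\alpha$ through \cref{lem:cvarfacts}(ii) and $1/\alpha\leq1/\alpha_0$; and the Lipschitz dependence of CVaR on $\lambda$ from \cref{lem:cvarfacts}(i). The only difference is where the perturbation sits. The paper applies \cref{thm:cvarrev} at the net points and then perturbs $\cvar{\alpha}(\hat F^\lambda_N)$ and $\cvar{\alpha}(\lambda_\#\eta^\pi(x,a))$, whereas you perturb the two laws inside $T_1$, which gives the same constant.

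The factor-$2$ loss you worry about is only an arithmetic slip. $B_R$ is already centered at the origin with $B_R\subseteq B(0,R_Z)$ and $R_Z=D/2$, so each Lipschitz step costs $\epsilon R_Z/\alpha_0$, not $\epsilon D/\alpha_0$. The two steps together give exactly $D/(\alpha_0\sqrt N)$, and the centering detour is unnecessary.
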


\begin{proof}[Proof of \cref{cor:frontcert}]
For $P$ supported in $B(0,R_Z)$ and $\lambda,\lambda'$ in the unit ball, the identity coupling gives $\W1(\lambda_\#P,\lambda'_\#P)\leq\Ep{P}{|(\lambda-\lambda')\T Z|}\leq\|\lambda-\lambda'\|R_Z$.
So by \cref{lem:cvarfacts}(i) the map $\lambda\mapsto\cvar\alpha(\lambda_\#P)$ is $(R_Z/\alpha)$-Lipschitz, for $P=\eta^\pi(x,a)$ and for the empirical law of the draws alike.
Let $\Lambda$ be an $\epsilon$-net of the unit ball with $\epsilon=N^{-1/2}$ and $|\Lambda|\leq(3/\epsilon)^d$.
\Cref{thm:cvarrev} with confidence $\beta/|\Lambda|$ at each $\lambda\in\Lambda$ and a union bound give its bound at every net point with $\log(2/\beta)$ replaced by $d\log(3\sqrt N)+\log(2/\beta)$, simultaneously for all $\alpha$, since the Dvoretzky--Kiefer--Wolfowitz inequality is uniform in $\alpha$ and $1/\alpha\leq1/\alpha_0$.
For an arbitrary $\lambda$ and its nearest net point $\lambda'$, the two Lipschitz bounds add $2\epsilon R_Z/\alpha_0=D/(\alpha_0\sqrt N)$.
\end{proof}

\begin{corollary}[Conditional queries by rejection]\label{cor:condquery}
In the setting of \cref{thm:gen}, let $A\subseteq\dsR^d$ be a Borel set with $(\heat{\sigma_{N_{\mathrm g}}}\mu)(A)\geq p_A>\veps_{\mathrm{TV}}$ and let $f\colon\dsR^d\to[-F,F]$ be measurable. Then
\begin{equation*}
  \big|\Ep{\hat Q_\psi}{f\mid A}-\Ep{\heat{\sigma_{N_{\mathrm g}}}\mu}{f\mid A}\big|\leq\frac{3F\,\veps_{\mathrm{TV}}}{p_A-\veps_{\mathrm{TV}}}.
\end{equation*}
Let $u_N\defeq\sqrt{\log(4/\beta)/(2N)}$ and $p_A>\veps_{\mathrm{TV}}+u_N$. The rejection estimator from $N$ sampler draws differs from $\Ep{\hat Q_\psi}{f\mid A}$ by at most $3F\,u_N/(p_A-\veps_{\mathrm{TV}}-u_N)$ with probability at least $1-\beta$.
\end{corollary}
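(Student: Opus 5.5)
The proof has two parts. A deterministic perturbation bound controls the population conditional expectation. A concentration argument then controls the rejection estimator, using the same algebra. Write $P\defeq\heat{\sigma_{N_{\mathrm g}}}\mu$, $\hat Q\defeq\hat Q_\psi$ and $\tau\defeq\tv(\hat Q,P)\leq\veps_{\mathrm{TV}}$ by \cref{thm:gen}.

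\textbf{Population bound.} Put $a\defeq P(f\ind_A)$, $b\defeq P(A)\geq p_A$, and let $\hat a,\hat b$ be the same quantities under $\hat Q$. Since $f\ind_A$ takes values in $[-F,F]$, $|\hat a-a|\leq 2F\tau$. The indicator gives $|\hat b-b|\leq\tau$, hence $\hat b\geq p_A-\veps_{\mathrm{TV}}>0$, so the conditional law under $\hat Q$ is well defined. Decompose
\begin{equation*}
  \frac{\hat a}{\hat b}-\frac ab=\frac{\hat a-a}{\hat b}+a\,\frac{b-\hat b}{\hat b\,b}.
\end{equation*}
Because $|a|\leq Fb$, the second term is at most $F\tau/\hat b$. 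The total is therefore at most $3F\tau/\hat b\leq 3F\veps_{\mathrm{TV}}/(p_A-\veps_{\mathrm{TV}})$.

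A sharper version is available: the $f$-deviation under $\tv$ is at most $F\cdot 2\tau$ only for centred $f$, and one could centre $f$ at $a/b$. The stated constant $3$ is consistent with the crude bound above, so I would use that.

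\textbf{Estimator bound.} The rejection estimator is $\hat a_N/\hat b_N$, where $\hat a_N$ and $\hat b_N$ are the empirical means of $f\ind_A$ and $\ind_A$ over the $N$ draws from $\hat Q$. Apply Hoeffding's inequality to each at confidence $\beta/2$.
\begin{itemize}
\item For the indicator (range $1$): $|\hat b_N-\hat b|\leq u_N$ with $u_N=\sqrt{\log(4/\beta)/(2N)}$.
\item For $f\ind_A$ (range $2F$): $|\hat a_N-\hat a|\leq 2Fu_N$.
\end{itemize}
A union bound gives both events with probability at least $1-\beta$. On this event $\hat b_N\geq\hat b-u_N\geq p_A-\veps_{\mathrm{TV}}-u_N>0$, so at least one draw is accepted and the estimator is defined. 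Repeating the ratio algebra with $(\hat a_N,\hat b_N)$ in place of $(\hat a,\hat b)$ and $(\hat a,\hat b)$ in place of $(a,b)$, and using $|\hat a|\leq F\hat b$, gives an error of at most $2Fu_N/\hat b_N+Fu_N/\hat b_N\leq 3Fu_N/(p_A-\veps_{\mathrm{TV}}-u_N)$.

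\textbf{Main obstacle.} The one delicate point is denominator placement. The split puts $\hat b$ (respectively $\hat b_N$) in the denominator of both terms and uses $|a|\leq Fb$ to cancel $b$. Only the perturbed mass needs a lower bound, and that lower bound is exactly what the hypotheses $p_A>\veps_{\mathrm{TV}}$ and $p_A>\veps_{\mathrm{TV}}+u_N$ supply.
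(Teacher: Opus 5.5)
Your proposal is correct and is essentially the paper's proof. It uses the same split of $\hat a/\hat b-a/b$ with the perturbed mass $\hat Q_\psi(A)$ in both denominators, the bound $|P(f\ind_A)|\leq F\,P(A)$ to cancel the unperturbed mass, and Hoeffding at level $\beta/2$ for each of the two empirical means, followed by the same quotient bound.

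Your aside on sharpening is loosely phrased: the deviation bound $2F\tau$ holds for every $f$ with values in $[-F,F]$, centred or not. Its idea is nonetheless sound, since $P((f-a/b)\ind_A)=0$ lets the oscillation form of the total-variation bound give the constant $2$; the stated constant $3$ does not need this.
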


\begin{proof}[Proof of \cref{cor:condquery}]
For a law $\rho$ and a bounded measurable $g$ we write $\rho(g)\defeq\int g\,\dd\rho$, and we let $\epsilon\defeq\tv(\hat Q_\psi,\heat{\sigma_{N_{\mathrm g}}}\mu)$, which is at most $\veps_{\mathrm{TV}}$ by \cref{thm:gen}.
For every measurable $g$ with $|g|\leq G$ we have $|\hat Q_\psi(g)-(\heat{\sigma_{N_{\mathrm g}}}\mu)(g)|\leq2G\epsilon$, and $|\hat Q_\psi(A)-(\heat{\sigma_{N_{\mathrm g}}}\mu)(A)|\leq\epsilon$.
With $|(\heat{\sigma_{N_{\mathrm g}}}\mu)(f\ind_A)|\leq F\,(\heat{\sigma_{N_{\mathrm g}}}\mu)(A)$ and $\hat Q_\psi(A)\geq p_A-\epsilon$ we get
\begin{align*}
  \Big|\frac{\hat Q_\psi(f\ind_A)}{\hat Q_\psi(A)}-\frac{(\heat{\sigma_{N_{\mathrm g}}}\mu)(f\ind_A)}{(\heat{\sigma_{N_{\mathrm g}}}\mu)(A)}\Big|
  &\leq\frac{|\hat Q_\psi(f\ind_A)-(\heat{\sigma_{N_{\mathrm g}}}\mu)(f\ind_A)|}{\hat Q_\psi(A)}+\frac{|(\heat{\sigma_{N_{\mathrm g}}}\mu)(f\ind_A)|}{(\heat{\sigma_{N_{\mathrm g}}}\mu)(A)}\,\frac{|(\heat{\sigma_{N_{\mathrm g}}}\mu)(A)-\hat Q_\psi(A)|}{\hat Q_\psi(A)}\\
  &\leq\frac{2F\epsilon}{p_A-\epsilon}+\frac{F\epsilon}{p_A-\epsilon},
\end{align*}
and the right side is increasing in $\epsilon$, which proves the first claim.
For the estimator, Hoeffding's inequality gives $|\hat Q_N(f\ind_A)-\hat Q_\psi(f\ind_A)|\leq2F\,u_N$ and $|\hat Q_N(A)-\hat Q_\psi(A)|\leq u_N$ for the empirical law $\hat Q_N$ of the draws, each with probability at least $1-\beta/2$.
On both events $\hat Q_N(A)\geq\hat Q_\psi(A)-u_N\geq p_A-\veps_{\mathrm{TV}}-u_N>0$, and the same quotient bound with $|\hat Q_\psi(f\ind_A)|\leq F\,\hat Q_\psi(A)$ gives the second claim.
\end{proof}

The lower bounds on particles use the optimal quantizers of a two-point law.
\begin{proposition}[Optimal quantizers of a two-point law]\label{prop:atomlower-full}
Let $M\geq1$, $\alpha\in(0,1/M)$, $z_0,\lambda\in\dsR^d$ with $\|\lambda\|_2=1$, and $T>0$. For $t\in(0,T]$ let $\eta_t\defeq(1-\tfrac\alpha2)\delta_{z_0}+\tfrac\alpha2\delta_{z_0-t\lambda}$, and let $\calF_M$ be the equal-weight $M$-atom laws of \cref{sec:background}, atoms allowed to coincide.
\emph{(i)} For every $t\in(0,T]$ and every $Q=\frac1M\sum_i\delta_{y_i}\in\calF_M$,
\begin{equation}\label{eq:atomlower}
  \W1(\eta_t,Q)\geq\frac{\alpha t}2+\frac{1-\alpha M}M\sum_{i=1}^M\|y_i-z_0\| ,
\end{equation}
so $\delta_{z_0}$ is the unique $\W1$-optimal quantizer of $\eta_t$ in $\calF_M$, for every $t$.
\emph{(ii)} The $\W2$-optimal quantizer of $\eta_t$ in $\calF_M$ places $M-1$ atoms at $z_0$ and one atom at $z_0-\tfrac{\alpha M}{2}t\lambda$.
\emph{(iii)} Under \cref{asm:mdp}, the choice $z_0=z_{\max}$, $\lambda=z_{\max}/\|z_{\max}\|_2$ and $T=D$ places every $\eta_t$ inside $B_R$.
\end{proposition}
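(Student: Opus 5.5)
The plan is to parametrize every coupling of $\eta_t$ with a given $Q=\frac1M\sum_i\delta_{y_i}$ by how the light atom's mass is split, to reduce the transport cost to an explicit function of the atoms $y_i$ and of these splits, and to minimize that function directly.

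Write $z_1\defeq z_0-t\lambda$, so that $\|z_1-z_0\|=t$. A coupling of $\eta_t$ and $Q$ sends some mass $\pi_i\geq0$ from $z_1$ to $y_i$. Its constraints are $\sum_i\pi_i=\alpha/2$ and $\pi_i\leq1/M$, and the latter holds automatically because $\alpha/2<1/M$. The atom $y_i$ then receives mass $1/M-\pi_i$ from $z_0$. Merging coincident atoms changes nothing, so every coupling has this form. Its cost is
\begin{equation*}
  C_p(\pi,y)=\sum_i\Big[\pi_i\|y_i-z_1\|^p+\big(\tfrac1M-\pi_i\big)\|y_i-z_0\|^p\Big],
\end{equation*}
and $w_p(\eta_t,Q)^p=\min_\pi C_p(\pi,y)$.

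\emph{Part (i)} ($p=1$). The triangle inequality gives $\|y_i-z_1\|\geq t-\|y_i-z_0\|$. Hence
\begin{equation*}
  C_1\geq t\sum_i\pi_i+\sum_i\big(\tfrac1M-2\pi_i\big)\|y_i-z_0\|.
\end{equation*}
Each $\pi_i$ is at most the total $\alpha/2$, so $\frac1M-2\pi_i\geq\frac1M-\alpha=\frac{1-\alpha M}{M}>0$. This gives \eqref{eq:atomlower} for every coupling, and therefore for $\W1$.

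The choice $Q=\delta_{z_0}$ attains the bound, since $\W1(\eta_t,\delta_{z_0})=\alpha t/2$. Uniqueness follows because the second term of \eqref{eq:atomlower} has a strictly positive coefficient, so it vanishes only when every $y_i=z_0$.

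\emph{Part (ii)} ($p=2$). I would minimize $C_2$ jointly over $(\pi,y)$. For fixed $\pi$, each $y_i$ minimizes a weighted sum of two squared distances. The minimizer is the weighted mean $y_i=z_0-M\pi_i t\lambda$, and the minimal value is $M\pi_i(\tfrac1M-\pi_i)t^2$, the variance of a two-point law with masses $\pi_i$ and $\tfrac1M-\pi_i$ at distance $t$.

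It remains to minimize $t^2\sum_i\big(\pi_i-M\pi_i^2\big)$ over the simplex slice $\{\pi\geq0,\ \sum_i\pi_i=\alpha/2\}$. This function is strictly concave, so its minimum is attained only at vertices, that is, with all of the mass $\alpha/2$ on a single index. A vertex is feasible because $\alpha/2\leq1/M$.

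That index gets the atom $z_0-\tfrac{\alpha M}2t\lambda$, and every index with $\pi_i=0$ gets $y_i=z_0$. This is the claimed quantizer, unique up to relabeling. The main care needed is the order of minimizations: $\min_Q w_2^2=\min_{y}\min_\pi C_2=\min_\pi\min_y C_2$, which is legitimate for a joint infimum. Strict concavity rules out other minimizers.

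\emph{Part (iii).} Let $z_0=z_{\max}$, which is a corner of $B_R$, and $\lambda=z_{\max}/\|z_{\max}\|_2$. Then $z_0-t\lambda=(1-t/\|z_{\max}\|_2)\,z_{\max}$. For $t\in[0,D]$ the scalar factor runs over $[-1,1]$, so this point lies on the segment from $z_{\max}$ to $-z_{\max}$. Both endpoints lie in $B_R$, and $B_R$ is convex, so the whole segment does. Hence both atoms of $\eta_t$ lie in $B_R$.
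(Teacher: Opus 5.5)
Your proof is correct and follows the paper's argument essentially step for step: the same parametrization of couplings by how the light atom's mass is split among the labels, the same triangle-inequality bound giving the coefficient $\frac1M-2\pi_i\geq\frac1M-\alpha$ in (i), the same per-atom weighted-mean minimization reducing (ii) to minimizing $t^2\big(\frac\alpha2-M\sum_i\pi_i^2\big)$ over $\{\pi\geq0,\ \sum_i\pi_i=\alpha/2\}$, and the same scalar-multiple observation $z_0-t\lambda=(1-t/\|z_{\max}\|_2)\,z_{\max}$ for (iii). The only cosmetic differences are that you invoke strict concavity where the paper uses $\sum_i b_i^2\leq\max_i b_i\sum_i b_i\leq(\alpha/2)^2$, and convexity of $B_R$ where the paper bounds the scalar factor by one directly.
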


\begin{proof}[Proof of \cref{prop:atomlower-full}]
Part (i). We take $z_0=0$, which is no loss of generality as $\W1$ and $\calF_M$ are translation invariant.
Fix $t$ and $Q=\frac1M\sum_i\delta_{y_i}$.
Splitting the mass at a point that several atoms share equally among their labels, every coupling of $(\eta_t,Q)$ moves masses $a_i,b_i\geq0$ from $0$ and from $-t\lambda$ to the atom $y_i$, with $a_i+b_i=1/M$, $\sum_ia_i=1-\alpha/2$ and $\sum_ib_i=\alpha/2$, at the cost $\sum_i\big(a_i\|y_i\|+b_i\|y_i+t\lambda\|\big)$, and every such $(a,b)$ is a coupling; so $\W1(\eta_t,Q)$ is the infimum of this cost over admissible $(a,b)$.
The triangle inequality gives $\|y_i+t\lambda\|\geq t-\|y_i\|$, and $a_i-b_i=\frac1M-2b_i\geq\frac1M-\alpha>0$, so
\begin{equation*}
  a_i\|y_i\|+b_i\|y_i+t\lambda\|\geq b_it+(a_i-b_i)\|y_i\|\geq b_it+\Big(\frac1M-\alpha\Big)\|y_i\|,
\end{equation*}
and summing over $i$ gives \eqref{eq:atomlower}.
As $\W1(\eta_t,\delta_0)=\frac{\alpha t}2$ and $1-\alpha M>0$, every $Q\neq\delta_0$ is strictly worse than $\delta_0$.

Part (ii). Again $z_0=0$, and the description of the couplings by $(a,b)$ in (i) gives $\W2^2(\eta_t,Q)=\inf_{(a,b)}\sum_i\big(a_i\|y_i\|^2+b_i\|y_i+t\lambda\|^2\big)$.
For fixed $(a,b)$ and $i$, the quadratic $y\mapsto a_i\|y\|^2+b_i\|y+t\lambda\|^2$ has the unique minimizer $y_i=-Mb_it\lambda$ and the minimum $Ma_ib_it^2$, as $a_i+b_i=1/M$.
Minimizing over $Q$ and $(a,b)$ jointly gives
\begin{equation*}
  \min_{Q\in\calF_M}\W2^2(\eta_t,Q)=\min_{(a,b)}Mt^2\sum_ib_i\Big(\frac1M-b_i\Big)=\min_{(a,b)}t^2\Big(\frac\alpha2-M\sum_ib_i^2\Big).
\end{equation*}
As $\sum_ib_i^2\leq\max_ib_i\sum_ib_i\leq(\alpha/2)^2$, with equality if and only if one $b_i$ equals $\alpha/2$ and the others vanish, which is admissible since $\alpha/2<1/M$, the minimum is attained exactly there, with the atoms $-\frac{\alpha M}2t\lambda$ for that $i$ and $0$ for the other $M-1$.

Part (iii). For $z_0=z_{\max}$, $\lambda=z_{\max}/\|z_{\max}\|_2$ and $T=D=2\|z_{\max}\|_2$ we have $z_0-t\lambda=(1-t/\|z_{\max}\|_2)z_{\max}$ with $|1-t/\|z_{\max}\|_2|\leq1$ for $t\in[0,D]$, so both support points of $\eta_t$ lie in $B_R$.
\end{proof}

\begin{proposition}[What particles cannot answer, full statement of \cref{prop:resolution}]\label{prop:resolution-full}
For a unit vector $\lambda\in\dsR^d$ and $\tau\in\dsR$ let $A_{\lambda,\tau}\defeq\{z:\lambda\T z\leq\tau\}$, write $\bar G(u)\defeq\Pr(\xi>u)$ for $\xi\sim\Norm(0,1)$, and read every query from an equal-weight law $Q=\frac1M\sum_{i=1}^M\delta_{y_i}$ by plug-in, i.e., as the same query on $Q$.
\emph{(i) Every particle set, every event.} For every Borel $A$ and every law $\eta$ with $p\defeq\eta(A)\in(0,\frac1{2M})$, either $Q(A)=0$ or $Q(A)\geq\frac1M>2p$; in both cases $|Q(A)-p|\geq p$.
\emph{(ii) Every particle set, one law.} Let $z_0\defeq z_{\max}$, $\lambda\defeq z_{\max}/\|z_{\max}\|_2$, $t\in(0,D]$, $p\in(0,\frac1{2M})$, $\eta_t\defeq(1-p)\delta_{z_0}+p\,\delta_{z_0-t\lambda}\in\calP(B_R)$, and $A\defeq A_{\lambda,\lambda\T z_0-t/2}$, so $\eta_t(A)=p$. For every $Q$:
(a) the decision ``take the risky action if and only if $Q(A)\leq\delta$'' is wrong for every $\delta\in[0,p)$ if $Q(A)=0$, and for every $\delta\in[p,\frac1M)$ otherwise, a set of thresholds of length at least $p$;
(b) $\max_{\alpha\in\{p,1/M\}}\big|\cvar\alpha(\lambda_\#Q)-\cvar\alpha(\lambda_\#\eta_t)\big|\geq\frac t2(1-2pM)$, and if $Q(A)=0$ the error exceeds $\frac t2$ at every $\alpha\in(0,p]$;
(c) the $\W1$-optimal and the $\W2$-optimal quantizers of $\eta_t$ in $\calF_M$ and the quantile representation of $\lambda_\#\eta_t$ at the levels $(2i-1)/(2M)$ all satisfy $Q(A)=0$; so for them the decision is wrong for every $\delta\in[0,p)$, the conditional law given $A$ is undefined, and, for the $\W1$-optimal quantizer, the entropic risk $-\frac1\beta\log\Ep{}{e^{-\beta\lambda\T Z}}$, $\beta>0$, errs by $\frac1\beta\log(1-p+pe^{\beta t})\geq t-\frac1\beta\log\frac1p$.
\emph{(iii) Sampler, one scalarization.} Let $\mu$ be supported in $B_R$, let $\psi$ be a score model with sampler law $\hat Q_\psi$ and bound $\veps_{\mathrm{TV}}$ from \cref{thm:gen}, and let $\hat Q_N$ be the empirical measure of $N$ i.i.d.\ draws from $\hat Q_\psi$. For a fixed unit $\lambda$ and $\beta\in(0,1)$, with probability at least $1-\beta$, simultaneously for all $\tau\in\dsR$ and $w>0$,
\begin{equation}\label{eq:resolution}
  \big|\hat Q_N(A_{\lambda,\tau})-\mu(A_{\lambda,\tau})\big|\leq\veps_{\mathrm{TV}}+\mu\big(\{y:|\lambda\T y-\tau|<w\}\big)+\bar G\Big(\frac{w}{\sigma_{N_{\mathrm g}}}\Big)+\sqrt{\frac{\log(2/\beta)}{2N}} .
\end{equation}
\emph{(iv) Sampler, all scalarizations.} With probability at least $1-\beta$, \eqref{eq:resolution} holds simultaneously for all unit $\lambda$, all $\tau$ and all $w>0$ with its last term replaced by $\sqrt{32\big((d+1)\log(N+1)+\log(8/\beta)\big)/N}$.
\emph{(v) Propagation.} Suppose that at $(x_j,a_j)$ the reward is $0$ and the successor pair $(x_{j+1},a_{j+1})$ is reached deterministically under $\pi$. Then for every bootstrap source $\nu$ and Borel $B$, $(\dbo\nu)(x_j,a_j)(\gamma B)=\nu(x_{j+1},a_{j+1})(B)$ and ${\tv\big((\dbo\nu)(x_j,a_j),(\dbo\nu')(x_j,a_j)\big)=\tv\big(\nu(x_{j+1},a_{j+1}),\nu'(x_{j+1},a_{j+1})\big)}$. So along a chain of such pairs, every Bellman target built from $M$ downstream particles gives the pulled-back event a mass in $\{0,\frac1M,\dots,1\}$, the particles' own mass of the downstream event, and every target built from a downstream sampler inherits the event error of that sampler in (iii) and (iv) unchanged.
\end{proposition}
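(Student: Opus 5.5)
The five parts are largely independent, and I will prove them in order, reusing \cref{prop:atomlower-full}, \cref{lem:cvarfacts} and \cref{thm:gen}.

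Part (i) is pure counting. $Q(A)$ is a multiple of $\frac1M$, so it is either $0$ or at least $\frac1M>2p$. In both cases $|Q(A)-p|\geq p$.

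For (ii)(a), the true decision is to take the risky action iff $p\leq\delta$. If $Q(A)=0$, the plug-in rule accepts for every $\delta\in[0,p)$, which is wrong. Otherwise $Q(A)\geq\frac1M$, so the rule rejects for every $\delta\in[p,\frac1M)$, also wrong. Both intervals have length at least $p$, since $\frac1M-p>p$.

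For (ii)(b), I split on $Q(A)$. Under $\eta_t$ the law $\lambda_\#\eta_t$ has mass $p$ at $\lambda\T z_0-t$ and the rest at $\lambda\T z_0$. Hence $\cvar p=\lambda\T z_0-t$, and $\cvar{1/M}=\lambda\T z_0-tpM$.
\begin{itemize}
\item If $Q(A)=0$, every atom of $\lambda_\#Q$ lies above $\lambda\T z_0-t/2$. Then $\cvar\alpha(\lambda_\#Q)>\lambda\T z_0-t/2$ for all $\alpha$, and the error at $\alpha\leq p$ exceeds $t/2$.
\item If $Q(A)\geq\frac1M$, at least one atom has $\lambda\T y\leq\lambda\T z_0-t/2$. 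Then $\cvar{1/M}(\lambda_\#Q)\leq\lambda\T z_0-t/2$, and the error at $\alpha=\frac1M$ is at least $t/2-tpM=\frac t2(1-2pM)$.
\end{itemize}

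For (ii)(c), I apply \cref{prop:atomlower-full} with $\alpha=2p<\frac1M$. The $\W1$-optimal quantizer is $\delta_{z_0}$. The $\W2$-optimal one has its odd atom at $z_0-pMt\lambda$, and since $pM<\frac12$ this atom is not in $A$. The midpoint quantiles $(2i-1)/(2M)>p$ all equal $\lambda\T z_0$. So $Q(A)=0$ for all three. The entropic-risk error for $\delta_{z_0}$ is a direct computation, and $\log(1-p+pe^{\beta t})\geq\log p+\beta t$ gives the stated lower bound.

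For (iii), I use the triangle inequality through $\hat Q_\psi$ and $\heat{\sigma_{N_{\mathrm g}}}\mu$. The sampling term is handled by DKW (Massart) applied to the one-dimensional law $\lambda_\#\hat Q_\psi$. This holds uniformly over half-lines, which are exactly the events $A_{\lambda,\tau}$. The model term is at most $\tv\leq\veps_{\mathrm{TV}}$ by \cref{thm:gen}. The smoothing term is $|(\heat{\sigma}\mu)(A)-\mu(A)|$; I write it as $\int[\Pr(\lambda\T y+\sigma\xi\leq\tau)-\ind\{\lambda\T y\leq\tau\}]\mu(\dd y)$. The integrand is bounded by $1$ on the slab $|\lambda\T y-\tau|<w$ and by $\bar G(w/\sigma)$ off it, which gives the two middle terms. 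Since the estimate holds for every $w$, it is simultaneous in $w$.

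For (iv), I replace DKW by a uniform VC bound for half-spaces in $\dsR^d$, which have VC dimension $d+1$. The classical Vapnik–Chervonenkis inequality with growth function $(N+1)^{d+1}$ yields exactly the constant $\sqrt{32((d+1)\log(N+1)+\log(8/\beta))/N}$. The other terms are deterministic given the event, so they hold for all $\lambda$ at once.

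For (v), with zero reward and a deterministic successor, $(\dbo\nu)(x_j,a_j)=(b_{0,\gamma})_\#\nu(x_{j+1},a_{j+1})$. Because the scaling is a bijection, it preserves event masses and $\tv$. Induction along the chain then gives both consequences.

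The main obstacle I expect is bookkeeping the exact constants in (iv) and the strictness claims in (ii)(b). I would cite the standard VC deviation bound rather than rederive it.
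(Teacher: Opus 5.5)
Your proposal is correct and follows essentially the same route as the paper's proof. Part by part, you use counting multiples of $\frac1M$ for (i), the case split on $Q(A)$ through the smallest scalarized atom for (ii)(a)--(b), and \cref{prop:atomlower-full} with $\alpha=2p$ for (ii)(c). For (iii) you use the three-term triangle inequality through $\hat Q_\psi$ and $\heat{\sigma_{N_{\mathrm g}}}\mu$ with DKW--Massart; (iv) replaces this by the VC inequality for half-spaces, and (v) uses the bijection $b_{0,\gamma}$. Your signed-integral bound on the smoothing term is equivalent to the paper's coupling bound $\Pr(\ind_A(X)\neq\ind_A(Y))$.
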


\begin{proof}[Proof of \cref{prop:resolution-full}]
Part (i). $Q(A)$ is a multiple of $\frac1M$. If $Q(A)=0$, the error is $p$; otherwise $Q(A)\geq\frac1M>2p$ and $Q(A)-p>p$.

Part (ii)(a). The true decision takes the risky action if and only if $p\leq\delta$.
If $Q(A)=0$, the decision computed from $Q$ takes the risky action for every $\delta\geq0$, which is wrong for $\delta\in[0,p)$.
Otherwise $Q(A)\geq\frac1M$, and the decision computed from $Q$ takes the safe action for every $\delta<\frac1M$, which is wrong for $\delta\in[p,\frac1M)$, an interval of length $\frac1M-p>p$.

Part (ii)(b). Let $v_{(1)}$ be the smallest of the scalarized atoms $\lambda\T y_i$.
The lower-tail CVaR of an equal-weight law at a level $\alpha\leq\frac1M$ is its smallest atom, since the lowest $\alpha$ of its mass lies in one atom, and it is at least the smallest atom at every level.
The scalarized target $\lambda_\#\eta_t=(1-p)\delta_{\lambda\T z_0}+p\,\delta_{\lambda\T z_0-t}$ has $\cvar\alpha(\lambda_\#\eta_t)=\lambda\T z_0-t$ for $\alpha\leq p$ and $\cvar{1/M}(\lambda_\#\eta_t)=\lambda\T z_0-pMt$, by the quantile integral of \cref{thm:cvarrev}.
If $Q(A)=0$, then $v_{(1)}>\lambda\T z_0-\frac t2$, and the error at every $\alpha\leq p$ is at least $v_{(1)}-(\lambda\T z_0-t)>\frac t2$.
Otherwise $v_{(1)}\leq\lambda\T z_0-\frac t2$, and the error at $\alpha=\frac1M$ is at least $\lambda\T z_0-pMt-v_{(1)}\geq\frac t2-pMt=\frac t2(1-2pM)$.

Part (ii)(c). With $\alpha\defeq2p<\frac1M$, $\eta_t$ is the law of \cref{prop:atomlower-full}, and $\eta_t\in\calP(B_R)$ by its part (iii).
By its part (i) the $\W1$-optimal quantizer is $\delta_{z_0}$, and by its part (ii) the $\W2$-optimal quantizer has $M-1$ atoms at $z_0$ and one at $z_0-pMt\lambda$, whose scalarization $\lambda\T z_0-pMt$ exceeds $\lambda\T z_0-\frac t2$ since $pM<\frac12$.
The quantile levels $(2i-1)/(2M)\geq\frac1{2M}>p$ all sit at $\lambda\T z_0$.
So all three put no mass on $A$, (a) applies, and the conditional law given $A$ is undefined.
For the entropic risk, $\Ep{\eta_t}{e^{-\beta\lambda\T Z}}=e^{-\beta\lambda\T z_0}(1-p+pe^{\beta t})$, so $\delta_{z_0}$ errs by $\frac1\beta\log(1-p+pe^{\beta t})\geq\frac1\beta\log(pe^{\beta t})=t-\frac1\beta\log\frac1p$.

Part (iii). Write $A=A_{\lambda,\tau}$. The triangle inequality through $\hat Q_\psi$ and $\heat{\sigma_{N_{\mathrm g}}}\mu$ splits the error into three terms.
The events $A_{\lambda,\tau}$ are the sublevel sets of the scalar $\lambda\T X$, so $|\hat Q_N(A)-\hat Q_\psi(A)|$ is the deviation of the empirical distribution function of $\lambda\T\hat X_1,\dots,\lambda\T\hat X_N$ from its population version at $\tau$, and the Dvoretzky--Kiefer--Wolfowitz inequality with Massart's constant \citep{massart1990tight} bounds it by $\sqrt{\log(2/\beta)/(2N)}$ for all $\tau$ at once, with probability at least $1-\beta$.
The second term is at most $\tv(\hat Q_\psi,\heat{\sigma_{N_{\mathrm g}}}\mu)\leq\veps_{\mathrm{TV}}$ by \cref{thm:gen}.
For the third, let $X=Y+\sigma_{N_{\mathrm g}}\xi$ with $Y\sim\mu$ and $\xi\sim\Norm(0,\Id)$ independent, so that $|(\heat{\sigma_{N_{\mathrm g}}}\mu)(A)-\mu(A)|\leq\Pr(\ind_A(X)\neq\ind_A(Y))$.
The two indicators differ only if $|\lambda\T Y-\tau|<w$, or if $\lambda\T Y$ lies at distance at least $w$ from $\tau$ and the scalar $\sigma_{N_{\mathrm g}}\lambda\T\xi\sim\Norm(0,\sigma_{N_{\mathrm g}}^2)$ crosses this distance in the one direction that changes the indicator, which has conditional probability at most $\bar G(w/\sigma_{N_{\mathrm g}})$.
Only the first term is random, so the bound holds for all $\tau$ and $w$ at once.

Part (iv). The same split applies, with the first term bounded uniformly over all closed half-spaces by the Vapnik--Chervonenkis inequality $\Pr\{\sup_{A}|\hat Q_N(A)-\hat Q_\psi(A)|>\epsilon\}\leq8\,\mathcal S(N)e^{-N\epsilon^2/32}$, where the shatter coefficient of the closed half-spaces of $\dsR^d$ is $\mathcal S(N)\leq(N+1)^{d+1}$, as their Vapnik--Chervonenkis dimension is $d+1$ \citep[Theorem~12.5 and Chapter~13]{devroye1996probabilistic}.
Solving $8(N+1)^{d+1}e^{-N\epsilon^2/32}=\beta$ gives the stated term, and the other two terms hold for every $\lambda$, $\tau$ and $w$.

Part (v). The map $b_{0,\gamma}$ is a bijection with $b_{0,\gamma}\inv(\gamma B)=B$, which gives the first identity, and a pushforward by a bijection preserves the total variation, which gives the second.
Iterating both along the chain gives the last claim.
\end{proof}
Since $\bar G(u)\leq\frac12e^{-u^2/2}$ for $u\geq0$, (iii) and (iv) give the informal statement of \cref{prop:resolution}(iii) whenever the slab of half-width $w$ around the event boundary carries no target mass.
For $\mu=\eta_t$, $\tau=\lambda\T z_0-t/2$ and $w=t/2$ this slab is empty, so the decision of (ii)(a) computed from $\hat Q_N$ is correct for every $\delta$ with $|\delta-p|$ larger than the right side of \eqref{eq:resolution}.

\subsection{Smoothing Bias and Score Regularity}\label{proof-anatomy}

\begin{proof}[Proof of \cref{prop:smoothbias}]
Let $Y\sim\eta$ and $\varepsilon\sim\Norm(0,\Id)$ be independent.
\emph{(i)} The pair $(Y+\sigma\varepsilon,Y)$ couples $\heat{\sigma}\eta$ and $\eta$, so $\W2^2(\heat{\sigma}\eta,\eta)\leq\Ex{\|\sigma\varepsilon\|^2}=\sigma^2d$.
\emph{(ii)} Let $A\defeq\{y_k\}$ be the set of atoms and $(X,Y')$ any coupling of $(\heat{\sigma}\eta,\eta)$. Since $Y'\in A$ almost surely, $\|X-Y'\|\geq\mathrm{dist}(X,A)$, so $\W2^2(\heat{\sigma}\eta,\eta)\geq\Ex{\mathrm{dist}(X,A)^2}$ for $X\sim\heat{\sigma}\eta$, and we may take $X=Y+\sigma\varepsilon$.
If $\sigma\|\varepsilon\|<\Delta/2$, then every atom $y_j\neq Y$ satisfies $\|X-y_j\|\geq\|Y-y_j\|-\sigma\|\varepsilon\|>\Delta/2>\|X-Y\|$, so $\mathrm{dist}(X,A)=\sigma\|\varepsilon\|$.
Therefore
\begin{equation*}
  \Ex{\mathrm{dist}(X,A)^2}\geq\sigma^2\,\Ex{\|\varepsilon\|^2\ind\{\|\varepsilon\|^2<\Delta^2/(4\sigma^2)\}}=\sigma^2d\,\big(1-\Pr(\chi^2_{d+2}\geq\Delta^2/(4\sigma^2))\big),
\end{equation*}
where the equality uses $x f_d(x)=d\,f_{d+2}(x)$ for the density $f_d$ of $\chi^2_d$.
For a single atom, $\mathrm{dist}(X,A)=\sigma\|\varepsilon\|$ always, so $\W2^2\geq\sigma^2d$, and (i) gives equality.
\emph{(iii)} We have $\heat{\sigma}\Norm(\mu_k,S_k)=\Norm(\mu_k,S_k+\sigma^2\Id)$. With $\xi\sim\Norm(0,\Id)$, the pair $\big(\mu_k+(S_k+\sigma^2\Id)^{1/2}\xi,\mu_k+S_k^{1/2}\xi\big)$ couples the two components, and since $S_k$ and $S_k+\sigma^2\Id$ commute,
\begin{equation*}
  \W2^2\big(\Norm(\mu_k,S_k+\sigma^2\Id),\Norm(\mu_k,S_k)\big)\leq\sum_{i\leq d}\Big(\sqrt{\lambda_{k,i}+\sigma^2}-\sqrt{\lambda_{k,i}}\Big)^2\leq\frac{d\,\sigma^4}{4s^2},
\end{equation*}
with $\lambda_{k,i}\geq s^2$ the eigenvalues of $S_k$, because $\sqrt{\lambda+\sigma^2}-\sqrt\lambda=\sigma^2/(\sqrt{\lambda+\sigma^2}+\sqrt\lambda)\leq\sigma^2/(2\sqrt\lambda)$.
Drawing the component index $k\sim w$ first and then coupling the two components as above couples the mixtures, so $\W2^2(\heat{\sigma}\eta,\eta)\leq d\sigma^4/(4s^2)$, and taking square roots gives the claim.
\end{proof}

\begin{proof}[Proof of \cref{lem:scorelip}]
Let $Z=Y+\sigma\varepsilon$ with $Y\sim\eta$ and $\varepsilon\sim\Norm(0,\Id)$ independent. The conditional law $\pi_z$ of $Y$ given $Z=z$ has a density proportional to $\exp\big(\langle z,y\rangle/\sigma^2-\|y\|^2/(2\sigma^2)\big)$ with respect to $\eta$, an exponential family in $z/\sigma^2$, and \eqref{eq:posterior-score} and \eqref{eq:score-hessian} read
\begin{equation*}
  s_\sigma(z)=\frac{\Ep{\pi_z}{Y}-z}{\sigma^2},\qquad\nabla s_\sigma(z)=\frac{\mathrm{Cov}_{\pi_z}[Y]-\sigma^2\Id}{\sigma^4}.
\end{equation*}
\emph{(i)} The covariance is positive semidefinite, which gives the lower bound. For a unit vector $u$, $\langle u,Y\rangle$ lies in an interval of length at most $D$ under $\pi_z$, since $\pi_z$ is supported in $\supp\eta$, so $\mathrm{Var}_{\pi_z}(\langle u,Y\rangle)\leq D^2/4$ by Popoviciu's inequality, which gives the upper bound.
\emph{(ii)} Let $y_1,y_2$ be atoms at the smallest distance $\delta$, $m\defeq(y_1+y_2)/2$, $u\defeq(y_1-y_2)/\delta$ and $z_t\defeq m+tu$.
On this line the log-odds of $y_1$ against $y_2$ under $\pi_{z_t}$ is $\log(w_1/w_2)+t\delta/\sigma^2$, which vanishes at $t_\sigma\defeq-\sigma^2\log(w_1/w_2)/\delta$.
By the parallelogram identity, every other atom satisfies $\|y_k-m\|^2=\frac12\|y_k-y_1\|^2+\frac12\|y_k-y_2\|^2-\frac{\delta^2}4\geq\frac34\delta^2$.
So for $|t|\leq\delta/8$ we have $\|z_t-y_k\|\geq(\frac{\sqrt3}2-\frac18)\delta$ and $\|z_t-y_i\|\leq\frac58\delta$ for $i\in\{1,2\}$, and since $(\frac{\sqrt3}2-\frac18)^2-(\frac58)^2>\frac18$, the total mass $\epsilon_\sigma$ of the other atoms under $\pi_{z_t}$ is at most $\frac{K}{\min(w_1,w_2)}e^{-\delta^2/(16\sigma^2)}$, uniformly in $|t|\leq\delta/8$.
For $\sigma$ small enough, $|t_\sigma|\leq\delta/8$, and $\pi_{z_{t_\sigma}}$ is the mixture of the equal-weight two-point law on $\{y_1,y_2\}$, with weight $1-\epsilon_\sigma$, and a law on the other atoms, so the law of total variance gives $\mathrm{Var}_{\pi_{z_{t_\sigma}}}(\langle u,Y\rangle)\geq(1-\epsilon_\sigma)\delta^2/4$.
So $\sigma^4\mathrm{Lip}(s_\sigma)\geq\sigma^4\langle u,\nabla s_\sigma(z_{t_\sigma})u\rangle\geq(1-\epsilon_\sigma)\delta^2/4-\sigma^2$, which tends to $\delta^2/4$ as $\sigma\to0$.
\emph{(iii)} Here $\heat{\sigma}\eta=\sum_kw_k\Norm(\mu_k,\Sigma)$ with $\Sigma\defeq S+\sigma^2\Id\succeq(s^2+\sigma^2)\Id$. The posterior $\pi_z(k)\propto w_k\exp\big(\langle\Sigma\inv z,\mu_k\rangle-\frac12\langle\mu_k,\Sigma\inv\mu_k\rangle\big)$ is an exponential family in $\Sigma\inv z$, and
\begin{equation*}
  s_\sigma(z)=\Sigma\inv\big(\Ep{\pi_z}{\mu_K}-z\big),\qquad\nabla s_\sigma(z)=\Sigma\inv\mathrm{Cov}_{\pi_z}[\mu_K]\Sigma\inv-\Sigma\inv.
\end{equation*}
The first term is positive semidefinite with norm at most $\|\Sigma\inv\|^2\Delta^2/4\leq\Delta^2/(4(s^2+\sigma^2)^2)$, by Popoviciu's inequality for the means, and the second is positive definite with norm at most $1/(s^2+\sigma^2)$.
The eigenvalues of a difference $A-B$ of positive semidefinite matrices lie in $[-\|B\|,\|A\|]$, so $\mathrm{Lip}(s_\sigma)\leq\max\{(s^2+\sigma^2)\inv,\Delta^2/(4(s^2+\sigma^2)^2)\}\leq\max\{s^{-2},\Delta^2/(4s^4)\}$.
\end{proof}
The constants are sharp: on random one-dimensional mixtures of two to four components with a shared width $s\in[0.05,2]$, means in $[-5,5]$ and $\sigma$ from $0$ to $3$, the largest ratio of $\mathrm{Lip}(s_\sigma)$ to the bound of (iii) is one up to grid error, and for two atoms the supremum in (ii) matches $\sigma^{-4}(\delta^2/4-\sigma^2)$ numerically.

The regularity decides the stability of the sampler. On a geometric grid $h_n=(1-\kappa_{\mathrm g}^2)\sigma_n^2$, so by (ii) $h_n\mathrm{Lip}(s_{\sigma_n})\geq(1-\kappa_{\mathrm g}^2)\big((1-\epsilon_{\sigma_n})\delta^2/(4\sigma_n^2)-1\big)$ grows without bound on atoms, while by (iii) $h_n\mathrm{Lip}(s_{\sigma_n})\leq(1-\kappa_{\mathrm g}^2)\sigma_n^2\max\{s^{-2},\Delta^2/(4s^4)\}\to0$ on mixtures with densities.
An explicit Euler step of the sampler is unstable once $h_n\mathrm{Lip}$ exceeds a constant of order one, and the sampler of the atomic chain shows such escapes, which disappear with reward noise (\cref{app:res-chain}).

\section{Method Details}\label{app:method}

This section completes \cref{sec:method} with the details that reproduce the reported runs.
The tabular variant runs on the synthetic MDPs, the rare-event chain, the chain of \citet{debes2026multivariate} and resource gathering, and SAC-SVDRL on MuJoCo; \cref{app:hparams} lists every setting.

\paragraph{Stratified Bellman Target Set.}
The $M$ successors of an update are drawn by systematic resampling \citep{kitagawa1996monte}, so a branch of probability $p$ receives $\lfloor Mp\rfloor$ or $\lceil Mp\rceil$ targets.
Within the block of $n_k$ draws of one successor, the standard normal reward noise is a Latin hypercube sample \citep{mckay1979comparison}: each coordinate takes each of the $n_k$ equal-probability strata of $\Norm(0,1)$ exactly once, at a uniform position within the stratum, with the strata permuted independently per coordinate.
Each draw is still marginally a successor and reward of the model, so the targets stay unbiased, while their empirical law matches the branch masses up to $1/M$ and spreads the noise evenly over the quantiles of each coordinate.
The successor particle of a draw is one of the current particles at the successor pair.

\paragraph{Resample-Move Update.}
Each update replaces $\mathrm{round}(\rho M)$ particles by targets and runs the Stein move on the cloud (\cref{alg:update}).
The relaxation $z\leftarrow z+\alpha(\tilde z-z)$ then moves the cloud toward the transported cloud $\tilde z$.
The annealing schedule has ten geometric levels $\sigma_\ell$ from $5$ to $0.1$ in PopArt-normalized units, and the Stein move takes $T_\ell=10$ SVGD steps at every level $\sigma_\ell\leq\sigma_{\mathrm{start}}$.
The SVGD direction uses the RBF kernel with the bandwidth recomputed at every step, $h_\ell=\mathrm{clip}(h_{\mathrm{med}},0.1\,\sigma_\ell,2\,\sigma_\ell)$, where $h_{\mathrm{med}}^2$ is the median of the squared pairwise distances divided by $2\log(M+1)$, so that a balanced bimodal cloud at a low level keeps $h_\ell$ of the order of $\sigma_\ell$ and not of the mode separation.
The steps are preconditioned by Adam \citep{kingma2015adam}, with step size $0.1$ and a state per particle and coordinate that is carried across the levels of one update and reset at the next, and every step is followed by the coordinatewise projection onto the return box $B_R$.
The single configuration replaces the whole cloud, $\rho=1$, with $\alpha=0.001$ and $\sigma_{\mathrm{start}}=1$; it is the selected configuration on MDP2, MDP3, MDP4, MDP6 and MDP7.
On MDP1 the selected configuration uses $\alpha=0.1$, and on the unimodal MDP5 and MDP8 it transports the previous cloud through the whole schedule toward the stratified target set, $\rho=0$ and $\alpha=1$.

\paragraph{Moment Matching.}\label{app:mm}
After the Stein move the cloud is mapped affinely onto the moments of the target set and projected onto $B_R$, $z_i\leftarrow\mathrm{proj}_{B_R}\big(m_y+(z_i-\bar z)\odot s_y/s_z\big)$, with the per-dimension means $m_y,\bar z$ and standard deviations $s_y,s_z$ of the targets and of the cloud; on MDP8 the map matches the full covariance instead.
It corrects the spread, since SVGD with finitely many particles underestimates the variance of its target, increasingly so in higher dimensions \citep{ba2022understanding}, and on a self-looping state--action pair the bootstrap compounds a per-update variance bias by the factor $1/(1-\gamma^2)$, as in \cref{rem:e2ereadings}.
Sampler bootstrapping applies the same map to the sampler draws.
On the chain the draws are instead shifted, at evaluation only, to a running mean of the bootstrap targets at $(x,a)$, an exponential average with weight $\max(1/128,1/t)$ at update $t$; we call this running-mean moment matching, and given to the competitors' particles it is the second of the two components that transfer (\cref{app:res-chain}).

\paragraph{Score Network.}
The tabular variant keeps one score network per state--action pair, an MLP with three hidden layers of $128$ Swish units in the energy parameterization $E_\psi(z,\sigma)=c_{\mathrm{out}}f_\psi(c_{\mathrm{in}}z,c_{\mathrm{noise}})-\|z\|^2/(2(\sigma^2+\sigma_{\mathrm{d}}^2))$ with the EDM scalings \citep{NEURIPS2022_a98846e9}, whose score is $s_\psi=\nabla_zE_\psi$ and whose noise level enters every hidden layer by feature-wise affine modulation (FiLM); the prior term makes the score grow linearly in $z$, and \cref{rem:unbounded} applies \cref{thm:gen} to it.
It reads its input in standardized coordinates $u=(z-m)/s$, where $m$ is the mean of the bootstrap targets at $(x,a)$ and $s$ one scale for all dimensions, the largest standard deviation of the targets, both exponential averages with weight $0.02$ on the new value; the EDM scalings assume $\sigma_{\mathrm d}\approx1$, which holds only after this standardization.
Each Bellman step takes two Adam steps (learning rate $10^{-3}$) on a batch of $256$ samples of the DSM loss \eqref{eq:dsm} and the consistency loss \eqref{eq:cons} with $\lambda_{\mathrm{cons}}=1$ and the weight $\lambda(\sigma)=\sigma^2$, each sample with its own noise level, drawn log-uniformly from $[0.002,80]$ in standardized units.
The target of \eqref{eq:cons} is the EMA network of the successor pair, and the online network under a stop-gradient where the successor is the pair being updated.
The level grid of \cref{thm:scoreTDinf}, with ratio $\gamma^{1/m}$, is a device of the analysis; the network is conditioned on a continuous noise level.

\paragraph{PopArt and the Return Box.}
Returns are normalized per reward dimension by PopArt, with exponential averages of weight $3\cdot10^{-4}$, a scale floor, and an output correction that preserves the predictions.
A normalized return $\hat y=(y-m)/s$ satisfies $\hat y=\hat r+\gamma\hat z$ with $\hat r=(r-(1-\gamma)m)/s$, the same affine bootstrap with the same $\gamma$, so \cref{lem:commute} and \eqref{eq:cons} hold verbatim in normalized space, which a nonlinear transform such as symlog would break.
The return box $B_R$ has the half-width $r_{\max}/(1-\gamma)$ per reward dimension; in the tabular variant $r_{\max}=\max_c(|m_c|+5s_c)$ over the reward channels $c$ with mean $m_c$ and standard deviation $s_c$, and in SAC-SVDRL it is the running maximum of the absolute soft reward, which keeps the soft bootstrap inside $B_R$.

\paragraph{Sampler and Estimators.}
The sampler is the reverse diffusion of \cref{def:ladderrisk} on the EMA score network (\cref{alg:revdiff}).
On the synthetic MDPs and resource gathering it takes Euler steps on $512$ geometric levels from $\sigma_u=80$ to $0.002$ in standardized units and returns $N=4096$ draws.
In the rare-event study SVDRL bootstraps from moment-matched draws of its EMA sampler, $4M$ per target set, and answers every query with $N=10^5$ draws.
On the chain the sampler takes $256$ Euler steps with a projection onto $B_R$ at every step, its mean is matched to a debiased exponential average (weight $1/128$) of the target means, and it returns $4096$ or $128$ draws.
Event probabilities, CVaR and decisions are read from the draws by plug-in, and conditional queries by rejection (\cref{cor:condquery}).

\begin{algorithm}[t]
\caption{Reverse-diffusion sampler, analyzed in \cref{thm:gen}}\label{alg:revdiff}
\begin{algorithmic}[1]
\Require $s_\psi$, $(x,a)$, DSM grid $\sigma_0>\dots>\sigma_{N_{\mathrm g}}$ with $\sigma_0^2\gg m_2(\mu;z_c)$ (\cref{rem:laddernecessity}), sample count $N$ (vectorized)
\State $z\sim\Norm(z_c,\sigma_0^2\Id)$, with $z_c\in B_R$ the standardizing mean at $(x,a)$, fixed before sampling (\cref{def:ladderrisk})
\For{$n=0,\dots,N_{\mathrm g}-1$}
  \State $h\leftarrow\sigma_n^2-\sigma_{n+1}^2$;\quad $z\leftarrow z+h\,s_\psi(z,x,a,\sigma_n)+\sqrt h\,\xi$,\quad $\xi\sim\Norm(0,\Id)$
\EndFor
\Ensure $\mathrm{proj}_{B_R}(z)$ (coordinatewise clip), denormalized
\end{algorithmic}
\end{algorithm}

\paragraph{Hybrid Rule.}
The rule of \cref{def:hybrid} runs on the particles at $(x,a)$ without ground truth.
First, particles closer than the field's smallest noise level are merged, i.e., connected components of the graph with an edge whenever $\|(z_i-z_j)/\veps\|<1$, with $\veps$ the smallest noise level in return units per dimension, each component replaced by its mean.
Single-linkage clusters are then found top-down on the Euclidean minimum spanning tree of the merged points: a cluster is split at its longest edge whenever that edge exceeds three times the larger standard deviation of its two sides, the standard deviation being the square root of the largest eigenvalue of the count-weighted covariance.
The ratio $r$ is the shortest accepted cut over the largest within-cluster standard deviation, and $r=0$ for a single cluster.
The rule reports the particles if $r>6$ or if a cluster of at least two particles has a per-dimension standard deviation below $\veps$ in every dimension, and the sampler draws otherwise.
The threshold $6$ was fixed in advance, and the thresholds $4$ and $10$ were evaluated on tuning seeds only.
The second clause was added after resource gathering exposed atoms of coincident particles, and it changes no entry of \cref{tab:main}.

\paragraph{SAC-SVDRL.}
SAC-SVDRL uses two distributional critics with $M$ particles each and a score network per critic, all four networks with EMA targets.
The bootstrap targets use the rank-wise minimum of the two target critics' sorted particles, with the soft reward $r-\gamma\alpha\log\pi(a'\mid x')$ and one-step targets.
A moment head regressed onto the mean and the central second moment of the bootstrap targets sets the standardization of the score network, which is conditioned on $(\sigma,x,a)$ by FiLM, and the moment-matching target; the noise levels and the SVGD step are scaled by its predicted standard deviation, so that the annealing acts on a law of unit spread.
The Stein move takes $T_\ell=2$ steps at each of the ten levels, and each critic is regressed onto its transported cloud by matching ranks,
\begin{equation}\label{eq:critic}
  L_{\mathrm{critic}}(\theta)=\textstyle\sum_i\big\|z_{(i)}^\theta(x,a)-\stopgrad{z_{(i)}^{\mathrm{SVGD}}(x,a)}\big\|^2,
\end{equation}
where $z_{(i)}$ is the $i$-th smallest particle of the one-dimensional return, which is the $\W2$-optimal matching in $d=1$.
For a single Euler step of the transport the gradient of \eqref{eq:critic} is, by the chain rule, $-2\veps\sum_iJ_i\T\hat\phi^*(z_i^\theta)$ with the Jacobians $J_i$ of the critic's particles, the direction of amortized SVGD \citep{feng2017learning}.
The policy minimizes $\Ep{x}{\Ep{a\sim\pi(\cdot|x)}{\alpha\log\pi(a|x)-Q(x,a)}}$ with $Q$ the smaller of the two critics' particle means, the entropy coefficient $\alpha$ is adapted as in SAC, and every head uses the same running observation normalization.
HalfCheetah uses this default configuration; on Walker2d, Hopper and Ant moment matching takes its mean from the bootstrap targets of the sampled transitions instead of the moment head, which gives the critic the temporal-difference signal directly.
A variant with a log-variance moment head removes the floor of the linear variance head that made one Ant seed diverge (\cref{app:res-mujoco}).

\subsection{Hyperparameters}\label{app:hparams}
Configurations were chosen by their performance on tuning seeds: five for the synthetic MDPs, the chain and resource gathering, and two for the rare-event study (\cref{app:protocols}).
Every reported tabular, chain and rare-event number comes from seeds that the selection did not use.
For MuJoCo, SVDRL uses the default configuration on HalfCheetah and the variant that takes the moment-matching mean from the bootstrap targets on Walker2d, Hopper and Ant. On Walker2d and Ant both variants ran on five seeds, and the one with the higher IQM return of \cref{tab:mujoco} was declared and extended to ten seeds; the other is reported in \cref{tab:mujoco_returns,tab:mc_full}. Hopper ran the declared variant only.
\Cref{tab:hparams} lists the most important settings of the reported runs per experiment, read from their configurations.
All agents are implemented in JAX \citep{jax2018github} with Equinox \citep{kidger2021equinox}, and jaxtyping \citep{kidger2022jaxtyping} checks the shapes and dtypes of their arrays.
Code that reproduces every run will be released upon acceptance.
\begin{table*}[t]
  \centering
  \caption{\emph{Hyperparameters.} Settings of the reported runs per experiment, read from their configurations. Configurations: the number of distinct ones, tuning and ablations included; a list gives the values they take; --: not a setting of the method. An annealing start at the highest level runs the whole schedule.}
  \label{tab:hparams}
  \small
  \setlength{\tabcolsep}{4pt}
  \adjustbox{max width=\textwidth}{%
  \begin{tabular}{lc*{6}{c}}
  \toprule
   &  & Synthetic & Rare &  & Resource &  & Pixel \\
   &  & MDPs & events & Chain & gathering & MuJoCo & maze \\
  \midrule
  \multicolumn{8}{l}{\emph{SVDRL}} \\
  Configurations &  & 62 & 51 & 14 & 18 & 9 & 6 \\
  Particles & $M$ & $64d$${}^{a}$ & $32$, $128$, $512$, $2048$ & $128$ & $32$, $128$, $512$ & $64$ & $200$ \\
  Sampler draws & $N$ & $4096$ & $10^5$ & $128$, $4096$ & $4096$, $10^5$ & -- & -- \\
  Resampled fraction & $\rho$ & $0$, $1$ & $0$, $0.5$ & $0$, $1$ & $0$, $1$ & -- & -- \\
  Relaxation & $\alpha$ & $0$, $10^{-3}$, $0.1$, $1$ & $1$ & $0$, $0.1$, $1$ & $10^{-3}$, $1$ & -- & -- \\
  Annealing start & $\sigma_{\mathrm{start}}$ & $1$, $5$ & $0.6$, $5$ & $1$, $5$ & $1$, $5$ & $5$ & -- \\
  SVGD steps per level & $T_\ell$ & $10$ & $10$ & $10$ & $10$ & $2$ & $10$ \\
  Annealing levels &  & $10$ & $10$ & $10$ & $10$ & $10$ & $10$ \\
  Level range & $\sigma_\ell$ & $[0.1,5]$ & $[0.1,5]$ & $[0.1,5]$ & $[0.1,5]$ & $[0.1,5]$ & $[0.1,5]$ \\
  Consistency weight & $\lambda_{\mathrm{cons}}$ & $1$ & $1$ & $1$ & $1$ & $1$ & $0$ \\
  Score learning rate &  & $10^{-3}$ & $10^{-3}$ & $10^{-3}$ & $10^{-3}$ & $10^{-3}$ & $10^{-3}$ \\
  Score batch size &  & $32$, $256$ & $32$, $256$ & $32$ & $256$ & -- & -- \\
  \midrule
  \multicolumn{8}{l}{\emph{Competitors}} \\
  Methods &  & 10 & 7 & 6 & 7 & 4 & 3 \\
  Configurations &  & 191 & 115 & 100 & 51 & 16 & 9 \\
  Particles & $M$ & $64d$${}^{a}$ & $32$, $128$, $512$, $2048$ & $128$, $512$, $2048$, $4096$ & $32$, $128$ & $64$ & $200$ \\
  Quantiles &  & $64$ & $1000$ & -- & $1000$ & $32$ & -- \\
  C51 atoms &  & $101$ & -- & -- & -- & -- & -- \\
  Learning rate${}^\dagger$ &  & $2.5\cdot10^{-3}$ to $0.3$${}^{b}$ & -- & $2.5\cdot10^{-3}$ & -- & -- & -- \\
  \midrule
  \multicolumn{8}{l}{\emph{All methods}} \\
  Discount & $\gamma$ & $0.95$ & $0.95$ & $0.9$ & $0.95$ & $0.99$ & $0.99$ \\
  Batch size &  & -- & -- & -- & -- & $256$ & -- \\
  Critic learning rate &  & -- & -- & -- & -- & $3\cdot10^{-4}$ & $10^{-5}$ \\
  \bottomrule
  \end{tabular}}
  \par\smallskip
  \parbox{\textwidth}{\footnotesize ${}^{a}$\,$M=64d$ on an MDP of dimension $d$; the compute-matched comparison adds $M\in\{512,4096\}$ and the dimension sweep $M\in\{140,200,520\}$. ${}^{b}$\,The values $\{2.5\cdot10^{-3},3\cdot10^{-3},5\cdot10^{-3},0.01,0.03,0.1,0.3\}$. ${}^\dagger$\,The variants of \citet{debes2026multivariate}. Multi-objective MuJoCo runs IQN with $32$ quantiles and the other heads with $M=64$ particles, and the defaults of the code otherwise.}
\end{table*}

\section{Protocols}\label{app:protocols}

This section specifies the environments, the statistics and the metrics of \cref{sec:experiments}, and the measured compute. Gymnasium and MO-Gymnasium are released under the MIT license and MuJoCo under the Apache~2.0 license; we reimplement the chain of \citet{debes2026multivariate} and the maze of \citet{zhang2021distributional} from their published definitions.

\paragraph{Synthetic MDPs.}
All synthetic MDPs have $\gamma=0.95$ and Gaussian rewards, deterministic at the root states, so their return laws are Gaussian mixtures.
MDP1 has a single absorbing state with reward $\Norm(1,0.5^2)$.
In MDP2 the root has reward $0$ and branches with probabilities $0.3$ and $0.7$ into absorbing states with rewards $\Norm(1,0.3^2)$ and $\Norm(2,0.4^2)$.
MDP3 is two-dimensional with three modes, reached from the root with probabilities $0.4$, $0.35$ and $0.25$.
MDP4 is five-dimensional with three modes reached with the same probabilities, whose mean rewards are $3$ times standard normal draws and whose covariances have eigenvalues drawn uniformly from $[0.05,0.5]$ in a random orthonormal basis, all drawn with the fixed random key $42$, which is part of the specification; the dimension sweep builds the MDP4 family in the same way for $d\in\{2,3,5,8,10\}$.

MDP5 to MDP8 have one decision state whose actions lead to absorbing reward channels with i.i.d.\ Gaussian rewards from $t=0$ on, so each channel's return is Gaussian with $1/(1-\gamma)$ times the mean and $1/(1-\gamma^2)$ times the covariance of its reward.
MDP5 ($d=2$) has four deterministic actions with mean rewards $(3,0.5)$, $(2,2)$, $(0.5,3)$ and the dominated $(1,1)$, all with covariance $0.04\,\Id$.
MDP6 ($d=2$) has a safe action with mean reward $(2,1)$, a risky action that reaches $(4,0.8)$ or $(0.2,1)$ with probability $\frac12$ each and variance $0.09$ in the first objective, and an action with $(0.5,3)$; all other variances are $0.04$.
MDP7 ($d=3$) has five deterministic actions with mean rewards $(3,0.5,0.5)$, $(0.5,3,0.5)$, $(0.5,0.5,3)$, $(2,2,0.3)$ and $(0.3,2,2)$ and full covariances, with diagonal entries $0.03$ to $0.06$ and correlations of both signs.
MDP8 ($d=2$) has two actions with the same mean reward $(2,2)$ and the covariances $\begin{psmallmatrix}0.5&\mp0.45\\\mp0.45&0.5\end{psmallmatrix}$, hedged and concentrated, and a low-variance action with mean reward $(0.5,3)$ and covariance $0.04\,\Id$.
The multimodal evaluated state--action pairs are those of MDP2, MDP3, MDP4 and MDP6.
Every run takes $8000$ Bellman steps on MDP1 and MDP2 and $16\,000$ on the other MDPs, with an evaluation every $800$ steps up to $80\,\%$ of training and every $100$ steps afterwards.
\paragraph{Choice of the Synthetic MDPs.}
Each MDP probes one property of the return law on which the critic families of \cref{tab:capabilities} differ, and every return law is a Gaussian mixture, so every error is measured against the exact law. MDP1 and MDP2 are one-dimensional. MDP1 bootstraps through a self-loop, where any per-update bias in the spread of the cloud is amplified by the factor $1/(1-\gamma^2)$, and MDP2 has two modes with unequal masses and widths. MDP3 and MDP4 have three separated modes in two and five dimensions, with masses $0.4$, $0.35$ and $0.25$, which $M=64d$ particles hold exactly only at $d=5$ and $d=10$, so the MDP4 family separates the resolution floor of \cref{prop:resolution} from the growth of the error with the dimension. MDP5 to MDP8 have one decision state each and differ in what separates its actions. In MDP5 three Pareto-optimal actions and a dominated one differ only in their means, the baseline for decisions. In MDP6 a safe and a risky action have almost the same mean, $(2,1)$ against $(2.1,0.9)$, and differ in their tails. MDP7 has correlated objectives in three dimensions. In MDP8 two actions have the same mean and the same marginals and differ only in the sign of the correlation, which no model of the per-dimension marginals can resolve. Together the eight MDPs cover unimodal laws, on which the within-mode shape decides (MDP1, MDP5, MDP7 and MDP8), and separated modes, on which the mode masses decide (MDP2, MDP3, MDP4 and MDP6), in one to five dimensions and up to ten in the dimension sweep.

Gaussian rewards make the returns unbounded, so \cref{asm:mdp} holds for the synthetic MDPs only up to a truncation.
The return box of \cref{app:method} keeps every channel's return within $5\big((1+\gamma)/(1-\gamma)\big)^{1/2}$ standard deviations of its mean in each coordinate, about $31$ at $\gamma=0.95$, so the true laws and their truncations, which the theory covers, coincide for every practical purpose.

\paragraph{Rare-Event Chain.}
The rare-event chain has $d=2$, $\gamma=0.95$ and i.i.d.\ Gaussian per-step rewards with covariance $0.04\,\Id$ in every channel.
At the root, the risky action reaches a bulk channel with mean reward $(3,1)$ with probability $1-p$ and a rare channel with mean reward $(-4,2)$ with probability $p$, and the safe action reaches a channel with mean reward $(2.5,1.5)$.
The rare channel is adverse in the first coordinate and shifted in the second, so conditional queries on the catastrophe are not trivial.
The branching state is preceded by $D\in\{0,2\}$ deterministic transient states with reward $0$, so for $D=2$ the rare branch is decided two bootstrapping steps below the root, whose return law is $\gamma^D$ times that of the branching state.
At $D=0$ the return means are $(60,20)$ for the bulk, $(-80,40)$ for the rare and $(50,30)$ for the safe channel, with standard deviation $0.64$ per coordinate.
We use $p\in\{0.002,0.005,0.01\}$, so $p<1/M$ already at $M=128$, and the event $Z_1\leq-10\gamma^D$, between the bulk and the catastrophe.
The ground truth is in closed form and agrees with $2\cdot10^5$ Monte Carlo returns.
Every run takes $16\,000$ Bellman steps for $D=0$ and $28\,800$ for $D=2$, with $50$ evaluations.

\paragraph{Chain.}
The chain of \citet{debes2026multivariate} has $K=10$ reward dimensions, forward probability $0.9$, $\gamma=0.9$ and the rewards $e_{S'}+\sigma_r\varepsilon$ with $\sigma_r\in\{0,0.05,0.2,0.5\}$.
Every method uses $M=128$ particles and near-exact TD targets for $5\cdot10^4$ updates, and the ground truth uses $10^6$ Monte Carlo rollouts per start state.
The six variants of \citet{debes2026multivariate} run on their own target sets and on ours, with RAdam at learning rate $2.5\cdot10^{-3}$, gradient clipping at $0.1$ and $64$ projections; with more particles, sliced $\W2$ and sliced Cram\'er at $M\in\{512,2048,4096\}$ and MMD at $M=512$, for $\sigma_r\in\{0.05,0.2\}$.
SVDRL replaces the cloud by the stratified target set with $\alpha=0.1$ and trains the score network of \cref{app:method} alongside.

\paragraph{Resource Gathering.}
The resource-gathering environment of MO-Gymnasium \citep{felten_toolkit_2023} is a $5\times5$ grid with $d=3$, which we reimplement with the attack probability as a parameter, $p\in\{0.1,0.02,0.005\}$, where the package fixes $0.1$.
We use $\gamma=0.95$ and three fixed policies from value iteration (safe, mixed and risky), and the ground truth comes from exact enumeration.
SVDRL runs its single tabular configuration without tuning at $M\in\{32,128\}$, reported as particles, sampler and hybrid, and the rare-event configuration with $N=10^5$ sampler draws; the particle critics run at $M\in\{32,128\}$ (Sinkhorn at $32$ only), and a quantile critic on the oracle scalarization with its learning rate tuned on the five tuning seeds.
Every run takes $15\,680$ Bellman steps with $50$ evaluations.

\paragraph{MuJoCo.}
HalfCheetah, Walker2d, Hopper and Ant (v5) \citep{towers2024gymnasium} run for $10^6$ environment steps with $\gamma=0.99$ and $M=64$ particles, with an evaluation of ten deterministic episodes every $10^4$ steps.
The heads are SAC and SAC with IQN, MD3QN, energy MMD and SVDRL critics on the same trunk.
For the critic shape, the policy at $10^6$ steps is frozen, and $64$ fixed probes each receive $1000$ Monte Carlo rollouts of $1000$ steps of the soft return; on Walker2d only the probes that restart exactly are used.

\paragraph{Statistics and Selection.}
In the tabular experiments and on the chain a seed's error is its mean over the evaluations in the last $20\,\%$ of training (on the chain, the last eight of $40$), and we report the interquartile mean (IQM) or the median over seeds with $95\,\%$ percentile-bootstrap intervals.
Configurations are selected on tuning seeds and reported on final seeds that the selection did not use: five tuning and ten final seeds for the synthetic MDPs, the chain and resource gathering, and two tuning and ten final seeds for the rare-event study, which adds ten fresh seeds.
On the synthetic MDPs and the chain a comparison is a win or a loss when the unpaired Welch $t$ statistic of SVDRL's errors against the competitor's is at most $-2$ or at least $2$, and a tie otherwise; the best competitor is the row with the lowest seed mean, per MDP, block and framing.
In the rare-event study SVDRL is counted as more accurate when its IQM error is smaller than that of the best of the six particle critics at the same $M$, answering from its particles or from a Gaussian kernel density estimate on them (Scott's bandwidth), without a significance test.
The three framings of the synthetic MDPs compare SVDRL with the competitors on their own target sets, with every row including the competitors on our target set (marked ${}^{*}$), and with the best competitor at $M\in\{512,4096\}$ on the four unimodal MDPs (\cref{tab:cmtab}).
No competitor is limited by the training budget: on the eight MDPs none improves significantly over the last $20\,\%$ of its updates, and SVDRL reaches its plateau early in training.
On MuJoCo a run's return is its mean evaluation return over $800$k to $1$M steps ($21$ evaluations), and we report per environment and head the IQM over five to ten seeds with a $95\,\%$ percentile-bootstrap interval; a run stopped by the divergence guard counts with the evaluations it logged.

\paragraph{Metrics.}
On the synthetic MDPs, mass is the total variation between the mode masses of a cloud and the ground-truth weights, next to its floor $\min_k\frac12\sum_i|k_i/M-w_i|$ over integer allocations $k$, and shape is the within-mode error $\sum_iw_i\W2^2$, in closed form for $d=1$, by exact assignment against $4096$ ground-truth draws for $d=2$, and by quantile sliced $\W2^2$ with $128$ projections for $d>2$.
The full $\W2^2$ charges a mass error $\epsilon$ across a gap $\Delta$ as $\epsilon\Delta^2$, so on separated modes it measures mass rounding rather than shape (\cref{tab:fullw2,fig:dimsweep}).
Sliced $\W2$ has two independent estimators, the quantile estimator and a CDF-based estimator that integrates the analytic projected distribution functions of the ground truth \citep{vauthier2026highly}, and a disagreement between them fails the asset pipeline.
In the rare-event study the queries are CVaR down to $\alpha=0.001$, $|\log\hat P/P|$ of the event (with a floor of $10^{-6}$ on $\hat P$), chance-constrained decisions over thresholds $\delta$, conditional tails, entropic risk and the expected scalarized return, ten rare-event and two bulk query columns, i.e., $60$ rare-event and $12$ bulk comparisons over the six environments.
On the chain we measure $\W2$ by exact assignment, sliced and per-coordinate $\W2^2$, the mean error and the dispersion against $10^4$ fresh Monte Carlo rollouts, and the squared energy distance against the $10^6$-rollout bank.
On resource gathering we measure $|\log\hat P/P|$ of death, the error rates of the chance-constrained and the CVaR decisions, CVaR errors, the error of $\Ex{\text{treasure}\mid\text{alive}}$, the squared energy distance and sliced $\W2^2$, and $|\log\hat P/P|$ of the time-averaged $\hat P$.
On MuJoCo the critic shape is the $\W2$ distance between the critic's return law and the Monte Carlo law, each centered at its own mean, over the Monte Carlo standard deviation, exact in $d=1$; per run we take the median over the two critics and then over the probes, and per environment and head the IQM over seeds with a $5000$-resample bootstrap.
Every head's mean is biased by the twin-critic minimum, so the centered distance is the one that compares the heads.
Every evaluation draws fresh randomness, and a size mismatch between a cloud and its reference fails loudly instead of subsampling.

\subsection{Compute}\label{app:compute}
The Stein move costs at most $(L+1)T_\ell M$ score evaluations per state--action pair and update, plus $\calO(M^2)$ kernel evaluations per step, independently of the horizon.
A sampler run of $N$ draws costs $NN_{\mathrm g}$ score evaluations, vectorized over the draws, and it runs only at evaluation time and inside sampler bootstrapping.
Acting takes one forward pass of the critic, which returns the $M$ Stein particles.
Every GPU run used an NVIDIA GeForce RTX 4090, and training runs in single precision.

\section{Additional Results}\label{app:results}

This section gives the full tables and figures behind \cref{sec:experiments}, in its order, and the checks of the theory on trained networks.

\subsection{Rare Events}\label{app:res-rare}

\Cref{tab:re1main} lists every estimator behind the comparisons of \cref{sec:experiments} at $p=0.005$ and $D=2$.
SVDRL's sampler is more accurate than every particle critic in every column, from their atoms or from a kernel density estimate.
The quantile critic on the oracle scalarization matches it at CVaR$_{0.01}$ of that scalarization but not at CVaR$_{0.001}$, and it answers no other scalarization.
SVDRL's own Stein particles share the floor of the particle critics, so the gain comes from the sampler.

\begin{table*}[t]
  \centering
  \caption{\emph{Rare-event queries.} Rare-event chain, $p=0.005$, $D=2$, $M=128$: IQM over the ten final seeds with its $95\,\%$ interval below; bold: inside the interval of the best IQM. Queries: CVaR of the first coordinate and of the equal-weight scalarization, the catastrophe's probability, the conditional means of $Z_2$ given $Z_1\leq\tau$ and at the rare mode, the entropic risk, and the accuracies of the chance-constrained decision and of the decision by expected scalarized return (ESR). SVDRL's sampler answers with $N=10^5$ draws, and Langevin dynamics on the partial score answers the conditional mean at the rare mode (quadrature of the energy: 1.7 for particle bootstrapping); ${}^{\ast}$: the reported configuration. Particle critics answer from their atoms or from a kernel density estimate (KDE) on them.}
  \label{tab:re1main}
  \small
  \adjustbox{max width=\textwidth}{\input{tables/tab_re1_appendix.tex}}
\end{table*}

\begin{figure*}[t]
  \centering
  \includegraphics{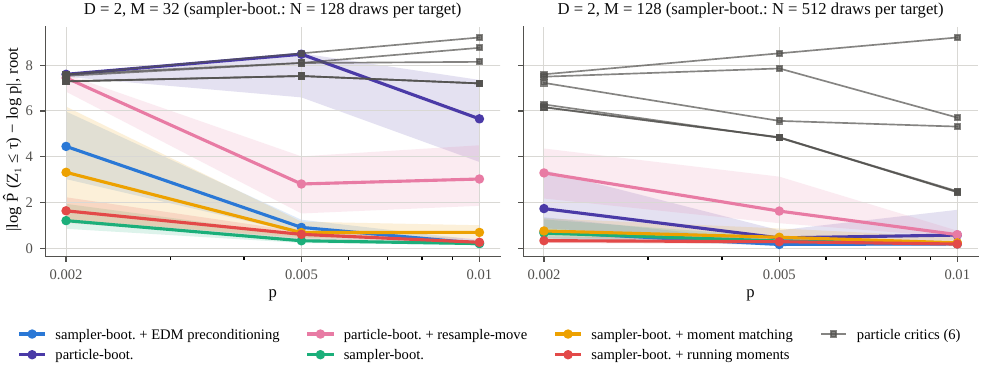}\\[2pt]
  \includegraphics{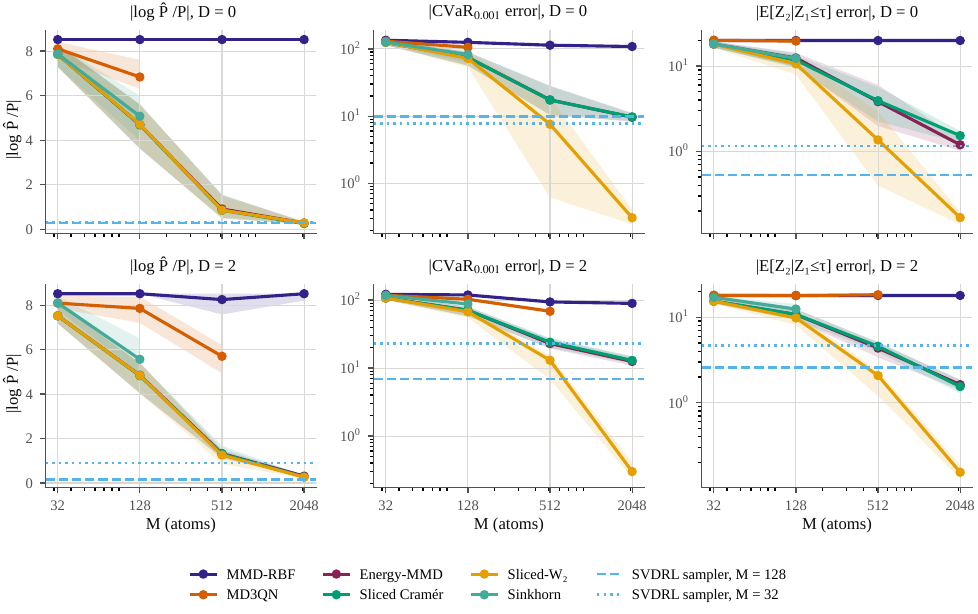}
  \caption{\emph{Propagation and particle count.} Rare-event chain, final seeds. (a) Error of the catastrophe's probability at the root against $p$ for $D=2$, with $M=32$ and $M=128$, for SVDRL's variants and the six particle critics: with sampler bootstrapping in the reported configuration (moment-matched, EDM preconditioning) the catastrophe survives the recursion for $p\geq0.005$, while particle bootstrapping and the particle critics lose it. (b) Errors of the particle critics against their particle count $M$ at $D=0$ (top) and $D=2$ (bottom), with SVDRL's sampler at $M=128$ and $M=32$ as horizontal lines: sliced $\W2$ needs about $10/p=2048$ particles to resolve the location and the conditional tail of the catastrophe.}
  \label{fig:capabilities}
\end{figure*}

\begin{figure*}[t]
  \centering
  \includegraphics{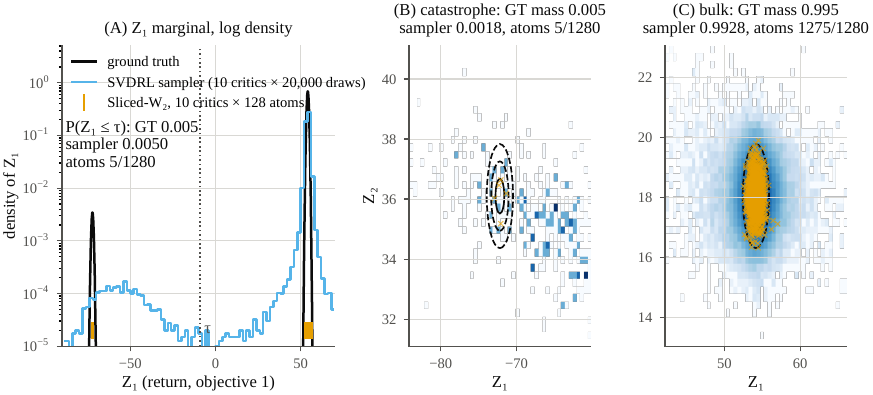}
  \caption{\emph{Return law of the risky root action.} In the rare-event chain, a catastrophe of mass ${p=0.005<1/M}$ lies two steps below the root. (A) Log density of the first coordinate: the true law, the sampler law of ten SVDRL critics and the atoms of ten sliced $\W2$ critics ($M=128$). (B) The catastrophe and (C) the bulk. The sampler gets the probability of the catastrophe right but spreads the rare mode too wide; particles can only hold whole atoms of mass $1/M$ there.}
  \label{fig:rarelaw}
\end{figure*}

Particle critics resolve the catastrophe only with about $10/p$ particles (\cref{fig:capabilities}b).
With $M=2048$ sliced $\W2$ reaches the CVaR$_{0.001}$ error 0.30, below SVDRL's 6.9, and answers the conditional tail more accurately, while SVDRL keeps the more accurate mass, chance-constrained decisions and bulk (\cref{sec:experiments}).
SVDRL's CVaR error at the lowest levels reflects the width of the far rare mode (\cref{fig:rarelaw}), whose mass the sampler gets right but which the score network spreads too wide, 13 to 15 times in a static fit to draws of the true law (\cref{app:limits}).

The catastrophe survives the recursion only under sampler bootstrapping (\cref{fig:capabilities}a), as \cref{prop:resolution-full}(v) predicts: a target built from downstream particles gives the catastrophe a multiple of $1/M$, which is zero for $p<1/M$ in most updates, whereas a target built from the downstream sampler inherits its event error.
At $p=0.002$ and $M=32$ the reported configuration loses the catastrophe as well.

\subsection{Eight Synthetic MDPs}\label{app:res-mdp}

\Cref{tab:sel1objects} separates the objects behind the hybrid of \cref{tab:main}.
The hybrid reports the sampler draws on the unimodal MDPs and mostly the particles on the separated modes of MDP2, MDP3, MDP4 and MDP6, where the particles have the lower mass error.
The particles are also better in shape on MDP4 and MDP6, whereas on MDP2 and MDP3 the sampler would be.
The separation test alone ($r>6$) makes the same choices on every MDP.

\begin{table*}[t]
  \centering
  \caption{\emph{The objects behind the hybrid.} Synthetic MDPs, final seeds: every SVDRL object behind \cref{tab:main}, per MDP and error (mass: total variation of the mode masses; shape: within-mode $\W2^2$), in units of $10^{-3}$. The hybrid, the sampler ($N=4096$), the particles ($M=64d$), the object chosen on the tuning seeds, and the hybrid with the separation test $r>6$ alone, each with its ratio to the best competitor of all rows, including those on our target set; bold: a win, underlined: a tie by Welch's test; --: the competitor's error is zero. The hybrid reports the particles where their separation ratio exceeds $6$ after merging particles closer than the smallest noise level, or where a cluster of at least two particles is narrower than it, and the sampler draws otherwise; the last column is the object it reports. Wins/ties/losses over the MDPs against the competitors as specified (own) and including those on our target set (all).}
  \label{tab:sel1objects}
  \small
  \adjustbox{max width=\textwidth}{\input{tables/tabA_sel1_compact.tex}}
\end{table*}

\Cref{tab:mainmass} gives the total variation of the mode masses on the four MDPs with separated modes, next to the within-mode shape of \cref{tab:main}.

\begin{table}[t]
  \centering
  \caption{\emph{Mode masses on the synthetic MDPs.} Total variation of the mode masses on the four MDPs with separated modes, in units of $10^{-3}$, averaged over the last $20\,\%$ of training; floor: the $1/M$ limit of the mass error. Ratios to the best competitor as in \cref{tab:main}. Underlined: a tie by Welch's test; --: the competitor's error is zero.}
  \label{tab:mainmass}
  \small
  \input{tables/tab1_mass.tex}
\end{table}

\Cref{tab:tab1comp} names the best competitor behind each ratio of \cref{tab:main,tab:mainmass}, with its target set, learning rate and error.

\begin{table}[t]
  \centering
  \caption{\emph{Best competitors.} The competitor behind each ratio of \cref{tab:main,tab:mainmass}, per MDP, error and comparison, with its error in units of $10^{-3}$. Own: the competitors as specified; all: also the competitors given our target set; $M=4096$: the compute-matched comparison on the unimodal MDPs. $\dagger$: an objective of \citet{debes2026multivariate}, implemented in our agent; base: the learning rate of the competitor's base configuration (\cref{tab:hparams}).}
  \label{tab:tab1comp}
  \small
  \input{tables/tab1_competitors.tex}
\end{table}

On the unimodal MDPs the hybrid's wins come from $N=4096$ sampler draws, and they vanish at matched compute (\cref{tab:cmtab}): competitors with $M=4096$ particles on our target set beat the sampler on all four.
SVDRL's own particles without the Stein move beat the sampler at $M=4096$ as well, so at equal size the particle sets win on unimodal laws.

\begin{table*}[t]
  \centering
  \caption{\emph{Compute-matched comparison.} Within-mode shape on the unimodal MDPs: the hybrid's sampler ($4096$ draws) against the best competitor on our target set and against SVDRL's transport-free particles ($\alpha=0$), at $M\in\{64d,512,4096\}$; ${}^\dagger$: an objective of \citet{debes2026multivariate}, implemented in our agent. Error mean $\pm$ standard error over the final seeds, and the hybrid's ratio to each row with its Welch $t$ (negative: the hybrid is better); bold: the lowest mean error of each MDP.}
  \label{tab:cmtab}
  \adjustbox{max width=\textwidth}{\input{tables/tabA_cmtab.tex}}
\end{table*}

\Cref{tab:fullw2} reports the full $\W2^2$, which charges mass and shape together; on separated modes it is dominated by mass rounding, which every method shares (\cref{fig:dimsweep}), so \cref{tab:main} reports mass and shape apart.

\begin{table}[t]
  \centering
  \caption{\emph{Mass and shape together.} Full $\W2^2$ per MDP: SVDRL's selected and single configuration against the best competitor on its own target set and among every row, with the selected configuration's ratio and the Welch decision; ${}^\dagger$: an objective of \citet{debes2026multivariate}, implemented in our agent, with its learning rate where it is not the base one; ${}^{*}$: on our target set.}
  \label{tab:fullw2}
  \adjustbox{max width=\linewidth}{\input{tables/tabA_fullw2.tex}}
\end{table}

Mass rounding also decides the full error across the reward dimension (\cref{fig:dimsweep}).
The MDP4 family has the mode masses $0.40$, $0.35$ and $0.25$ for every $d$, which are multiples of $1/M$ for $M=64d$ only at $d=5$ and $d=10$, and exactly there the full error of SVDRL and of the competitors on our target set is small.
SVDRL's ratio to the best competitor shows no trend with $d$ (\cref{fig:dimsweepratio}).

\begin{figure*}[t]
  \centering
  \includegraphics{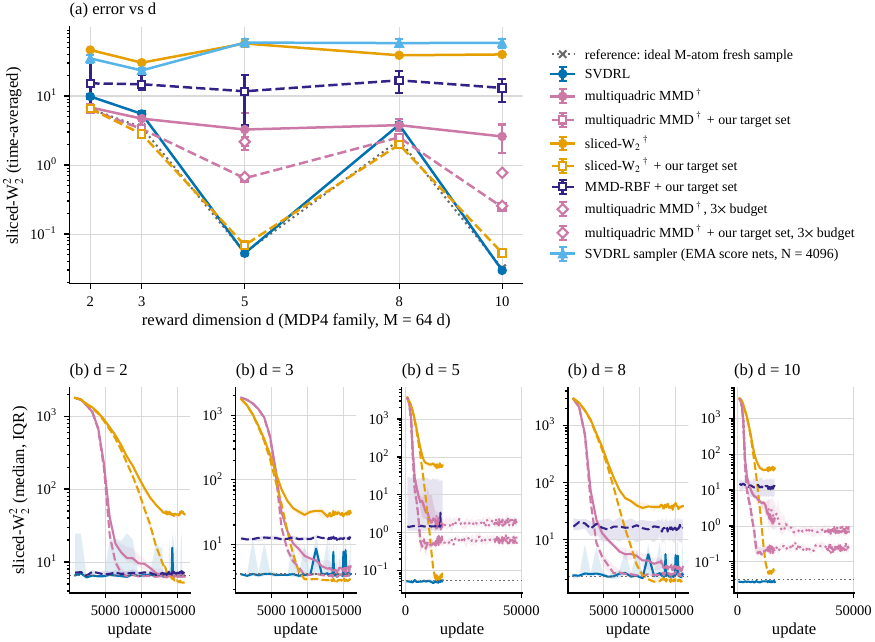}
  \caption{\emph{Scaling with the reward dimension.} MDP4 family across the reward dimension $d$, $M=64d$, ten final seeds. (a) Time-averaged sliced $\W2^2$ against $d$ for SVDRL and the competitors on their own and on our target set. (b) Learning curves per $d$, median and interquartile range. The full error is small where the mode masses are multiples of $1/M$, at $d=5$ and $d=10$, and dominated by mass rounding elsewhere. ${}^\dagger$: an objective of \citet{debes2026multivariate}, implemented in our agent; $3\times$ budget: three times the training steps.}
  \label{fig:dimsweep}
\end{figure*}

\begin{figure*}[t]
  \centering
  \includegraphics{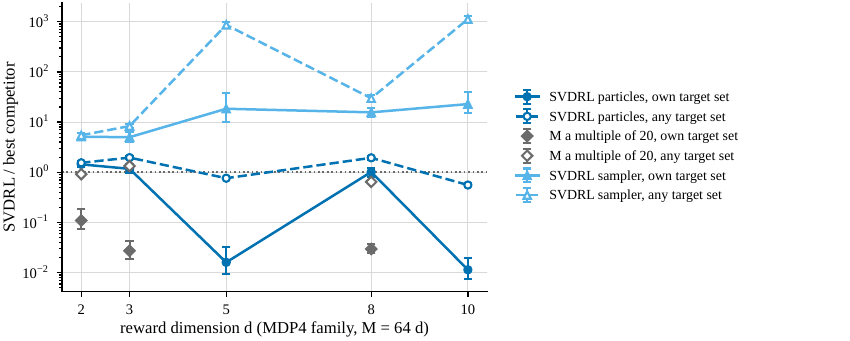}
  \caption{\emph{SVDRL against the best competitor across the reward dimension.} MDP4 family, $M=64d$, ten final seeds: SVDRL's error over the best competitor's (ratio of means with its $95\,\%$ interval; below one, SVDRL is better), for its particles and its sampler, against the competitors on their own target set and on any target set. $M$ a multiple of $20$: the allocation control, at which every mode mass is a multiple of $1/M$.}
  \label{fig:dimsweepratio}
\end{figure*}

The relaxation $\alpha$ of the Stein move and the stratified target set are ablated on every MDP in \cref{tab:alpha}, and the measured gap $\veps_{\mathrm{St}}$ of \cref{thm:e2e} in \cref{tab:epsst}.
On the unimodal MDPs $\veps_{\mathrm{St}}$ equals the particles' true error within 1 to 12\,\%; on separated modes it flags a disagreement between the particles and the sampler but can exceed both of their errors.

\input{tables/tabA_alpha.tex}
\input{tables/tabA_epsst.tex}

The Stein move is specific where the selected configuration uses it (\cref{tab:rj1}): Langevin dynamics along the same score, a Gaussian jitter of the same size and no move are all worse, with every interval of the ratio above one.
Without repulsion the cloud collapses onto few distinct particles on MDP5 and MDP8.

\begin{table}[t]
  \centering
  \caption{\emph{Stein move ablation.} The Stein move against other moves on MDP1, MDP5 and MDP8, whose selected configurations use it: Langevin dynamics along the same score, a Gaussian jitter of matched size, the Stein move without repulsion and no move. Within-mode shape error (mean $\pm$ standard error over the final seeds), its ratio to the Stein move with the Welch $t$, the fraction of distinct particles, and the nearest-neighbor distance over the cloud's standard deviation.}
  \label{tab:rj1}
  \adjustbox{max width=\linewidth}{\input{tables/tabA_rj1.tex}}
\end{table}

Removing the consistency loss raises the error significantly on MDP1, MDP3, MDP4 and MDP8 and never lowers it significantly (\cref{tab:cons}), in line with the lower variance of the consistency target at small noise levels (\cref{rem:consvar}).

\begin{table}[t]
  \centering
  \caption{\emph{Consistency ablation.} Full $\W2^2$ of SVDRL with the fully annealed transport, with ($\lambda_{\mathrm{cons}}=1$) and without ($\lambda_{\mathrm{cons}}=0$) the consistency loss, mean over ten seeds, and the paired difference with its $95\,\%$ Student-$t$ interval; exact $\W2^2$ for $d\leq2$ and CDF-based sliced $\W2^2$ for MDP4 and MDP7. A negative difference favors the consistency loss.}
  \label{tab:cons}
  \small
  \begin{tabular}{lrrr}
    \toprule
    MDP & $\lambda_{\mathrm{cons}}=1$ ($\downarrow$) & $\lambda_{\mathrm{cons}}=0$ ($\downarrow$) & difference [$95\,\%$ interval] \\
    \midrule
    MDP1 & 0.136 & 0.162 & \num{-0.027} $\sd{\text{[\num{-0.053}, \num{-0.0009}]}}$ \\
    MDP2 & 10.1 & 9.4 & 0.67 $\sd{\text{[\num{-1.65}, 2.99]}}$ \\
    MDP3 & 67.4 & 101.2 & \num{-33.8} $\sd{\text{[\num{-57.9}, \num{-9.6}]}}$ \\
    MDP4 & 230.8 & 270.5 & \num{-39.7} $\sd{\text{[\num{-70.4}, \num{-9.0}]}}$ \\
    MDP5 & 0.0421 & 0.0428 & \num{-0.0006} $\sd{\text{[\num{-0.0036}, 0.0023]}}$ \\
    MDP6 & 71.8 & 103.5 & \num{-31.7} $\sd{\text{[\num{-102.8}, 39.3]}}$ \\
    MDP7 & 0.0241 & 0.0242 & \num{-0.0001} $\sd{\text{[\num{-0.0003}, 0.0001]}}$ \\
    MDP8 & 1.21 & 2.12 & \num{-0.91} $\sd{\text{[\num{-1.05}, \num{-0.78}]}}$ \\
    \bottomrule
  \end{tabular}
\end{table}

\subsection{Chain}\label{app:res-chain}

On the atomic chain, $\sigma_r=0$, the persistent particles of \citet{debes2026multivariate} are the most accurate in $\W2$ (\cref{tab:chain}), as \cref{lem:scorelip} predicts, and the sampler's Euler steps escape near the atoms, where the score is stiff (\cref{proof-anatomy}).
With reward noise the moment-matched sampler with $4096$ draws has the lowest energy distance and mean error at every $\sigma_r\geq0.05$.
The stratified target set is the first component that transfers: on it, every variant of \citet{debes2026multivariate} has a lower energy distance and mean error than on its own target set at every noise level.

\begin{table*}[t]
  \centering
  \caption{\emph{Chain with reward noise.} The chain of \citet{debes2026multivariate} with $K=10$ reward dimensions and reward noise $\sigma_r$, near-exact TD, $M=128$ particles, final seeds: $\W2$ at the start state (exact assignment; not defined for $4096$ draws against the $128$-draw reference), squared energy distance (ED$^2$) in units of $10^{-3}$ and mean error in units of $10^{-2}$; bold: the lowest error of each column among the methods. SVDRL's particles, its moment-matched sampler with $4096$ and $128$ draws and its fully annealed transport; the six variants of \citet{debes2026multivariate} on their own target sets and on ours, and i.i.d.\ samples of the true law as references. All methods have a higher $\W2$ here than \citet{debes2026multivariate} publish at $\sigma_r=0$, because they use $M=500$: with $128$ particles against $128$ reference draws, even an i.i.d.\ sample of the true law lies above the published values.}
  \label{tab:chain}
  \small
  \adjustbox{max width=\textwidth}{\input{tables/tab2_chain_compact.tex}}
\end{table*}

With more particles the competitors win shape at matched compute (\cref{fig:nc1vsn}): with $M=4096$ their energy distances and $\W2^2$ errors lie below those of the sampler with $4096$ draws on either target set, while the sampler keeps the lower mean error.

Running-mean moment matching is the second component that transfers.
Given to the variants of \citet{debes2026multivariate} at evaluation, the shift of \cref{app:mm} to the running target mean lowers their mean error at every particle count and both noise levels, e.g., for sliced $\W2$ on its own target set with $M=2048$ from 0.020 to 0.0012 at $\sigma_r=0.05$ and from 0.025 to 0.0018 at $\sigma_r=0.2$, below the sampler's 0.0050 and 0.010.

\begin{figure*}[t]
  \centering
  \includegraphics{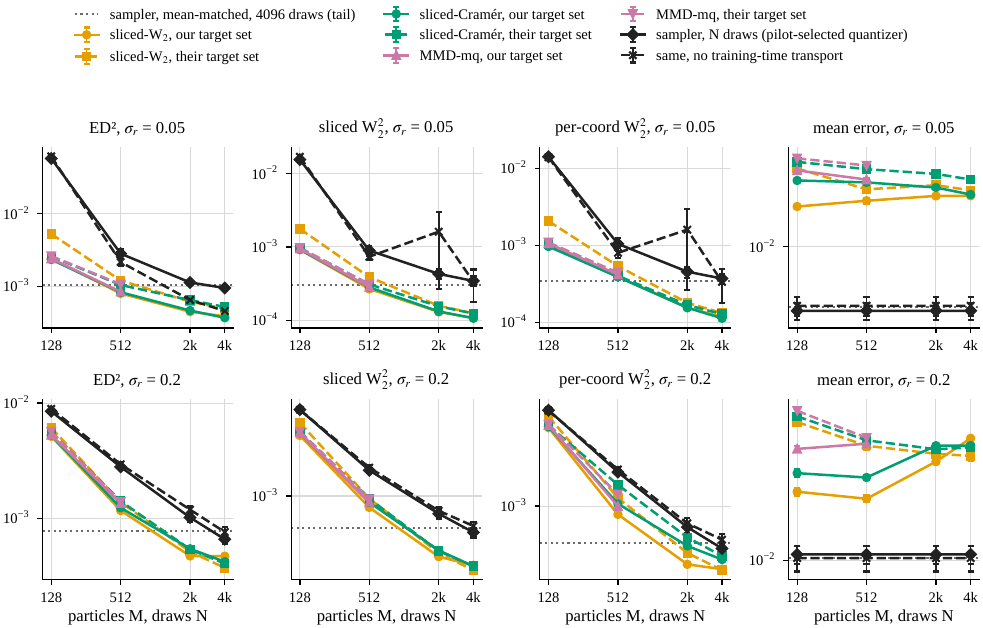}
  \caption{\emph{More particles for the competitors.} The chain with reward noise $\sigma_r=0.05$ (top) and $0.2$ (bottom): the two strongest variants of \citet{debes2026multivariate} and multiquadric MMD (up to $M=512$) with more particles, on their own and on our target set, at their specified learning rate; squared energy distance, sliced and per-coordinate $\W2^2$, and mean error against the particle count $M$ of the competitors and the number of draws $N$ of SVDRL's sampler. Dotted: SVDRL's moment-matched sampler with $4096$ draws; black: SVDRL's sampler with $N$ draws. Mean $\pm$ standard error over the final seeds.}
  \label{fig:nc1vsn}
\end{figure*}

\subsection{Resource Gathering}\label{app:res-rg}

\Cref{tab:morlrgfull} gives every method, particle count and estimator behind \cref{sec:experiments} at the three attack probabilities.
The returns are atomic, so SVDRL's sampler fails here, as \cref{lem:scorelip} predicts, and the hybrid rule reports the particles, whose stratified target set holds each atom at its mass up to $1/M$.
At $p=0.005<1/M$ no equal-weight set holds the death probability at a single evaluation, but averaged over the evaluations and seeds SVDRL's particle estimate is 0.0053 for $P=0.005$, since full replacement by the stratified target set rounds the mass without bias.

{\setlength{\LTcapwidth}{\textwidth}\setlength{\tabcolsep}{3pt}\renewcommand{\arraystretch}{0.95}%
\begin{longtable}{>{\small}l>{\small}c*{10}{>{\small}c}}
  \caption{\emph{Resource gathering.} Attack probability $p\in\{0.1,0.02,0.005\}$ (top to bottom), every method at each particle count $M$, median over the ten final seeds. Death probability: $|\log\hat P/P|$ of the final and of the time-averaged $\hat P$ (avg.); safe: the absolute error under the safe policy. CVaR: errors at the preferences $(1,1,1)$ and $(0,1,1)$. Decision: error rates of the chance-constrained and the CVaR decision. Treasure: error of $\Ex{\text{treasure}\mid\text{alive}}$. Law: squared energy distance (ED$^2$) and sliced $\W2^2$. The last three columns are in units of $10^{-3}$. SVDRL's rows give the hybrid and its two objects, the particles and the sampler ($N=4096$ draws); $r>6$ only: the hybrid rule without its sub-resolution clause; superseded${}^\dagger$: an earlier rule whose sub-resolution clause also counted single particles, kept for reference; rare event: the rare-event configuration, which bootstraps from its sampler. Particle critics answer from their atoms. QR: the quantile critic on the oracle scalarization, with the settings of the rare-event study or with its learning rate (lr) tuned on the tuning seeds.}\label{tab:morlrgfull}\\
  \toprule
  \multicolumn{2}{l}{\small Errors ($\downarrow$)} & \multicolumn{3}{c}{\small Death probability} & \multicolumn{2}{c}{\small CVaR} & \multicolumn{2}{c}{\small Decision} & \multicolumn{1}{c}{\small Treasure} & \multicolumn{2}{c}{\small Law} \\
  \cmidrule(lr){3-5}\cmidrule(lr){6-7}\cmidrule(lr){8-9}\cmidrule(lr){10-10}\cmidrule(lr){11-12}
  Method & $M$ & $|\log\frac{\hat P}{P}|$ & avg. & safe & $(1,1,1)$ & $(0,1,1)$ & chance & CVaR & if alive & ED$^2$ & $\W2^2$ \\
  \midrule
  \endfirsthead
  \caption[]{\emph{Resource gathering}, continued.}\\
  \toprule
  \multicolumn{2}{l}{\small Errors ($\downarrow$)} & \multicolumn{3}{c}{\small Death probability} & \multicolumn{2}{c}{\small CVaR} & \multicolumn{2}{c}{\small Decision} & \multicolumn{1}{c}{\small Treasure} & \multicolumn{2}{c}{\small Law} \\
  \cmidrule(lr){3-5}\cmidrule(lr){6-7}\cmidrule(lr){8-9}\cmidrule(lr){10-10}\cmidrule(lr){11-12}
  Method & $M$ & $|\log\frac{\hat P}{P}|$ & avg. & safe & $(1,1,1)$ & $(0,1,1)$ & chance & CVaR & if alive & ED$^2$ & $\W2^2$ \\
  \midrule
  \endhead
  \midrule
  \multicolumn{12}{r}{\small\emph{Continued on the next page}} \\
  \endfoot
  \bottomrule
  \endlastfoot
  \input{tables/tabA_morlrg_compact.tex}
\end{longtable}}

\subsection{MuJoCo}\label{app:res-mujoco}

\Cref{tab:mujoco_returns} gives the returns with their intervals, as the mean over $800$k to $1$M steps of \cref{tab:mujoco} and as the mean of the last five evaluations, and \cref{fig:mujoco} the learning curves.
Each curve point is a run's mean over its last five evaluations, because single evaluations on Hopper are bimodal for every head.
The divergence guard stopped one Ant seed of SVDRL when the linear variance of its moment head reached its floor; a log-variance head cannot reach the floor, and on HalfCheetah it trains without late collapses (the rows marked $^\ast$).

\begin{figure*}[t]
  \centering
  \includegraphics{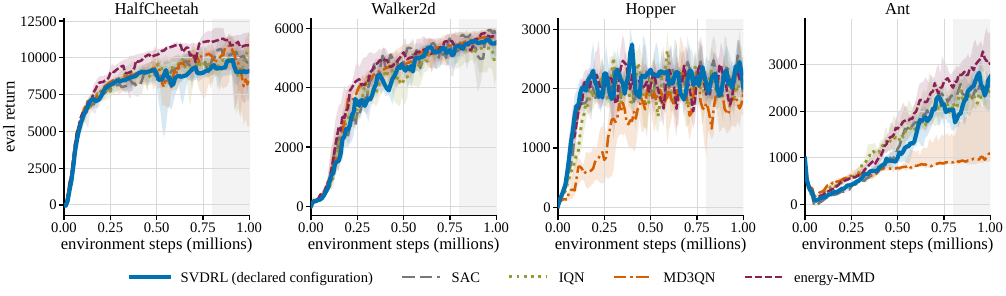}
  \caption{\emph{MuJoCo learning curves.} Evaluation return against environment steps, per run the mean of its last five evaluations, median over seeds with the interquartile band, every head on the same SAC trunk. SVDRL is the declared configuration per environment; the shaded strip marks the steps from $800$k to $1$M over which \cref{tab:mujoco} averages the return. Seeds per head as in \cref{tab:mujoco}; a run stopped by the divergence guard keeps its last evaluation.}
  \label{fig:mujoco}
\end{figure*}
\input{tables/tabA_mujoco_returns.tex}

\Cref{tab:mc_full} compares every head's critic with Monte Carlo returns of the frozen policy beyond the centered shape of \cref{tab:mujoco}: the ratio of standard deviations, the mean error, the full $\W2$ and the coverage of the central $80\,\%$ interval.
Every head's mean is biased by the twin-critic minimum, so only the centered metrics separate the heads, and the sampler row is a diagnostic, not a claim.
\begin{table*}[t]
  \centering
  \caption{\emph{MuJoCo critic accuracy.} Critic laws against frozen-policy Monte Carlo returns at $10^6$ steps, all metrics in raw return units relative to the run's own MC draws: shape = $\W2$ of the centered laws / MC std; std ratio (ideal 1); mean error (critic $-$ MC); full $\W2$ (CDF estimator, cross-checked by the quantile estimator) / MC std; coverage: the fraction of MC draws inside the critic's central 80\,\% interval, raw (80\,\% cov.) and with both means removed (centered), ideal 0.8. IQM over seeds, with the 95\,\% bootstrap interval of shape and full $\W2$ below it. Every head's mean is biased (twin-min targets), so full $\W2$ and raw coverage do not separate the heads. SVDRL (target mean): the variant that takes the moment-matching mean from the bootstrap targets. $^\ast$ The other declared SVDRL variant (appendix row). The sampler row (reverse diffusion from the saved score nets) is shown only for the log-variance head, the only configuration whose nets pass the label-free gate; it is a diagnostic, not a sampler claim. SAC: mean error only (scalar critic).}
  \label{tab:mc_full}
  \small
  \adjustbox{max width=\textwidth}{\input{tables/tabA_mc_compact.tex}}
\end{table*}

\subsection{Checks of the Theory}\label{app:res-theory}

We evaluate each bound of \cref{sec:theory} on the trained networks of MDP1--3, with the ground truth only to measure the left sides (\cref{tab:theoryarms,tab:consTDcheck}).
The checks of the consistency loss add the 3-cycle, a deterministic cycle of three states with rewards $1$, $-1$ and $0.5$ and $\gamma=0.9$, whose return laws are point masses.

The bound of \cref{thm:gen} holds in 80 of 80 (seed, state--action pair) combinations, but loosely: it exceeds the measured total variation 4.4 to 34 times and the measured $\W1$ by a median factor of 177, the gap between the two being the factor $D$ of $\W1\leq D\,\tv$ on the return box.
The discretization term is at least 67\,\% of the bound in every combination, as \cref{rem:laddernecessity} predicts for the point-mass targets that the particles produce.

Several noise levels are necessary (\cref{rem:laddernecessity}).
A network trained at the bottom level alone misallocates the mode masses, with errors of 0.32 on MDP2 and 0.46 on MDP3, against 0.056 and 0.099 for the multi-level network trained by DSM alone, and a single level of the order of the mode separation recovers the masses on MDP3 but not on MDP2.

Sampler bootstrapping without moment matching accumulates the smoothing of the sampler over the horizon (\cref{rem:e2ereadings}), and moment matching removes the accumulation: with it, sampler bootstrapping and sampler-seeded particles match the Stein particles on MDP1--3 in $\Wbar1$ and in the mode masses, all differences within their intervals (\cref{tab:theoryarms}).

The level-by-level bound of \cref{cor:consTD} holds at every level in 87 of 87 seeds (\cref{tab:consTDcheck}).
On the stochastic MDP1, consistency-only training with a lagged target reaches a lower multi-level risk than DSM alone, as \cref{lem:consU,rem:consvar} predict, and with the online network as its own target it diverges on the self-looping state--action pair, which is the lag condition after \cref{cor:scoreTDconv}.

\input{tables/tabA_theoryarms.tex}

\begin{table}[t]
  \centering
  \caption{\emph{Score-space temporal differences.} Level-by-level check of \cref{cor:consTD} on the trained networks: per variant, the seeds in which the one-step inequality \eqref{eq:consTD} and its iterated form hold at every level, and the ratio of the iterated bound to the measured normalized error at the highest, a middle and the lowest noise level. Normalized units, grid ratio $\kappa_{\mathrm g}=\gamma$.}
  \label{tab:consTDcheck}
  \adjustbox{max width=\linewidth}{\input{tables/tabA_consTD_check.tex}}
\end{table}

\section{Limitations in Detail}\label{app:limits}

SVDRL is not the most accurate critic on every return law, and this section collects the evidence for the limits of \cref{sec:conclusion}.
The sampler fails on atomic and widely separated laws, on the atomic chain (\cref{app:res-chain}) and on resource gathering (\cref{app:res-rg}), as \cref{lem:scorelip} predicts, and the hybrid rule reports the particles there.
At equal size or matched compute, fitted particle clouds beat SVDRL's sampler on the unimodal MDPs and on the chain with reward noise (\cref{tab:cmtab,fig:nc1vsn}).
The measured gap $\veps_{\mathrm{St}}$ is undefined for the rare-event configuration, which keeps no particles, and on separated modes it flags a disagreement but can exceed both errors (\cref{tab:epsst}).
In SAC-SVDRL the linear variance of the moment head reached its floor and made one Ant seed diverge, a floor that a log-variance head cannot reach (\cref{app:res-mujoco}), and we make no claim about a sampler on MuJoCo.

A light rare mode far from the bulk comes out too wide, an extrapolation limit of the score network where the data are sparse.
A static fit of the selected score network to fixed draws of the true root law of the rare-event chain, without bootstrapping, consistency loss or EMA, reproduces the width, with a standard deviation of the rare component 13 to 15 times the true one (\cref{tab:re1static}).

\begin{table}[t]
  \centering
  \caption{\emph{Width of the far rare mode.} Static fit of the score network to fixed draws of the root law of the rare-event chain ($p=0.005$, $D=2$), without bootstrapping, consistency loss or EMA, three seeds: mass, median offset and standard deviation of the rare component in return units, and the ratio of the standard deviation to the true one.}
  \label{tab:re1static}
  \input{tables/tabA_re1_static_fit.tex}
\end{table}

\subsection{Pixel Maze}\label{app:maze}

On the maze of \citet{zhang2021distributional} under the pixel protocol of \citet{debes2026multivariate}, the competitors' benchmark from pixels, SVDRL is third of four heads in every variant (\cref{tab:mz}).
The benchmark evaluates the uniform random policy from pixels with a PQN backbone in three variants, and every head is compared with Monte Carlo returns at the start state; SVDRL runs there without the consistency loss (\cref{tab:hparams}).
In SVDRL's first configuration nothing that trained the trunk depended on the transition, so the trunk was blind to the state and the action.
Taking the moment-matching mean from each transition's bootstrap targets, as on Walker2d, Hopper and Ant, together with the log-variance moment head, removes this blindness and improves SVDRL in two of three variants, but it stays behind the sliced Cram\'er and multiquadric MMD critics.

\begin{table}[t]
  \centering
  \caption{\emph{The competitors' pixel benchmark.} Pixel maze of \citet{debes2026multivariate}, policy evaluation of the uniform random policy: per variant and head, the number of seeds $n$ and the median over seeds of each distance to the Monte Carlo returns at the start state, with its interval (the estimate $\hat w_2$ of \citet{debes2026multivariate}, $\hat w_2$ against $200$ Monte Carlo returns, and CDF-based sliced $\W2$). SVDRL with the log-variance moment head and with its earlier configuration.}
  \label{tab:mz}
  \adjustbox{max width=\linewidth}{\input{tables/tabA_mz.tex}}
\end{table}

\subsection{Multi-Objective MuJoCo}\label{app:mo}

The standardization by the moment head limits SAC-SVDRL in distributional policy evaluation of fixed policies on mo-halfcheetah and mo-hopper (\cref{tab:mo1}).
With its selected configuration, SVDRL's particles lie between MD3QN and energy MMD in energy distance after $10^6$ steps on both environments, and behind both after $5\cdot10^5$ steps.
The remaining gap is under-dispersion: the EMA target cloud is narrower than the online one, which the bootstrap compounds by up to the factor $1/(1-\gamma^2)$, about $50$ at $\gamma=0.99$.
The sampler of the selected configuration fails its ground-truth-free gate on three of four policies.

A float32 cancellation in PopArt's variance accumulator affected the low-coefficient-of-variation targets of the first multi-objective runs.
We reran the HalfCheetah baselines with a centered accumulator, and these rows, together with SVDRL's selected configuration, which uses it, are the comparison there; the uncentered HalfCheetah rows are kept for reference.
A check of the logged PopArt statistics of every other reported run finds no comparable effect.

\begin{table*}[t]
  \centering
  \caption{\emph{Multi-objective MuJoCo.} Distributional policy evaluation of fixed policies after $5\cdot10^5$ and $10^6$ steps: energy distance and sliced $\W2^2$ to the Monte Carlo law and between the centered laws, and the absolute mean error, each the median over seeds with its $95\,\%$ bootstrap interval, a run's value its mean over the last $20\,\%$ of the evaluations. SVDRL's initial and selected configurations as particles and sampler; ${}^{\dagger}$: the configuration's sampler fails its ground-truth-free gate. IQN models per-dimension marginals, so its joint metrics are empty. On mo-halfcheetah the heads with the centered PopArt accumulator and SVDRL's selected configuration, which uses it, are compared, and the uncentered rows are kept for reference. Bold: inside the interval of the best compared median of its column.}
  \label{tab:mo1}
  \small
  \adjustbox{max width=\textwidth}{\input{tables/tab_mo1_compact.tex}}
\end{table*}

%% file: tables/tab_re1_appendix.tex
\begin{tabular}{l*{9}{c}}
\toprule
 & \multicolumn{3}{c}{CVaR error ($\downarrow$)} & Event ($\downarrow$) & \multicolumn{2}{c}{Conditional mean error ($\downarrow$)} & Entropic ($\downarrow$) & \multicolumn{2}{c}{Decision accuracy ($\uparrow$)} \\
\cmidrule(lr){2-4}\cmidrule(lr){5-5}\cmidrule(lr){6-7}\cmidrule(lr){8-8}\cmidrule(lr){9-10}
Estimator & $\alpha=0.001$ & $\alpha=0.01$ & $\alpha=0.001$, $(\frac12,\frac12)$ & $|\log\hat P/P|$ & $Z_1\leq\tau$ & $Z_1=z_{\mathrm{rare}}$ & $\beta=0.1$ & chance & ESR \\
\midrule
\multicolumn{10}{l}{\emph{SVDRL sampler, bootstrapping from}} \\
\quad its draws, EDM$^{\ast}$ & $\mathbf{6.9}$ & $9.7$ & $\mathbf{2.9}$ & $\mathbf{0.16}$ & $2.6$ & $2.9$ & $13$ & $\mathbf{1}$ & $0.57$ \\[-1pt]
 & $\sd{[5.8,\,8]}$ & $\sd{[6.3,\,15]}$ & $\sd{[2.4,\,3.4]}$ & $\sd{[0.11,\,0.28]}$ & $\sd{[2.3,\,2.8]}$ & $\sd{[2.5,\,3.6]}$ & $\sd{[11,\,15]}$ & $\sd{[0.99,\,1]}$ & $\sd{[0.57,\,0.57]}$ \\[2pt]
\quad its draws & $9.2$ & $8$ & $4.8$ & $0.32$ & $\mathbf{1.6}$ & -- & $\mathbf{7.7}$ & $0.95$ & $0.28$ \\[-1pt]
 & $\sd{[6.4,\,12]}$ & $\sd{[5.7,\,13]}$ & $\sd{[3.2,\,5.9]}$ & $\sd{[0.19,\,0.49]}$ & $\sd{[1.3,\,1.8]}$ &  & $\sd{[5.6,\,9.5]}$ & $\sd{[0.8,\,1]}$ & $\sd{[0.28,\,0.28]}$ \\[2pt]
\quad its draws, moment-matched & $11$ & $18$ & $4.8$ & $0.48$ & $2.6$ & -- & $17$ & $0.89$ & $0.27$ \\[-1pt]
 & $\sd{[8,\,16]}$ & $\sd{[11,\,27]}$ & $\sd{[3.6,\,7]}$ & $\sd{[0.28,\,0.87]}$ & $\sd{[2,\,3.4]}$ &  & $\sd{[10,\,25]}$ & $\sd{[0.62,\,1]}$ & $\sd{[0.27,\,0.27]}$ \\[2pt]
\quad its draws, running moments & $36$ & $\mathbf{3.7}$ & $7.1$ & $0.29$ & $22$ & -- & $35$ & $0.94$ & $\mathbf{0.88}$ \\[-1pt]
 & $\sd{[34,\,38]}$ & $\sd{[1.9,\,5.5]}$ & $\sd{[6.1,\,8.2]}$ & $\sd{[0.17,\,0.53]}$ & $\sd{[21,\,22]}$ &  & $\sd{[34,\,36]}$ & $\sd{[0.75,\,1]}$ & $\sd{[0.88,\,0.88]}$ \\[2pt]
\quad particles & $15$ & $16$ & $6.6$ & $0.45$ & $\mathbf{1.8}$ & $\mathbf{1.6}$ & $14$ & $0.91$ & $0.35$ \\[-1pt]
 & $\sd{[11,\,17]}$ & $\sd{[10,\,22]}$ & $\sd{[4.8,\,7.4]}$ & $\sd{[0.26,\,0.78]}$ & $\sd{[1.4,\,2.5]}$ & $\sd{[1.2,\,2]}$ & $\sd{[10,\,17]}$ & $\sd{[0.72,\,1]}$ & $\sd{[0.35,\,0.35]}$ \\[2pt]
\quad particles, resample-move & $31$ & $32$ & $13$ & $1.6$ & $2.7$ & -- & $27$ & $0.64$ & $0$ \\[-1pt]
 & $\sd{[16,\,56]}$ & $\sd{[27,\,38]}$ & $\sd{[6.6,\,23]}$ & $\sd{[1.1,\,3.1]}$ & $\sd{[2.2,\,6.1]}$ &  & $\sd{[20,\,41]}$ & $\sd{[0.04,\,1]}$ & $\sd{[0,\,0]}$ \\[2pt]
\midrule
\multicolumn{10}{l}{\emph{Particles and kernel density estimates}} \\
\quad SVDRL Stein particles & $83$ & $52$ & $35$ & $6.5$ & $13$ & -- & $57$ & $0.56$ & $0.28$ \\[-1pt]
 & $\sd{[71,\,94]}$ & $\sd{[49,\,54]}$ & $\sd{[30,\,39]}$ & $\sd{[5.6,\,7.3]}$ & $\sd{[11,\,15]}$ &  & $\sd{[50,\,63]}$ & $\sd{[0.28,\,0.92]}$ & $\sd{[0.28,\,0.28]}$ \\[2pt]
\quad MMD-RBF, atoms & $120$ & $56$ & $51$ & $8.5$ & $18$ & -- & $69$ & $0.5$ & $0$ \\[-1pt]
 & $\sd{[120,\,120]}$ & $\sd{[54,\,57]}$ & $\sd{[50,\,51]}$ & $\sd{[8.5,\,8.5]}$ & $\sd{[18,\,18]}$ &  & $\sd{[69,\,70]}$ & $\sd{[0,\,1]}$ & $\sd{[0,\,0]}$ \\[2pt]
\quad MMD-RBF, KDE & $110$ & $49$ & $46$ & $8.5$ & $33$ & $60$ & $69$ & $0.5$ & $0$ \\[-1pt]
 & $\sd{[110,\,110]}$ & $\sd{[47,\,51]}$ & $\sd{[45,\,47]}$ & $\sd{[8.5,\,8.5]}$ & $\sd{[29,\,38]}$ & $\sd{[52,\,67]}$ & $\sd{[68,\,69]}$ & $\sd{[0,\,1]}$ & $\sd{[0,\,0]}$ \\[2pt]
\quad MD3QN, atoms & $100$ & $41$ & $42$ & $7.9$ & $18$ & -- & $66$ & $0.54$ & $0.09$ \\[-1pt]
 & $\sd{[97,\,110]}$ & $\sd{[37,\,44]}$ & $\sd{[39,\,44]}$ & $\sd{[7.2,\,8.4]}$ & $\sd{[18,\,18]}$ &  & $\sd{[64,\,67]}$ & $\sd{[0.09,\,1]}$ & $\sd{[0.09,\,0.09]}$ \\[2pt]
\quad MD3QN, KDE & $98$ & $40$ & $40$ & $6.7$ & $18$ & $20$ & $66$ & $0.56$ & $0.1$ \\[-1pt]
 & $\sd{[92,\,100]}$ & $\sd{[35,\,44]}$ & $\sd{[36,\,42]}$ & $\sd{[6.2,\,7.5]}$ & $\sd{[15,\,19]}$ & $\sd{[18,\,22]}$ & $\sd{[63,\,67]}$ & $\sd{[0.1,\,1]}$ & $\sd{[0.1,\,0.1]}$ \\[2pt]
\quad Energy MMD, atoms & $71$ & $53$ & $30$ & $4.8$ & $11$ & -- & $44$ & $0.71$ & $0.48$ \\[-1pt]
 & $\sd{[58,\,80]}$ & $\sd{[50,\,56]}$ & $\sd{[25,\,34]}$ & $\sd{[4,\,5.4]}$ & $\sd{[9.3,\,12]}$ &  & $\sd{[36,\,50]}$ & $\sd{[0.48,\,1]}$ & $\sd{[0.48,\,0.48]}$ \\[2pt]
\quad Energy MMD, KDE & $72$ & $54$ & $31$ & $4.9$ & $11$ & $11$ & $44$ & $0.7$ & $0.48$ \\[-1pt]
 & $\sd{[60,\,82]}$ & $\sd{[51,\,57]}$ & $\sd{[26,\,36]}$ & $\sd{[4.1,\,5.4]}$ & $\sd{[8.6,\,14]}$ & $\sd{[8.5,\,15]}$ & $\sd{[37,\,50]}$ & $\sd{[0.48,\,1]}$ & $\sd{[0.48,\,0.48]}$ \\[2pt]
\quad Sliced Cram\'er, atoms & $70$ & $53$ & $30$ & $4.8$ & $11$ & -- & $43$ & $0.71$ & $0.48$ \\[-1pt]
 & $\sd{[58,\,80]}$ & $\sd{[50,\,56]}$ & $\sd{[25,\,34]}$ & $\sd{[4,\,5.4]}$ & $\sd{[9.3,\,12]}$ &  & $\sd{[36,\,49]}$ & $\sd{[0.48,\,0.99]}$ & $\sd{[0.48,\,0.48]}$ \\[2pt]
\quad Sliced Cram\'er, KDE & $72$ & $54$ & $31$ & $4.8$ & $11$ & $11$ & $44$ & $0.71$ & $0.48$ \\[-1pt]
 & $\sd{[60,\,82]}$ & $\sd{[51,\,56]}$ & $\sd{[26,\,35]}$ & $\sd{[4,\,5.4]}$ & $\sd{[9.2,\,13]}$ & $\sd{[8.8,\,14]}$ & $\sd{[37,\,50]}$ & $\sd{[0.48,\,0.99]}$ & $\sd{[0.48,\,0.48]}$ \\[2pt]
\quad Sliced $\W2$, atoms & $67$ & $53$ & $29$ & $4.8$ & $9.8$ & -- & $42$ & $0.7$ & $0.48$ \\[-1pt]
 & $\sd{[55,\,76]}$ & $\sd{[50,\,56]}$ & $\sd{[24,\,32]}$ & $\sd{[4,\,5.4]}$ & $\sd{[8.1,\,11]}$ &  & $\sd{[35,\,47]}$ & $\sd{[0.48,\,1]}$ & $\sd{[0.48,\,0.48]}$ \\[2pt]
\quad Sliced $\W2$, KDE & $71$ & $55$ & $30$ & $4.8$ & $9.9$ & $9.9$ & $43$ & $0.7$ & $0.48$ \\[-1pt]
 & $\sd{[59,\,79]}$ & $\sd{[52,\,58]}$ & $\sd{[25,\,34]}$ & $\sd{[4,\,5.4]}$ & $\sd{[8.4,\,11]}$ & $\sd{[8.1,\,12]}$ & $\sd{[36,\,48]}$ & $\sd{[0.48,\,1]}$ & $\sd{[0.48,\,0.48]}$ \\[2pt]
\quad Sinkhorn, atoms & $88$ & $47$ & $38$ & $5.6$ & $12$ & -- & $53$ & $0.59$ & $0.38$ \\[-1pt]
 & $\sd{[80,\,98]}$ & $\sd{[39,\,53]}$ & $\sd{[35,\,43]}$ & $\sd{[4.9,\,6.5]}$ & $\sd{[12,\,14]}$ &  & $\sd{[50,\,59]}$ & $\sd{[0.38,\,0.89]}$ & $\sd{[0.38,\,0.38]}$ \\[2pt]
\quad Sinkhorn, KDE & $84$ & $47$ & $36$ & $5.5$ & $17$ & $30$ & $53$ & $0.6$ & $0.38$ \\[-1pt]
 & $\sd{[77,\,93]}$ & $\sd{[40,\,52]}$ & $\sd{[33,\,40]}$ & $\sd{[4.8,\,6.5]}$ & $\sd{[14,\,22]}$ & $\sd{[24,\,38]}$ & $\sd{[50,\,58]}$ & $\sd{[0.38,\,0.95]}$ & $\sd{[0.38,\,0.38]}$ \\[2pt]
\midrule
\multicolumn{10}{l}{\emph{Quantile critic}} \\
\quad oracle $\lambda$, $1000$ quantiles & $62$ & $10$ & -- & $1.8$ & -- & -- & $59$ & $0.71$ & $0.02$ \\[-1pt]
 & $\sd{[59,\,64]}$ & $\sd{[6.6,\,13]}$ &  & $\sd{[0.49,\,4.6]}$ &  &  & $\sd{[56,\,61]}$ & $\sd{[0.26,\,1]}$ & $\sd{[0.02,\,0.02]}$ \\
\bottomrule
\end{tabular}

%% file: tables/tabA_sel1_compact.tex
\begin{tabular}{lclrrrrrrrrrrl}
\toprule
 & & & \multicolumn{2}{c}{Hybrid} & \multicolumn{2}{c}{Sampler} & \multicolumn{2}{c}{Particles} & \multicolumn{2}{c}{Tuning choice} & \multicolumn{2}{c}{Test $r>6$ only} & \\
\cmidrule(lr){4-5}\cmidrule(lr){6-7}\cmidrule(lr){8-9}\cmidrule(lr){10-11}\cmidrule(lr){12-13}
MDP & $d$ & Error ($\downarrow$) & $10^{-3}$ & ratio & $10^{-3}$ & ratio & $10^{-3}$ & ratio & $10^{-3}$ & ratio & $10^{-3}$ & ratio & Reports \\
\midrule
MDP1 & 1 & shape & 5.7 & \textbf{0.27} & 5.7 & \textbf{0.27} & 64 & 3.1 & 5.7 & \textbf{0.27} & 5.7 & \textbf{0.27} & sampler \\
\addlinespace[2pt]
MDP2 & 1 & mass & 4.7 & 1.5 & 7.7 & 2.5 & 4.8 & 1.5 & 7.7 & 2.5 & 4.7 & 1.5 & sampler \\
 &  & shape & 81 & 1.6 & 13 & \textbf{0.25} & 83 & 1.7 & 13 & \textbf{0.25} & 81 & 1.6 & sampler \\
\addlinespace[2pt]
MDP3 & 2 & mass & 2.3 & 1.5 & 12 & 8.0 & 2.3 & 1.5 & 12 & 8.0 & 2.3 & 1.5 & sampler \\
 &  & shape & 389 & \textbf{0.80} & 125 & \textbf{0.26} & 389 & \textbf{0.80} & 125 & \textbf{0.26} & 389 & \textbf{0.80} & sampler \\
\addlinespace[2pt]
MDP4 & 5 & mass & 0.065 & \underline{--} & 13 & -- & 0 & \underline{1.0} & 0 & \underline{1.0} & 0.065 & \underline{--} & particles \\
 &  & shape & 62 & \textbf{0.097} & 808 & \underline{1.3} & 55 & \textbf{0.086} & 55 & \textbf{0.086} & 62 & \textbf{0.097} & particles \\
\addlinespace[2pt]
MDP5 & 2 & shape & 6.1 & \textbf{0.20} & 6.1 & \textbf{0.20} & 37 & 1.2 & 6.1 & \textbf{0.20} & 6.1 & \textbf{0.20} & sampler \\
\addlinespace[2pt]
MDP6 & 2 & mass & 0.060 & \underline{--} & 11 & -- & 0 & \underline{1.0} & 0 & \underline{1.0} & 0.060 & \underline{--} & particles \\
 &  & shape & 50 & \textbf{0.12} & 154 & \textbf{0.37} & 73 & \textbf{0.17} & 73 & \textbf{0.17} & 50 & \textbf{0.12} & particles \\
\addlinespace[2pt]
MDP7 & 3 & shape & 1.5 & \textbf{0.27} & 1.5 & \textbf{0.27} & 5.5 & \textbf{0.97} & 1.5 & \textbf{0.27} & 1.5 & \textbf{0.27} & sampler \\
\addlinespace[2pt]
MDP8 & 2 & shape & 38 & \textbf{0.23} & 38 & \textbf{0.23} & 160 & \underline{0.98} & 38 & \textbf{0.23} & 38 & \textbf{0.23} & sampler \\
\midrule
\multicolumn{14}{l}{\emph{Wins/ties/losses, selected configuration}} \\
\multicolumn{3}{l}{\quad mass, own} & \multicolumn{2}{c}{0/2/2} & \multicolumn{2}{c}{0/0/4} & \multicolumn{2}{c}{0/2/2} & \multicolumn{2}{c}{0/2/2} & \multicolumn{2}{c}{0/2/2} & \\
\multicolumn{3}{l}{\quad mass, all} & \multicolumn{2}{c}{0/2/2} & \multicolumn{2}{c}{0/0/4} & \multicolumn{2}{c}{0/2/2} & \multicolumn{2}{c}{0/2/2} & \multicolumn{2}{c}{0/2/2} & \\
\multicolumn{3}{l}{\quad shape, own} & \multicolumn{2}{c}{8/0/0} & \multicolumn{2}{c}{8/0/0} & \multicolumn{2}{c}{6/1/1} & \multicolumn{2}{c}{8/0/0} & \multicolumn{2}{c}{8/0/0} & \\
\multicolumn{3}{l}{\quad shape, all} & \multicolumn{2}{c}{7/0/1} & \multicolumn{2}{c}{7/1/0} & \multicolumn{2}{c}{4/1/3} & \multicolumn{2}{c}{8/0/0} & \multicolumn{2}{c}{7/0/1} & \\
\midrule
\multicolumn{14}{l}{\emph{Wins/ties/losses, single configuration}} \\
\multicolumn{3}{l}{\quad mass, own} & \multicolumn{2}{c}{0/2/2} & \multicolumn{2}{c}{0/0/4} & \multicolumn{2}{c}{0/2/2} & \multicolumn{2}{c}{0/2/2} & \multicolumn{2}{c}{0/2/2} & \\
\multicolumn{3}{l}{\quad mass, all} & \multicolumn{2}{c}{0/2/2} & \multicolumn{2}{c}{0/0/4} & \multicolumn{2}{c}{0/2/2} & \multicolumn{2}{c}{0/2/2} & \multicolumn{2}{c}{0/2/2} & \\
\multicolumn{3}{l}{\quad shape, own} & \multicolumn{2}{c}{8/0/0} & \multicolumn{2}{c}{8/0/0} & \multicolumn{2}{c}{5/1/2} & \multicolumn{2}{c}{8/0/0} & \multicolumn{2}{c}{8/0/0} & \\
\multicolumn{3}{l}{\quad shape, all} & \multicolumn{2}{c}{7/0/1} & \multicolumn{2}{c}{7/1/0} & \multicolumn{2}{c}{4/0/4} & \multicolumn{2}{c}{8/0/0} & \multicolumn{2}{c}{7/0/1} & \\
\bottomrule
\end{tabular}

%% file: tables/tab1_mass.tex
\begin{tabular}{lcrrrr}
\toprule
 & & \multicolumn{4}{c}{Mode masses, total variation ($\downarrow$)} \\
\cmidrule(lr){3-6}
 & & & & \multicolumn{2}{c}{SVDRL / best} \\
\cmidrule(lr){5-6}
MDP & $d$ & SVDRL & floor & own & all \\
\midrule
MDP2 & 1 & 4.7 & 3.1 & 1.3 & 1.5 \\
MDP3 & 2 & 2.3 & 1.6 & 1.4 & 1.5 \\
MDP4 & 5 & 0.065 & 0 & \underline{0.11} & \underline{--} \\
MDP6 & 2 & 0.060 & 0 & \underline{--} & \underline{--} \\
\midrule
\rlap{Wins/ties/losses} & & & & 0/2/2 & 0/2/2 \\
\rlap{\quad its particles alone} & & & & 0/2/2 & 0/2/2 \\
\bottomrule
\end{tabular}

%% file: tables/tab1_competitors.tex
\begin{tabular}{llllllr}
\toprule
MDP & Error & Comparison & Best competitor & Target set & Learning rate & Its error ($10^{-3}$) ($\downarrow$) \\
\midrule
MDP1 & shape & own & multiquadric MMD$^\dagger$ & own & base & 24.3 \\
 &  & all & multiquadric MMD$^\dagger$ & ours & base & 20.8 \\
 &  & $M{=}4096$ & multiquadric MMD$^\dagger$ & ours & base & 0.563 \\
\addlinespace[2pt]
MDP2 & shape & own & sliced $\W2$ & own & base & 98.4 \\
 &  & all & sliced Cram\'er$^\dagger$ & ours & $3\cdot10^{-2}$ & 50.1 \\
 & mass & own & multiquadric MMD$^\dagger$ & own & $3\cdot10^{-2}$ & 3.73 \\
 &  & all & sliced Cram\'er$^\dagger$ & ours & $3\cdot10^{-2}$ & 3.13 \\
\addlinespace[2pt]
MDP3 & shape & own & sliced Cram\'er$^\dagger$ & own & $3\cdot10^{-2}$ & 1950 \\
 &  & all & sliced Cram\'er$^\dagger$ & ours & $3\cdot10^{-2}$ & 488 \\
 & mass & own & sliced Cram\'er$^\dagger$ & own & $3\cdot10^{-2}$ & 1.68 \\
 &  & all & sliced Cram\'er$^\dagger$ & ours & $3\cdot10^{-2}$ & 1.56 \\
\addlinespace[2pt]
MDP4 & shape & own & multiquadric MMD$^\dagger$ & own & $3\cdot10^{-2}$ & 3080 \\
 &  & all & multiquadric MMD$^\dagger$ & ours & $3\cdot10^{-2}$ & 641 \\
 & mass & own & multiquadric MMD$^\dagger$ & own & $3\cdot10^{-2}$ & 0.597 \\
 &  & all & multiquadric MMD$^\dagger$ & ours & $3\cdot10^{-2}$ & 0 \\
\addlinespace[2pt]
MDP5 & shape & own & sliced $\W2$$^\dagger$ & own & $10^{-1}$ & 36.7 \\
 &  & all & sliced $\W2$$^\dagger$ & ours & $10^{-1}$ & 30.0 \\
 &  & $M{=}4096$ & sliced $\W2$$^\dagger$ & ours & base & 5.17 \\
\addlinespace[2pt]
MDP6 & shape & own & multiquadric MMD$^\dagger$ & own & $3\cdot10^{-2}$ & 925 \\
 &  & all & multiquadric MMD$^\dagger$ & ours & $3\cdot10^{-2}$ & 419 \\
 & mass & own & multiquadric MMD$^\dagger$ & own & $3\cdot10^{-2}$ & 0 \\
 &  & all & multiquadric MMD$^\dagger$ & own & $3\cdot10^{-2}$ & 0 \\
\addlinespace[2pt]
MDP7 & shape & own & sliced $\W2$ & own & base & 8.17 \\
 &  & all & sliced $\W2$ & ours & base & 5.60 \\
 &  & $M{=}4096$ & sliced $\W2$$^\dagger$ & ours & base & 0.375 \\
\addlinespace[2pt]
MDP8 & shape & own & sliced $\W2$$^\dagger$ & own & $10^{-1}$ & 205 \\
 &  & all & sliced $\W2$$^\dagger$ & ours & $10^{-1}$ & 163 \\
 &  & $M{=}4096$ & sliced $\W2$$^\dagger$ & ours & base & 16.9 \\
\bottomrule
\end{tabular}

%% file: tables/tabA_cmtab.tex
\begin{tabular}{llrrr}
\toprule
MDP & row & $M$ & error (mean $\pm$ SE) ($\downarrow$) & hybrid $\div$ row (Welch $t$) \\
\midrule
MDP1 & SVDRL hybrid (sampler) & 4096 & $0.00559\sd{\pm 0.00052}$ & \\
MDP1 & multiquadric MMD$^\dagger$ & 64 & $0.0208\sd{\pm 0.00056}$ & 0.269 (-19.9) \\
MDP1 & multiquadric MMD$^\dagger$ & 512 & $0.00286\sd{\pm 0.000070}$ & 1.95 (+5.2) \\
MDP1 & multiquadric MMD$^\dagger$ & 4096 & $\mathbf{0.000563}\sd{\pm 0.000061}$ & 9.94 (+9.7) \\
MDP1 & SVDRL particles, $\alpha=0$ & 512 & $0.0118\sd{\pm 0.00028}$ & 0.474 (-10.5) \\
MDP1 & SVDRL particles, $\alpha=0$ & 4096 & $0.00157\sd{\pm 0.000069}$ & 3.56 (+7.7) \\
\midrule
MDP5 & SVDRL hybrid (sampler) & 4096 & $0.00609\sd{\pm 0.000068}$ & \\
MDP5 & sliced $\W2$$^\dagger$ & 128 & $0.0300\sd{\pm 0.00022}$ & 0.203 (-103.5) \\
MDP5 & sliced $\W2$$^\dagger$ & 512 & $0.0112\sd{\pm 0.00027}$ & 0.544 (-18.2) \\
MDP5 & sliced $\W2$$^\dagger$ & 4096 & $0.00517\sd{\pm 0.00024}$ & 1.18 (+3.6) \\
MDP5 & SVDRL particles, $\alpha=0$ & 512 & $0.0166\sd{\pm 0.00011}$ & 0.366 (-82.3) \\
MDP5 & SVDRL particles, $\alpha=0$ & 4096 & $\mathbf{0.00489}\sd{\pm 0.000022}$ & 1.24 (+16.6) \\
\midrule
MDP7 & SVDRL hybrid (sampler) & 4096 & $0.00155\sd{\pm 0.000030}$ & \\
MDP7 & sliced $\W2$$^\dagger$ & 192 & $0.00730\sd{\pm 0.00010}$ & 0.213 (-53.6) \\
MDP7 & sliced $\W2$$^\dagger$ & 512 & $0.00284\sd{\pm 0.000041}$ & 0.548 (-25.2) \\
MDP7 & sliced $\W2$$^\dagger$ & 4096 & $0.000375\sd{\pm 0.0000059}$ & 4.14 (+38.1) \\
MDP7 & SVDRL particles, $\alpha=0$ & 512 & $0.00219\sd{\pm 0.000012}$ & 0.709 (-19.6) \\
MDP7 & SVDRL particles, $\alpha=0$ & 4096 & $\mathbf{0.000305}\sd{\pm 0.0000055}$ & 5.10 (+40.5) \\
\midrule
MDP8 & SVDRL hybrid (sampler) & 4096 & $0.0378\sd{\pm 0.0011}$ & \\
MDP8 & sliced $\W2$$^\dagger$ & 128 & $0.163\sd{\pm 0.0031}$ & 0.232 (-37.8) \\
MDP8 & sliced $\W2$$^\dagger$ & 512 & $0.0469\sd{\pm 0.00061}$ & 0.806 (-7.1) \\
MDP8 & sliced $\W2$$^\dagger$ & 4096 & $\mathbf{0.0169}\sd{\pm 0.00028}$ & 2.24 (+18.2) \\
MDP8 & SVDRL particles, $\alpha=0$ & 512 & $0.0787\sd{\pm 0.0015}$ & 0.481 (-21.4) \\
MDP8 & SVDRL particles, $\alpha=0$ & 4096 & $0.0228\sd{\pm 0.00031}$ & 1.66 (+12.9) \\
\bottomrule
\end{tabular}

%% file: tables/tabA_fullw2.tex
\begin{tabular}{l rr rl rl}
\toprule
 & \multicolumn{2}{c}{SVDRL} & \multicolumn{2}{c}{best competitor, own target set} & \multicolumn{2}{c}{best competitor, every row} \\
\cmidrule(lr){2-3}\cmidrule(lr){4-5}\cmidrule(lr){6-7}
MDP & selected ($\downarrow$) & single ($\downarrow$) & error (row) ($\downarrow$) & selected $\div$ & error (row) ($\downarrow$) & selected $\div$ \\
\midrule
MDP1 ($d{=}1$) & 0.0635 & 0.0768 & 0.0243 (multiquadric MMD$^\dagger$) & $\times$2.6 L & 0.0208 (multiquadric MMD$^{\dagger*}$) & $\times$3.1 L \\
MDP2 ($d{=}1$) & 1.05 & 1.05 & 0.832 (multiquadric MMD$^\dagger$, $3\cdot10^{-2}$) & $\times$1.3 L & 0.504 (sliced $\W2$$^{\dagger*}$, $3\cdot10^{-2}$) & $\times$2.1 L \\
MDP3 ($d{=}2$) & 12.9 & 12.9 & 11.8 (sliced Cram\'er$^\dagger$, $3\cdot10^{-2}$) & $\times$1.1 T & 11.8 (sliced Cram\'er$^\dagger$, $3\cdot10^{-2}$) & $\times$1.1 T \\
MDP4 ($d{=}5$) & 0.0526 & 0.0526 & 3.72 (multiquadric MMD$^\dagger$, $3\cdot10^{-2}$) & $\times$0.014 W & 0.613 (multiquadric MMD$^{\dagger*}$, $3\cdot10^{-2}$) & $\times$0.086 W \\
MDP5 ($d{=}2$) & 0.0368 & 0.0459 & 0.0367 (sliced $\W2$$^\dagger$, $0.1$) & $\times$1.0 T & 0.0300 (sliced $\W2$$^{\dagger*}$, $0.1$) & $\times$1.2 L \\
MDP6 ($d{=}2$) & 9.68 & 9.68 & 9.77 (multiquadric MMD$^\dagger$, $3\cdot10^{-2}$) & $\times$0.99 T & 9.73 (multiquadric MMD$^{\dagger*}$, $3\cdot10^{-2}$) & $\times$0.99 T \\
MDP7 ($d{=}3$) & 0.00545 & 0.00545 & 0.00817 (sliced $\W2$) & $\times$0.67 W & 0.00560 (sliced $\W2$$^{*}$) & $\times$0.97 W \\
MDP8 ($d{=}2$) & 0.171 & 0.219 & 0.205 (sliced $\W2$$^\dagger$, $0.1$) & $\times$0.83 W & 0.163 (sliced $\W2$$^{\dagger*}$, $0.1$) & $\times$1.1 T \\
\midrule
\multicolumn{7}{l}{W/T/L, selected configuration per MDP: 3/3/2 (own target set) / 2/3/3 (every row)} \\
\multicolumn{7}{l}{W/T/L, single configuration: 2/3/3 (own target set) / 2/2/4 (every row)} \\
\bottomrule
\end{tabular}

%% file: tables/tabA_alpha.tex
\begin{table*}[t]
\centering
\caption{\emph{Relaxation and target set.} Ablation of the relaxation $\alpha$ of the Stein move and the stratified target set on the synthetic MDPs: within-mode shape error (every MDP; the full error where every evaluated state--action pair is unimodal) and total variation of the mode masses (mass TV; MDPs with separated modes). Fully annealed: SVDRL with fully annealed transport; + strat.: the stratified target set (systematic successors + Latin-hypercube rewards); + full-cov. matching: full-covariance moment matching; $\rho=1$: the whole cloud replaced by the stratified target set, with relaxation $\alpha$ (no running-mean moment matching). IQM over the ten final seeds; $^\dagger$: run on the five tuning seeds only, IQM over those; --: not run. Time-averaged over the last 20\,\% of the steps.}
\label{tab:alpha}
\footnotesize
\setlength{\tabcolsep}{4pt}
\adjustbox{max width=\textwidth}{\begin{tabular}{l l r r r r r r }
\toprule
 & & & & & \multicolumn{3}{c}{$\rho=1$} \\
\cmidrule(lr){6-8}
MDP & block & fully annealed & fully annealed + strat. & + full-cov. matching & $\alpha=10^{-3}$ & $\alpha=0.01$ & $\alpha=0.1$ \\
\midrule
MDP1 & shape ($\downarrow$) & $0.127$ & $0.109$$^\dagger$ & -- & $0.0770$ & -- & $0.0635$ \\
MDP2 & shape ($\downarrow$) & $2.52$ & -- & -- & $0.0815$ & -- & $0.327$$^\dagger$ \\
 & mass TV ($\downarrow$) & $0.0374$ & -- & -- & $0.00487$ & -- & $0.00496$$^\dagger$ \\
MDP3 & shape ($\downarrow$) & $20.0$ & -- & -- & $0.387$ & -- & $2.94$$^\dagger$ \\
 & mass TV ($\downarrow$) & $0.0550$ & -- & -- & $0.00244$ & -- & $0.00251$$^\dagger$ \\
MDP4 ($d{=}5$) & shape ($\downarrow$) & $59.6$ & -- & -- & $0.0549$ & $0.0575$$^\dagger$ & $1.20$$^\dagger$ \\
 & mass TV ($\downarrow$) & $0.0420$ & -- & -- & $0$ & $0$$^\dagger$ & $0$$^\dagger$ \\
MDP5 & shape ($\downarrow$) & $0.0437$ & $0.0369$ & -- & $0.0460$ & -- & $0.115$$^\dagger$ \\
MDP6 & shape ($\downarrow$) & $16.4$ & -- & -- & $0.0727$ & -- & $0.461$ \\
 & mass TV ($\downarrow$) & $0.0440$ & -- & -- & $0$ & -- & $0$ \\
MDP7 & shape ($\downarrow$) & $0.0241$ & $0.0213$$^\dagger$ & $0.00631$$^\dagger$ & $0.00545$ & -- & $0.0380$$^\dagger$ \\
MDP8 & shape ($\downarrow$) & $1.23$ & $1.16$$^\dagger$ & $0.171$ & $0.217$ & -- & $0.263$$^\dagger$ \\
\bottomrule
\end{tabular}}
\end{table*}

%% file: tables/tabA_epsst.tex
\begin{table*}[t]
\centering
\caption{\emph{Ground-truth-free gap.} Measured Stein-particle gap $\varepsilon_{\mathrm{St}}$: the distance between SVDRL's particle cloud ($M{=}64d$) and its own reverse-diffusion sampler ($N{=}4096$ draws, Euler on 512 geometric levels of the state--action pair's EMA score net), computed without ground truth; next to it, the ground-truth error of each object in the same distance. $d{=}1$: exact $W_1$ and $W_2$; $d{=}2$: exact $W_2$ (GT side: 4096 draws); $d{>}2$: sliced $W_2$ (quantile estimator). Per seed: $W_2$ per evaluated state--action pair, mean over pairs, then over the tail evaluations (last 20\,\% of the steps; every 4th at $d{=}2$); IQM over the ten final seeds [bootstrap 95\,\% CI]. Configurations: the selected configuration of each MDP and the single configuration ($\rho=1$, $\alpha=10^{-3}$).}
\label{tab:epsst}
\footnotesize
\setlength{\tabcolsep}{3pt}
\adjustbox{max width=\textwidth}{\begin{tabular}{l c l r r r r r}
\toprule
MDP & $d$ & config & $M$ & $\varepsilon_{\mathrm{St}}$ ($W_2$) & $\varepsilon_{\mathrm{St}}$ ($W_1$) & particles--GT ($\downarrow$) & sampler--GT ($\downarrow$) \\
\midrule
MDP1 & 1 & $\rho{=}1$, $\alpha{=}0.1$ (selected) & 64 & $0.240\sd{[0.23,\,0.25]}$ & $0.168\sd{[0.16,\,0.18]}$ & $0.248\sd{[0.24,\,0.26]}$ & $0.0692\sd{[0.065,\,0.075]}$ \\
 &  & $\rho{=}1$, $\alpha{=}10^{-3}$ (single) & 64 & $0.264\sd{[0.25,\,0.27]}$ & $0.189\sd{[0.18,\,0.20]}$ & $0.271\sd{[0.26,\,0.28]}$ & $0.0695\sd{[0.065,\,0.075]}$ \\
\addlinespace
MDP2 & 1 & $\rho{=}1$, $\alpha{=}10^{-3}$ (selected = single) & 64 & $1.26\sd{[1.2,\,1.4]}$ & $0.342\sd{[0.32,\,0.37]}$ & $0.983\sd{[0.91,\,1.0]}$ & $1.08\sd{[1.0,\,1.2]}$ \\
\addlinespace
MDP3 & 2 & $\rho{=}1$, $\alpha{=}10^{-3}$ (selected = single) & 128 & $4.12\sd{[3.8,\,4.9]}$ & -- & $3.32\sd{[3.0,\,3.7]}$ & $4.44\sd{[3.8,\,5.1]}$ \\
\addlinespace
MDP4 ($d{=}5$) & 5 & $\rho{=}1$, $\alpha{=}10^{-3}$ (selected = single) & 320 & $7.95\sd{[7.4,\,9.0]}$ & -- & $0.229\sd{[0.23,\,0.23]}$ & $7.95\sd{[7.4,\,8.9]}$ \\
\addlinespace
MDP5 & 2 & fully annealed + strat. (selected) & 128 & $0.196\sd{[0.19,\,0.20]}$ & -- & $0.191\sd{[0.19,\,0.19]}$ & $0.0774\sd{[0.077,\,0.079]}$ \\
 &  & $\rho{=}1$, $\alpha{=}10^{-3}$ (single) & 128 & $0.217\sd{[0.22,\,0.22]}$ & -- & $0.214\sd{[0.21,\,0.22]}$ & $0.0777\sd{[0.077,\,0.079]}$ \\
\addlinespace
MDP6 & 2 & $\rho{=}1$, $\alpha{=}10^{-3}$ (selected = single) & 128 & $2.41\sd{[2.1,\,2.7]}$ & -- & $1.77\sd{[1.6,\,2.0]}$ & $2.44\sd{[2.2,\,2.7]}$ \\
\addlinespace
MDP7 & 3 & $\rho{=}1$, $\alpha{=}10^{-3}$ (selected = single) & 192 & $0.0817\sd{[0.081,\,0.083]}$ & -- & $0.0733\sd{[0.073,\,0.074]}$ & $0.0377\sd{[0.037,\,0.039]}$ \\
\addlinespace
MDP8 & 2 & + full-cov. matching (selected) & 128 & $0.374\sd{[0.37,\,0.38]}$ & -- & $0.377\sd{[0.37,\,0.38]}$ & $0.179\sd{[0.18,\,0.19]}$ \\
 &  & $\rho{=}1$, $\alpha{=}10^{-3}$ (single) & 128 & $0.443\sd{[0.43,\,0.45]}$ & -- & $0.436\sd{[0.43,\,0.45]}$ & $0.177\sd{[0.17,\,0.18]}$ \\
\bottomrule
\end{tabular}}
\end{table*}

%% file: tables/tabA_rj1.tex
\begin{tabular}{ll rrr rr}
\toprule
MDP & move & shape error ($\downarrow$) & $\div$ Stein & Welch $t$ & distinct & NN / std \\
\midrule
MDP1 & Stein (selected) & $0.0638\sd{\pm 0.0026}$ & -- & -- & 1.0 & $0.022$ \\
 & Langevin & $0.0762\sd{\pm 0.0034}$ & $\times1.19$ & $+2.9$ & 1.0 & $0.021$ \\
 & jitter & $0.0768\sd{\pm 0.0031}$ & $\times1.20$ & $+3.2$ & 1.0 & $0.021$ \\
 & no repulsion & $0.0771\sd{\pm 0.0031}$ & $\times1.21$ & $+3.2$ & 1.0 & $0.021$ \\
 & no move & $0.0771\sd{\pm 0.0032}$ & $\times1.21$ & $+3.2$ & 1.0 & $0.021$ \\
\midrule
MDP5 & Stein (selected) & $0.0374\sd{\pm 0.0004}$ & -- & -- & 1.0 & $0.17$ \\
 & Langevin & $0.0458\sd{\pm 0.0005}$ & $\times1.23$ & $+13.2$ & 1.0 & $0.16$ \\
 & jitter & $0.0453\sd{\pm 0.0004}$ & $\times1.21$ & $+13.9$ & 1.0 & $0.16$ \\
 & no repulsion & $1.07\sd{\pm 0.01}$ & $\times28.6$ & $+85.7$ & 0.030 & $0.0$ \\
 & no move & $0.0452\sd{\pm 0.0008}$ & $\times1.21$ & $+8.8$ & 1.0 & $0.16$ \\
\midrule
MDP8 & Stein (selected) & $0.164\sd{\pm 0.002}$ & -- & -- & 1.0 & $0.13$ \\
 & Langevin & $0.215\sd{\pm 0.004}$ & $\times1.31$ & $+11.6$ & 1.0 & $0.12$ \\
 & jitter & $0.228\sd{\pm 0.004}$ & $\times1.39$ & $+13.2$ & 1.0 & $0.12$ \\
 & no repulsion & $33.5\sd{\pm 1.6}$ & $\times204$ & $+20.6$ & 0.035 & $0.0$ \\
 & no move & $0.224\sd{\pm 0.008}$ & $\times1.36$ & $+6.8$ & 1.0 & $0.12$ \\
\bottomrule
\end{tabular}

%% file: tables/tab2_chain_compact.tex
\setlength{\tabcolsep}{4pt}
\begin{tabular}{l*{12}{c}}
\toprule
Errors ($\downarrow$) & \multicolumn{3}{c}{$\sigma_r=0$} & \multicolumn{3}{c}{$\sigma_r=0.05$} & \multicolumn{3}{c}{$\sigma_r=0.2$} & \multicolumn{3}{c}{$\sigma_r=0.5$} \\
\cmidrule(lr){2-4}\cmidrule(lr){5-7}\cmidrule(lr){8-10}\cmidrule(lr){11-13}
Method & $\W2$ & ED$^2$ & mean & $\W2$ & ED$^2$ & mean & $\W2$ & ED$^2$ & mean & $\W2$ & ED$^2$ & mean \\
\midrule
SVDRL particles & 0.482 & 19.0 & 4.31 & 0.572 & 6.79 & 5.44 & 1.35 & 12.7 & 9.91 & 2.96 & 25.9 & 19.2 \\
SVDRL sampler, $N=4096$ & -- & 12.5 & \textbf{0.413} & -- & \textbf{1.05} & \textbf{0.497} & -- & \textbf{0.778} & \textbf{1.03} & -- & \textbf{1.27} & \textbf{2.10} \\
SVDRL sampler, $N=128$ & 0.520 & 22.1 & 8.39 & 0.600 & 11.5 & 8.78 & 1.37 & 18.7 & 14.6 & 3.00 & 38.8 & 30.5 \\
SVDRL, fully annealed & 0.661 & 77.7 & 15.3 & 0.775 & 70.1 & 18.9 & 1.45 & 48.2 & 25.5 & 2.95 & 75.0 & 53.7 \\
\midrule
\multicolumn{13}{l}{\emph{Variants of \citet{debes2026multivariate}, as specified}} \\
sliced $\W2$ & \textbf{0.438} & 14.7 & 2.20 & \textbf{0.521} & 5.27 & 2.49 & 1.26 & 6.09 & 3.34 & 2.80 & 11.9 & 4.73 \\
sliced Cram\'er & 0.452 & 5.18 & 2.25 & 0.530 & 2.58 & 2.68 & 1.27 & 5.64 & 3.51 & 2.81 & 11.8 & 4.87 \\
MMD, multiquadric & 0.452 & 5.13 & 2.02 & 0.530 & 2.60 & 2.80 & 1.27 & 5.65 & 3.68 & 2.81 & 11.8 & 4.97 \\
MMD, inv.\ multiquadric & 0.452 & 5.26 & 2.20 & 0.530 & 2.62 & 2.78 & 1.27 & 5.66 & 3.67 & 2.81 & 11.8 & 4.95 \\
max-sliced $\W2$ & 0.456 & 21.2 & 3.68 & 0.539 & 7.56 & 3.96 & \textbf{1.26} & 7.66 & 4.64 & \textbf{2.78} & 14.0 & 5.75 \\
max-sliced Cram\'er & 0.468 & 10.6 & 3.02 & 0.557 & 3.72 & 3.62 & 1.28 & 6.78 & 4.78 & 2.80 & 13.8 & 6.69 \\
\midrule
\multicolumn{13}{l}{\emph{The same variants, on our target set}} \\
sliced $\W2$ & 0.442 & 4.73 & 1.17 & 0.525 & 2.40 & 1.60 & 1.27 & 5.13 & 1.82 & 2.83 & 10.6 & 2.49 \\
sliced Cram\'er & 0.454 & 4.66 & 1.72 & 0.530 & 2.33 & 2.16 & 1.26 & 5.25 & 2.14 & 2.81 & 11.4 & 2.87 \\
MMD, multiquadric & 0.452 & 2.77 & 1.62 & 0.530 & 2.40 & 2.43 & 1.26 & 5.29 & 2.64 & 2.82 & 11.1 & 3.19 \\
MMD, inv.\ multiquadric & 0.452 & \textbf{2.75} & 1.60 & 0.529 & 2.40 & 2.44 & 1.26 & 5.33 & 2.67 & 2.82 & 11.2 & 3.27 \\
max-sliced $\W2$ & 0.446 & 6.96 & 1.59 & 0.534 & 2.62 & 1.70 & 1.27 & 5.81 & 2.19 & 2.80 & 12.4 & 2.48 \\
max-sliced Cram\'er & 0.460 & 9.59 & 2.25 & 0.551 & 2.59 & 1.94 & 1.28 & 5.87 & 2.38 & 2.80 & 12.9 & 3.53 \\
\midrule
i.i.d.\ sample, $M=128$ & 0.536 & 12.9 & 9.71 & 0.599 & 9.64 & 8.78 & 1.36 & 19.2 & 15.4 & 2.97 & 35.9 & 28.0 \\
i.i.d.\ sample, $4096$ & -- & 0.298 & 1.63 & -- & 0.289 & 1.53 & -- & 0.584 & 2.69 & -- & 1.36 & 5.62 \\
\bottomrule
\end{tabular}

%% file: tables/tabA_morlrg_compact.tex
\emph{Attack probability} $p=0.1$ & & & & & & & & & & & \\*
SVDRL, hybrid & 32 & 0.093 & 0.028 & 0 & 0.019 & 0.013 & 0.030 & 0 & 1.0 & 2.8 & 3.1 \\
SVDRL, hybrid, $r>6$ only & 32 & 0.52 & 0.41 & 0 & 0.15 & 0.12 & 0.48 & 0 & 68 & 73 & 23 \\
SVDRL, hybrid, superseded${}^\dagger$ & 32 & 0.093 & 0.028 & 0 & 0.019 & 0.013 & 0.030 & 0 & 1.0 & 2.8 & 3.1 \\
SVDRL, particles & 32 & 0.093 & 0.028 & 0 & 0.011 & 0.0063 & 0.030 & 0 & 0.012 & 1.1 & 3.1 \\
SVDRL, sampler & 32 & 1.0 & 1.0 & 0 & 0.28 & 0.18 & 0.80 & 0 & 103 & 128 & 36 \\
\addlinespace[3pt]
SVDRL, hybrid & 128 & 0.028 & 0.0083 & 0 & 0.014 & 0.0098 & 0 & 0 & 0.97 & 2.4 & 1.1 \\
SVDRL, hybrid, $r>6$ only & 128 & 0.030 & 0.0094 & 0 & 0.015 & 0.011 & 0 & 0 & 1.1 & 2.8 & 1.1 \\
SVDRL, hybrid, superseded${}^\dagger$ & 128 & 0.028 & 0.0083 & 0 & 0.013 & 0.0090 & 0 & 0 & 0.88 & 2.4 & 1.1 \\
SVDRL, particles & 128 & 0.028 & 0.0083 & 0 & 0.0035 & 0.0018 & 0 & 0 & 0.0078 & 0.82 & 1.0 \\
SVDRL, sampler & 128 & 1.0 & 1.0 & 0 & 0.28 & 0.18 & 0.80 & 0 & 102 & 127 & 36 \\
\addlinespace[3pt]
SVDRL, rare event, sampler & 32 & 1.0 & 0.99 & 0 & 0.30 & 0.18 & 0.80 & 0 & 106 & 120 & 33 \\
SVDRL, rare event, EMA sampler & 32 & 0.98 & 0.97 & 0 & 0.29 & 0.18 & 0.80 & 0 & 105 & 127 & 36 \\
SVDRL, rare event, sampler & 128 & 0.96 & 0.96 & 0 & 0.30 & 0.18 & 0.80 & 0 & 104 & 121 & 33 \\
SVDRL, rare event, EMA sampler & 128 & 0.95 & 0.95 & 0 & 0.28 & 0.18 & 0.80 & 0 & 104 & 127 & 36 \\
\addlinespace[3pt]
MMD-RBF & 32 & 2.5 & 0.78 & 0 & 0.45 & 0.24 & 0.66 & 0 & 94 & 100 & 24 \\
MMD-RBF & 128 & 0.59 & 0.44 & 0 & 0.28 & 0.18 & 0.46 & 0 & 73 & 52 & 17 \\
MD3QN & 32 & 1.5 & 0.60 & 0 & 0.34 & 0.20 & 0.61 & 0 & 56 & 73 & 17 \\
MD3QN & 128 & 0.43 & 0.36 & 0 & 0.22 & 0.14 & 0.35 & 0 & 46 & 46 & 11 \\
Energy MMD & 32 & 0.43 & 0.12 & 0 & 0.090 & 0.060 & 0.30 & 0 & 6.7 & 9.0 & 11 \\
Energy MMD & 128 & 0.18 & 0.076 & 0 & 0.065 & 0.055 & 0.14 & 0 & 7.1 & 4.8 & 4.8 \\
Sliced Cram\'er & 32 & 0.43 & 0.16 & 0 & 0.099 & 0.067 & 0.34 & 0 & 9.6 & 11 & 10 \\
Sliced Cram\'er & 128 & 0.19 & 0.086 & 0 & 0.072 & 0.064 & 0.16 & 0 & 7.9 & 4.9 & 4.7 \\
Sliced $\W2$ & 32 & 0.44 & 0.11 & 0 & 0.053 & 0.032 & 0.28 & 0 & 0.034 & 5.7 & 11 \\
Sliced $\W2$ & 128 & 0.19 & 0.082 & 0 & 0.030 & 0.019 & 0.14 & 0 & 0.014 & 1.4 & 5.3 \\
Sinkhorn & 32 & 0.97 & 0.14 & 0 & 0.12 & 0.077 & 0.34 & 0 & 9.8 & 22 & 9.8 \\
\addlinespace[3pt]
QR, rare event & 1000 & 0.59 & 0.61 & 0.27 & 1.1 & 1.5 & 0.58 & 0.52 & -- & -- & -- \\
QR, rare event, $156\,800$ steps & 1000 & 0.75 & 0.59 & 0.35 & 1.7 & 2.0 & 0.65 & 0.96 & -- & -- & -- \\
QR, oracle $\lambda$, lr $0.03$ & 1000 & 0.19 & 0.19 & 0.029 & 0.78 & 1.2 & 0.20 & 0.57 & -- & -- & -- \\
\midrule
\emph{Attack probability} $p=0.02$ & & & & & & & & & & & \\*
SVDRL, hybrid & 32 & 1.8 & 0.21 & 0 & 0.16 & 0.089 & 0.48 & 0.20 & 1.8 & 7.5 & 3.7 \\
SVDRL, hybrid, $r>6$ only & 32 & 1.8 & 0.21 & 0 & 0.14 & 0.074 & 0.48 & 0 & 3.3 & 12 & 4.1 \\
SVDRL, hybrid, superseded${}^\dagger$ & 32 & 1.8 & 0.21 & 0 & 0.16 & 0.089 & 0.48 & 0.20 & 1.8 & 7.5 & 3.7 \\
SVDRL, particles & 32 & 1.8 & 0.21 & 0 & 0.18 & 0.10 & 0.48 & 0.30 & 0.0057 & 0.75 & 3.6 \\
SVDRL, sampler & 32 & 9.1 & 7.8 & 0 & 0.50 & 0.21 & 0.80 & 0 & 21 & 80 & 13 \\
\addlinespace[3pt]
SVDRL, hybrid & 128 & 0.15 & 0.055 & 0 & 0.022 & 0.015 & 0.13 & 0.058 & 0.97 & 2.3 & 0.91 \\
SVDRL, hybrid, $r>6$ only & 128 & 6.8 & 2.8 & 0 & 0.41 & 0.16 & 0.80 & 0 & 17 & 76 & 13 \\
SVDRL, hybrid, superseded${}^\dagger$ & 128 & 0.15 & 0.055 & 0 & 0.022 & 0.015 & 0.13 & 0.058 & 0.93 & 2.2 & 0.91 \\
SVDRL, particles & 128 & 0.15 & 0.055 & 0 & 0.015 & 0.0089 & 0.13 & 0.058 & 0.0062 & 0.60 & 0.91 \\
SVDRL, sampler & 128 & 7.9 & 7.4 & 0 & 0.47 & 0.19 & 0.80 & 0 & 20 & 87 & 15 \\
\addlinespace[3pt]
SVDRL, rare event, sampler & 32 & 7.5 & 7.3 & 0 & 0.51 & 0.22 & 0.80 & 0 & 21 & 74 & 12 \\
SVDRL, rare event, EMA sampler & 32 & 7.4 & 7.3 & 0 & 0.50 & 0.21 & 0.80 & 0 & 21 & 81 & 13 \\
SVDRL, rare event, sampler & 128 & 7.1 & 7.0 & 0 & 0.48 & 0.20 & 0.80 & 0 & 21 & 78 & 13 \\
SVDRL, rare event, EMA sampler & 128 & 7.0 & 6.9 & 0 & 0.47 & 0.19 & 0.80 & 0 & 20 & 86 & 15 \\
\addlinespace[3pt]
MMD-RBF & 32 & 8.8 & 5.5 & 0 & 0.71 & 0.39 & 0.75 & 0.81 & 26 & 52 & 11 \\
MMD-RBF & 128 & 7.9 & 5.8 & 0 & 0.54 & 0.27 & 0.80 & 0.22 & 23 & 39 & 8.1 \\
MD3QN & 32 & 9.5 & 6.0 & 0 & 0.68 & 0.39 & 0.78 & 0.74 & 25 & 53 & 11 \\
MD3QN & 128 & 4.5 & 1.6 & 0 & 0.43 & 0.24 & 0.79 & 0.26 & 20 & 44 & 7.6 \\
Energy MMD & 32 & 4.0 & 0.28 & 0 & 0.45 & 0.27 & 0.57 & 0.70 & 3.6 & 6.5 & 5.4 \\
Energy MMD & 128 & 0.74 & 0.28 & 0 & 0.19 & 0.12 & 0.50 & 0.15 & 4.5 & 4.5 & 2.5 \\
Sliced Cram\'er & 32 & 4.0 & 0.34 & 0 & 0.43 & 0.25 & 0.55 & 0.70 & 4.7 & 7.6 & 5.4 \\
Sliced Cram\'er & 128 & 0.74 & 0.29 & 0 & 0.17 & 0.11 & 0.48 & 0.14 & 4.4 & 4.7 & 2.3 \\
Sliced $\W2$ & 32 & 4.0 & 0.31 & 0 & 0.38 & 0.23 & 0.57 & 0.70 & 0.0077 & 1.6 & 5.8 \\
Sliced $\W2$ & 128 & 0.45 & 0.18 & 0 & 0.060 & 0.038 & 0.41 & 0.17 & 0.0098 & 0.57 & 2.6 \\
Sinkhorn & 32 & 6.0 & 0.61 & 0 & 0.54 & 0.32 & 0.65 & 0.70 & 10 & 25 & 7.3 \\
\addlinespace[3pt]
QR, rare event & 1000 & 1.9 & 2.1 & 0.27 & 1.3 & 1.6 & 0.67 & 0.60 & -- & -- & -- \\
QR, rare event, $156\,800$ steps & 1000 & 1.9 & 2.2 & 0.35 & 1.8 & 2.3 & 0.70 & 0.97 & -- & -- & -- \\
QR, oracle $\lambda$, lr $0.03$ & 1000 & 0.54 & 0.54 & 0.029 & 1.0 & 1.4 & 0.40 & 0.67 & -- & -- & -- \\
\midrule
\emph{Attack probability} $p=0.005$ & & & & & & & & & & & \\*
SVDRL, hybrid & 32 & 6.8 & 0.51 & 0 & 0.51 & 0.29 & 0.72 & 0.67 & 5.0 & 18 & 3.0 \\
SVDRL, hybrid, $r>6$ only & 32 & 6.8 & 0.51 & 0 & 0.51 & 0.29 & 0.72 & 0.67 & 5.0 & 18 & 3.0 \\
SVDRL, hybrid, superseded${}^\dagger$ & 32 & 6.8 & 0.51 & 0 & 0.51 & 0.29 & 0.72 & 0.67 & 5.0 & 18 & 3.0 \\
SVDRL, particles & 32 & 6.8 & 0.51 & 0 & 0.57 & 0.34 & 0.72 & 0.73 & 0.0055 & 0.40 & 2.8 \\
SVDRL, sampler & 32 & 8.9 & 8.9 & 0 & 0.63 & 0.35 & 0.80 & 0.68 & 6.8 & 24 & 2.4 \\
\addlinespace[3pt]
SVDRL, hybrid & 128 & 1.6 & 0.21 & 0 & 0.18 & 0.10 & 0.48 & 0.12 & 1.7 & 7.8 & 1.2 \\
SVDRL, hybrid, $r>6$ only & 128 & 1.6 & 0.21 & 0 & 0.18 & 0.10 & 0.48 & 0.12 & 1.8 & 7.8 & 1.2 \\
SVDRL, hybrid, superseded${}^\dagger$ & 128 & 1.6 & 0.21 & 0 & 0.18 & 0.10 & 0.48 & 0.12 & 1.7 & 7.3 & 1.1 \\
SVDRL, particles & 128 & 1.6 & 0.21 & 0 & 0.19 & 0.11 & 0.48 & 0.31 & 0.0057 & 0.28 & 0.92 \\
SVDRL, sampler & 128 & 8.9 & 8.9 & 0 & 0.56 & 0.30 & 0.80 & 0.14 & 6.1 & 46 & 4.4 \\
\addlinespace[3pt]
SVDRL, rare event, sampler & 32 & 8.9 & 8.9 & 0 & 0.59 & 0.33 & 0.80 & 0.49 & 6.1 & 32 & 3.0 \\
SVDRL, rare event, EMA sampler & 32 & 8.9 & 8.9 & 0 & 0.59 & 0.33 & 0.80 & 0.49 & 6.9 & 36 & 3.3 \\
SVDRL, rare event, sampler & 128 & 8.9 & 8.9 & 0 & 0.56 & 0.30 & 0.80 & 0.050 & 5.6 & 43 & 4.2 \\
SVDRL, rare event, EMA sampler & 128 & 8.9 & 8.9 & 0 & 0.55 & 0.30 & 0.80 & 0.042 & 6.0 & 47 & 4.7 \\
\addlinespace[3pt]
MMD-RBF & 32 & 8.9 & 8.9 & 0 & 0.68 & 0.40 & 0.80 & 0.78 & 6.8 & 16 & 3.6 \\
MMD-RBF & 128 & 8.9 & 8.9 & 0 & 0.61 & 0.35 & 0.80 & 0.68 & 5.5 & 18 & 2.8 \\
MD3QN & 32 & 8.9 & 8.9 & 0 & 0.69 & 0.41 & 0.80 & 0.78 & 6.8 & 21 & 3.8 \\
MD3QN & 128 & 8.9 & 8.9 & 0 & 0.62 & 0.36 & 0.80 & 0.70 & 5.5 & 24 & 3.1 \\
Energy MMD & 32 & 7.1 & 0.51 & 0 & 0.60 & 0.36 & 0.73 & 0.78 & 1.5 & 3.5 & 3.1 \\
Energy MMD & 128 & 5.3 & 0.75 & 0 & 0.47 & 0.29 & 0.67 & 0.63 & 3.4 & 7.8 & 1.9 \\
Sliced Cram\'er & 32 & 7.0 & 0.72 & 0 & 0.60 & 0.36 & 0.73 & 0.78 & 1.8 & 4.6 & 3.1 \\
Sliced Cram\'er & 128 & 5.7 & 0.61 & 0 & 0.47 & 0.28 & 0.68 & 0.63 & 3.4 & 7.7 & 1.9 \\
Sliced $\W2$ & 32 & 6.9 & 0.55 & 0 & 0.57 & 0.34 & 0.72 & 0.78 & 0.0064 & 0.59 & 3.0 \\
Sliced $\W2$ & 128 & 4.2 & 0.29 & 0 & 0.35 & 0.21 & 0.60 & 0.59 & 0.0059 & 0.31 & 1.5 \\
Sinkhorn & 32 & 8.9 & 8.9 & 0 & 0.68 & 0.41 & 0.80 & 0.78 & 6.2 & 16 & 3.7 \\
\addlinespace[3pt]
QR, rare event & 1000 & 2.9 & 3.1 & 0.27 & 1.4 & 1.7 & 0.72 & 0.65 & -- & -- & -- \\
QR, rare event, $156\,800$ steps & 1000 & 3.2 & 3.5 & 0.35 & 2.1 & 2.4 & 0.68 & 0.82 & -- & -- & -- \\
QR, oracle $\lambda$, lr $0.03$ & 1000 & 1.6 & 1.6 & 0.029 & 1.2 & 1.5 & 0.60 & 0.79 & -- & -- & -- \\

%% file: tables/tabA_mujoco_returns.tex
\begin{table*}[t]
\centering\small
\caption{\emph{Returns per head.} MuJoCo returns at 1M environment steps: IQM over seeds of each run's mean eval return over 800k--1M (21 evaluations, as in \cref{tab:mujoco}) and of its last 5 evaluations, with 95\,\% bootstrap CIs over seeds. The SVDRL configuration per environment is declared (HalfCheetah: the default configuration; Walker2d, Hopper, Ant: target mean, the variant that takes the moment-matching mean from the bootstrap targets; a per-environment choice between two declared variants); the rows marked $^\ast$ are the other SVDRL variant run on that environment (HalfCheetah: log-variance moment head). $^\dagger$Includes runs stopped by the divergence guard, counted with the evaluations they logged: Ant SVDRL (target mean), one seed (16 window evaluations).}
\label{tab:mujoco_returns}
\begin{tabular}{llrr}
\toprule
Env & Head & 800k--1M IQM [95\,\% CI] ($\uparrow$) & last-5 IQM [95\,\% CI] ($\uparrow$) \\
\midrule
HalfCheetah & SVDRL (default) & $8720\sd{[7565,\,9889]}$ & $8821\sd{[7158,\,10035]}$ \\
 & SAC & $10014\sd{[9446,\,10512]}$ & $9988\sd{[8673,\,10742]}$ \\
 & IQN & $9219\sd{[8338,\,9879]}$ & $10227\sd{[7198,\,11087]}$ \\
 & MD3QN & $8986\sd{[8449,\,9877]}$ & $8676\sd{[5406,\,11376]}$ \\
 & energy-MMD & $10098\sd{[8885,\,11186]}$ & $10571\sd{[8256,\,11855]}$ \\
 & SVDRL, log-variance head$^\ast$ & $9193\sd{[7393,\,10571]}$ & $8640\sd{[4898,\,11551]}$ \\
\midrule
Walker2d & SVDRL (target mean) & $5514\sd{[5025,\,5780]}$ & $5482\sd{[4883,\,5843]}$ \\
 & SAC & $5372\sd{[4939,\,5744]}$ & $5543\sd{[4925,\,5956]}$ \\
 & IQN & $5083\sd{[4466,\,5496]}$ & $4928\sd{[4289,\,5501]}$ \\
 & MD3QN & $5661\sd{[5129,\,5907]}$ & $5695\sd{[5235,\,5984]}$ \\
 & energy-MMD & $5609\sd{[5361,\,5805]}$ & $5713\sd{[5434,\,5988]}$ \\
 & SVDRL (default)$^\ast$ & $3774\sd{[3560,\,4513]}$ & $3422\sd{[2249,\,4629]}$ \\
\midrule
Hopper & SVDRL (target mean) & $2187\sd{[2047,\,2294]}$ & $2061\sd{[1849,\,2334]}$ \\
 & SAC & $2187\sd{[2083,\,2319]}$ & $1935\sd{[1738,\,2483]}$ \\
 & IQN & $2019\sd{[1841,\,2221]}$ & $2089\sd{[1753,\,2416]}$ \\
 & MD3QN & $1822\sd{[1743,\,1889]}$ & $1943\sd{[1524,\,2425]}$ \\
 & energy-MMD & $2180\sd{[2047,\,2255]}$ & $2214\sd{[1685,\,2529]}$ \\
\midrule
Ant & SVDRL (target mean) & $2429\sd{[2059,\,2654]}$ & $2621\sd{[2124,\,2952]}$ \\
 & SAC & $2426\sd{[2214,\,2759]}$ & $2552\sd{[2305,\,2995]}$ \\
 & IQN & $2411\sd{[2095,\,3029]}$ & $2523\sd{[2242,\,3074]}$ \\
 & MD3QN & $1233\sd{[895,\,2343]}$ & $1316\sd{[917,\,2545]}$ \\
 & energy-MMD & $2928\sd{[2628,\,3598]}$ & $3088\sd{[2617,\,3910]}$ \\
 & SVDRL (default)$^\ast$ & $1052\sd{[956,\,1165]}$ & $1176\sd{[887,\,1602]}$ \\
\bottomrule
\end{tabular}
\end{table*}

%% file: tables/tabA_mc_compact.tex
\begin{tabular}{lcccccc}
\toprule
Head & shape ($\downarrow$) & std ratio & mean error & full $\W2$ ($\downarrow$) & 80\,\% cov. & centered \\
\midrule
\multicolumn{7}{l}{\emph{HalfCheetah}} \\
SVDRL (default) & $0.43$ & 1.36 & -88 & $4.9$ & 0.01 & 0.92 \\[-1pt]
 & $\sd{[0.28,\,0.56]}$ &  &  & $\sd{[3.7,\,7.0]}$ &  &  \\[1pt]
IQN & $1.05$ & 2.01 & -74 & $4.6$ & 0.05 & 0.97 \\[-1pt]
 & $\sd{[0.82,\,1.41]}$ &  &  & $\sd{[3.3,\,5.7]}$ &  &  \\[1pt]
MD3QN & $0.69$ & 1.64 & -50 & $6.8$ & 0.00 & 0.96 \\[-1pt]
 & $\sd{[0.50,\,0.92]}$ &  &  & $\sd{[4.9,\,8.9]}$ &  &  \\[1pt]
energy-MMD & $0.52$ & 1.46 & -76 & $4.1$ & 0.02 & 0.94 \\[-1pt]
 & $\sd{[0.39,\,0.71]}$ &  &  & $\sd{[3.3,\,4.9]}$ &  &  \\[1pt]
SVDRL, log-variance head$^\ast$ & $0.51$ & 1.46 & -10 & $4.2$ & 0.01 & 0.94 \\[-1pt]
 & $\sd{[0.44,\,0.91]}$ &  &  & $\sd{[3.7,\,6.0]}$ &  &  \\[1pt]
\quad its sampler$^\ast$ & $0.54$ & 1.49 & -11 & $4.2$ & 0.01 & 0.95 \\[-1pt]
 & $\sd{[0.43,\,0.98]}$ &  &  & $\sd{[3.7,\,6.2]}$ &  &  \\[1pt]
SAC & -- & -- & -55 & -- & -- & -- \\
\midrule
\multicolumn{7}{l}{\emph{Walker2d}} \\
SVDRL (target mean) & $0.92$ & 1.34 & -68 & $4.3$ & 0.02 & 0.95 \\[-1pt]
 & $\sd{[0.61,\,1.31]}$ &  &  & $\sd{[1.9,\,9.9]}$ &  &  \\[1pt]
IQN & $3.99$ & 4.85 & -44 & $6.3$ & 0.46 & 1.00 \\[-1pt]
 & $\sd{[2.32,\,6.13]}$ &  &  & $\sd{[3.6,\,8.8]}$ &  &  \\[1pt]
MD3QN & $1.19$ & 1.94 & -18 & $3.4$ & 0.10 & 0.99 \\[-1pt]
 & $\sd{[0.90,\,2.75]}$ &  &  & $\sd{[1.9,\,7.9]}$ &  &  \\[1pt]
energy-MMD & $2.08$ & 2.84 & -21 & $3.9$ & 0.51 & 0.98 \\[-1pt]
 & $\sd{[0.82,\,7.51]}$ &  &  & $\sd{[1.6,\,10.2]}$ &  &  \\[1pt]
SVDRL (default)$^\ast$ & $0.70$ & 0.67 & 100 & $3.7$ & 0.02 & 0.68 \\[-1pt]
 & $\sd{[0.42,\,1.10]}$ &  &  & $\sd{[1.8,\,17.1]}$ &  &  \\[1pt]
SAC & -- & -- & -14 & -- & -- & -- \\
\midrule
\multicolumn{7}{l}{\emph{Hopper}} \\
SVDRL (target mean) & $2.25$ & 2.95 & -8 & $5.6$ & 0.20 & 0.99 \\[-1pt]
 & $\sd{[0.87,\,4.56]}$ &  &  & $\sd{[3.5,\,18.2]}$ &  &  \\[1pt]
IQN & $10.4$ & 11.18 & 2 & $14.2$ & 0.24 & 1.00 \\[-1pt]
 & $\sd{[6.7,\,20.6]}$ &  &  & $\sd{[8.9,\,26.2]}$ &  &  \\[1pt]
MD3QN & $6.10$ & 6.69 & -21 & $15.5$ & 0.00 & 0.99 \\[-1pt]
 & $\sd{[1.69,\,14.74]}$ &  &  & $\sd{[4.4,\,33.4]}$ &  &  \\[1pt]
energy-MMD & $21.3$ & 21.97 & -28 & $26.0$ & 0.22 & 1.00 \\[-1pt]
 & $\sd{[10.7,\,44.5]}$ &  &  & $\sd{[11.9,\,62.5]}$ &  &  \\[1pt]
SAC & -- & -- & 2 & -- & -- & -- \\
\midrule
\multicolumn{7}{l}{\emph{Ant}} \\
SVDRL (target mean) & $0.59$ & 0.74 & -27 & $3.1$ & 0.03 & 0.80 \\[-1pt]
 & $\sd{[0.57,\,0.63]}$ &  &  & $\sd{[2.7,\,3.4]}$ &  &  \\[1pt]
IQN & $0.73$ & 1.24 & -37 & $1.7$ & 0.27 & 0.85 \\[-1pt]
 & $\sd{[0.42,\,1.19]}$ &  &  & $\sd{[1.2,\,2.2]}$ &  &  \\[1pt]
MD3QN & $0.72$ & 1.52 & -22 & $3.1$ & 0.16 & 0.93 \\[-1pt]
 & $\sd{[0.56,\,2.95]}$ &  &  & $\sd{[1.8,\,4.3]}$ &  &  \\[1pt]
energy-MMD & $0.49$ & 0.75 & -10 & $2.0$ & 0.08 & 0.82 \\[-1pt]
 & $\sd{[0.45,\,0.56]}$ &  &  & $\sd{[1.5,\,2.4]}$ &  &  \\[1pt]
SVDRL (default)$^\ast$ & $0.57$ & 0.51 & -23 & $1.7$ & 0.11 & 0.53 \\[-1pt]
 & $\sd{[0.48,\,0.63]}$ &  &  & $\sd{[1.1,\,2.7]}$ &  &  \\[1pt]
SAC & -- & -- & -14 & -- & -- & -- \\
\bottomrule
\end{tabular}

%% file: tables/tabA_theoryarms.tex
\begin{table*}[t]
\centering
\small
\caption{\emph{Theory checks.} Final checkpoint, mean $\pm$ 1.96 s.e. over seeds (raw units unless stated). (a, b) bootstrap source and particle set (\cref{thm:e2e}); (c) multi-level vs single-level networks (\cref{rem:laddernecessity}), state--action pair $s_0$; (d) \cref{thm:gen}, normalized units except $W_1$; (e) multi-level risk $\calE_{\mathrm{ml}}$ (normalized, summed over evaluated state--action pairs) under DSM, consistency and both; $n$: seeds with a finite final checkpoint.}
\label{tab:theoryarms}
\adjustbox{max width=\textwidth}{\begin{tabular}{llrrrr}
\toprule
(a, b) & bootstrap / particles & sampler $\bar W_1$ ($\downarrow$) & particles $\bar W_1$ ($\downarrow$) & $\veps_{\mathrm{St}}$ ($s_0$) & sampler mode mass ($s_0$) \\
\midrule
MDP1 & particle (Stein particles) & $0.132\sd{\pm 0.029}$ & $0.247\sd{\pm 0.032}$ & $0.214\sd{\pm 0.011}$ & -- \\
MDP1 & sampler & $0.454\sd{\pm 0.165}$ & $0.558\sd{\pm 0.244}$ & $0.289\sd{\pm 0.071}$ & -- \\
MDP1 & sampler, moment-matched & $0.139\sd{\pm 0.030}$ & $0.209\sd{\pm 0.042}$ & $0.173\sd{\pm 0.031}$ & -- \\
MDP1 & sampler-seeded & $0.372\sd{\pm 0.107}$ & $0.472\sd{\pm 0.156}$ & $0.288\sd{\pm 0.077}$ & -- \\
MDP1 & sampler-seeded, moment-matched & $0.132\sd{\pm 0.025}$ & $0.208\sd{\pm 0.036}$ & $0.182\sd{\pm 0.029}$ & -- \\
MDP2 & particle (Stein particles) & $0.601\sd{\pm 0.135}$ & $1.61\sd{\pm 0.33}$ & $1.40\sd{\pm 0.31}$ & $0.0260\sd{\pm 0.0166}$ \\
MDP2 & sampler & $2.32\sd{\pm 0.22}$ & $2.36\sd{\pm 0.30}$ & $1.42\sd{\pm 0.37}$ & $0.0322\sd{\pm 0.0233}$ \\
MDP2 & sampler, moment-matched & $0.642\sd{\pm 0.103}$ & $1.63\sd{\pm 0.38}$ & $1.33\sd{\pm 0.43}$ & $0.0263\sd{\pm 0.0145}$ \\
MDP2 & sampler-seeded & $2.00\sd{\pm 0.19}$ & $2.14\sd{\pm 0.19}$ & $1.04\sd{\pm 0.16}$ & $0.0300\sd{\pm 0.0125}$ \\
MDP2 & sampler-seeded, moment-matched & $0.668\sd{\pm 0.125}$ & $1.48\sd{\pm 0.46}$ & $1.30\sd{\pm 0.43}$ & $0.0309\sd{\pm 0.0132}$ \\
MDP3 & particle (Stein particles) & $3.34\sd{\pm 0.40}$ & $6.16\sd{\pm 1.28}$ & $4.68\sd{\pm 1.39}$ & $0.0645\sd{\pm 0.0199}$ \\
MDP3 & sampler & $104\sd{\pm 8}$ & $103\sd{\pm 8}$ & $13.0\sd{\pm 2.3}$ & $0.466\sd{\pm 0.074}$ \\
MDP3 & sampler, moment-matched & $3.44\sd{\pm 0.38}$ & $6.36\sd{\pm 2.07}$ & $5.43\sd{\pm 1.95}$ & $0.0616\sd{\pm 0.0237}$ \\
MDP3 & sampler-seeded & $109\sd{\pm 5}$ & $111\sd{\pm 5}$ & $12.7\sd{\pm 1.7}$ & $0.400\sd{\pm 0.052}$ \\
MDP3 & sampler-seeded, moment-matched & $3.35\sd{\pm 0.33}$ & $6.04\sd{\pm 1.87}$ & $4.84\sd{\pm 1.97}$ & $0.0561\sd{\pm 0.0198}$ \\
\bottomrule
\end{tabular}}

\medskip
\adjustbox{max width=\textwidth}{\begin{tabular}{llrrrr}
\toprule
(c) & network & mode mass & $W_1$ & multi-level risk (norm.) & own-level $\calE_\sigma$ (unnorm.) \\
\midrule
MDP2 & main (multi-level, DSM + cons.) & $0.0260\sd{\pm 0.0166}$ & $0.601\sd{\pm 0.135}$ & $0.0944\sd{\pm 0.0229}$ & -- \\
MDP2 & multi-level, DSM only & $0.0563\sd{\pm 0.0346}$ & $0.909\sd{\pm 0.275}$ & $0.182\sd{\pm 0.050}$ & -- \\
MDP2 & single level $\sigma_{\min}$ & $0.320\sd{\pm 0.010}$ & $5.96\sd{\pm 2.30}$ & $251\sd{\pm 155}$ & $3.20\sd{\pm 0.96}$ \\
MDP2 & single level $\Delta/2$ & $0.181\sd{\pm 0.022}$ & $6.09\sd{\pm 0.10}$ & $2.56\sd{\pm 0.09}$ & $0.00342\sd{\pm 0.00143}$ \\
MDP3 & main (multi-level, DSM + cons.) & $0.0645\sd{\pm 0.0199}$ & $3.34\sd{\pm 0.40}$ & $0.364\sd{\pm 0.100}$ & -- \\
MDP3 & multi-level, DSM only & $0.0986\sd{\pm 0.0440}$ & $3.50\sd{\pm 0.76}$ & $0.438\sd{\pm 0.135}$ & -- \\
MDP3 & single level $\sigma_{\min}$ & $0.461\sd{\pm 0.059}$ & $31.7\sd{\pm 6.2}$ & $236\sd{\pm 261}$ & $18.3\sd{\pm 4.7}$ \\
MDP3 & single level $\Delta/2$ & $0.0875\sd{\pm 0.0358}$ & $16.6\sd{\pm 0.3}$ & $30.0\sd{\pm 9.9}$ & $0.0330\sd{\pm 0.0088}$ \\
\bottomrule
\end{tabular}}

\medskip
\adjustbox{max width=\textwidth}{\begin{tabular}{llrrrrrrr}
\toprule
(d) & state--action pair & $\calE_{\mathrm{ml}}$ & $\veps_{\mathrm{disc}}$ & $\mathrm{KL}_0$ & $\veps_{\mathrm{TV}}$ & TV meas. & $W_1$ bound & $W_1$ meas. \\
\midrule
MDP1 & $s_0$ & $0.0285\sd{\pm 0.0119}$ & $0.2490\sd{\pm 0.0005}$ & $0.0175\sd{\pm 0.0001}$ & $0.384\sd{\pm 0.008}$ & $0.0461\sd{\pm 0.0072}$ & $53.9\sd{\pm 1.1}$ & $0.136\sd{\pm 0.030}$ \\
MDP2 & $s_0$ & $0.0944\sd{\pm 0.0229}$ & $0.5505\sd{\pm 0.0002}$ & $0.01469\sd{\pm 0.00005}$ & $0.574\sd{\pm 0.010}$ & $0.0839\sd{\pm 0.0177}$ & $92.8\sd{\pm 1.6}$ & $0.575\sd{\pm 0.135}$ \\
MDP2 & $s_1$ & $0.00449\sd{\pm 0.00199}$ & $0.6954\sd{\pm 0.0002}$ & $1.76 \times 10^{-4}\sd{\pm 4.8 \times 10^{-7}}$ & $0.5916\sd{\pm 0.0009}$ & $0.0252\sd{\pm 0.0098}$ & $95.6\sd{\pm 0.1}$ & $0.346\sd{\pm 0.030}$ \\
MDP2 & $s_2$ & $0.0102\sd{\pm 0.0044}$ & $0.6468\sd{\pm 0.0002}$ & $3.15 \times 10^{-4}\sd{\pm 2.1 \times 10^{-6}}$ & $0.573\sd{\pm 0.002}$ & $0.0370\sd{\pm 0.0103}$ & $92.7\sd{\pm 0.3}$ & $0.306\sd{\pm 0.036}$ \\
MDP3 & $s_0$ & $0.364\sd{\pm 0.100}$ & $1.328\sd{\pm 0.002}$ & $0.0312\sd{\pm 0.0002}$ & $0.927\sd{\pm 0.027}$ & $0.199\sd{\pm 0.023}$ & $296\sd{\pm 8}$ & $3.10\sd{\pm 0.34}$ \\
MDP3 & $s_1$ & $0.0988\sd{\pm 0.0264}$ & $1.524\sd{\pm 0.001}$ & $1.38 \times 10^{-4}\sd{\pm 1.3 \times 10^{-6}}$ & $0.901\sd{\pm 0.007}$ & $0.179\sd{\pm 0.007}$ & $287\sd{\pm 2}$ & $1.76\sd{\pm 0.04}$ \\
MDP3 & $s_2$ & $0.115\sd{\pm 0.071}$ & $1.544\sd{\pm 0.001}$ & $1.02 \times 10^{-4}\sd{\pm 1.3 \times 10^{-6}}$ & $0.910\sd{\pm 0.019}$ & $0.169\sd{\pm 0.010}$ & $290\sd{\pm 6}$ & $1.87\sd{\pm 0.05}$ \\
MDP3 & $s_3$ & $0.0733\sd{\pm 0.0365}$ & $1.507\sd{\pm 0.001}$ & $1.46 \times 10^{-4}\sd{\pm 8.9 \times 10^{-7}}$ & $0.889\sd{\pm 0.010}$ & $0.200\sd{\pm 0.006}$ & $284\sd{\pm 3}$ & $1.60\sd{\pm 0.05}$ \\
\bottomrule
\end{tabular}}

\medskip
\adjustbox{max width=\textwidth}{\begin{tabular}{llrrr}
\toprule
(e) & training signal & $\calE_{\mathrm{ml}}$ & sampler $\bar W_1$ & $n$ \\
\midrule
3-cycle & DSM only & $5.08 \times 10^{-5}\sd{\pm 1.6 \times 10^{-5}}$ & $0.0420\sd{\pm 0.0002}$ & 10/10 \\
3-cycle & consistency only & $6.27 \times 10^{-4}\sd{\pm 5.2 \times 10^{-4}}$ & $0.0420\sd{\pm 0.0002}$ & 10/10 \\
3-cycle & DSM + consistency & $2.48 \times 10^{-5}\sd{\pm 1.4 \times 10^{-5}}$ & $0.0420\sd{\pm 0.0002}$ & 10/10 \\
3-cycle & consistency only, grid pairing & $4.51 \times 10^{-4}\sd{\pm 3.3 \times 10^{-4}}$ & $0.0420\sd{\pm 0.0002}$ & 10/10 \\
3-cycle & DSM + consistency, grid pairing & $1.67 \times 10^{-5}\sd{\pm 5.0 \times 10^{-6}}$ & $0.0420\sd{\pm 0.0002}$ & 10/10 \\
MDP1 & DSM only & $0.0493\sd{\pm 0.0185}$ & $0.172\sd{\pm 0.031}$ & 10/10 \\
MDP1 & DSM + consistency (deployed) & $0.0285\sd{\pm 0.0119}$ & $0.132\sd{\pm 0.029}$ & 10/10 \\
MDP1 & consistency only & $5.21 \times 10^{15}\sd{\pm 1.0 \times 10^{16}}$ & $79.0\sd{\pm 21.6}$ & 2/10 \\
MDP1 & consistency only, grid pairing & $5.51 \times 10^{16}\sd{\pm 8.8 \times 10^{16}}$ & $74.0\sd{\pm 19.2}$ & 5/10 \\
MDP1 & consistency only, grid pairing, EMA target & $0.0143\sd{\pm 0.0037}$ & $0.144\sd{\pm 0.022}$ & 10/10 \\
\bottomrule
\end{tabular}}
\end{table*}

%% file: tables/tabA_consTD_check.tex
\begin{tabular}{llrrrrr}
\toprule
MDP & variant & one-step holds & iterated holds & bound/error top & middle & bottom \\
\midrule
3-cycle & DSM only & 10/10 & 10/10 & $2.54\sd{\pm 0.24}$ & $53.7\sd{\pm 15.7}$ & $109\sd{\pm 37}$ \\
3-cycle & consistency only & 10/10 & 10/10 & $1.00010\sd{\pm 0.00005}$ & $6.47\sd{\pm 1.74}$ & $11.4\sd{\pm 3.8}$ \\
3-cycle & DSM + consistency & 10/10 & 10/10 & $1.00012\sd{\pm 0.00006}$ & $32.8\sd{\pm 6.0}$ & $74.5\sd{\pm 19.4}$ \\
3-cycle & consistency only, grid pairing & 10/10 & 10/10 & $1.00006\sd{\pm 0.00002}$ & $6.56\sd{\pm 2.13}$ & $12.1\sd{\pm 5.2}$ \\
3-cycle & DSM + consistency, grid pairing & 10/10 & 10/10 & $1.00012\sd{\pm 0.00004}$ & $35.4\sd{\pm 7.9}$ & $80.2\sd{\pm 18.0}$ \\
MDP1 & DSM only & 10/10 & 10/10 & $1.31\sd{\pm 0.18}$ & $3.02\sd{\pm 0.83}$ & $30.5\sd{\pm 11.0}$ \\
MDP1 & DSM + consistency (deployed) & 10/10 & 10/10 & $1.32\sd{\pm 0.15}$ & $4.79\sd{\pm 2.19}$ & $39.9\sd{\pm 15.2}$ \\
MDP1 & consistency only & 2/2 & 2/2 & $1.000000004\sd{\pm 0.000000006}$ & $124\sd{\pm 152}$ & $9.94\sd{\pm 0.02}$ \\
MDP1 & consistency only, grid pairing & 5/5 & 5/5 & $1.0000001\sd{\pm 0.0000001}$ & $62.7\sd{\pm 114.8}$ & $98.7\sd{\pm 80.9}$ \\
MDP1 & consistency only, grid pairing, EMA target & 10/10 & 10/10 & $3.43\sd{\pm 1.74}$ & $2.36\sd{\pm 0.30}$ & $15.8\sd{\pm 1.3}$ \\
\bottomrule
\end{tabular}

%% file: tables/tabA_re1_static_fit.tex
\begin{tabular}{lrrrr}
\toprule
seed & rare mass & median offset & sd & sd $\div$ true sd \\
\midrule
0 & 0.00667 & +1.7 & 7.42 & 12.9 \\
1 & 0.00958 & +4.7 & 8.22 & 14.2 \\
2 & 0.0110 & +8.1 & 8.48 & 14.7 \\
\midrule
truth & 0.005 & 0 & 0.578 & 1 \\
\bottomrule
\end{tabular}

%% file: tables/tabA_mz.tex
\begin{tabular}{@{}llrrrr@{}}
\toprule
variant & head & $n$ & $\hat W_2$ (Debes) ($\downarrow$) & $\hat W_2$ (200 MC) ($\downarrow$) & sliced $W_2$ (CDF) ($\downarrow$) \\
\midrule
exclusive & MMD-mq & 5 & $\mathbf{0.309}\sd{[0.30,\,0.31]}$ & $0.121\sd{[0.12,\,0.13]}$ & $0.0385\sd{[0.033,\,0.040]}$ \\
 & sliced-Cram\'er & 5 & $0.309\sd{[0.30,\,0.31]}$ & $\mathbf{0.116}\sd{[0.11,\,0.12]}$ & $\mathbf{0.0249}\sd{[0.022,\,0.031]}$ \\
 & SVDRL (log-var.\ moment head) & 5 & $0.422\sd{[0.42,\,0.44]}$ & $0.367\sd{[0.34,\,0.39]}$ & $0.224\sd{[0.19,\,0.25]}$ \\
 & sliced-W$_2$ & 5 & $0.512\sd{[0.49,\,0.54]}$ & $0.506\sd{[0.50,\,0.51]}$ & $0.352\sd{[0.35,\,0.35]}$ \\
 & SVDRL (earlier selected configuration) & 5 & $0.761\sd{[0.43,\,0.81]}$ & $0.702\sd{[0.40,\,0.71]}$ & $0.378\sd{[0.26,\,0.43]}$ \\
\midrule
identical & sliced-Cram\'er & 5 & $\mathbf{0.253}\sd{[0.25,\,0.26]}$ & $\mathbf{0.0960}\sd{[0.093,\,0.10]}$ & $\mathbf{0.0313}\sd{[0.027,\,0.040]}$ \\
 & MMD-mq & 5 & $0.263\sd{[0.25,\,0.27]}$ & $0.115\sd{[0.11,\,0.12]}$ & $0.0532\sd{[0.050,\,0.061]}$ \\
 & SVDRL (log-var.\ moment head) & 5 & $0.326\sd{[0.30,\,0.34]}$ & $0.272\sd{[0.26,\,0.28]}$ & $0.182\sd{[0.17,\,0.19]}$ \\
 & SVDRL (earlier selected configuration) & 5 & $0.331\sd{[0.30,\,0.55]}$ & $0.238\sd{[0.19,\,0.46]}$ & $0.139\sd{[0.11,\,0.26]}$ \\
 & sliced-W$_2$ & 5 & $0.387\sd{[0.38,\,0.40]}$ & $0.377\sd{[0.37,\,0.38]}$ & $0.260\sd{[0.26,\,0.27]}$ \\
\midrule
multireward & MMD-mq & 5 & $\mathbf{0.271}\sd{[0.26,\,0.31]}$ & $0.134\sd{[0.13,\,0.14]}$ & $0.0322\sd{[0.031,\,0.038]}$ \\
 & sliced-Cram\'er & 5 & $0.273\sd{[0.26,\,0.31]}$ & $\mathbf{0.129}\sd{[0.12,\,0.14]}$ & $\mathbf{0.0250}\sd{[0.024,\,0.026]}$ \\
 & SVDRL (log-var.\ moment head) & 5 & $0.398\sd{[0.32,\,0.43]}$ & $0.383\sd{[0.29,\,0.40]}$ & $0.145\sd{[0.12,\,0.16]}$ \\
 & SVDRL (earlier selected configuration) & 5 & $0.433\sd{[0.42,\,0.48]}$ & $0.412\sd{[0.40,\,0.45]}$ & $0.144\sd{[0.14,\,0.16]}$ \\
 & sliced-W$_2$ & 5 & $0.485\sd{[0.48,\,0.49]}$ & $0.487\sd{[0.49,\,0.49]}$ & $0.242\sd{[0.24,\,0.24]}$ \\
\bottomrule
\end{tabular}

%% file: tables/tab_mo1_compact.tex
\setlength{\tabcolsep}{4pt}
\begin{tabular}{lccccc}
\toprule
 & \multicolumn{2}{c}{Law ($\downarrow$)} & Mean ($\downarrow$) & \multicolumn{2}{c}{Centered law ($\downarrow$)} \\
\cmidrule(lr){2-3}\cmidrule(lr){4-4}\cmidrule(lr){5-6}
Head & energy & sliced $\W2^2$ & abs.\ error & energy & sliced $\W2^2$ \\
\midrule
\multicolumn{6}{l}{\emph{mo-halfcheetah, $5\cdot10^5$ steps}} \\
\quad SVDRL selected, particles$^{\dagger}$ & $0.1\sd{[0.098,\,0.14]}$ & $0.17\sd{[0.16,\,0.22]}$ & $0.17\sd{[0.15,\,0.18]}$ & $0.065\sd{[0.055,\,0.087]}$ & $0.14\sd{[0.12,\,0.18]}$ \\
\quad MD3QN & $\mathbf{0.038}\sd{[0.037,\,0.039]}$ & $\mathbf{0.062}\sd{[0.061,\,0.063]}$ & $\mathbf{0.15}\sd{[0.15,\,0.16]}$ & $\mathbf{0.0073}\sd{[0.0066,\,0.0073]}$ & $\mathbf{0.027}\sd{[0.026,\,0.028]}$ \\
\quad Energy MMD & $0.05\sd{[0.047,\,0.053]}$ & $0.09\sd{[0.086,\,0.096]}$ & $0.17\sd{[0.16,\,0.18]}$ & $0.014\sd{[0.014,\,0.015]}$ & $0.048\sd{[0.046,\,0.05]}$ \\
\quad IQN & -- & -- & $\mathbf{0.15}\sd{[0.15,\,0.16]}$ & -- & -- \\
\multicolumn{6}{l}{\quad\emph{uncentered accumulator, for reference}} \\
\quad SVDRL initial, particles & $0.33\sd{[0.16,\,0.8]}$ & $0.76\sd{[0.27,\,2.2]}$ & $0.29\sd{[0.21,\,0.66]}$ & $0.11\sd{[0.09,\,0.3]}$ & $0.37\sd{[0.21,\,1.5]}$ \\
\quad SVDRL initial, sampler & $13\sd{[0.35,\,110]}$ & $280\sd{[1.4,\,4\,100]}$ & $4.1\sd{[0.29,\,33]}$ & $2.1\sd{[0.25,\,21]}$ & $84\sd{[1.2,\,1\,200]}$ \\
\quad MD3QN & $0.038\sd{[0.034,\,0.04]}$ & $0.064\sd{[0.058,\,0.065]}$ & $0.15\sd{[0.14,\,0.16]}$ & $0.0077\sd{[0.0067,\,0.0091]}$ & $0.028\sd{[0.027,\,0.03]}$ \\
\quad Energy MMD & $0.081\sd{[0.053,\,1.2]}$ & $0.15\sd{[0.12,\,26]}$ & $0.16\sd{[0.15,\,0.67]}$ & $0.033\sd{[0.02,\,1.2]}$ & $0.11\sd{[0.084,\,25]}$ \\
\quad IQN & -- & -- & $0.15\sd{[0.15,\,0.16]}$ & -- & -- \\
\midrule
\multicolumn{6}{l}{\emph{mo-halfcheetah, $10^6$ steps}} \\
\quad SVDRL selected, particles$^{\dagger}$ & $0.094\sd{[0.063,\,0.14]}$ & $0.22\sd{[0.15,\,0.28]}$ & $\mathbf{0.2}\sd{[0.18,\,0.27]}$ & $0.05\sd{[0.027,\,0.069]}$ & $0.14\sd{[0.097,\,0.17]}$ \\
\quad MD3QN & $\mathbf{0.076}\sd{[0.063,\,0.093]}$ & $\mathbf{0.17}\sd{[0.15,\,0.19]}$ & $\mathbf{0.22}\sd{[0.19,\,0.25]}$ & $\mathbf{0.025}\sd{[0.022,\,0.027]}$ & $\mathbf{0.093}\sd{[0.083,\,0.095]}$ \\
\quad Energy MMD & $0.14\sd{[0.12,\,0.19]}$ & $0.29\sd{[0.26,\,0.34]}$ & $0.29\sd{[0.28,\,0.36]}$ & $0.048\sd{[0.043,\,0.052]}$ & $0.15\sd{[0.14,\,0.16]}$ \\
\quad IQN & -- & -- & $\mathbf{0.24}\sd{[0.23,\,0.28]}$ & -- & -- \\
\multicolumn{6}{l}{\quad\emph{uncentered accumulator, for reference}} \\
\quad SVDRL initial, particles & $0.68\sd{[0.26,\,0.75]}$ & $1.6\sd{[0.68,\,4.4]}$ & $0.55\sd{[0.33,\,0.67]}$ & $0.23\sd{[0.084,\,0.47]}$ & $0.91\sd{[0.5,\,3.9]}$ \\
\quad SVDRL initial, sampler & $15\sd{[0.64,\,260]}$ & $1\,300\sd{[5.6,\,17\,000]}$ & $6.6\sd{[0.43,\,85]}$ & $11\sd{[0.38,\,52]}$ & $840\sd{[4,\,3\,900]}$ \\
\quad MD3QN & $0.087\sd{[0.073,\,0.25]}$ & $0.18\sd{[0.16,\,0.42]}$ & $0.19\sd{[0.19,\,0.34]}$ & $0.028\sd{[0.026,\,0.033]}$ & $0.098\sd{[0.093,\,0.11]}$ \\
\quad Energy MMD & $0.61\sd{[0.23,\,1.6]}$ & $21\sd{[1.5,\,54]}$ & $1\sd{[0.29,\,1.7]}$ & $1.2\sd{[0.18,\,1.8]}$ & $16\sd{[1.4,\,47]}$ \\
\quad IQN & -- & -- & $0.24\sd{[0.19,\,0.72]}$ & -- & -- \\
\midrule
\multicolumn{6}{l}{\emph{mo-hopper, $5\cdot10^5$ steps}} \\
\quad SVDRL selected, particles & $1.1\sd{[0.97,\,1.1]}$ & $1.3\sd{[1.1,\,1.3]}$ & $0.51\sd{[0.49,\,0.52]}$ & $0.23\sd{[0.2,\,0.23]}$ & $0.3\sd{[0.28,\,0.32]}$ \\
\quad SVDRL selected, sampler & $1.1\sd{[0.98,\,1.1]}$ & $1.2\sd{[1.2,\,1.3]}$ & $0.51\sd{[0.5,\,0.54]}$ & $0.22\sd{[0.21,\,0.23]}$ & $0.3\sd{[0.3,\,0.33]}$ \\
\quad SVDRL initial, particles & $0.92\sd{[0.88,\,0.97]}$ & $0.97\sd{[0.93,\,1]}$ & $0.45\sd{[0.44,\,0.46]}$ & $0.15\sd{[0.14,\,0.16]}$ & $0.2\sd{[0.18,\,0.21]}$ \\
\quad SVDRL initial, sampler & $1.4\sd{[1,\,1.8]}$ & $3.6\sd{[2.9,\,18]}$ & $0.73\sd{[0.55,\,0.85]}$ & $0.4\sd{[0.26,\,0.48]}$ & $2.5\sd{[1,\,17]}$ \\
\quad MD3QN & $0.86\sd{[0.68,\,1.6]}$ & $1.6\sd{[0.81,\,9.2]}$ & $0.56\sd{[0.42,\,1.2]}$ & $0.37\sd{[0.21,\,1.9]}$ & $0.74\sd{[0.26,\,6]}$ \\
\quad Energy MMD & $\mathbf{0.74}\sd{[0.71,\,0.79]}$ & $\mathbf{0.65}\sd{[0.6,\,0.68]}$ & $\mathbf{0.38}\sd{[0.37,\,0.41]}$ & $\mathbf{0.07}\sd{[0.065,\,0.073]}$ & $\mathbf{0.064}\sd{[0.063,\,0.066]}$ \\
\quad IQN & -- & -- & $0.48\sd{[0.46,\,0.52]}$ & -- & -- \\
\midrule
\multicolumn{6}{l}{\emph{mo-hopper, $10^6$ steps}} \\
\quad SVDRL selected, particles$^{\dagger}$ & $2.9\sd{[2.7,\,3.1]}$ & $8.4\sd{[6.5,\,8.9]}$ & $\mathbf{1.5}\sd{[1.5,\,1.6]}$ & $0.76\sd{[0.36,\,0.88]}$ & $3\sd{[1.7,\,3.7]}$ \\
\quad SVDRL initial, particles & $2.8\sd{[2.7,\,3.1]}$ & $\mathbf{6.8}\sd{[6.1,\,7.2]}$ & $\mathbf{1.4}\sd{[1.4,\,1.5]}$ & $\mathbf{0.26}\sd{[0.24,\,0.44]}$ & $\mathbf{1.3}\sd{[1.3,\,1.8]}$ \\
\quad SVDRL initial, sampler & $2.7\sd{[2.7,\,5.2]}$ & $12\sd{[7.1,\,99]}$ & $\mathbf{1.4}\sd{[1.4,\,2.4]}$ & $\mathbf{0.41}\sd{[0.27,\,1.9]}$ & $7.3\sd{[1.8,\,63]}$ \\
\quad MD3QN & $6\sd{[4.1,\,7.4]}$ & $110\sd{[69,\,120]}$ & $2.6\sd{[2,\,3.3]}$ & $3\sd{[2.7,\,3.6]}$ & $56\sd{[53,\,70]}$ \\
\quad Energy MMD & $\mathbf{2.5}\sd{[2.3,\,2.6]}$ & $8.1\sd{[6.8,\,11]}$ & $\mathbf{1.4}\sd{[1.3,\,1.5]}$ & $\mathbf{0.41}\sd{[0.36,\,0.42]}$ & $3.6\sd{[1.8,\,5.5]}$ \\
\quad IQN & -- & -- & $\mathbf{1.5}\sd{[1.5,\,1.6]}$ & -- & -- \\
\bottomrule
\end{tabular}